\documentclass{sage}

\ifx\definitionMacros\undefined\else \fi
%%%%%%%%%%%%%%%%%%%%%%%%%%%%%%%%%%%%%%%%%%%%%%%%%%%%%%%%%%%%%%%
%				       	 									  %
%                     Benito Fern�ndez R.	       			  %
%              The University of Texas at Austin   			  %
%              Mechanical Engineering Department			  %
%                  Austin, Texas 78712-0292.     			  %
%                       (512) 471-7852.						  %
%												              %
%%%%%%%%%%%%%%%%%%%%%%%%%%%%%%%%%%%%%%%%%%%%%%%%%%%%%%%%%%%%%%%
%
%  File: ``definitions.tex''
%
% Last Modified: Sep. 15, 1991 @ 11:23 PM
%
 \typeout{**************** begin:.definitions.tex *****************}
%
%    MacStyle Macros
%
 \message{MacStyle->}
 \ifx\optionkeymacros\undefined\else \fi% This document is in the Public Domain.% It was created by Doug Henderson, Blue Sky Research,% with modifications suggested by Joseph C. V�rilly.% Added single quotes on 6-4-93.%% option_keys - 6-4-93 (last modification)%% This macro set defines characters found in the Apple Macintosh% Monaco font. They allow users to simply type the keystroke % equivalent of the character desired, such as �, and have Textures�% typeset it correctly, substituting the TeX macro equivalent% for the character.%% The reason for the creation of this macro set was to take % advantage of Knuth's new TeX 3.0 capabilities with 8-bit character% sets. This functionality has been included in our Textures� as% of version 1.3. By typing in nearly any character from the % Macintosh keyboard, you can typeset with the characters directly % rather than needing to define macros for these diacritic characters. % Remember: these just expand single characters. They may not be% defined in every context or mode.%% There are two sections to the macros; the first contains% characters which can be rendered by depressing the option key% and another key or shift and option keys with another key, % and secondly, a combination of various keystrokes together % to obtain the desired character. For example,% to create the � character, you depress the option and o character% at the same time; to create the uppercase version of �, you % hold the shift, option key, and o key down, and to create the% � character, you first depress the option and n characters, % then next the shift and n characters. The option and n create% the tilde character, and the shift n produce the N character.%% Math symbols and math operators need special attention. % When typing in math symbols the same % rules apply for math mode as before, namely, you must place dollar% signs or double-dollar signs around these symbols. When typing% in math operators, such as the square root symbol, TeX expects% you to supply an argument following it. For example, if you% would like to take advantage of the definition here for the% plus-minus sign, �, and would like to type it directly for use% in your text, you need to make sure you are in math mode when% you call for the character, or simply place dollar signs around% it to print only the plus minus sign thusly, $�$. Similarly,% you also need to be in math mode when asking for the square root% symbol, �, and additionally supply an argument to it. % The following example shows how to use both the plus-minus sign% and the square root sign in an equation that has been simplified% for you via these macro definitions.%      $$-b��{b^2-4ac}\over{2a}$$% define the macintosh "option" generated characters\catcode`\�=\active\def�{{\aa}}       % option a\catcode`\�=\active\def�{\int}        % option b (math mode) \catcode`\�=\active\def�{\c c}        % option c\catcode`\�=\active\def�{\partial}    % option d (math mode)\catcode`\�=\active\def�{\oint}       % option f (math mode) ?\catcode`\�=\active\def�{\triangle}   % option j (math mode)\catcode`\�=\active\def�{\neg}        % option l (math mode)\catcode`\�=\active\def�{\mu}         % option m (math mode)\catcode`\�=\active\def�{{\o}}        % option o\catcode`\�=\active\def�{\pi}         % option p (math mode w/ arg.)\catcode`\�=\active\def�{{\oe}}       % option q \catcode`\�=\active\def�{{\ss}}       % option s \catcode`\�=\active\def�{\dagger}     % option t  (math mode)\catcode`\�=\active\def�{\sqrt}       % option v (math mode w/ arg.)\catcode`\�=\active\def�{\Sigma}      % option w (math mode)\catcode`\�=\active\def�{\approx}     % option x (math mode)\catcode`\�=\active\def�{\Omega}      % option z (math mode)\catcode`\�=\active\def�{{\it\$}}     % option 3 ($ from italic font)\catcode`\�=\active\def�{\infty}      % option 5 (math mode)\catcode`\�=\active\def�{{\S}}        % option 6 \catcode`\�=\active\def�{{\P}}        % option 7\catcode`\�=\active\def�{\bullet}     % option 8 \catcode`\�=\active\def�{\leavevmode\raise.585ex\hbox{\b a}}      % option 9\catcode`\�=\active\def�{\leavevmode\raise.6ex\hbox{\b o}}        % option 0\catcode`\�=\active\def�{\not=}       % option = (math mode)\catcode`\�=\active\def�{\leq}        % option , (math mode)\catcode`\�=\active\def�{\geq}        % option . (math mode)\catcode`\�=\active\def�{\div}        % option / (math mode)\catcode`\�=\active\def�{{\dots}}     % option ; \catcode`\�=\active\def�{{\ae}}       % option '\catcode`\�=\active\def�{\ll}         % option \ (math mode)\catcode`\�=\active\def�{``}          % option [\catcode`\�=\active\def�{!`}          % option !\catcode`\�=\active\def�{\rlap/c}     % option 4\catcode`\�=\active\def�{`}           % option ] \catcode`\�=\active\def�{'}           % shift option ]% macintosh "shift-option" generated characters\catcode`\�=\active\def�{{\AA}}       % shift-option A\catcode`\�=\active\def�{\c C}        % shift-option C\catcode`\�=\active\def�{{\O}}        % shift-option O\catcode`\�=\active\def�{\Pi}         % shift-option P (math mode)\catcode`\�=\active\def�{{\OE}}       % shift-option Q\catcode`\�=\active\def�{{\AE}}       % shift-option '\catcode`\�=\active\def�{\diamond}    % shift-option V (math mode)\catcode`\�=\active\def�{\accent'27}  % shift-option 8\catcode`\�=\active\def�{''}          % shift-option [\catcode`\�=\active\def�{\pm}         % shift-option = (math mode)\catcode`\�=\active\def�{\gg}         % shift-option \ (math mode)\catcode`\�=\active\def�{?`}          % shift-option / \catcode`\�=\active\def�{--}          % option - (en-dash)\catcode`\�=\active\def�{---}         % shift-option - (em-dash)% define the macintosh "composite" characters\catcode`\�=\active\def�{\"a}        % option u, then  a\catcode`\�=\active\def�{\"e}        % option u, then  e\catcode`\�=\active\def�{\"{\i}}     % option u, then  i\catcode`\�=\active\def�{\"o}        % option u, then  o\catcode`\�=\active\def�{\"u}        % option u, then  u\catcode`\�=\active\def�{\"y}        % option u, then  y\catcode`\�=\active\def�{\"A}        % option u, then  A\catcode`\�=\active\def�{\"O}        % option u, then  O\catcode`\�=\active\def�{\"U}        % option u, then  U\catcode`\�=\active\def�{\'a}        % option e, then  a\catcode`\�=\active\def�{\'e}        % option e, then  e\catcode`\�=\active\def�{\'{\i}}     % option e, then  i\catcode`\�=\active\def�{\'o}        % option e, then  o\catcode`\�=\active\def�{\'u}        % option e, then  u\catcode`\�=\active\def�{\'E}        % option e, then  E\catcode`\�=\active\def�{\`a}        % option `, then  a\catcode`\�=\active\def�{\`e}        % option `, then  e\catcode`\�=\active\def�{\`{\i}}     % option `, then  i\catcode`\�=\active\def�{\`o}        % option `, then  o\catcode`\�=\active\def�{\`u}        % option `, then  u\catcode`\�=\active\def�{\`A}        % option `, then  A\catcode`\�=\active\def�{\~a}        % option n, then  a\catcode`\�=\active\def�{\~n}        % option n, then  n\catcode`\�=\active\def�{\~o}        % option n, then  o\catcode`\�=\active\def�{\~A}        % option n, then  A\catcode`\�=\active\def�{\~N}        % option n, then  N\catcode`\�=\active\def�{\~O}        % option n, then  O\catcode`\�=\active\def�{\^a}        % option i, then  a\catcode`\�=\active\def�{\^e}        % option i, then  e\catcode`\�=\active\def�{\^{\i}}     % option i, then  i\catcode`\�=\active\def�{\^o}        % option i, then  o\catcode`\�=\active\def�{\^u}        % option i, then  u\let\optionkeymacros\null
 \message{,}
% .....FDC interline spacing macros
%

%  registered (circle R) mark needed for Kevlar, Teflon, etc...
 \def\registered{{\ooalign                                          %  registered
   {\hfil\kern+.05em\raise.12ex                                     %  registered
 \hbox{\tiny R}\hfil\crcr{\footnotesize\mathhexbox20D}}}}           %  registered
%
%
%    True-Math Macros
%
\message{TrueMath,}
 \newcommand{\TrueMath}    [1]{\mbox{$#1$}}                  %TrueMath
                   %TrueMath
%
          %TrueBoldMath
          %TrueBoldMath
%
                    %AsIs
                %typed

 \def\half{\TrueMath{\leavevmode\kern.1em \raise.5ex
                     \hbox{\the\scriptfont0 1}
                     \kern-.1em / \kern-.15em\lower.25ex
                     \hbox{\the\scriptfont0 2}
           }         }
%
                  % mathbold

%
 
%
%    Short-hand commands and macros
%
 \message{ShortHand,}
%

% 

% \newcommand{\th}[1]  {\indent {\bf #1}\quad}

%
%    Macros for References, Authoring, etc.
%
 \message{References,}
%

% \def\address{\sl\renewcommand{\baselinestretch}{0.5} \par
% \mbox{The University of Texas at Austin,}            \par
% \mbox{Mechanical Engineering Department,}            \par
% \mbox{Austin, Texas 78712-0292.}                     \par
% \mbox{(512)471-7852 FAX:(512)471-7682.}              \par
% \AsIs{EMail:``benito@NERDLab.me.utexas.edu''}        \par
% }
%
                                                     % fbenito
%

%

%
%\font\scrpt = Chancery             % Script-like letters.
%\font\helva = HelveticaB at 14pt
%\font\helvb = HelveticaB at 12pt

%\font\skia = Skia at 12pt
%\font\hoefler = "Hoefler Text" at 12pt
%\font\tekton = "Tekton Plus Regular" at 12pt
%\font\fntout = TimesO at 12pt
%\def\Sc{{\fntout S}}
%
                                                    % fsentis
%
                      % graduate
            % undergraduate
%
                                                    % longhorn
%      
%    Some Graphic Macros
%
 \message{Graphics,}
%
      % horizontal line
         % square bullet
%
%    Macros for cartesian units
%

%
%    Macros for Scripts 
%
\message{Scripts,}
\def\set12{\newfont{\size12}{cmbx12}}

%
%    Macros for Partial Derivatives and other Math Operators
%
\message{Equations,}
%
                                       %Equation short cut
             %Equation short cut
      %Equation short cut
%
% Proofs
%
%\newcommand{\proof}{\noindent{\bf Proof\indent}}
%\newcommand{\proofover}{\hfill\vrule height8pt width6pt depth
%                0pt\newline\noindent}
%
% Math Definitions (generic)
%

\renewcommand{\frac}[2]{\TrueMath{\TrueMath{#1}\over\TrueMath{#2}}}

%
%    Definitions for Vector and Matrices
%    and Macros  for Geometric Analysis
%
             %  Boldfaced vector of time macro

         %  Boldfaced vector derivative
%\newcommand{\Dot}   [1]{\TrueMath{\dot{#1}(t)}}             %  derivative
      %  Boldfaced vector hat
% Matrix macro
%

%

%

%

%

%

%

%
%
%
%

%
         % Lie Derivative macro (vector)
 % Lie Derivative macro (vector)
               % Lie Derivative macro (scalar)
       % Lie Derivative macro (scalar)
%
 % Belong to Space Dimension
%
% Relative Order:
%

%
%
% System Definitions
%
% SISO:
%

%
% MIMO:
%

%
\def\simless
{\mathop{\raise0.50ex\hbox{$<$}\kern-0.75em\lower0.50ex\hbox{$\sim$}}\nolimits}
\def\simgreat
{\mathop{\raise0.50ex\hbox{$>$}\kern-0.75em\lower0.50ex\hbox{$\sim$}}\nolimits}
\message{End}
\typeout{}
\typeout{***************** end: definitions.tex *****************}
%
% End of ``definitions.tex''
%
 %\input{addOns.tex}

%\include{definiciones}

\usepackage{mathtools}
\usepackage{amsmath}
\usepackage{amsthm}
\usepackage{amsfonts}
\usepackage{amsbsy}
\usepackage{amssymb}
\usepackage{latexsym}

\usepackage{wrapfig}
\usepackage{natbib,url}

\usepackage{multirow}
\usepackage{subfigure}
\usepackage{xfrac}
\usepackage{mathrsfs}
\usepackage{cases}
\usepackage{empheq}

\usepackage{algorithmic}
\usepackage{algorithm}
\usepackage{subeqnarray}

\usepackage{hyperref}
\usepackage{soul}
\usepackage{xcolor}

\usepackage{tikz}
\usepackage{blindtext}

\usepackage{comment}
\usepackage{minitoc}

\usepackage[font=scriptsize]{caption}
% not used
% not used
%\newcommand{\itab}[1]{\hspace{1em}\rlap{#1}}
%\newcommand{\tab}[1]{\hspace{.1\textwidth}\rlap{#1}}
%\newcommand{\tabb}[1]{\hspace{.24\textwidth}\rlap{#1}}
%\newcommand{\bibsize}{\fontsize{8pt}{1.em}\selectfont}
%\newcommand{\captionsize}{\fontsize{8pt}{1.18em}\selectfont}

\usepackage{color}

\newlength\myindent
\setlength\myindent{2em}

\newtheorem{theorem}{Theorem}
\newtheorem*{theorem*}{Theorem}
\newtheorem*{proposition*}{Proposition}
\newtheorem{proposition}{Proposition}

\newtheorem{definition}{Definition}
\newtheorem{remark}{Remark}
\newtheorem{assumption}{Assumption}

\newtheorem*{example*}{Example}
\newtheorem*{property*}{Property}

{\color{blue}}

%--------------------------------------------
 \usepackage{tikz}

 \usepackage{marvosym}
 \usepackage{amssymb}
 \usepackage{pifont}
 \usepackage{xspace}
%--------------------------------------------

%

%--------------------------------------------

 \def\*tr{^{\,*T}}

% RSFS:
 \DeclareSymbolFont{rsfs}{U}{rsfs}{m}{n}
 \DeclareSymbolFontAlphabet{\mathrsfs}{rsfs}

 \newcounter{HCRLalgorno}

\makeatletter
 \newcommand{\customlabel}[2]{%
   \protected@write \@auxout {}{\string \newlabel {#1}{{#2}{}}}}
\makeatother

%\setlength\parindent{0pt}

%%%%%%%%%%Harvard or vancouvar style use the following 
%%style file natbib.sty package
%\usepackage[numbers,sort&compress]{natbib}
%\bibpunct[, ]{(}{)}{,}{a}{}{,}
 % Use \bibpunct with 6 mandatory arguments:
 %    1. opening bracket for citation
 %    2. closing bracket
 %    3. citation separator (for multiple citations in one \cite)
 %    4. the letter n for numerical styles, s for superscripts
 %        else anything for author-year
 %    5. punctuation between authors and date
 %    6. punctuation between years (or numbers) when common authors missing
 % One optional argument is the character coming before post-notes. It
 %   appears in square braces before all other arguments. May be left off.
 % Example (and default) \bibpunct[, ]{(}{)}{;}{a}{,}{,}
%%%%%%%%%%%%%%%%%%%%%%%%%%%%%%%%%%%%%%%%%%%%%%%%%%%%%%%%%%%%%%%%%%%%%%%%%%%
%%There are two example reference database sample is given 
%%as sage_harvard.bib and sage_vancouvar.bib
%%Use these two files, we update the reference entries into these database file

\journal{The International Journal of Robotics Research}
\volume{XX}
\issue{XX}
\copyrightline{$\copyright$ Copyright}
\firstpage{1}
\lastpage{39}
\doi{doi number}
\articletype{Article Type}
\pubyear{2019}

\def\alert#1{{\color{blue}{#1}}}

\usepackage{color}
%\newcommand{\hilight}[1]{\colorbox{yellow}{#1}}

% user definitions
\graphicspath{{./../../}}

\usepackage{times}
\usepackage{multicol}

\usepackage{soul}
\usepackage{amsfonts,amsthm}
\usepackage{graphicx}
%\graphicspath{{../figures/}}
%\usepackage{subcaption}
%\usepackage{subfigure}
\usepackage{xfrac}
\usepackage{mathtools}
\usepackage{mathrsfs}
\usepackage{empheq}

%\theoremstyle{remark}
%\usepackage{lipsum}
%\usepackage{ntheorem}
%\newtheorem{definition}{Definition}
%\newtheorem{proposition}{Proposition}
%\newtheorem{assumption}{Assumption}
%\newtheorem{theorem}{Theorem}
%\newtheorem*{theorem-non}{Theorem}
%\newtheorem*{theorem}{Theorem}
%\newtheorem{remark}{Remark}
 %7.97

%\newcommand*\mystar{{\protect \includegraphics[width=0.8em]{figures/star}}}
%\newcommand*\mystance{{\protect \includegraphics[width=0.8em]{figures/stance}}}
%\newcommand*\mystart{{\protect \includegraphics[width=0.8em]{figures/start}}}
%\newcommand*\myfin{{\protect \includegraphics[width=0.8em]{figures/fin}}}
%\newcommand*\mynext{{\protect \includegraphics[width=0.8em]{figures/next}}}
%\newcommand*\myimpact{{\protect \includegraphics[width=0.8em]{figures/impact}}}

\markboth{}{Reactive Task and Motion Planning for Whole-Body Dynamic Locomotion} 

%\markboth{}{Task and Motion Planning for Whole-Body Locomotion} 

\begin{document}

%\runninghead{Planning and Control of Non-Periodic Locomotion}

\title{Reactive Task and Motion Planning for Robust Whole-Body Dynamic Locomotion in Constrained Environments}

%\date{\today}
\author{Ye Zhao
	    \thanks{Corresponding author;
                 E-mail addresses: ye.zhao@me.gatech.edu, \{yinan.li, j.liu\}@uwaterloo.ca, \{utopcu, lsentis\}@utexas.edu.} $^{1}$,
	    Yinan Li$^{2}$, Luis~Sentis$^{3}$, Ufuk Topcu$^{3,4}$, Jun Liu$^{2}$.
}

\address{$^1$George W. Woodruff School of Mechanical Engineering, Georgia Tech, USA.\\
$^2$ Department of Applied Mathematics, University of Waterloo, Canada.\\
$^3$Department of Aerospace Engineering and Engineering Mechanics, UT Austin, USA.\\
$^4$Institute for Computational Engineering and Sciences, UT Austin, USA.}

\maketitle

\vspace{-0.1in}

\begin{abstract}
Contact-based decision and planning methods are becoming increasingly important to endow higher levels of autonomy for legged robots. Formal synthesis methods derived from symbolic systems have great potential for reasoning about high-level locomotion decisions and achieving complex maneuvering behaviors with correctness guarantees. This study takes a first step toward formally devising an architecture composed of task planning and control of whole-body dynamic locomotion behaviors in constrained and dynamically changing environments. At the high level, we formulate a two-player temporal logic game between the multi-limb locomotion planner and its dynamic environment to synthesize a winning strategy that delivers symbolic locomotion actions. These locomotion actions satisfy the desired high-level task specifications expressed in a fragment of temporal logic. Those actions are sent to a robust finite transition system that synthesizes a locomotion controller that fulfills state reachability constraints. This controller is further executed via a low-level motion planner that generates feasible locomotion trajectories. We construct a set of dynamic locomotion models for legged robots to serve as a template library for handling diverse environmental events. We devise a replanning strategy that takes into consideration sudden environmental changes or large state disturbances to increase the robustness of the resulting locomotion behaviors. We formally prove the correctness of the layered locomotion framework guaranteeing a robust implementation by the motion planning layer. Simulations of reactive locomotion behaviors in diverse environments indicate that our framework has the potential to serve as a theoretical foundation for intelligent locomotion behaviors.
\end{abstract}

\keywords{Task and motion planning, Legged locomotion, Temporal logic, Robust reachability, Sequential composition.}
%replanning, reachability control, and two-player game, Reactive synthesis,

%--------------------%-------------------%
 \section{Introduction}
%Designing planning methods with formal guarantees for robustness and mobility in dynamic legged locomotion becomes increasingly significant as robots are more frequently released into constrained and unstructured environments. In contrast with existing methods,
The goal of this paper is to devise a reactive task and motion planning framework for whole-body dynamic locomotion (WBDL) behaviors in constrained environments. We employ formal methods for synthesis of a symbolic task planner and design of reachability controllers to achieve legged locomotion behaviors that are reactive to the environment. Although widely used in mobile robot motion planning [\cite{wongpiromsarn2012receding, kloetzer2010automatic, Fu2016TAC}] and autonomous driving [\cite{campbell2010autonomous, xu2018correctness}], formal methods have not been previously used to reason about keyframe states of dynamic locomotion behaviors. To that end, we rely on dynamic locomotion abstractions that reduce the dimensionality of the reasoning process [\cite{zhao2017robust}]. These abstractions allow to sequentially compose locomotion modes by reasoning about the previously mentioned keyframe dynamic locomotion states and achieve advanced reactive behaviors that can respond to dynamic events in the environment as well as to disturbances, a hallmark of intelligent locomotion behaviors. The complex locomotion behaviors studied in this paper could not be achieved by using motion planners alone without a high-level decision-making process. Reasoning about keyframe dynamic locomotion states has several advantages allowing to 1) take advantage of the passive dynamics of legged robots, 2) directly compose behaviors in the phase space of the locomotion process, 3) achieve goal state reachability considering robustness margins, and 4) adjust locomotion behaviors in response to disturbances. 
%The study in this paper focuses on making appropriate decisions of keyframe states and the associated contact configurations at the high-level, which have rarely been explored in locomotion community.
%Overall, the correctness of the top-down planning procedure is interpreted as the implementation guarantee of the high-level task planner on the low-level motion planner in a robust sense. Although the correctness can not be guaranteed for the feedback replanning process, we achieve a practically meaningful ``local" correctness by devising robust abstraction and a hierarchical planner protocol with feedback replanning mechanisms.}

%{\color{red} Ufuk's comment: After taking a look at the later parts, one needs to be careful about these claims. In general, provable correctness of top down decomposition is tricky (if not impossible.)}
%Its high-level abstraction enables the extension of existing strategies and algorithms to a more general framework and the wider range of robotic models.
%\alert{Ye's comment: can we propose a new term named as "approximate correctness"?}

%\subsection{Sequential composition of template locomotion models}
Our technical approach relies on a suite of template-based locomotion modes that span a spectrum of desired whole-body dynamic locomotion behaviors. Sequentially composing these modes via the proposed reactive synthesis enables us to formally combine tasks such as multi-contact locomotion, swinging movements, and hopping motions, as shown in Fig.~\ref{fig:unifiedBehaviors}. Using simplified models to characterize locomotion dynamics has been widely pursued such as the use of the linear inverted pendulum model (LIPM) [\cite{kajita20013d}], the spring-loaded inverted pendulum model [\cite{piovan2015reachability}], the brachiation-like pendulum model [\cite{bertram1999point}], the multi-contact model [\cite{sentis2010compliant, caron2016multi}], and our recently proposed prismatic inverted pendulum model (PIPM) [\cite{zhao2012three}], to name a few. Usually, these models are separately considered in their own specific scenarios and lack a framework to seamlessly integrate them. Seminal locomotion results using template models [\cite{raibert1986legged, full1999templates, alexander1984gaits, de2015penn}] and sequential composition of these models \cite{burridge1999sequential} championed the advantages of using simplified models to uncover the fundamental locomotion principles related to the fine details of multi-body mechanism and dynamics. The work in [\cite{arslan2012reactive}] employs sequential composition to achieve reactive and robust planning against both model uncertainty and measurement noise without replanning.
%In particular, the symmetry of the template models is capitalized to simplify the mechanical model of bipedal or quadruped hopping robots \cite{raibert1986legged}. 
Nevertheless, no high-level decision-making algorithms with formal guarantees have been investigated, although the mentality of hierarchical planning and control had been proposed in [\cite{full1999templates}]. 

In this study, we aim at bridging this gap by proposing formal symbolic-level decision-making theories to sequentially compose more challenging --  highly dynamic, versatile, non-periodic -- locomotion behaviors reactive to dynamic environmental events. In the vein of work addressing rough terrain locomotion [\cite{englsberger2015three, sreenath2013partially, zhao2016robust}], we address the variability of the terrains by allowing the robot to respond to sudden environmental events. The behaviors we synthesize are required to satisfy formal task specifications in a provably correct manner, which we guarantee by using formal methods with discrete abstractions of hybrid systems [\cite{alur2000discrete}]. To the best of our knowledge, our study is the first attempt to use formal methods applied to phase-space keyframe state during for dynamic locomotion behaviors.
%
%\alert{In addition to prove correctness of the task planning process, we also synthesize controllers guaranteeing the reachability of keyframe states.}

%Our objective is to synthesize a task and motion planner for robust and versatile whole-body dynamic locomotion by using formal methods and dynamic locomotion controllers.
%\subsection{Contribution of this paper}
%Throughout our framework, we employ Linear temporal logic [\cite{baier2008principles}] to specify high-level whole-body dynamic locomotion (WBDL) behaviors. 

The inherent hybrid dynamics of the locomotion process and our use of keyframe dynamic locomotion states facilitate the discrete planning synthesis. Instead of discretizing the robot's state space, we rely on a discretization of the phase space keyframe states for synthesizing symbolic-level decisions which are further sent to the underlying motion planner.
%while we treat the choice of transitions as adversaries in a two-player game.
We focus on the integration between the symbolic-level discrete task planner and the continuous motion planner. This top-down planning approach significantly reduces the computational complexity compared to bottom-up approaches [\cite{liu2013synthesis,tabuada2009verification,belta2017formal,liu2016finite}].
The correctness of our top-down hierarchy is guaranteed via a correct-by-construction synthesis at the task planner level and a reachability control synthesis at the motion planner level.

The contributions of this paper are as follows. The first one is on devising symbolic reasoning methods that make decisions on keyframe states of the dynamic locomotion process in response to the dynamically changing environment. Our second contribution is on ensuring robust locomotion under bounded disturbances by reasoning about keyframe state reachability. The third contribution is on using game theory to compose complex dynamic locomotion behaviors sequentially. The final contribution is on reasoning about the correctness of the overall planning framework.

\begin{figure}[t]
 \centering
   \includegraphics[width=0.95\linewidth]{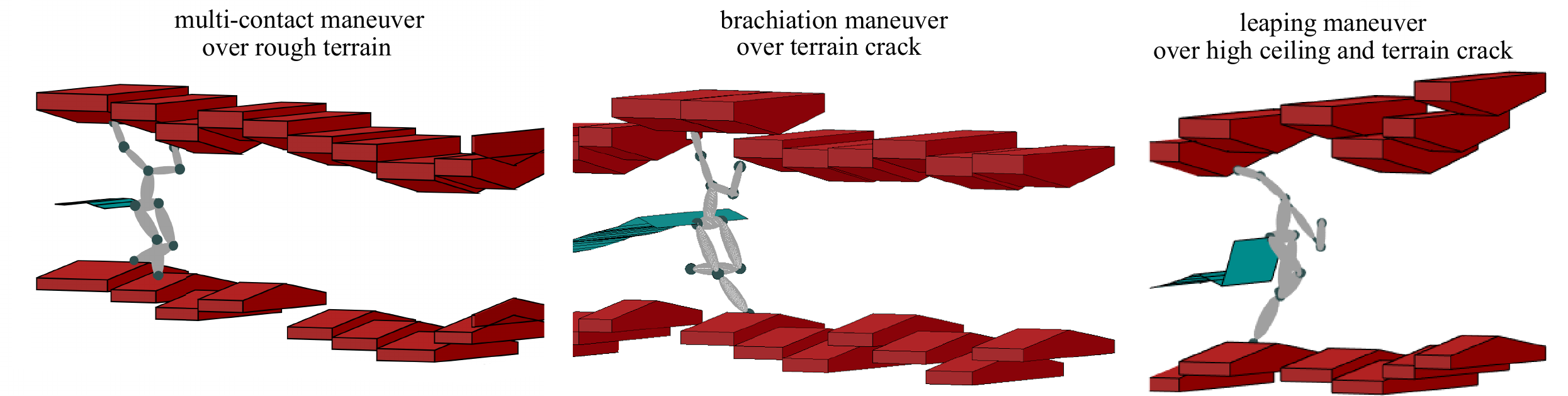}
 \caption{\captionsize Maneuvering in a constrained environment via multi-contact whole-body dynamic locomotion. Three locomotion modes are illustrated. The contact decisions are made according to the high-level symbolic task planner.}
\label{fig:unifiedBehaviors}
\end{figure}

%\subsection{Organization of this paper}
This paper is organized as follows. Section~\ref{sec:problem-formulation} introduces various dynamic locomotion models and the problem formulation of switched systems, phase space planning, and temporal logic preliminaries. In
Section~\ref{sec:WBDLSpec}, we present the task specifications for whole-body dynamic locomotion and a reactive planner winning strategy via defining a two-player logic game. Section~\ref{sec:Abstraction} introduces a robust finite transition system for the hybrid locomotion process to reason about local robustness with respect to the bounded disturbances and proposes robustness margin sets using phase-space Riemmanian metrics. In Section~\ref{sec:correctness-reasoning}, we reason about the one-walking-step robust reachability and the correctness of the overall planning strategy. Simulation results of whole-body dynamic locomotion behaviors over changing environments
are shown in Section~\ref{section:results}. In Sections~\ref{section:discussions} and~\ref{sec:conclusion}, we discuss the results and the conclusions of this paper. The Appendix
presents supplementary mathematical formulations, algorithms, and propositions. A preliminary version of this paper was published in a conference proceeding [\cite{zhao2016high}]. Compared to that proceeding, this paper presents a new study on robust reachability control synthesis, incorporates additional locomotion modes and more diverse task specifications, proposes a replanning strategy, and implements more sophisticated behaviors with a diversity of environmental events.
%--------------------%-------------------%

%--------------------%-------------------%
 \section{Related Work}

Formal methods have been widely investigated for mobile navigation [\cite{kress2011correct, raman2015reactive, Jonathan15, Sadigh-RSS-16}]. The authors in [\cite{kloetzer2010automatic}] proposed an automated computational framework for decentralized communications and control of a team of mobile robots from global task specifications. This work suffers from high computational complexity and does not address reactive response to environmental changes. To alleviate the computational burden, the work in [\cite{wongpiromsarn2012receding}] proposed a receding-horizon based hierarchical framework that reduced the complex synthesis problem to a set of significantly smaller problems with a shorter horizon. An autonomous vehicle navigation process is simulated in the presence of exogenous disturbances. Provable correctness is an important property of temporal logic based control and planning approaches.
 The work of [\cite{kress2009temporal}] allows mobile robots to react to the environment in real time and guarantees the provable correctness of controllers. The approach proposed in [\cite{liu2013synthesis}] extended controller synthesis with guaranteed-correctness to nonlinear switched systems and designed a reactive mechanism in response to an adversarial environment at runtime. Given a high-level discrete controller encoding reactive task behaviors, the work in [\cite{decastro2015synthesis}] designed low-level controllers to guarantee the correctness of a high-level controller. More recently, the work of [\cite{duperret2020towards}] solves a formal discrete leaping navigation problem of legged robots to reach a goal set while in the interim reactively avoiding a set of obstacle states.
However, all of the work above is applied to 2D-world mobile robots or a single-leg hopper, which have simple dynamics unlike our focus on underactuated and hybrid legged robots. Although the recent works in [\cite{warnke2020towards, kulgod2020temporal, gu2021reactive}] explore the use of temporal-logic-based formal methods to solve bipedal robot navigation problems, the studied environments are well structured such as level ground or mild rough terrain with stairs. In addition, contact sequence planning for bipedal robots is straightforward due to the unique option of alternating two legs for contact. Formal-method-based planning for multi-limb robots, such as the one in this paper, requires to consider highly confined environment constraints and contact sequence planning which do not present in grounded mobile or bipedal walking robots.
% Particularly, the task specifications for contact-based robots behaviors are more complex than those needed for mobile robots.

\subsection{Formal methods for manipulation and locomotion}
Formal methods have also gained increasing attention in the mobile manipulation community via
task and motion planning (TAMP) methods [\cite{kaelbling2011hierarchical, srivastava2014combined,  dantam2018incremental, he2015towards, zhao2021sydebo}] or reactive synthesis methods [\cite{sharan2014formal, chinchali2012towards, he2017reactive}]. However, many existing TAMP approaches rely on sampling-based motion planners which ignore the underlying physical dynamics. To fill this gap, the recent work of [\cite{toussaintdifferentiable2018}] proposed a logic-geometric program
%[\cite{toussaint2017multi}] 
to incorporate manipulation dynamics into the task and motion planning process, where discrete logic rules are used to specify the mode sequence for dynamic manipulation tasks. However, this work lacked a reactive mechanism in response to environment actions and manipulated objects. More importantly, formal methods are yet to be used to reason about dynamic legged locomotion, or for more complex dynamic tasks for humanoid robots like the ones described in this paper. The authors in [\cite{antoniotti1995discrete}] determined goals for legged robots by using computational tree logic and synthesized controllers for locomotion. However, their work is restricted to static locomotion tasks which do not allow robots to walk dynamically or jump similarly to humans. An abstraction-based controller was proposed in [\cite{ames2015first}] for bipedal robots using virtual constraints, but this work focused on controller generation without addressing symbolic task reasoning. Recently, the work of [\cite{maniatopoulos2016reactive}] proposed an end-to-end approach to automatically synthesize temporal-logic-based plans on an Atlas humanoid robot. Reaction to low-level failures was formally incorporated by simply terminating the execution. However, the robot behaviors focus on manipulation and grasping tasks, instead of locomotion behaviors. The work of [\cite{sreenath2013partially}] proposed a two-layer hybrid controller for locomotion over varying-slope terrains with imprecise sensing. To account for terrain uncertainties, a high-level controller implements a partially observable Markov decision process to make sequential decisions for controller switching. 
Once again, this work does not address symbolic task reasoning for dynamic locomotion. In addition, this work is limited to walking on terrains with mild roughness while our focus is locomotion on highly rough terrain and constrained environments.
% Different from the works above, our study focuses on locomotion behaviors in highly unstructured terrains and emphasizes the need for using WBDL based responsive behaviors. Our interest focuses on the decision policies for effective WBDL behaviors.

\subsection{Robustness reasoning of formal methods}
Robustness to disturbances and reactiveness to changing environments are major challenges in robotic systems.
% This challenge is particularly pronounced in human-centered robotic applications, where the system has to react to a changing environment. We address this challenge through a hierarchical planning framework.
%, which are determined by using a robust finite transition system for dynamic locomotion to ensure that a controller can be synthesized to guarantee the desired keyframe states to be reached (which we refer to as ``one-walking-step reachability synthesis''). 
%More specifically, these robust keyframe transitions . The proposed finite transition system incorporates robustness margins centering around keyframe states determined from the task planning process. 
Related work includes [\cite{fainekos2009robustness}] which studies the robust satisfaction of temporal logic specifications associated with continuous-time signals. Signal temporal logic (STL) [\cite{donze2010robust}] allows to reason about dense-time, real-valued signals, enabling for the evaluation of the extent to which the specifications are satisfied or violated. This property makes STL especially suitable to quantify robustness [\cite{farahani2015robust, deshmukh2015robust, sadraddini2015robust}]. The focus of all the work above is on the robust semantics of temporal logic, while our objective is to design robust locomotion planners where robustness margin sets are quantified as a goal in the reachability analysis under bounded disturbances. The work of [\cite{majumdar2011robust}] studied robust controller synthesis on discrete transition systems against disturbances and proposed a robust metric to ensure that the state deviation from the nominal system is bounded by the magnitude of the disturbance. The work of [\cite{topcu2012synthesizing}], on the other hand, investigated the amount of uncertainty that can be tolerated while the controller still satisfies the given specifications. Both of the two papers above, however, focused on robustness reasoning in a purely discrete model, whereas our proposed method reasons about robustness in a hybrid locomotion system and incorporates the underlying physical dynamics. Recently, the work in  [\cite{plaku2010motion,bhatia2010motion, he2015towards}] proposed a multi-layered synergistic framework such that the low-level sampling-based planner communicates with the high-level discrete planner through a middle coordinating layer. This coordinating layer allows the motion planner to ask the task planner for a new high-level plan when a failure occurs at the low level. This synergy between multiple planning layers enhances the robustness of the planning framework. As an alternative,
the work of [\cite{Dantam-RSS-16}] incrementally incorporated geometric information from the failure event of the motion planner into the task planner via the so-called incremental constraint updates. The robustness in the two lines of research above is reasoned from a replanning perspective. While our study employs a similar replanning strategy as theirs, our focus is on the formal synthesis of a task planner that can react to sudden event changes in the environment.
%Recently, the result of [\cite{Sadigh-RSS-16}] extended STL to a probabilistic framework such that machine learning methods can be used for predictions and safety control under uncertainty. 
%Meanwhile, they are not employed in the context of dynamic locomotion.
In this paper, we address the robustness as follows: (1) at the task planning level, we devise a reactive mechanism that chooses appropriate system actions according to environmental actions, and (2) at the motion planning level, we achieve robustness against bounded state disturbances by designing robust keyframe transitions for dynamic locomotion.

\begin{figure}[t]
 \centering
   \includegraphics[width=0.95\linewidth]{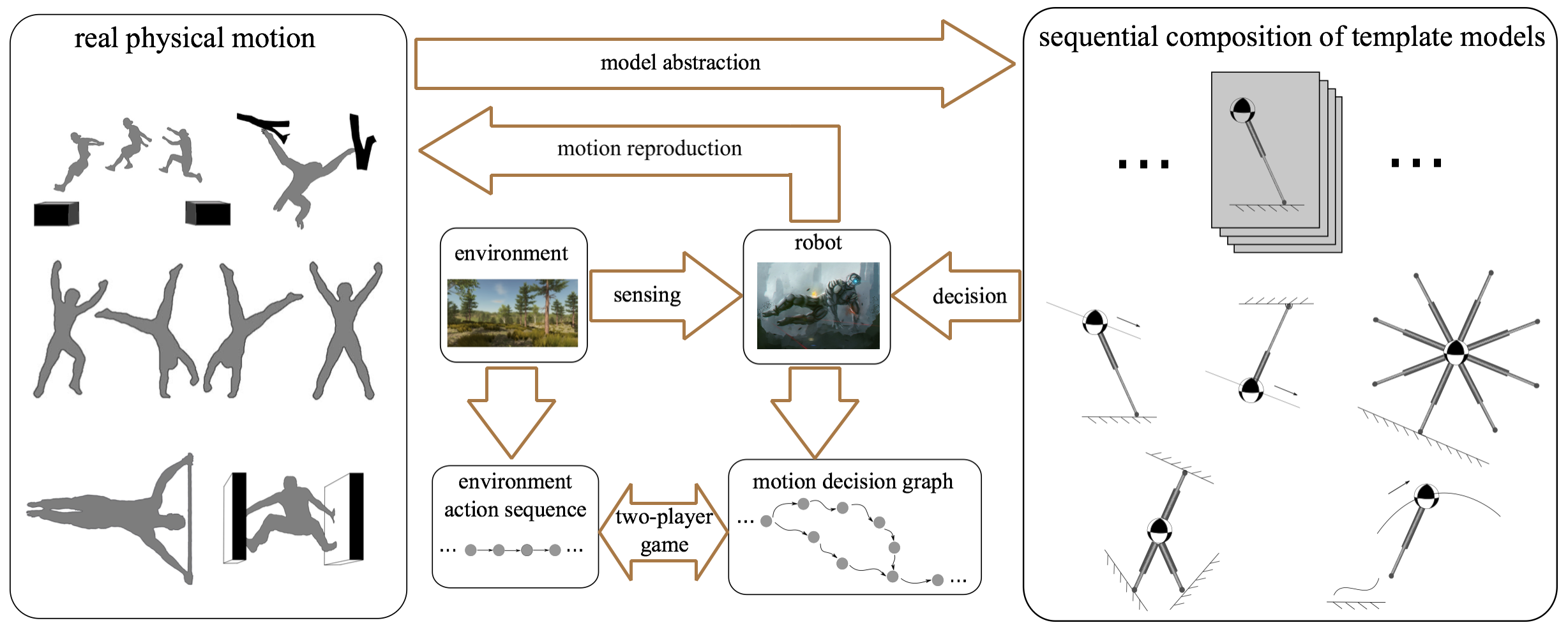}
 \caption{\captionsize Illustration of template-based locomotion behaviors dynamically interacting with complex environments. 
%  Humans and animals are capable of achieving highly agile locomotion behaviors such as leaping, cartwheel, gibbon brachiation, vertical climbing, etc. 
 Inspired from real-world human and animal motions, our study focuses on how to make model abstractions and high-level decisions for complex environments. A fundamental problem is how to use template models to characterize essential locomotion modes and sequentially compose these modes to achieve agile and robust locomotion. 
%  Our proposed strategy consists of (i) specifying desired task specifications via linear temporal logic; (ii) solving a two-player game such that the robot will always win over the environment; (iii) constructing a middle-level robust abstraction for the controller synthesis with one-walking-step reachability specification; and (iv) establishing a hierarchical structure to decompose the high-dimensional and contact-rich locomotion problem into a set of computationally tractable sub-problems with feedback replanning capabilities. The overall correctness of this hierarchical and robust planning framework is approximately guaranteed.
}
\label{fig:big-picture}
\end{figure}

\subsection{Multi-contact legged locomotion}
Multi-contact locomotion planning and control for humanoid robots have gained good traction as legged robots operate within complex environments more frequently in recent years [\cite{Sentis:10(TRO), chung2015contact, bouyarmane2011multi, hauser2014fast, posa21optimization}]. The work in [\cite{bretl2006motion}] studied multi-contact locomotion as a hybrid control problem while the work in [\cite{hauser2005non}] posed the multi-contact planning problem as a hierarchy that first reasons about contacts, and then interpolated these contacts with trajectories computed from a probabilistic planner. 
%The recent study in [\cite{WBOSC15}] uses internal force feedback control in the framework of Whole-Body Operational Space Control to achieve balancing on a challenging disjointed terrain. 
The study in [\cite{kudruss2015optimal}] formulated multi-contact centroidal momentum dynamics as an optimal control problem. However, all of the work above focused on either static or quasi-static mobility behaviors. Instead, our planning framework tackles highly dynamic behaviors, i.e., non-periodic multi-contact dynamic locomotion over rough and constrained environments. 
The work in [\cite{caron2015zmp}] employed contact wrench cones to geometrically construct dynamic supports in arbitrary virtual planes for multi-contact behaviors. This work did not employ a rich set of locomotion templates due to restrictive assumptions on the center of mass behavior.
% Especially, selecting a ZMP plane on top of the robot's CoM enables the robot to behave as a marginally stable linear pendulum model. Under this model, the support area is guaranteed to be a necessary and sufficient condition for contact stability.
Once again, all the work does not address symbolic reasoning of dynamic locomotion behaviors.

\section{Preliminaries and Problem Formulation}

\noindent \textbf{Problem Statement:} This study focuses on the reactive and robust synthesis of dynamic whole-body locomotion behaviors for robots equipped with arms and legs to maneuver in complex environments exposed to unexpected emergency events. We use a variety of reduced-order models characterizing the robot's center-of-mass dynamic behaviors. Robot actions are parameterized by discrete contact decisions (i.e., limb contact configurations) while environmental actions are composed of various features, such as stair height variations and emergency events, including the appearance of humans, terrain cracks, high ceilings, and narrow passages. A two-player game based on the linear temporal logic method is employed for the robot to be reactive to environmental events. We combine the reactive synthesis and reachability control to provide formal guarantees for locomotion in terms of correctness and robustness. While the synthesized actions and continuous control policies are designed off-line, we make them available as look-up tables for real-time online execution of reactive whole-body locomotion decisions and control commands. In this paper, we choose a specific set of environmental actions to demonstrate the versatility of our method in employing multiple limbs for locomotion and responding to a diversity of environmental changes and emergency events. Our method is flexible to incorporate more diverse environments, such as including the contact from lateral supporting walls or obstacles coming from different directions.

\label{sec:problem-formulation}
%In this section, we introduce the locomotion problem formulation, discuss linear temporal logic preliminaries, and propose a discrete symbolic task planner synthesis problem.
%
\subsection{Dynamic locomotion modes}
\label{subsec:low-level-imp}
We design a phase-space motion planner that consists of a palette of locomotion modes. To begin with, we introduce centroidal momentum dynamics in a general form.
%\subsection{Centroidal Momentum Dynamics}
Dynamics of mechanical systems can be represented by their rate of linear and angular momenta, which are affected by external wrenches (i.e., force/torque) exerted on the system. We characterize this class of dynamical systems via the  balance of moments around the system's centroid.
\begin{align}\label{eq:linearmomentum}
\boldsymbol{\dot{l}} &= m \boldsymbol{\ddot{p}}_{\rm com} = \sum_i^{N_c} \boldsymbol{f}_i + m \boldsymbol{g},\\[-0.5mm]\label{eq:angularmomentum}
\boldsymbol{\dot{k}} &= \sum_i^{N_c} (\boldsymbol{p}_i - \boldsymbol{p}_{\rm com}) \times \boldsymbol{f}_i + \boldsymbol{\tau}_i,
\end{align}
where $\boldsymbol{l} \in \mathbb{R}^3$ and $\boldsymbol{k} \in \mathbb{R}^3$ represent the centroidal linear and angular momenta, respectively. $\boldsymbol{f}_{i} \in \mathbb{R}^3$ is the $i^{\rm th}$ ground reaction force, $m$ is the total mass of the robot, $\boldsymbol{g} =(0, 0, -g)^T$ corresponds to the gravity field, $\boldsymbol{f}_{\rm com} = m \boldsymbol{\ddot{p}}_{\rm com} = m (\ddot{x}, \ddot{y}, \ddot{z})^T$ is the vector of center-of-mass inertial forces. Eq.~(\ref{eq:linearmomentum}) represents the rate of spatial linear momentum is equal to the total linear external forces. $\boldsymbol{p}_{i} = (p_{i,x}, p_{i,y}, p_{i,z})^T$ is the position of the $i^{\rm th}$ limb contact position. $\boldsymbol{\tau}_i \in \mathbb{R}^3$ is the $i^{\rm th}$ contact torque. Eq.~(\ref{eq:angularmomentum}) reveals that the rate of angular momentum is equal to the sum of the torques generated by contact wrenches at the CoM. 

Given this general model, certain assumptions are commonly imposed to make the problem tractable [\cite{audren2014model}]. In our case, six locomotion modes are proposed to produce various WBDL behaviors. 

\noindent\textbf{Mode (a): prismatic inverted pendulum model.} For single foot contact, Eq.~(\ref{eq:angularmomentum}) is simplified to $(\boldsymbol{p}_{\rm com} - \boldsymbol{p}_{\rm foot}) \times (\boldsymbol{f}_{\rm com} + m \, \boldsymbol{g}) = -\boldsymbol{\tau}_{\rm com}$.
Given a piece-wise linear CoM path surface to follow, the system dynamics are expressed as
\begin{align}\label{eq:PIPM}
\begin{pmatrix}
\ddot x\\
\ddot y
\end{pmatrix}
= \omega_{\rm PIPM}^2 
 \begin{pmatrix}
   x - x_{\rm foot} - \frac{\tau_y}{mg}\\[2mm]
  y - y_{\rm foot} - \frac{\tau_x}{mg}
 \end{pmatrix},
\end{align}
where $\ddot x$ and $\ddot y$ are linear CoM acclerations aligned with sagittal and lateral directions as defined in Eq.~(\ref{eq:linearmomentum}). The PIPM phase-space asymptotic slope [\cite{zhao2017robust}] is defined as
%\begin{align}\nonumber%\label{eq:CoMaccelRate}
$\omega_{\rm PIPM} = \sqrt{g/z^{\rm apex}_{\rm PIPM}}, \; z^{\rm apex}_{\rm PIPM} = (a \cdot x_{\rm foot} + b \cdot x_{\rm foot} + c  - z_{\rm foot}),$
%\end{align}
where $a$ and $b$ are the slopes for the piecewise linear CoM path surface $\psi_{\rm CoM}(x,y,z) = z - a x - b y - c = 0$. Thus, the dynamics in the vertical direction are represented by $\ddot z = a \ddot x + b \ddot y$ and not explicitly shown here. The control input is $\boldsymbol{u} = (x_{\rm foot}, y_{\rm foot}, \omega_{\rm PIPM}, \tau_x, \tau_y)^T$. For more details, please refer to the result in [\cite{zhao2017robust}].
%
%\begin{align}
%\dot{\boldsymbol{x}} &= f(\boldsymbol{x}) \boldsymbol{x} + g(\boldsymbol{x}) \boldsymbol{u}
%\end{align}
%where $\boldsymbol{x} = [x, \dot{x}, y, \dot{y}, L_x, L_y], \boldsymbol{u} = [f^1_x, f^1_y, f^1_z, ..., f^N_x, f^N_y, f^N_z], \dot{L}_x = \tau_x, \dot{L}_y = \tau_y$.

\noindent\textbf{Mode (b): prismatic pendulum model}. When the terrain is cracked, the robot has to grasp the overhead support to swing over an unsafe region using brachiation. The system dynamics can be approximated as a prismatic pendulum model (PPM). For a single hand contact, we have
\begin{align}\label{eq:accel2}
\begin{pmatrix}
\ddot x\\
\ddot y
\end{pmatrix}
= - \omega_{\rm PPM}^2 
 \begin{pmatrix}
   x - x_{\rm hand} - \frac{\tau_y}{mg}\\[2mm]
  y - y_{\rm hand} - \frac{\tau_x}{mg}
 \end{pmatrix},
\end{align}
where similarly we can define $\omega_{\rm PPM} = \sqrt{g/z^{\rm apex}_{\rm PPM}}, \; z^{\rm apex}_{\rm PPM} = (z_{\rm hand} - a \cdot x_{\rm hand} - b \cdot x_{\rm hand} - c)$, given the same piece-wise linear CoM path surface $\psi_{\rm CoM}(x,y,z) = z - a x - b y -c = 0$ in Mode (a). Similarly, vertical direction dynamics are represented by $\ddot z = a \ddot x + b \ddot y$. A difference between modes (a) and (b) lies in that PPM dynamics are inherently stable since the CoM is always attracted to move towards the apex position while the PIPM dynamics are not. This study assumes the robot can firmly grasp the overhead support once receiving the upper limb contact command. Fine reasoning of the low-level grasping model and potential failure scenarios are out of the scope of this paper, though important, and will be studied in future work.

\begin{figure*}[t]
 \centering
   \includegraphics[width=0.95\linewidth]{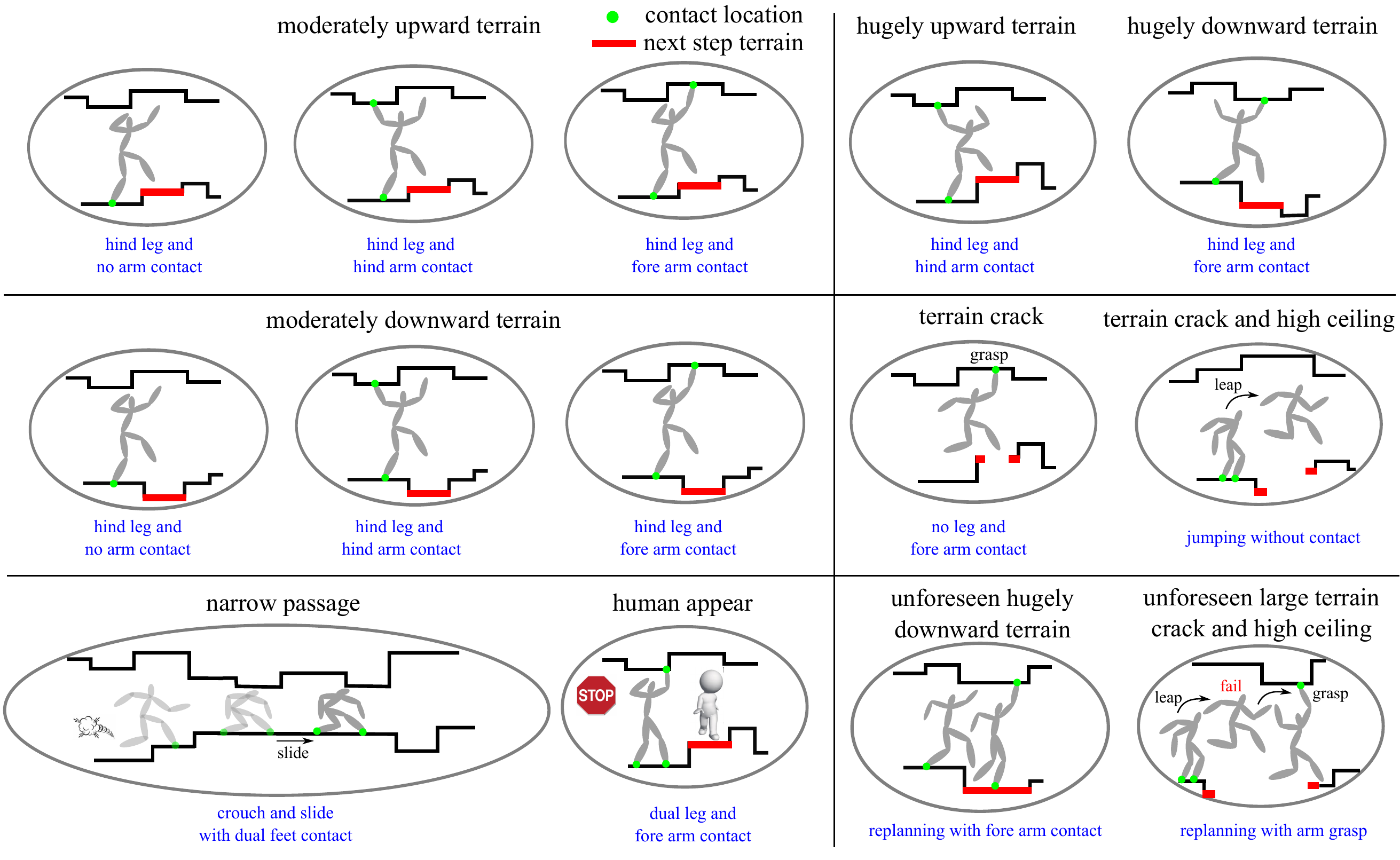}
 \caption{\captionsize Contact planning strategies for locomotion in rough terrains. We discretize the terrain height to decide what locomotion actions to take, and define them as environmental actions to set up a two player game decision problem. For instance, given a moderately upward or downward terrain, there can be multiple contact actions to deal with it. Events motivated by ordinary accidents in human daily lives, such as a crack on the terrain and the sudden appearance of a human, are treated as emergency events, and incorporated into the allowable environment. Detailed definitions of environment and system actions are provided in Section~\ref{subsec:actionSpec}.
}
\label{fig:ContactAction}
\end{figure*}

\noindent\textbf{Mode (c): stop-launch model.} When a human appears, the robot has to come to a stop, wait until human disappears, and start to move forward. The task in this mode consists on decelerating the CoM motion to zero and accelerating it from zero again. We name this model as a stop-launch model (SLM) with 
%
%\begin{align}\nonumber
%\dot{l}_x = m a_x, \; \dot{l}_y = m a_y, \; \dot{l}_z = m a_z,
%\end{align}
%
a constant CoM sagittal accelerations. %$\boldsymbol{u} = (a_x, a_y, a_z)^T$ as the control inputs. 
The resulting phase-space trajectory is a parabolic manifold. 
%More elegant acceleration profiles can be designed as needed.

\begin{figure}[t]
 \centering 
   \includegraphics[width=0.75\linewidth]{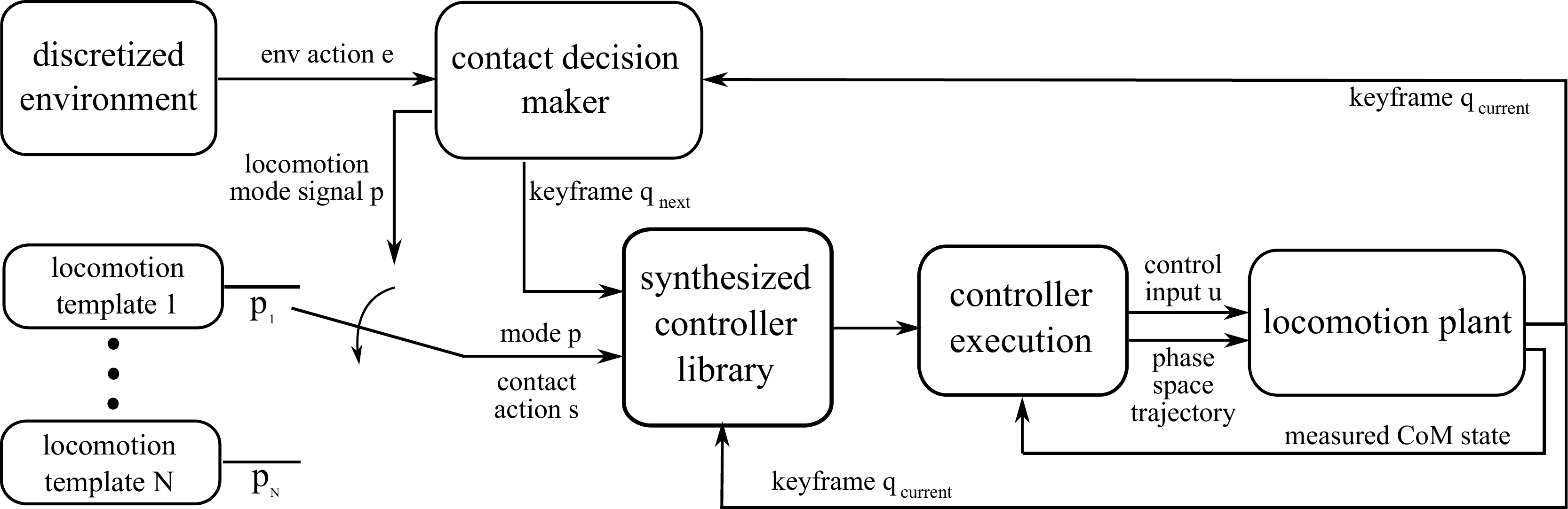}
 \caption{\captionsize Logic-based locomotion planner structure. A set of locomotion templates is devised for maneuvering in constrained dynamic environments. Each template is indexed by a locomotion mode signal $p$. The discrete environment actions are represented by the variable $e$ while control actions are represented by the variable $s$ describing limb contact actions. The discretized dynamic locomotion keyframes are represented by $q = (p_{\rm contact}, \dot{x}_{\rm apex})$. Based on an environmental action $e$ and a keyframe state $q$ at the current walking step, the contact decision maker decides the locomotion mode signal $p$ and the next keyframe locomotion state. More details on the usage of this decision process are discussed in Section~\ref{sec:Abstraction}.
}
\label{fig:SwitchingCtrl}
\end{figure}

\noindent\textbf{Mode (d): multi-contact model.}
%When the robot maneuvers through unstructured rough terrains, arms and legs in contact can accelerate and decelerate the CoM according to terrain height variations.
In this mode, a multi-contact model (${\rm MCM}$) is proposed built upon the centroidal momentum dynamics. To make the dynamics tractable, we assume a known constant vertical acceleration $a_{z}$ in each step and neglect of the angular momentum $k_z$ around the $z$-axis [\cite{audren2014model}]. Therefore, we have a constant resultant vertical external force, i.e., $\sum_i^{N_c} f_{i,z} = m (\ddot z - g)$, where $N_c$ is the number of limb contacts. Since our model has point contacts, $\boldsymbol{\tau}_i = 0, \forall i \leq N_c$, and the dynamics are described by
\begin{align}\nonumber
\begin{pmatrix}
\ddot x\\[1mm]
\ddot y\\[1mm]
\ddot \varphi\\[1mm] 
\ddot \theta
\end{pmatrix}
 & =
\begin{pmatrix}
 \sum_i^{N_c} f_{i,x}/m\\[1mm]
  \sum_i^{N_c} f_{i,y}/m\\[1mm]
  - (\ddot z - g)\cdot y + z\cdot \sum_i^{N_c} f_{i,y}/m  - \sum_i^{N_c} p_{i,z} \cdot f_{i,x}/m  + \sum_i^{N_c} p_{i,z}\cdot f_{i,z}/m\\[1mm]
  (\ddot z - g)\cdot x - z \cdot \sum_i^{N_c} f_{i,x}/m + \sum_i^{N_c} p_{i,z} \cdot f_{i,x}/m - \sum_i^{N_c} p_{i,y}\cdot f_{i,y}/m
\end{pmatrix},
\end{align}
% = \boldsymbol{A}_{\mathcal{X}} \boldsymbol{\mathcal{X}} + \boldsymbol{B}_u \boldsymbol{u}
where $\varphi$ and $\theta$ are torso roll and pitch angles aligned with the CoM sagittal and lateral directions as derived from Eq.~(\ref{eq:angularmomentum}). The external force vector $(f_{i, x}, f_{i, y}, f_{i, z})$ represents the $i^{\rm th}$ contact force. The vertical position $z$ is a function of $x$ and $y$ defined \textit{a priori}.
%The state vector $\boldsymbol{\xi} =(x, y, \varphi, \theta, \dot x, \dot y, \dot \varphi, \dot \theta)^T$ and the control input $\boldsymbol{u} = (f_{1, x}, f_{1, y}, f_{1, z}, \ldots, f_{N_c,x}, f_{N_c,y}, f_{N_c,z})^T$. 
%Matrices $\boldsymbol{A}_{\mathcal{X}}$ and $\boldsymbol{B}_u$ can be derived straightforwardly. 
%The control inputs in this mode are the contact forces. 
%These contact forces need to satisfy friction cone constraints while maximizing CoM accelerations. 
%Details are omitted due to limited space.

\noindent\textbf{Mode (e): hopping model.}
This model applies when the locomotion model needs to jump over an unsafe region. In this case, the CoM dynamics follow a free-falling ballistic trajectory. We have $\ddot x  = \ddot y  = 0, \ddot z  = -g$. The trajectory is fully controlled by the initial condition, where a discontinuous jump in the CoM state can occur and be used to generate a desired linear momentum. For instance, when the robot jumps over a cracked terrain, it needs to push the ground as the foot lifts to generate a sufficiently large sagittal linear acceleration.

\noindent\textbf{Mode (f): sliding model.}
This model applies when the robot needs to slide through a constrained region. The CoM dynamics are subject to a constant friction force. Thus, $\ddot x$ is a constant negative value, and we assume $\ddot y  = 0, \ddot z  = 0$. The sagittal linear velocity decays at a constant rate.

Given the locomotion modes above, we define the set of locomotion modes as
\begin{align}\nonumber
\mathcal{P} \coloneqq \{p_{\rm PIPM}, p_{\rm MCM}, p_{\rm PPM}, p_{\rm SLM}, p_{\rm HM}, p_{\rm SM}\}.
\end{align}
All the locomotion modes above are illustrated in Fig.~\ref{fig:ContactAction}. Each mode has closed-form solutions for their phase-space tangent and cotangent manifolds as will be derived in Section~\ref{sec:Abstraction} and Appendix~\ref{appen:PSManifold}. The timing synchronization between the sagittal and lateral dynamics is guaranteed by a Newton-Raphson foot placement searching algorithm [\cite{zhao2017robust}].
%Given a decision sequence from the high-level task planner, we synthesize a middle-level robust reachability controller under bounded disturbance. This controller ensures that the low-level phase-space motion planner can generate the continuous CoM trajectories to achieve the robustness margin set of the goal keyframe and compute the mode switching event, i.e., walking step transition. 
Likewise, more complex tasks can be defined in the locomotion mode set $\mathcal{P}$. For instance, cartwheel, dense gaps, and spinkick behaviors as shown in [\cite{peng2018deepmimic}] are promising behaviors to be explored.

{Our phase-space planning process produces three-dimensional locomotion. However, the planning framework of this study focuses on forward walking using sagittal keyframes. Given high-level sagittal keyframes, the robot's lateral dynamic behavior is automatically computed by our motion planner. Turning behaviors can be incorporated in our framework by using the method that we introduced in}{[\cite{zhao2017robust}]}.

\subsection{Switched systems and phase-space planning}
\label{subsec:PSPlanning}
Given the continuous locomotion modes above, we formulate the locomotion planning problem as a switched system [\cite{liberzon2012switching}]. The dynamics of the whole-body dynamic locomotion (WBDL) process are defined as
\begin{align}\label{eq:switchedsystem}
\dot{\boldsymbol{\xi}}(\zeta) = f_p\big(\boldsymbol{\xi}(\zeta), \boldsymbol{u}(\zeta), d(\zeta)\big), \; p \in \mathcal{P},
\end{align}
where $\boldsymbol{\xi}(\zeta) \in \Xi \subseteq \mathbb{R}^{12}$ denotes the full system state vector at $\zeta \in \mathbb{R}_{\geq 0}$, i.e., the twelve dimensional center-of-mass position and angular state vector of the robot during the locomotion process\footnote{The state vector $\boldsymbol{\xi}$ is reused to represent the center-of-mass sagittal states $(x, \dot x)$ when we model specific locomotion modes in later sections.}. The phase progression variable $\zeta$, analogous to time, represents the current phase progression on a locomotion trajectory. The control input is denoted by $\boldsymbol{u}(\zeta) = (\boldsymbol{p}_{\rm contact}, \omega, \tau_x, \tau_y, \tau_z) \in \mathcal{U}$, where $\boldsymbol{p}_{\rm contact}$ represents a set of contact position vectors, where each contact position vector is three-dimensional;  $\omega$ represents the slope of the phase-space asymptote dependent on specific locomotion modes as defined in Section~\ref{subsec:low-level-imp}; and $(\tau_x, \tau_y, \tau_z)$ represents a three-dimensional torso torque vector. Each locomotion mode merely involves a subset of the full state and control vectors. In addition, $d \in \mathcal{D} \subseteq \mathbb{R}^d$ represents an external disturbance. The locomotion mode $p$ (i.e., the locomotion mode) indexes a specific locomotion mode belonging to the set $\mathcal{P}$ and $f_p(\cdot)$ denotes a vector field associated with the locomotion mode $p$. A logic-based switched system modeling the locomotion process is shown in Fig.~\ref{fig:SwitchingCtrl}.

Our phase-space planning is a three-dimensional hybrid bipedal locomotion planning framework based on robustly tracking a set of non-periodic keyframe states. This framework focuses on non-periodic gait generation for robust and agile locomotion over various challenging terrains and under external disturbances. The keyframe state in the phase-space is defined as

\begin{figure}[t]
 \centering
   \includegraphics[width=0.95\linewidth]{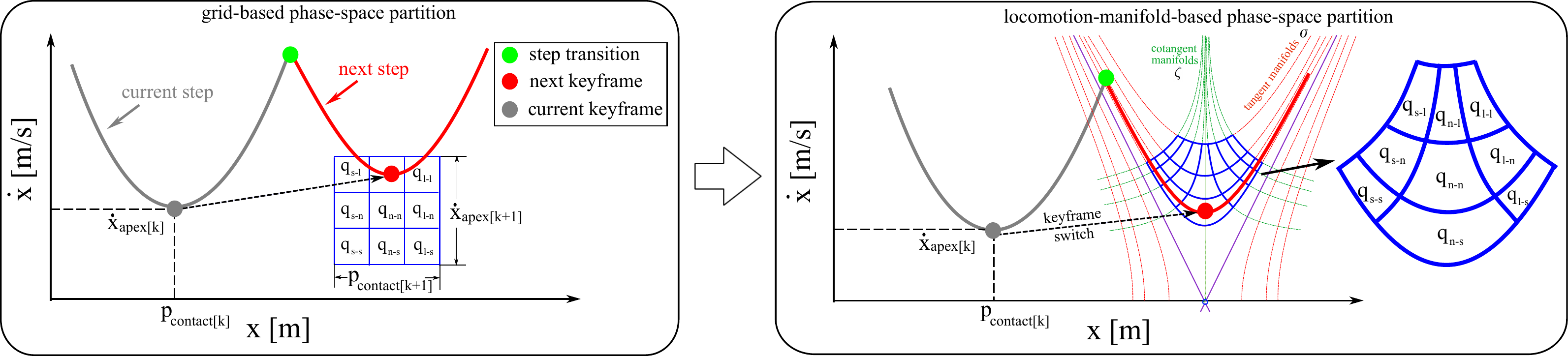}
 \caption{\captionsize Phase-space partition of locomotion manifolds for keyframe design. The left figure shows a grid-based partition while the right figure is a non-Euclidean partition that follows the phase-space locomotion manifolds. The latter partition is consistent with locomotion dynamics and we define it as the "locomotion-manifold-based partition". This partition will be used to achieve robust locomotion. We use different granularities for two orthogonal axes.}
\label{fig:Discretization}
\end{figure}

\begin{definition}[\textbf{Phase-space keyframe}]
A keyframe state in the phase-space of a locomotion system is a critical point on the locomotion manifold normally located either at the point of minimal or maximal velocity, or at an approximately central position of the phase-space manifold of one continuous walking step (see the gray and red dots in Fig.~\ref{fig:Discretization}).
\end{definition}

In general, this keyframe state refers to the apex state when the center-of-mass (CoM) velocity reaches the local minimal or maximum velocity in the CoM sagittal axis.\footnote{In the special case of the phase-space trajectory having a constant slope, we choose the CoM state locating at the central position of the phase-space trajectory as the keyframe state.} Given two consecutive keyframe states, the phase-space planner evolves continuously and computes the contact transitions of one walking step as defined below.
\begin{definition}[\textbf{Phase-space contact switch}]
A phase-space contact switch, i.e., a contact transition, is defined by the intersection of two adjacent phase-space trajectories (see green dot in Fig.~\ref{fig:Discretization}(a)).
\end{definition}
Our contact-triggered switching strategy is especially suitable for non-periodic locomotion, which is abstracted as a progression map $\Phi$ between keyframe states, that is, driving the robot's center-of-mass from one desired keyframe to the next one via the control input $\boldsymbol{u}$, i.e. $(p_{{\rm contact}_{k+1}}, \dot{x}_{{\rm apex}_{k+1}}) = \Phi(p_{{\rm contact}_k}, \dot{x}_{{\rm apex}_k}, \boldsymbol{u})$, where $p_{{\rm contact}_k}$ and $\dot{x}_{{\rm apex}_k}$ denote the $k^{\rm th}$-step CoM sagittal position and velocity at the contact apex, respectively. To accomplish whole-body dynamic locomotion behaviors, we will compose a sequence of locomotion modes with planned keyframes. This can be achieved by synthesizing a high-level task planner protocol which makes proper contact decisions like the ones shown 
in Fig.~\mbox{\ref{fig:ContactAction}} and determines the switching strategy of the low-level motion planner. 
%This motivates us to use the temporal-logic-based formal methods for task specifications and automatically synthesizing the \hl{locomotion mode} $p$ such that the switched system in Eq.~(\mbox{\ref{eq:switchedsystem}}) satisfies, by construction, all the specifications.

%
\begin{definition}[\textbf{One walking step in the phase-space}]\label{def:one-walking-step}
One walking step (OWS) of the locomotion process is defined as two consecutive semi-step phase-space trajectories (see Fig.~\ref{fig:Discretization}(a)). The first semi-step trajectory starts at the first keyframe state (gray dot) and ends at the contact switch (green dot) while the second semi-step trajectory starts at the contact switch and ends at the second keyframe (red dot). 
\end{definition}
Instead of using generalized coordinates associated with the robot joints, our planning framework chooses to use the robot's center-of-mass state as the output space. This simplified coordinate choice is often used in the locomotion communities. Alternatives for dimensionality reduction include, for instance, differential flatness [\cite{liu2012synthesis}] and partial hybrid zero dynamics [\cite{ames2015first}]. 
 
%As a brief summary, there could be several strategies as we proposed: instantaneous contact transition, contact transition with a smooth transition phase and contact transition with an invariant bundle. 
The switched system dynamics in Eq.~(\ref{eq:switchedsystem}) can be represented by a tuple
\begin{align}\label{eq:SwitchTuple}
\mathcal{SS} = (\Xi, \Xi_0, \mathcal{U}, \mathcal{P}, f, AP, \mathcal{L})
\end{align}
where $\Xi_0 \subseteq \Xi$ is a set of initial conditions, $AP$ is a set of atomic propositions and $\mathcal{L}: \Xi\rightarrow 2^{AP}$ is a labeling function. Then a control strategy for $\mathcal{SS}$ is a partial function defined as
\begin{align}
\label{eq:controlstrategy}
\Omega_i(\boldsymbol{\xi}_0, \boldsymbol{\xi}_1, \ldots, \boldsymbol{\xi}_i) = \boldsymbol{u}_i \in \mathcal{U}^{[0, \Delta \zeta _i]}, \forall i = 0, 1, 2, ...
\end{align}
where $\boldsymbol{\xi}_0, \boldsymbol{\xi}_1, \ldots, \boldsymbol{\xi}_i$ is a finite sequence of sampled states evaluated at discrete phase progression instants $\zeta_0, \zeta_1, \ldots, \zeta_i$ satisfying $\zeta_{j+1} -  \zeta_{j} = \Delta \zeta_i, \forall 0 \leq j \leq i-1$, and $\mathcal{U}^{[0, \Delta \zeta _i]}$ denotes the set of control input signals from $[0, \Delta \zeta _i]$ to $\mathcal{U}$. It is assumed that  $\boldsymbol{u}_i$ is the constant control input with a phase progression duration $\Delta \zeta_i$.

\textbf{Contact switching planner synthesis problem:} Given a switched system $\mathcal{SS}$ in Eq.~(\ref{eq:SwitchTuple}) and a specification $\varphi$ expressible in the linear temporal logic (LTL) form, synthesize a contact planning strategy for the system that (i) only generates correct phase-space trajectories $\kappa = (\boldsymbol{\xi}, \rho, \eta, \mu)$ in the sense that $\kappa \models \varphi$ for all initial conditions in $\Xi_0$, (ii) generates a locomotion mode $\mu$ in response to the environment actions at runtime. $\varphi$ is realizable by $\mathcal{SS}$ if there exists such a switching strategy. $\rho, \eta$ and $\mu$ are the continuous counterparts of the discrete environment action $e$, contact action $s$, and locomotion mode $p$. More detailed definitions will be introduced in Section~{\ref{sec:Abstraction}}.

%
%An non-uniform rational B-splines (NURBS) algorithm is designed to find transitions between adjacent steps. A benefit of NURBS lies in that it can be used to precisely represent conics and circular arcs by adding weights to control points. 
%
\subsection{Finite transition systems and LTL preliminaries}
We now define system, environment, and product finite transition systems and describe linear temporal logic (LTL) preliminaries. 
\begin{definition}[\textbf{Finite transition system of the robot system}]\label{def:SFTS}
A finite transition system of the robot system is a tuple,
\begin{equation}\label{eq:SFTS}
\mathcal{TS}_s \coloneqq (\mathcal{Q}, \mathcal{P}, \mathcal{S}, \mathcal{T}_s, \mathcal{I}_s, AP_s, \mathcal{\tilde{L}}_s),
\end{equation}
where $\mathcal{Q}$ is a finite set of states, $\mathcal{P}$ is a set of system modes as mentioned in Eq.~(\ref{eq:switchedsystem}), $\mathcal{S}$ is a finite set of controllable robot contact actions, $\mathcal{T}_s \subseteq \mathcal{Q} \xrightarrow{\mathcal{P} \times \mathcal{S}} \mathcal{Q}$ is a transition, $\mathcal{I}_s = \mathcal{Q}_0 \subseteq \mathcal{Q}$ is a set of initial states, $AP_s$ is a set of atomic propositions, $\mathcal{\tilde{L}}_s: \mathcal{Q} \rightarrow 2^{AP_s}$ is a labeling function mapping the state to an atomic proposition. $\mathcal{TS}_s$ is finite if $\mathcal{Q}, \mathcal{P}, \mathcal{S}$ and $AP_s$ are finite.
\end{definition}
%
%Later on, we will introduce a robust finite transition system to refine this system finite transition system with robustness margins. An over-approximation [\cite{liu2013synthesis}] is used to define state transitions, i.e., as long as there is a feasible state transition, we include it in the robust abstraction.
%

\begin{figure}[t]
 \centering
   \includegraphics[width=0.45\linewidth]{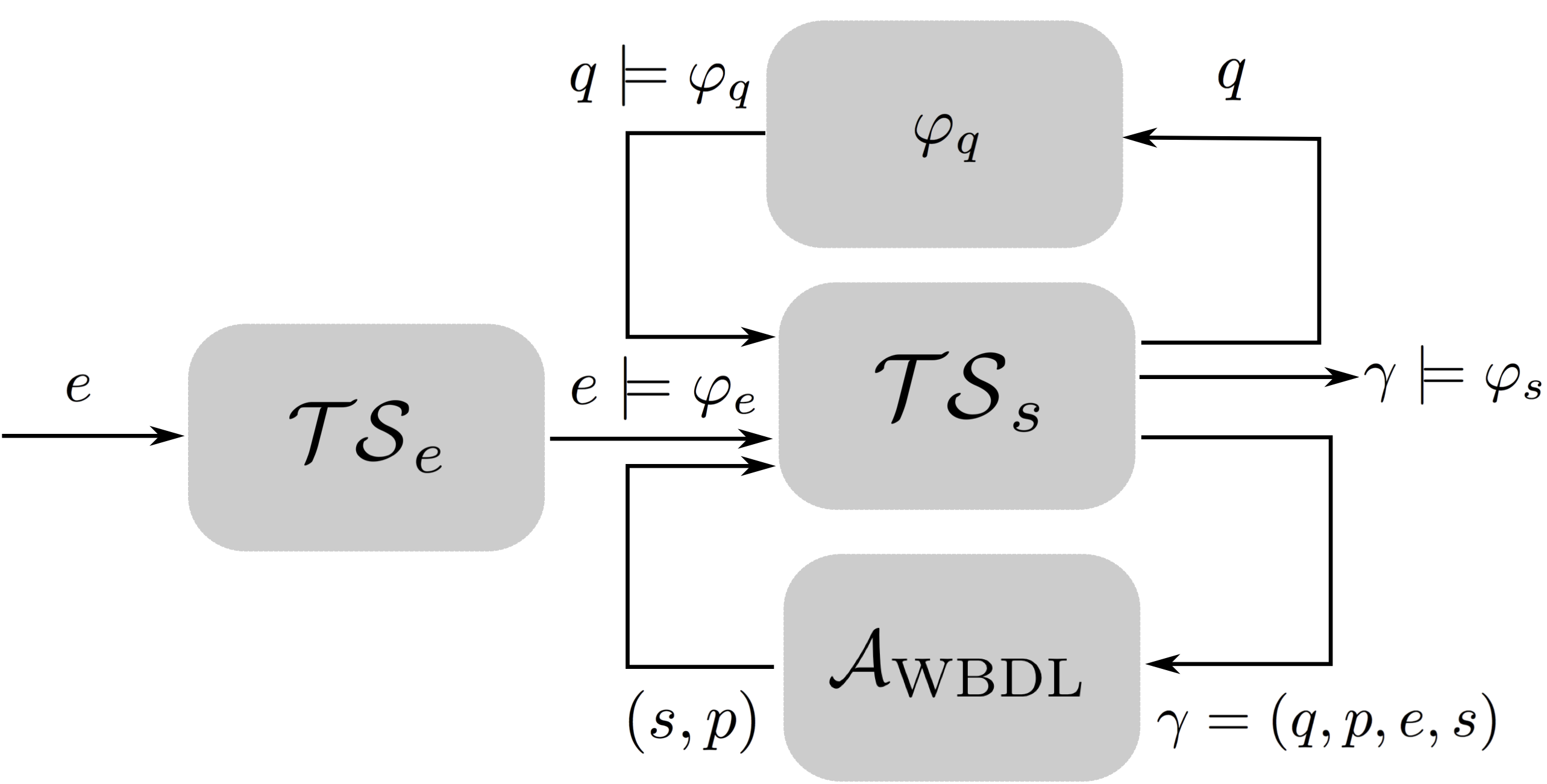}
 \caption{\captionsize The interconnected feedback diagram of system and environment finite transition systems $\mathcal{TS}_s$ and $\mathcal{TS}_e$, and a winning strategy $\mathcal{A}_{\rm WBDL}$ synthesized in Section~\ref{sec:high-level-planner}.
}
\label{fig:Diagram} 
\end{figure}

\begin{definition}[\textbf{Finite transition system of the environment}]\label{def:EFTS}
A finite transition system of the environment is a tuple, 
\begin{equation}\label{eq:EFTS}
\mathcal{TS}_e \coloneqq (\mathcal{E}, \mathcal{T}_e, \mathcal{I}_e, AP_e, \mathcal{\tilde{L}}_e),
\end{equation}
where $\mathcal{E}$ is a finite set of environmental states, $\mathcal{T}_e \subseteq \mathcal{E} \times \mathcal{E}$ is a transition, $\mathcal{I}_e = \mathcal{E}_0 \subseteq \mathcal{E}$ is a set of initial states, $AP_e$ is a set of atomic propositions, $\mathcal{\tilde{L}}_e: \mathcal{E} \rightarrow 2^{AP_e}$ is a labeling function mapping the state to an atomic proposition. $\mathcal{TS}_e$ is finite if $\mathcal{E}$ and $AP_e$ are finite.
\end{definition}
\begin{definition}[\textbf{Open finite product transition system}]\label{def:OFPTS}  
Given $\mathcal{TS}_s$ and $\mathcal{TS}_e$, we define an open finite product transition system (OFPTS) to describe the overall system behavior, including the robot and its environment as, 
\begin{equation}\label{eq:OFPTS}
\mathcal{TS}_{\rm prod} \coloneqq (\mathcal{Q}, \mathcal{P}, \mathcal{S}, \mathcal{E}, \mathcal{T}, \mathcal{I}, \overset{\sim}{AP},  \mathcal{\tilde{L}}),
\end{equation}
where $\mathcal{Q}, \mathcal{P}$ and $\mathcal{S}$ are defined as previously, $\mathcal{E}$ is a finite set of uncontrollable  environmental actions, $\mathcal{F} = \mathcal{Q} \times \mathcal{P} \times \mathcal{S} \times \mathcal{E}$, $\mathcal{T} \subseteq \mathcal{F} \rightarrow \mathcal{F}$ is a transition, $\mathcal{I} = \mathcal{F}_0 \subseteq \mathcal{F}$ is a set of initial states, $\overset{\sim}{AP}$ is a set of atomic propositions, $\mathcal{\tilde{L}}: \mathcal{Q} \rightarrow 2^{\overset{\sim}{AP}}$ is a labeling function mapping the state to an atomic proposition. $\mathcal{TS}_{\rm prod}$ is finite if $\mathcal{Q}, \mathcal{P}, \mathcal{E}, \mathcal{S}$ and $\overset{\sim}{AP}$ are finite.
\end{definition}
\noindent Note that, the environment states $\mathcal{E}$ in $\mathcal{TS}_e$ are treated as uncontrollable actions in $\mathcal{TS}_{\rm prod}$. This is why $\mathcal{TS}_{\rm prod}$ is called a ``open'' finite transition system [\cite{topcu2012synthesizing}]. Without loss of generality, it is assumed that for every pair $(q, e) \in \mathcal{Q} \times \mathcal{E}$, there exists at least one pair $(p, s)$ such that $(q, e) \xrightarrow{p, s} (q', e')$. The OFPTS considered in this study has non-deterministic transitions. 

\begin{definition}[\textbf{Execution and word of an OFPTS}]\label{def:Execution}
An execution $\boldsymbol{\gamma}$ of an OFPTS $\mathcal{TS}_{\rm prod}$ is an infinite path sequence $\boldsymbol{\gamma} = (q_0, p_0, e_0, s_0) (q_1, p_1, e_1, s_1) (q_2, p_2, e_2, s_2) \ldots$, with $\gamma_i = (q_i, p_i,e_i, s_i) \in \mathcal{Q} \times \mathcal{P} \times \mathcal{E} \times \mathcal{S}$ and $\gamma_i \xrightarrow{\mathcal{T}} \gamma_{i+1}$. The \textit{word} generated from $\boldsymbol{\gamma}$ is $w_{\gamma} = w_{\gamma}(0) w_{\gamma}(1) w_{\gamma}(2) \ldots$, with $w_{\gamma}(i) = \mathcal{\tilde{L}}(\gamma_i), \; \forall i \geq 0$. 
\end{definition}
\noindent The word $w_{\gamma}$ is said to satisfy a LTL formula $\varphi$, if and only if the execution $\boldsymbol{\gamma}$ satisfies $\varphi$. If all executions of $\mathcal{TS}_{\rm prod}$ satisfy $\varphi$, we say that $\mathcal{TS}_{\rm prod}$ satisfies $\varphi$, i.e., $\mathcal{TS}_{\rm prod} \models \varphi$. Please refer to Fig.~\ref{fig:Diagram} for an illustration of the finite transition systems. Linear temporal logic is an extension of propositional logic that incorporates temporal operators. Preliminaries of linear temporal logic are explained in Appendix~\ref{subsec:LTL-preliminary}.

%\hl{Intuitively, we can follow the LTL syntactic rules to specify locomotion tasks and then synthesize an automaton representing a switching strategy as shown in Fig.~\mbox{\ref{fig:fragmentAutomaton}}.} 

%Given an environment $\mathcal{E}$ and a controllable robot system $\mathcal{S}$, we define an augmented variable $\boldsymbol{r} \in \mathcal{R} = \mathcal{E} \cup \mathcal{S}$. 

\subsection{Discrete task planner synthesis formulation}
Given the preliminaries above, we formulate a discrete task planner synthesis problem and introduce a specific fragment of the temporal logic for the task specifications.

\noindent \textbf{Discrete task planner synthesis problem:} Given a product transition system $\mathcal{TS}_{\rm prod}$ and a LTL specification $\varphi$ following the assume-guarantee form [\cite{bloem2012synthesis}], %$\setminus \bigcirc$
\begin{align}\label{eq:gr1}
\varphi \coloneqq \big(\varphi_e \Rightarrow (\varphi_q \wedge \varphi_s) \big),
\end{align}%(\varphi_{\rm q} \wedge
where $\varphi_e$ and $\varphi_q, \varphi_s$ are  propositions for the admissible environment actions, the keyframe states, and the correct overall system behavior, respectively; in particular, $\varphi_s$ incorporates the behaviors of locomotion mode $p$ and contact action $s$; we synthesize a contact planner switching strategy $\gamma$ that generates only correct executions $(q, p, e, s)$, i.e., $(q, p, e, s) \models \varphi$.

To make the computation tractable, we employ a fragment of LTL formulae with a favorable polynomial complexity, named the Generalized Reactivity (1) (GR (1)) formulae [\cite{bloem2012synthesis}]. This class of formulae is expressed as, for $v \in \{e, q, s\}$,
\begin{align}\label{eq:GR}
\varphi_v = \varphi^v_{\rm init} \bigwedge_{i \in I_{\rm safety}} \Box \varphi^v_{{\rm trans}, i} \bigwedge_{i \in I_{\rm goal}} \Box \Diamond \varphi^v_{{\rm goal}, i},
\end{align}
where $\varphi^v_{\rm init}$ are the propositional formulae defining initial conditions. $\varphi^v_{{\rm trans},i}$ refer to the transitional propositional formulae (i.e., safety conditions) incorporating the state at next step. $\varphi^v_{{\rm goal}, i}$ are the propositional formulae describing the goals to be reached infinitely often (i.e., liveness conditions). 
%\hl{It is obvious that the GR (1) formulae are syntactically intuitive.}

\begin{remark}
The GR(1) formula is an efficient fragment of LTL and reasons over a rich set of states and actions and makes the task planner synthesis process tractable. A motivation of using this automated synthesis is to lay the theoretical foundation of devising a correct-by-construction decision-maker for composing complex locomotion trajectories.
%and allow us to rapidly verify whether these specifications can be realized.
%These outcomes facilitate the diagnosis of specifications.
\end{remark}
%
%--------------------%-------------------%

%--------------------%-------------------%
 \section{Task Planning for Whole-body Dynamic Locomotion}
\label{sec:WBDLSpec}
In this section, we introduce the temporal logic specifications for locomotion in a possibly adversarial environment. We will specify a two-player game where the environment and keyframe state are the first player while the robot action is the second player. Our task specifications will capture two types of environmental events: (i) varying height terrains are treated as ordinary events; and (ii) sudden incidents, such as a person appearing on the robot's path or a crack on the terrain, are treated as adversarial actions, since if the robot does not respond properly they may cause an accident.

{The design of linear temporal logic (LTL) specifications relies on human designers who specify locomotion tasks and models of the environment. Our task specifications below are designed according to locomotion heuristics. In general, there are no unique ways to evaluate the efficacy of the LTL design process. For instance, we could decide to add an additional environmental specification to forbid repeating the same environmental actions \textsf{\small terrainCrack-normalCeiling} $e_{\rm tc{\text -}nc}$, expressed as $\Box (e_{\rm tc{\text -}nc} \Rightarrow \neg \bigcirc e_{\rm tc{\text -}nc})$. Using locomotion heuristics is an effective way to generate natural and safe locomotion behaviors. 
%For instance, when the next step terrain is upstairs, we remain the same CoM velocity and step length or decrease them. When the next terrain step has a crack and a high ceiling, the robot has to initiate a hopping motion with a larger CoM velocity than that in the normal walking process. 
The heuristics that we have chosen employ human intuition, which often lead to natural and recognizable behaviors. In addition, another set of heuristics is used to guarantee safety.}

\subsection{Environment specifications}
\label{subsec:actionSpec}
As previously stated, we treat the environment as a player ``acting'' against the robot's locomotion process. We define an environmental action set, $\mathcal{E}$, as the composition of two subsets: a set for varying height terrain and a set for emergencies (i.e., the so-called sudden events), respectively.
\begin{align}\label{eq:envirActions}
\mathcal{E} \coloneqq \mathcal{E}_{\rm terrain} \cup \mathcal{E}_{\rm emergency} = \{e_{\rm md}, e_{\rm hd}, e_{\rm mu}, e_{\rm hu} \} \cup \{e_{\rm tc{\text -}nc}, e_{\rm tc{\text -}hc}, e_{\rm ha}, e_{\rm np}\},
\end{align}
where the elements in the set $\mathcal{E}_{\rm terrain}$ denote different height terrain actions, as illustrated in Fig.~\ref{fig:ContactAction}. For instance, $e_{\rm md}$ denotes \textsf{\small moderatelyDownward} terrain. The actions in $\mathcal{E}_{\rm emergency}$ represent sudden events, i.e. \textsf{\small terrainCrack-normalCeiling}, \textsf{\small terrainCrack-highCeiling}, \textsf{\small humanAppear}, and \textsf{\small narrowPassage}. The environmental action set specified above is generalizable to other environmental events while maintaining computational tractability.
% \textsf{\small downward} and \textsf{\small upward} terrains with \textsf{\small moderately} and \textsf{\small hugely} height variations, respectively. 
%
%In non-sudden cases, at least one leg is assumed to contact with the environment. The only exception is the sudden scenario \textsf{\small terrainCrack}, where the robot has to grasp the ceiling wall. 
Given the environmental actions above, we design the following specifications. 
First, the following sudden environmental actions are assumed to not occur at the initial instant:
\begin{align}\label{eq:env-initial-prop}
\varphi^e_{\rm init} = \neg e_{\rm tc{\text -}nc} \wedge \neg  e_{\rm tc{\text -}hc} \wedge \neg e_{\rm ha} \wedge \neg e_{\rm np}
\end{align}
Since only one environmental action can be \textsf{\small True} at any time, we enforce the following transitional proposition
\begin{align}\label{eq:environment-goal-spec}
\Box \Big( \big( e_{\rm md} \land \bigwedge_{e \in \mathcal{E} \backslash e_{\rm md}} (\neg e) \big) \bigvee \big(e_{\rm hd} \land \bigwedge_{e \in \mathcal{E} \backslash e_{\rm hd}} (\neg e) \big) \bigvee \ldots \bigvee \big(e_{\rm np} \land \bigwedge_{e \in \mathcal{E} \backslash e_{\rm np}} (\neg e) \big) \Big).
\end{align}
where the operator $\bigwedge_{e \in \mathcal{E} \backslash e_{\rm md}}$ is used to represent the conjunction of multiple environmental propositions $\neg e, \forall e \in \mathcal{E} \backslash e_{\rm md}$.
To enable the robot to maneuver through the dynamic environment, certain sudden environmental actions are forbidden to occur consecutively, as shown in the following transitional specifications:
\begin{itemize}
\item ($S_{e{\text -}1}$) If the current environmental action is \textsf{\small terrainCrack-highCeiling}, then the next environmental action can not be \textsf{\small terrainCrack-highCeiling}, \textsf{\small humanAppear}, nor \textsf{\small narrowPassage}.
\begin{align}
\Box \Big(e_{\rm sc\text{-}hc} \Rightarrow \neg (e_{\rm sc\text{-}hc}\wedge e_{\rm ha} \wedge e_{\rm np})\Big)
\end{align}
\item ($S_{e{\text -}2}$) If the current environmental action is \textsf{\small terrainCrack-normalCeiling}, then the next environmental action can not be \textsf{\small terrainCrack-highCeiling}, \textsf{\small humanAppear}, nor \textsf{\small narrowPassage}.
\begin{align}
\Box \Big(e_{\rm sc\text{-}nc} \Rightarrow \neg (e_{\rm sc\text{-}hc}\wedge e_{\rm ha} \wedge e_{\rm np})\Big)
\end{align}
\item ($S_{e{\text -}3}$) If the current environmental action is \textsf{\small narrowPassage}, then the next environmental action can not be \textsf{\small terrainCrack-normalCeiling} nor \textsf{\small terrainCrack-highCeiling}.
\begin{align}
\Box \Big(e_{\rm np} \Rightarrow \neg (e_{\rm sc\text{-}hc} \wedge e_{\rm sc\text{-}nc} )\Big)
\end{align}
\end{itemize}
To evaluate the effectiveness of our proposed approach handling all the allowable environmental actions, we enforce them to occur infinitely often via the goal proposition:
\begin{align}\label{eq:environment-goal}
\varphi^e_{\rm goal} = (\Box \Diamond e_{\rm md}) \wedge (\Box \Diamond e_{\rm hd}) \wedge \ldots \wedge (\Box \Diamond e_{\rm np})
\end{align}

To ensure the robot makes progress (i.e., continuously moves forward within the constrained environment), we define the following liveness condition:
\begin{align}
\varphi^{e}_{\rm liveness} \coloneqq \neg \Diamond \Box e_{\rm ha} \wedge \neg \Diamond \Box e_{\rm np} 
\end{align}
which is consistent with the goal proposition of Eq.~(\ref{eq:GR}). This specification establishes that the robot cannot eventually always encounter the conditions \textsf{\small humanAppear} or \textsf{\small narrowPassage}. In fact, this liveness condition should also include the environmental action \textsf{\small terrainCrack-highCeiling}, i.e., $\neg \Diamond \Box e_{\rm sc\text{-}hc}$, which is already guaranteed by $\Box (e_{\rm sc\text{-}hc} \Rightarrow \neg e_{\rm sc\text{-}hc})$ in specification ($S_{e{\text -}1}$).

\subsection{Robot specifications}
To maneuver in the environment using whole-body dynamic locomotion, we define the following robot actions 
\begin{align}\label{eq:systemActions}
\mathcal{S} \coloneqq \{s_{\rm li\text{-}aj}, \; \forall (i, j) \in \mathcal{S}_{\rm index}\},
\end{align}
where the indices `$\rm l$' and `$\rm a$' are short for \textsf{\small leg} and \textsf{\small arm}, respectively. $(i, j)\in \mathcal{S}_{\rm index}$ corresponds to the contact limb with $\mathcal{S}_{\rm index} = {\rm \{(h, n), (h, h), (h, f), (d, h), (d, f), (d, d), (d, n), (n, f), (n, n)\}}$, where the letters `$\rm h$', `$\rm f$', `$\rm d$' and `$\rm n$' represent \textsf{\small hind}, \textsf{\small fore}, \textsf{\small dual} and \textsf{\small no} contacts, respectively. For instance, $s_{\rm lh\text{-}af}$ specifies the \textsf{\small legHindArmFore} contact action in the sense that the robot's hind leg and the fore arm are in contact for that action while the other two limbs are not in contact. Notice that we don't specify left and right limbs explicitly as the \textsf{\small hind} and \textsf{\small fore} adjectives lead to unique assignments during the locomotion process. 

We enforce the robot not to take actions responding to emergency events of the environment $\mathcal{E}_{\rm emergency}$, i.e., $\varphi^s_{\rm init} = \neg s_{\rm ln\text{-}af} \wedge \neg s_{\rm ln\text{-}an} \wedge \neg (s_{\rm ld\text{-}ah} \vee s_{\rm ld\text{-}af})$, which are already guaranteed by the initial propositions defined for the environmental actions in Eq.~(\ref{eq:env-initial-prop}). Given a specific set of locomotion modes $\mathcal{P}$ as defined in Section~\ref{subsec:low-level-imp}, the robot transitional specifications $\varphi^s_{\rm trans}$ are defined as follows:
%The constraints from the terrain are friction cone constraint and the reachability space of body and limbs based on this contact position. Here $c_i$ represents the $i^{\rm th}$ leg (i.e., $l_i$) contact constraint
%
%\textbf{Can I introduce more complex automaton symbols into this part?}
%
\begin{itemize}
\begin{comment}
\item (S0) None of the huge terrain height variations (i.e., \textsf{\small hugelyDownward} or \textsf{\small hugelyUpward}), \textsf{\small humanAppear} and \textsf{\small terrainCrack} occur in two consecutive steps, 
\begin{align}\nonumber
&\Box (e_{\rm hd} \Rightarrow \bigcirc \neg e_{\rm hd}) \bigwedge \Box (e_{\rm hu} \Rightarrow \bigcirc \neg e_{\rm hu})  \\\nonumber
\bigwedge &\Box (e_{\rm ha} \Rightarrow \bigcirc \neg e_{\rm ha})  \bigwedge \Box (e_{\rm sc} \Rightarrow \bigcirc \neg e_{\rm sc}).
\end{align}
\end{comment}
%
\item ($S_{\rm robot{\text -}1}$) Robot actions in response to varying-height terrain $\mathcal{E}_{\rm terrain}$ are specified as
\begin{align}\nonumber
&\Box \Big( (e_{\rm md} \vee e_{\rm mu}) \Rightarrow (p_{\rm PIPM} \wedge s_{\rm lh\text{-}an}) \vee \big(p_{\rm MCM} \wedge (s_{\rm lh\text{-}ah} \vee s_{\rm lh\text{-}af})\big)\Big) \\\nonumber
\bigwedge & \Box (e_{\rm hu} \Rightarrow p_{\rm MCM} \wedge s_{\rm lh\text{-}ah}) \bigwedge \Box (e_{\rm hd} \Rightarrow p_{\rm MCM} \wedge s_{\rm lh\text{-}af}),
\end{align}
where \textsf{\small moderate} terrain variations allow for the use of more robot contact actions than in the case of \textsf{\small huge} terrain variations. For instance, if $e = e_{\rm hu}$, i.e. the terrain has an action \textsf{\small hugelyUpward}, the robot has only one action to choose from, consisting of using its hind arm for contact such that it can push forward its center of mass to overcome the huge terrain variation as shown in Fig.~\ref{fig:ContactAction}. 
\item ($S_{\rm robot{\text -}2}$) If the environmental action \textsf{\small terrainCrack\text{-}normalCeiling} occurs, i.e., a crack on the terrain appears and the ceiling above the robot has a normal height (assumed to be accessible by the robot), the robot will grab a supposedly existing handle on the overhead support using its forearm (i.e., $s_{\rm ln\text{-}af}$). On the other hand, when there is no crack on the terrain, we don't allow the use of that action:% $a \neq s_{\rm ln\text{-}af}$:
\begin{align}\nonumber
\Box(e_{\rm tc{\text -}nc} \Rightarrow p_{\rm PPM} \wedge s_{\rm ln\text{-}af}) \bigwedge \Box( \neg e_{\rm tc{\text -}nc} \Rightarrow \neg p_{\rm PPM} \wedge \neg s_{\rm ln\text{-}af}).
\end{align}

\item ($S_{\rm robot{\text -}3}$) If the environmental action \textsf{\small humanAppear} occurs, i.e., a person appears in front of the robot, the robot comes to a stop using the \textsf{\small legDual} contacts and the arm contacts. On the other hand, when the person disappears, the robot should continue walking from where it stopped before:
\begin{align}\nonumber
\Box \big(e_{\rm ha} \Rightarrow p_{\rm SLM} \wedge (s_{\rm ld\text{-}ah} \vee s_{\rm ld\text{-}af} \vee s_{\rm ld\text{-}an}) \big)
\bigwedge \Box \big( \neg e_{\rm ha} \Rightarrow \neg p_{\rm SLM} \wedge \neg ( s_{\rm ld\text{-}ah} \vee s_{\rm ld\text{-}af} \vee \neg s_{\rm ld\text{-}an} )\big).
\end{align}

\item ($S_{\rm robot{\text -}4}$) If a narrow passage \textsf{\small narrowPassage} appears, the robot will slide on the ground using two feet and no arm contacts. On the other hand, if there is no narrow passage, the robot will not use the sliding mode
\begin{align}\nonumber
\Box(e_{\rm np} \Rightarrow p_{\rm SM} \wedge  s_{\rm ld\text{-}an}) \bigwedge \Box(\neg e_{\rm np} \Rightarrow \neg p_{\rm SM}).
\end{align}

\item ($S_{\rm robot{\text -}5}$) If the environmental action \textsf{\small terrainCrack\text{-}highCeiling} appears, i.e., a crack appears on the terrain and there is a high ceiling, the robot will have to leap over the cracked region using a hopping motion (i.e., $s_{\rm ln\text{-}an}$). On the other hand, when this environmental action does not occur, we do not allow to use that action:
\begin{align}\nonumber
\Box(e_{\rm tc{\text -}hc} \Rightarrow p_{\rm HM} \wedge s_{\rm ln\text{-}an}) \bigwedge \Box( \neg e_{\rm tc{\text -}hc} \Rightarrow \neg p_{\rm HM} \wedge \neg s_{\rm ln\text{-}an}).
\end{align}

%\item (S4) Neither unexpected events (i.e., \textsf{\small humanAppear} and \textsf{\small terrainCrack}) occur infinitely often:
%
%\begin{align}\nonumber
%\Box \Diamond \neg e_{ha} \bigwedge \Box \Diamond \neg e_{sc}.
%\end{align}

%\item (S6) If there is an object for the robot to carry on $ e_{oa}$, the robot will use one of its swing arm to carry this object at next step.
%%
%\begin{align}\nonumber
%\Box(\bigcirc e_{oa} \Rightarrow \bigcirc s_{\rm aco})
%\end{align}
\end{itemize}
As for the goal proposition of the robot, we require that all locomotion modes and contact actions will occur infinitely often to verify their correctness.

\begin{align}
\varphi^s_{\rm goal} = (\Box \Diamond p_{\rm PIPM}) \wedge (\Box \Diamond s_{\rm ld\text{-}ah}) \wedge (\Box \Diamond s_{\rm ld\text{-}af}) \wedge (\Box \Diamond s_{\rm ld\text{-}an})
\end{align}
where we do not list all the goal propositions of locomotion modes and contact actions. The reason is that the other goal propositions regarding contact actions and locomotion modes are implied by the goal propositions of the environment defined in Eq.~(\ref{eq:environment-goal}).

%Both the switching signals $p$ and the set of contact configurations $s$ are needed because these two parameters do not have a one-to-one mapping relationship. For instance, in ($S_{\rm robot{\text -}1}$), the switching signal $p_{\rm MCM}$ corresponds to multiple contact configurations. On the other hand, in ($S_{\rm robot{\text -}3}$) and ($S_{\rm robot{\text -}4}$), the contact configuration $s_{\rm ld\text{-}an}$ can correspond to two different switching signals $p_{\rm SLM}$ and $p_{\rm SM}$.

\subsection{Keyframe specifications}
\label{subsec:LTLDS}
Our phase-space motion planner relies on a keyframe state vector $q = \{p_{\rm contact}, \dot{x}_{\rm apex}\}$ as defined in Section~\ref{subsec:PSPlanning}. In the task planner, the keyframe state is designed to be non-deterministic. We define a discretized phase-space region to choose keyframe states for each walking step using a Riemannian geometry decomposition as shown in Fig.~\ref{fig:Discretization}(b). The keyframe states consist of ordinary and special types (see further below)
\begin{align}\label{eq:complete-keyframe-set}\nonumber
\mathcal{Q} \coloneqq \mathcal{Q}_{\rm ordinary} \cup \mathcal{Q}_{\rm special} = &\{q_{i\text{-}j\text{-}k},\; i \in \mathcal{I}_{\rm ordinary\text{-}behavior},\; \forall (j, k) \in \mathcal{I}_{\rm level}\times\mathcal{I}_{\rm level}\} \\ %\cup\{q_{{\rm stop}\text{-}k}, \; \forall k \in \mathcal{I}_{\rm level}\} \\
&\cup\{q_{i\text{-}j},\; i \in \mathcal{I}_{\rm special\text{-}behavior},\; \forall j \in \mathcal{I}_{\rm level}\}
\end{align}
where ordinary behaviors are $\mathcal{I}_{\rm ordinary\text{-}behavior} = \{\textsf{\small walk, brachiation}\}$ while special behaviors are $\mathcal{I}_{\rm special\text{-}behavior} = \{\textsf{\small stop, hop, slide}\}$. A apex velocity index $j$ and a step length index $k$ refer to the set $\mathcal{I}_{\rm level} = \{s, m, l\}$ whose elements are three different keyframe ``levels'': $s$ (\textsf{\small Small}), $m$ (\textsf{\small Medium}) and $l$ (\textsf{\small Large}).\footnote{More levels can be introduced at the expense of a combinatorial increase on the total number of the keyframe states.} For instance, $q_{\rm walk\text{-}s\text{-}l}$ represents \textsf{\small walkSmallVelocityLargeStep}, a walking keyframe with a small apex velocity, and a large step length. In our case, the ordinary locomotion behaviors (i.e., \textsf{\small walk} and \textsf{\small brachiation}) comprise 9 keyframe states, respectively while the special locomotion behaviors (i.e., \textsf{\small stop}, \textsf{\small hop} and \textsf{\small slide}) comprise 3 keyframe states, respectively.

Given the environmental actions in Section~\ref{subsec:actionSpec}, the specifications for keyframe states are designed as follows.
\begin{itemize}
\item ($S_{q{\text -}1}$) If the next environmental action is \textsf{\small moderatelyDownward} $e_{\rm md}$, the level for the next keyframe state $q$ remains constant or increases by one level either from step length or apex velocity:
\begin{align}\nonumber
&\Box \big((q_{\rm walk\text{-}s\text{-}s} \wedge \bigcirc e_{\rm md} ) \Rightarrow \bigcirc ( q_{\rm walk\text{-}s\text{-}s} \vee q_{\rm walk\text{-}s\text{-}m} \vee q_{\rm walk\text{-}m\text{-}s}) \big) \\\nonumber
\bigwedge &\Box \big((q_{\rm walk\text{-}s\text{-}m} \wedge \bigcirc e_{\rm md}) \Rightarrow \bigcirc ( q_{\rm walk\text{-}s\text{-}m} \vee q_{\rm walk\text{-}s\text{-}l} \vee q_{\rm walk\text{-}m\text{-}m}) \big) \\\nonumber
& \quad \quad \hdots \\\nonumber
 \bigwedge &\Box \big((q_{\rm walk\text{-}l\text{-}m} \wedge \bigcirc e_{\rm md} ) \Rightarrow \bigcirc ( q_{\rm walk\text{-}l\text{-}m} \vee q_{\rm walk\text{-}l\text{-}l}) \big) \bigwedge \Box \big((q_{\rm walk\text{-}m\text{-}l} \wedge \bigcirc e_{\rm md} ) \Rightarrow \bigcirc ( q_{\rm walk\text{-}m\text{-}l} \vee q_{\rm walk\text{-}l\text{-}l}) \big) \\\nonumber
\bigwedge &\Box \big((q_{\rm walk\text{-}l\text{-}l} \wedge \bigcirc e_{\rm md}) \Rightarrow \bigcirc q_{\rm walk\text{-}l\text{-}l} \big) \bigwedge \Box \Big(\big( (q_{\rm brachiation} \vee q_{\rm stop}) \wedge \bigcirc e_{\rm md}\big )   
\Rightarrow \bigcirc (q_{\rm walk\text{-}s\text{-}m} \vee q_{\rm walk\text{-}m\text{-}m} \vee q_{\rm walk\text{-}l\text{-}m}) \Big),
\end{align}
where, if $q = q_{\rm walk\text{-}s\text{-}s}$, $\bigcirc q$ can be $q_{\rm walk\text{-}s\text{-}s}$ (remaining constant), $q_{\rm walk\text{-}s\text{-}m}$ (step length increases one level) or $q_{\rm walk\text{-}m\text{-}s}$ (apex velocity increases one level). All the other keyframes in ordinary scenarios follow the same pattern. There are three special cases: (i) when $q = q_{\rm walk\text{-}l\text{-}m}$, there are only two choices for $\bigcirc q$, i.e., $q_{\rm walk\text{-}l\text{-}m}$ and $q_{\rm walk\text{-}l\text{-}l}$; (ii) the same situation applies to $q_{\rm walk\text{-}m\text{-}l}$; (iii) when $q = q_{\rm walk\text{-}l\text{-}l}$, the only choice is $\bigcirc (q = q_{\rm walk\text{-}l\text{-}l})$. In emergency cases, we assign $\bigcirc q$ by $q_{\rm walk\text{-}s\text{-}m}$, $q_{\rm walk\text{-}m\text{-}m}$ or $q_{\rm walk\text{-}l\text{-}m}$. 
%If $\bigcirc e= \bigcirc e_{hd} $ (i.e., \textsf{\small hugelyDownward}), we can similarly define that the level of degree for next keyframe state increases by one or two, either from step length or apex velocity.

\item ($S_{q{\text -}2}$) If the next environmental action is \textsf{\small hugelyDownward} $e_{\rm hd}$, the level for the next keyframe state increases by one or two units, either on the step length or on the apex velocity. The only exception is as follows: when the current keyframe is $q = q_{\rm l\text{-}l}$, then the next step is only allowed to choose the keyframe $q_{\rm l\text{-}l}$.
\begin{flalign}
\nonumber
&\Box \big((q_{\rm walk\text{-}s\text{-}s} \wedge \bigcirc e_{\rm hd} ) \Rightarrow \bigcirc ( q_{\rm walk\text{-}m\text{-}s} \vee q_{\rm walk\text{-}s\text{-}m} \vee q_{\rm walk\text{-}l\text{-}s} \vee q_{\rm walk\text{-}s\text{-}l}\vee q_{\rm walk\text{-}m\text{-}m}) \big) &\\\nonumber
\bigwedge &\Box \big((q_{\rm walk\text{-}s\text{-}m} \wedge \bigcirc e_{\rm hd} ) \Rightarrow \bigcirc ( q_{\rm walk\text{-}s\text{-}l} \vee q_{\rm walk\text{-}m\text{-}m} \vee q_{\rm walk\text{-}m\text{-}l}) \big) &\\\nonumber
& \quad \quad \hdots &\\\nonumber
\bigwedge &\Box \Big(\big((q_{\rm walk\text{-}l\text{-}m} \vee q_{\rm walk\text{-}m\text{-}l} \vee q_{\rm walk\text{-}l\text{-}l}) \wedge \bigcirc e_{\rm hd} \big) \Rightarrow \bigcirc q_{\rm walk\text{-}l\text{-}l} \Big) &\\\nonumber 
\bigwedge & \Box \Big(\big( (q_{\rm branchiation} \vee q_{\rm stop}) \wedge \bigcirc e_{\rm hd} \big ) \Rightarrow \bigcirc (q_{\rm walk\text{-}s\text{-}m} \vee q_{\rm walk\text{-}m\text{-}m} \vee q_{\rm walk\text{-}l\text{-}m}) \Big) &
\end{flalign}
where, if $q = q_{\rm walk\text{-}s\text{-}s}$, $\bigcirc q$ increases by (i) one unit level, i.e., $q_{\rm walk\text{-}s\text{-}m}$ and $q_{\rm walk\text{-}m\text{-}s}$, or (ii) two unit levels, i.e., $q_{\rm walk\text{-}m\text{-}m}, q_{\rm walk\text{-}l\text{-}s}$ and $q_{\rm walk\text{-}s\text{-}l}$. Special cases are $q_{\rm walk\text{-}l\text{-}m}, q_{\rm walk\text{-}m\text{-}l}$ and $q_{\rm walk\text{-}l\text{-}l}$ where $q_{\rm walk\text{-}l\text{-}l}$ is the only choice for the next walking step.

\item ($S_{q{\text -}3}$) If there is a crack on the terrain with a normal-height ceiling, i.e., $e_{\rm sc\text{-}nc}$, then the next keyframe state is $q_{\rm brachiation}$ relying on a different set of apex velocities and step lengths than for walking behaviors: %\alert{to be modified}
\begin{align}\nonumber
\Box \big(\bigcirc e_{\rm sc\text{-}nc} \Rightarrow \bigcirc (q_{\rm brachiation\text{-}s} \vee q_{\rm brachiation\text{-}m} \vee q_{\rm brachiation\text{-}l})\big).
%&\Box \Big(\big((q_{s\text{-}s} \vee q_{s\text{-}m} \vee_{\forall (i,j) \in \mathcal{Q}_{\rm index}} q_{i\text{-}j} \vee q_{\rm stop}) \\\nonumber
%& \wedge \bigcirc e_{sc} \big) \Rightarrow \bigcirc q_{\rm swing} \Big).
\end{align}
\item ($S_{q{\text -}4}$) If there is a crack on the terrain and there is a high ceiling, i.e., $e_{\rm sc\text{-}hc}$, then the keyframe state is $q_{\rm hop}$ relying on a specific apex velocity, regardless of the current $q$:
\begin{align}\nonumber
\Box \big(\bigcirc e_{\rm sc\text{-}hc} \Rightarrow \bigcirc (q_{\rm hop\text{-}s} \vee q_{\rm hop\text{-}m} \vee q_{\rm hop\text{-}l}) \big).
\end{align}

\item ($S_{q{\text -}5}$) If a human appears in front of the robot, i.e., $e_{\rm ha}$, then the next keyframe state is $q_{\rm stop}$ relying on a specific step length, regardless of the current $q$:
\begin{align}\nonumber
\Box \big(\bigcirc e_{\rm ha} \Rightarrow \bigcirc (q_{\rm stop\text{-}s} \vee q_{\rm stop\text{-}m} \vee q_{\rm stop\text{-}l}) \big).
\end{align}
\item ($S_{q{\text -}6}$) If there is a narrow passage, i.e., $e_{\rm np}$, then the next key frame state is $q_{\rm slide}$ relying on a specific apex velocity, regardless of the current $q$:
\begin{align}\nonumber
\Box \big(\bigcirc e_{\rm np} \Rightarrow \bigcirc (q_{\rm slide\text{-}s} \vee q_{\rm slide\text{-}m} \vee q_{\rm slide\text{-}l}) \big).
\end{align}
\end{itemize}
The remaining eight scenarios involving different environment and system action combinations are defined in a similar manner omitted here for brevity. The specifications in ($S_{q{\text -}1}$)-($S_{q{\text -}6}$) and all others belong to $\varphi^q_{\rm trans}$.

From a high-level perspective, the goal of our task planner is to enable the robot to continuously maneuver through constrained environments by repeatedly selecting contact actions among $\mathcal{S}$. To be consistent with the environmental goal specification in Eq.~(\ref{eq:environment-goal-spec}), we enforce the following liveness specification for the keyframe states.
\begin{align}
\varphi^q_{\rm goal} = \bigwedge_{q \in \mathcal{Q}}(\Box \Diamond q)
\end{align}

%On the other hand, the middle-level robust abstraction in the next section has a goal specification defined as reaching a robustness margin of next keyframe state in finite time, i.e., one-walking-step reachability.} 

All the task specifications have been proposed such that $\varphi = \big((\varphi_q \wedge \varphi_e) \Rightarrow \varphi_s \big)$ holds.
%
%\begin{remark}
%Self-transition is induced by the non-determinism design of task specifications. In this case, the robot is guaranteed to make progress (i.e., maneuvering forward) due to the intrinsic property of locomotion keyframe state.
%On the other hand, self-transition is forbidden in the middle-level controller to avoid the case where no progress is made. This self-transition-avoidance constraint enforces a lower bound of the time step in the controller synthesis in Section~\ref{sec:Abstraction}.
%\end{remark}
%
%\begin{remark}
%For the transitions in task specifications ($S_{q{\text -}1}$)-($S_{q{\text -}4}$), we describe them by a finite-state transition matrix according to each specific environment and system action combination. As such, the next-step operator $\bigcirc$ existing in ($S_{q{\text -}1}$)-($S_{q{\text -}4}$) does not violate our assumption on LTL${}_{\backslash\bigcirc}$. Since we have $11$ keyframe states in total, the dimension of the transition matrix is $11\times 11$.
%\end{remark}
%
%\begin{remark}
%Although a fixed granularity is used for the phase-space Riemannian space discretization, finer or varying discretization can increase the accuracy at the expense of increased computational cost. Phase-space partition refinement is not a focus of this study.
%\end{remark}

\begin{figure}[t]
 \centering
   \includegraphics[width=0.45\linewidth]{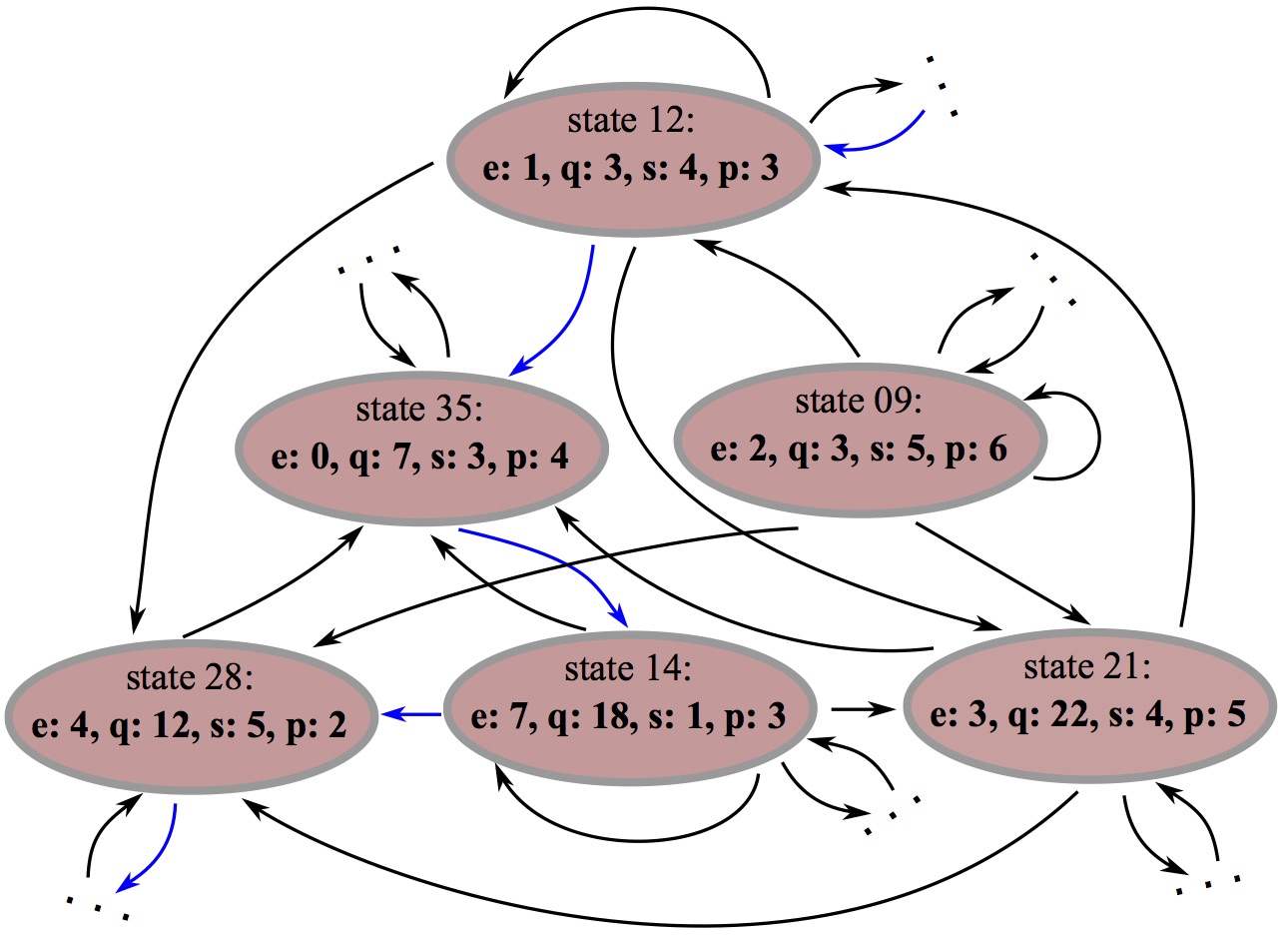}
 \caption{\captionsize A fragment of the synthesized automaton for the WBDL contact planner. Nondeterministic transitions are encoded in this automaton. The blue transitions represent a specific execution. For illustration, we index both the environmental action $\mathcal{E}$ in Eq.~(\ref{eq:envirActions}) and the system action $\mathcal{S}$ in Eq.~(\ref{eq:systemActions}) as $\{0, \ldots, 7\}$ and $\{0, \ldots, 8\}$ in order, respectively. The robot keyframe state $\mathcal{Q}$ is indexed as $\{0, \ldots, 26\}$ in order. For instance, when the automaton state is at number $12$, we encounter environmental action $e = 1$. The winning strategy assigns keyframe state $q = 3$, robot contact action $s = 4$ and locomotion switching mode $p = 3$. This state allows several nondeterministic transitions for the next walking step decision.}
\label{fig:fragmentAutomaton}
\end{figure}

\subsection{Synthesis of a high-level reactive task planner}
\label{sec:high-level-planner}
Here we formulate the high-level locomotion planning problem as a game between the robot and its possibly adversarial environment. Given the task specifications defined above, a reactive control protocol is synthesized such that the controlled legged robot behaviors satisfy all the designed specifications whatever admissible uncontrollable environment behaviors are. 
% Check the introduction
%The reasons of using reactive synthesis lie in: (i) Robustness. A robust mechanism is inherent within the system reactive behavior to the randomness of admissible environments; (ii) Fast motion planning. Since task planning is processed based on a discrete abstracted automaton, the planning dimensionality is greatly reduced.
%
%\subsection{Legged and Armed Locomotion Game Formulation}
\begin{definition}[\textbf{Game played by the WBDL task planner}]\label{def:game}
A game for the whole-body dynamic locomotion task planner is a tuple %\cite{bloem2012synthesis}
\begin{align}\nonumber
\mathcal{G} \coloneqq \langle \mathcal{V}, \mathcal{X}, \mathcal{Y}, \theta_i, \theta_o, \Psi_i, \Psi_o, \phi_{\rm win} \rangle
\end{align}
with the following elements
%$\mathcal{X} \coloneqq \mathcal{E} \times \mathcal{Q}$ is a set of input variables for player 1; $\mathcal{Y} \coloneqq \mathcal{S} \times \mathcal{P}$ is a set of output variables for player 2; $\mathcal{V} = \mathcal{X} \times \mathcal{Y}$ is a finite set of proposition state variables over finite domains in the game; $\theta_i$ and $\theta_o$ are atomic propositions characterizing initial states of the input and output variables, respectively; $\Psi_i(\mathcal{V}, \mathcal{X}')$ and $\Psi_o(\mathcal{V}, \mathcal{X}', \mathcal{Y}')$ are the transition relation for the input and output variables at next step, respectively; $\phi_{\rm win}$ is the winning condition given by an LTL formula.
\begin{itemize}
    \item $\mathcal{X} \coloneqq \mathcal{E}$ is a set of input variables for player 1; 
    \item $\mathcal{Y} \coloneqq \mathcal{Q} \times \mathcal{S} \times \mathcal{P}$ is a set of output variables for player 2; 
    \item $\mathcal{V} = \mathcal{X} \times \mathcal{Y}$ is a finite set of proposition state variables over finite domains in the game;
    \item $\theta_i$ and $\theta_o$ are atomic propositions characterizing initial states of the input and output variables, respectively;
    \item $\Psi_i(\mathcal{V}, \mathcal{X}')$ and $\Psi_o(\mathcal{V}, \mathcal{X}', \mathcal{Y}')$ are the transition relations for the input and output variables for next steps, respectively;
    \item $\phi_{\rm win}$ is the winning condition given by an LTL formula.
\end{itemize}

\end{definition}
\noindent A winning strategy for the task planner represented by the pair $(\mathcal{TS}_{\rm prod}, \varphi)$ is defined as a partial function $(\gamma_0 \gamma_1 \cdots \gamma_{i-1}, e_i) \mapsto (q_i, s_i, p_i)$, where a keyframe state $q_i$, a contact action $s_i$, and a switching mode $p_i$ are chosen according to the state sequence history and the current environmental action in order to satisfy the assume-guarantee form in Eq.~(\ref{eq:gr1}). All the specifications are satisfied whatever admissible yet uncontrollable environmental actions are.
\begin{proposition}[\textbf{Existence of a winning WBDL strategy}]\label{prop:winningstrategy}
A winning WBDL strategy $\mathcal{A}_{\rm WBDL}$ exists for the game $\mathcal{G}$ in Definition~\ref{def:game} if and only if $(\mathcal{TS}_{\rm prod}, \varphi)$ is \textit{realizable}. 
\end{proposition}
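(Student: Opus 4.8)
The plan is to recognize Proposition~\ref{prop:winningstrategy} as an instance of the classical determinacy-and-synthesis theorem for Generalized Reactivity(1) games [\cite{bloem2012synthesis}], so that the real work is aligning the game $\mathcal{G}$ of Definition~\ref{def:game} with the open product transition system $\mathcal{TS}_{\rm prod}$ and the LTL$\setminus\bigcirc$ specification $\varphi$. First I would spell out the two sides of the equivalence. By Definition~\ref{def:game}, a winning strategy for $\mathcal{G}$ is a function mapping a finite history of valuations of $\mathcal{V}=\mathcal{X}\times\mathcal{Y}$ together with the next environment move in $\mathcal{X}'=\mathcal{E}$ to a next system move in $\mathcal{Y}'=\mathcal{Q}'\times\mathcal{S}'\times\mathcal{P}'$ respecting $\theta_o$ and $\Psi_o$, such that every play in which the environment respects $\theta_i$ and $\Psi_i$ satisfies $\phi_{\rm win}$. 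On the other side, $(\mathcal{TS}_{\rm prod},\varphi)$ being \emph{realizable} means, by the discrete task planner synthesis problem of Section~\ref{sec:problem-formulation}, that there exists a contact planning switching strategy $\gamma$ all of whose executions $(q,p,e,s)$ satisfy $\varphi$.

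The next step is to build the correspondence between plays of $\mathcal{G}$ and executions $\boldsymbol{\gamma}$ of $\mathcal{TS}_{\rm prod}$ (Definition~\ref{def:Execution}). I would fix the turn discipline implicit in the specifications, namely a Mealy-type order in which the environment first picks $e_{i+1}\in\mathcal{E}$ according to $\mathcal{T}_e$ (equivalently $\Psi_i$) and the robot then picks $(q_{i+1},s_{i+1},p_{i+1})$ according to $\mathcal{T},\mathcal{T}_s$ (equivalently $\Psi_o$); this is the order under which conditions of the form $\bigcirc e\Rightarrow\bigcirc q$ in ($S_{q\text{-}1}$)--($S_{q\text{-}6}$) are meaningful. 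The standing assumption stated after Definition~\ref{def:OFPTS}, that every pair $(q,e)\in\mathcal{Q}\times\mathcal{E}$ admits at least one successor, ensures the system is never blocked, so this correspondence between infinite plays and infinite executions is a total bijection. Under it, the labeling $\mathcal{\tilde L}$ and the induced word $w_\gamma$ make ``the play satisfies $\phi_{\rm win}$'' coincide with ``$w_\gamma\models\varphi$'', because $\phi_{\rm win}$ is taken to be the assume--guarantee formula $\varphi=(\varphi_e\Rightarrow(\varphi_q\wedge\varphi_s))$ of Eq.~(\ref{eq:gr1}), each conjunct of which has the GR(1) shape of Eq.~(\ref{eq:GR}).

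Then I would invoke the known theorem. Since $\varphi_e,\varphi_q,\varphi_s$ are GR(1) formulae over the finite domains $\mathcal{E},\mathcal{Q},\mathcal{S},\mathcal{P}$, the game $\mathcal{G}$ is a finite-state GR(1) game, hence determined, and the nested-fixpoint characterization of [\cite{bloem2012synthesis}] computes its set of system-winning states; the $\theta_i$-admissible initial states all lie in this set exactly when $\varphi$ is realizable. For the ($\Leftarrow$) direction, the same fixpoint computation extracts a finite-memory winning strategy whose memory tracks progress toward the $\Box\Diamond$ goals of $\varphi_q\wedge\varphi_s$ while monitoring the $\Box\Diamond$ assumptions of $\varphi_e$; this finite-state machine, taking as input the current environment action $e_i$ and keyframe and emitting $(q_i,s_i,p_i)$, is precisely an automaton $\mathcal{A}_{\rm WBDL}$ realizing the winning-strategy partial function $(\gamma_0\cdots\gamma_{i-1},e_i)\mapsto(q_i,s_i,p_i)$ defined just above the proposition, and hence realizing $(\mathcal{TS}_{\rm prod},\varphi)$. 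For ($\Rightarrow$), reading any winning strategy of $\mathcal{G}$ through the bijection yields a contact planning strategy $\gamma$ all of whose executions satisfy $\varphi$, i.e.\ realizability.

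I expect the main obstacle to be the bookkeeping of this correspondence rather than any deep step: one must be careful that $\mathcal{TS}_{\rm prod}$ is genuinely ``open'' and nondeterministic, so that $\mathcal{E}$ is fully under the environment's control while $\mathcal{T}$ always leaves the system a nonempty choice set; that the move order in $\mathcal{G}$ matches the one used implicitly by the keyframe specifications; and that passing from arbitrary winning strategies to finite-memory ones is legitimate---which is exactly the guarantee supplied by the GR(1) synthesis algorithm. Once these modeling points are pinned down, the biconditional is immediate from determinacy of GR(1) games.
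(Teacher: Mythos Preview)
The paper does not actually supply a proof of Proposition~\ref{prop:winningstrategy}; it is stated without argument and immediately followed by a description of the resulting automaton fragment. In context it functions as a direct invocation of the GR(1) synthesis result of \cite{bloem2012synthesis}, with the game $\mathcal{G}$ of Definition~\ref{def:game} serving as nothing more than a repackaging of the realizability problem for $(\mathcal{TS}_{\rm prod},\varphi)$ in the input/output-variable format expected by a GR(1) solver.

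Your proposal is therefore strictly more detailed than what the paper offers, and your approach is the correct one: you make explicit the bijection between plays of $\mathcal{G}$ and executions of $\mathcal{TS}_{\rm prod}$, check that the Mealy move order is consistent with the form of the keyframe specifications, use the non-blocking assumption after Definition~\ref{def:OFPTS} to ensure totality, and then appeal to determinacy and finite-memory strategy extraction for GR(1) games. This is exactly how one would justify the proposition rigorously, and there is no gap in your outline. The only remark is that since the paper treats the proposition as essentially definitional (the game $\mathcal{G}$ \emph{is} the GR(1) encoding of realizability), the biconditional is intended to be almost tautological once the identifications $\mathcal{X}=\mathcal{E}$, $\mathcal{Y}=\mathcal{Q}\times\mathcal{S}\times\mathcal{P}$, $\phi_{\rm win}=\varphi$ are made; your careful bookkeeping simply makes that tautology precise.
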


Fig.~\ref{fig:fragmentAutomaton} shows an automaton fragment of the WBDL contact planner $\mathcal{A}_{\rm WBDL}$. Self-transition exists in \textsf{\small moderatelyUpward} states (e.g., state $09$) and \textsf{\small moderatelyDownward} states (e.g., states $12$ and $14$) while \textsf{\small hugelyDownward} states (e.g., state $35$) do not have a self-transition according to proposition ($S_{es{\text -}1}$). There is no transition between states $09$ and $14$ due to infeasible keyframe state transition. States $21$ and $28$ in red nodes represent \textsf{\small humanAppear} and \textsf{\small terrainCrack} events, respectively.
\begin{remark}
Non-deterministic transitions exist in the synthesized automaton as follows:
%Transition probabilities need to be modeled based on actuator and sensor noise models, or experimental observations. In the locomotion case, it means that, given a controllable multi-contact action from $\mathcal{A}_{\rm contact}$, the discretized next step apex velocity state depends on a probability distribution instead of a deterministic transition.
(i) environmental actions are non-deterministic. (ii) given an environmental action, several non-deterministic keyframe states can be chosen. (iii) even when both an environmental action and a keyframe state are given, non-deterministic system contact actions exist for certain transitions. This non-deterministic transitions allow self-transitions. In this case, we can guarantee the robot to make progress (i.e., maneuvering forward) due to the properties of locomotion keyframe states.
\end{remark}

%\begin{figure}[t]
% \centering
%   \includegraphics[width=.8\linewidth]{figures/env_robot_model.jpg}
% \caption{\captionsize Environment model $M_{\rm env}$ behavior at a step $ e_{hu}$ and legged robot model $M_{\rm robot}$ for planning contact switches. In the left figure, next environment state $e = \{e_{md}, e_{hd}, e_{mu}\}$ (the special state $\emptyset$, i.e., no terrain, needs to be considered separately). The uncertainties come from height uncertainties from discretization, terrain slippery (friction constraint violation) and large terrain tilting angles. In the right figure, the locomotion planner models the planning contact switches based on environment states.
%}
%\label{fig:SwitchingCtrl}
%\end{figure}

The keyframe specifications in this section purely reason about logic-level decisions and have no knowledge of underlying locomotion dynamics. However, the locomotion dynamics, especially those affected by external disturbances or model uncertainties, often result in the desired keyframe transitions being unrealizable. As such, we need to propose keyframe transitions with robustness margins and synthesize a reachability based controller to determine realizable keyframe transitions by the low-level locomotion dynamical system as proposed in the next section. 
%We provide a correctness guarantee on the realizability of the keyframe states in the presence of a bounded disturbance (see Section~\ref{sec:correctness-reasoning}).

%--------------------%-------------------%

%--------------------%-------------------%
 \section{Robust Reachability Control of Hybrid Locomotion Systems}
\label{sec:Abstraction}
When we model robot dynamics and estimate physical environments, uncertainty is ubiquitous due to sensor noise, model inaccuracy, external disturbance, sudden environmental changes, contact surface geometry uncertainty, and so on. As a result, commands from the symbolic task planner are potentially not achievable by the low-level motion planner. Additionally, a mismatch between the high-level discrete and low-level continuous planners is usually caused by the abstraction techniques applied on the underlying continuous systems.
%However, these uncertainties and mismatch sometimes are mild, and a re-computed high-level command is not imperative.
To handle these difficulties, we define a robust finite transition system and compute its keyframe transitions via synthesizing reachability controllers for every single  walking step. In order to use phase-space locomotion manifolds to define robustness margin sets, a phase-space mapping needs to be defined between the Euclidean and Riemmanian spaces to evaluate whether a phase-space state is in the robustness margin or not.

\subsection{Phase-space Euclidean-to-Riemmanian mapping}
\label{subsection:mapping}
We first consider a specific locomotion process, e.g., the prismatic inverted pendulum model (PIPM) (see Section~\ref{subsec:low-level-imp} for more details) in order to establish a Euclidean-to-Riemmanian mapping in the phase space. Our previous study derives closed-form solutions of phase-space tangent and cotangent manifolds for this process [\cite{zhao2017robust}] as follows.
\begin{proposition}[\textbf{PIPM phase-space tangent manifold}]\label{prop:PSMTangent}
Given the PIPM of Eq. (\ref{eq:PIPM}) with initial conditions $(x_0, \dot{x}_0) = (x_{\rm foot}, \dot{x}_{\rm apex})$ and known foot placement $x_{\rm foot}$, the phase-space tangent manifold is characterized by the states $(x, \dot{x}, x_{\rm foot}, \dot{x}_{\rm apex})$ such that
\begin{align}\label{eq:simplifiedPSM}
\sigma(x, \dot{x}, x_{\rm foot}, \dot{x}_{\rm apex}) = \dfrac{\dot{x}^2_{\rm apex}}{\omega_{\rm PIPM}^2} \big(\dot{x}^2 - \dot{x}^2_{\rm apex} 
                                            - \omega_{\rm PIPM}^2(x - x_{\rm foot})^2\big),
\end{align}
where $\sigma = 0$ represents the nominal phase-space manifold. When $\sigma \neq 0$, it represents the Riemannian distance to the nominal phase-space manifold.
\end{proposition}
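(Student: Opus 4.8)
The plan is to derive the closed-form tangent manifold directly by integrating the PIPM dynamics of Eq.~(\ref{eq:PIPM}) in the sagittal direction and eliminating the phase-progression variable. Since we only need the sagittal component, I would restrict attention to the scalar equation $\ddot{x} = \omega_{\rm PIPM}^2\,(x - x_{\rm foot} - \tau_y/(mg))$, and for the tangent-manifold derivation absorb the constant torque offset into $x_{\rm foot}$ (or set $\tau_y = 0$, consistent with how the apex state is defined). With $\omega := \omega_{\rm PIPM}$ constant over one walking step, this is a linear time-invariant second-order ODE whose solution is a combination of $\cosh(\omega\zeta)$ and $\sinh(\omega\zeta)$ about the equilibrium $x = x_{\rm foot}$.

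First I would change variables to $\hat{x} := x - x_{\rm foot}$, so that $\ddot{\hat{x}} = \omega^2 \hat{x}$. The key observation is that this system has a conserved quantity along trajectories: multiplying by $\dot{\hat{x}}$ and integrating gives $\dot{\hat{x}}^2 - \omega^2\hat{x}^2 = \text{const}$. Evaluating the constant at the apex, where $x = x_0 = x_{\rm foot}$ (so $\hat{x} = 0$) and $\dot{x} = \dot{x}_{\rm apex}$, yields $\text{const} = \dot{x}_{\rm apex}^2$. Hence every nominal trajectory satisfies $\dot{x}^2 - \dot{x}_{\rm apex}^2 - \omega^2 (x - x_{\rm foot})^2 = 0$, which is exactly the bracketed expression in Eq.~(\ref{eq:simplifiedPSM}) set to zero. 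Multiplying through by the positive scalar $\dot{x}_{\rm apex}^2/\omega^2$ — the normalization that makes $\sigma$ a Riemannian distance in the chosen phase-space metric — produces the stated $\sigma(x,\dot{x},x_{\rm foot},\dot{x}_{\rm apex})$, and $\sigma = 0$ characterizes the nominal manifold. For an off-nominal initial velocity the same first integral gives $\dot{x}^2 - \omega^2(x-x_{\rm foot})^2$ equal to a different constant, so $\sigma \neq 0$ measures the (scaled) offset of that level set from the nominal one, justifying the interpretation as Riemannian distance to the nominal manifold.

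I would then briefly justify the Riemannian-distance claim by recalling the phase-space metric used in the cited construction [\cite{zhao2017robust}]: under that metric, level sets of $\dot{x}^2 - \omega^2(x-x_{\rm foot})^2$ are equidistant curves, and the prefactor $\dot{x}_{\rm apex}^2/\omega^2$ is precisely the reparametrization that calibrates the value of $\sigma$ to arc length transverse to the manifold family. I would state this as a direct consequence of the metric definition rather than re-deriving it, since the excerpt permits assuming results from [\cite{zhao2017robust}].

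The main obstacle I anticipate is not the ODE integration — that is routine — but rather pinning down precisely the normalization and the metric with respect to which $\sigma$ is literally a distance; the factor $\dot{x}_{\rm apex}^2/\omega^2$ is not forced by the dynamics alone and must be traced to the specific Riemannian structure introduced in the earlier work. A secondary subtlety is the clean handling of the torque term $\tau_y/(mg)$: one must argue that for the purpose of characterizing the tangent manifold it can be folded into an effective foot placement (a constant shift of the equilibrium), so that the form of $\sigma$ is unaffected. Once those two points are settled, the conservation-law argument delivers the result immediately.
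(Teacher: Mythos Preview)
Your proposal is correct and in fact supplies more detail than the paper does: the paper does not prove this proposition at all but simply states it and defers the derivation to the earlier work [\cite{zhao2017robust}] (``Detailed derivations of these two closed-form solutions above \ldots\ are provided in [\cite{zhao2017robust}]''). Your first-integral argument --- multiplying $\ddot{\hat x}=\omega^2\hat x$ by $\dot{\hat x}$ to obtain the conserved quantity $\dot x^2-\omega^2(x-x_{\rm foot})^2$ and evaluating at the apex --- is the standard route to the bracketed expression, and you have correctly flagged that the prefactor $\dot x_{\rm apex}^2/\omega_{\rm PIPM}^2$ and the ``Riemannian distance'' interpretation must be pulled from the metric construction in [\cite{zhao2017robust}] rather than from the dynamics alone.
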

The tangent manifold can be used to measure deviations from the nominal locomotion trajectory in the phase-space. We use this manifold to quantify the width of a phase-space robustness margin.
\begin{figure}[t]
 \centering
   \includegraphics[width=0.75\linewidth]{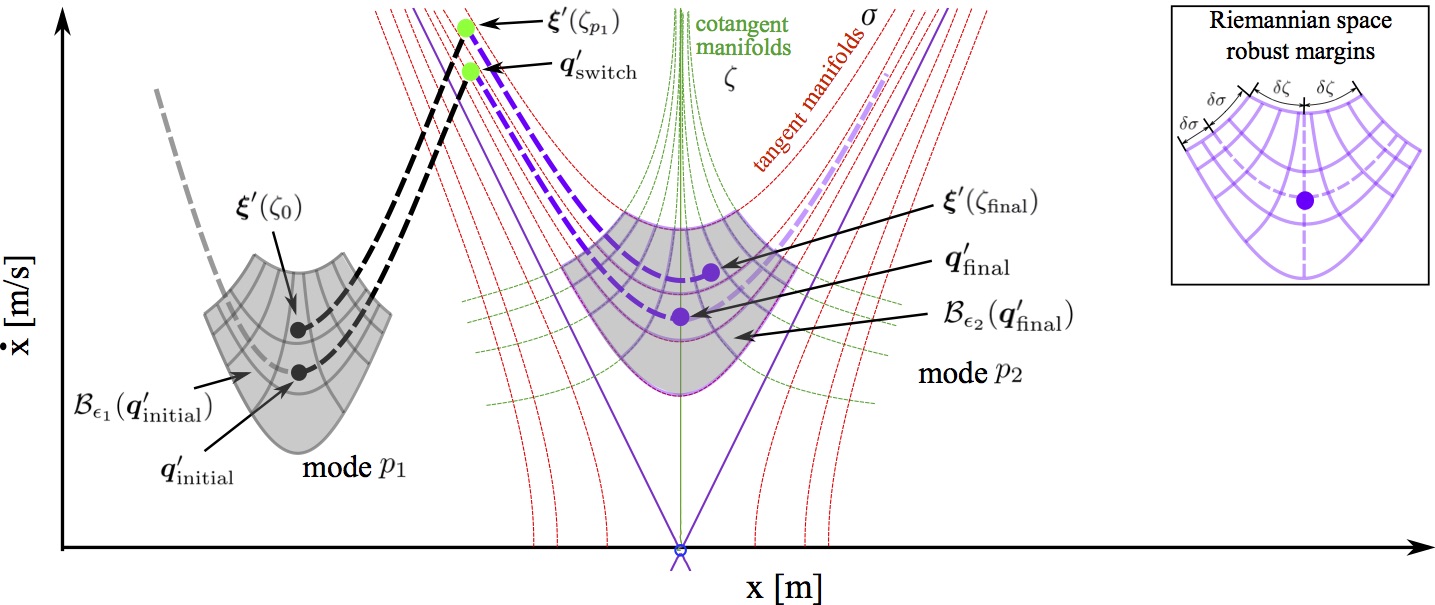}
 \caption{\captionsize Phase-space abstraction via locomotion-manifold-based partition. This figure shows a keyframe state transition process with robustness margins.
 %Parameters are consistent for these two figures.
 %The core idea is to use locomotion-manifold-based partitions.
 Compared to the conventional grid-based partition in Euclidean phase space of Fig~\ref{fig:Discretization}(a), this partition complies with locomotion dynamics, further enabling us to define robustness margins based on closed-form locomotion manifolds.
}
\label{fig:phase-space-discretization}
\end{figure}
\begin{proposition}[\textbf{PIPM phase-space cotangent manifold}]\label{prop:PSMCotangent}
Let $\zeta_0$ be a nonnegative scaling value representing the initial phase of a cotangent manifold. Given the PIPM of Eq.~(\ref{eq:PIPM}) and a specific initial state $(x_0, \dot x_0)$ different from the keyframe $(x_{\rm foot}, \dot x_{\rm apex})$, the cotangent manifold is characterized by the states $(x, \dot{x}, x_0, \dot{x}_0)$ such that
\begin{align}\label{eq:zeta_manifold}
\zeta(x, \dot{x}, x_0, \dot{x}_0)  = \zeta_0(\dfrac{\dot{x}}{\dot{x}_0})^{\omega_{\rm PIPM}^2} \dfrac{x - x_{\rm foot}}{x_0 - x_{\rm foot}},
\end{align}
where $\zeta_0$ is chosen as the phase progression value at the keyframe state in this study.
\end{proposition}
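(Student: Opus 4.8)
The plan is to read Eq.~(\ref{eq:zeta_manifold}) as defining the cotangent manifolds to be the family of curves orthogonal to the tangent-manifold pencil of Proposition~\ref{prop:PSMTangent}, and then to produce the closed form by a single separation of variables. So I would first put the sagittal PIPM in normalized coordinates: set $u \coloneqq x - x_{\mathrm{foot}}$ (taking $\tau_y=0$, or folding the constant ankle-torque offset $\tau_y/(mg)$ into the pivot) and $v \coloneqq \dot x$, so that Eq.~(\ref{eq:PIPM}) becomes $\dot u = v$, $\dot v = \omega_{\mathrm{PIPM}}^2\,u$. Differentiating $v^2 - \omega_{\mathrm{PIPM}}^2 u^2$ along this flow gives $0$, so every nominal PIPM trajectory lies on a level set $v^2 - \omega_{\mathrm{PIPM}}^2 u^2 = \mathrm{const}$; this recovers the level-set family of $\sigma$ from Proposition~\ref{prop:PSMTangent}, with $\sigma = 0$ the hyperbola through the keyframe $u=0,\ v=\dot x_{\mathrm{apex}}$. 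Hence the tangent manifolds form the hyperbola pencil $\mathcal{H}_c \coloneqq \{v^2 - \omega_{\mathrm{PIPM}}^2 u^2 = c\}$, whose tangent direction at $(u,v)$ is the PIPM field $(v,\ \omega_{\mathrm{PIPM}}^2 u)$.

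Next I would write the ODE of the curves meeting every $\mathcal{H}_c$ perpendicularly: orthogonality to $(v,\ \omega_{\mathrm{PIPM}}^2 u)$ gives $\dfrac{dv}{du} = -\dfrac{v}{\omega_{\mathrm{PIPM}}^2\,u}$, which separates as $\dfrac{dv}{v} = -\dfrac{1}{\omega_{\mathrm{PIPM}}^2}\,\dfrac{du}{u}$. Integrating from a reference state $(u_0,v_0) = (x_0 - x_{\mathrm{foot}},\ \dot x_0)$ yields $\left(\dfrac{v}{v_0}\right)^{\omega_{\mathrm{PIPM}}^2}\dfrac{u}{u_0} = 1$ along the orthogonal curve through $(u_0,v_0)$; equivalently the cotangent manifolds are the level sets of $(u,v)\mapsto v^{\omega_{\mathrm{PIPM}}^2}u$. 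Rescaling this invariant so that it equals the prescribed value $\zeta_0$ at $(u_0,v_0)$ gives exactly
\[
\zeta(x,\dot x,x_0,\dot x_0) \;=\; \zeta_0\Big(\tfrac{\dot x}{\dot x_0}\Big)^{\omega_{\mathrm{PIPM}}^2}\,\tfrac{x - x_{\mathrm{foot}}}{x_0 - x_{\mathrm{foot}}},
\]
which is Eq.~(\ref{eq:zeta_manifold}). To close the argument I would verify: $\zeta \equiv \zeta_0$ at the reference state; $\zeta$ is smooth and strictly monotone along any forward PIPM branch (where $v>0$ and, past the apex, $u>0$ with $u$ increasing, so both factors grow), so it is a legitimate phase-progression coordinate transverse to the flow; and $\partial(\sigma,\zeta)/\partial(x,\dot x)$ is nonsingular away from the degenerate locus, so $(\sigma,\zeta)$ is a genuine tangent/cotangent chart adapted to the locomotion manifolds.

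The one place that needs care --- and the reason the hypothesis excludes the keyframe $(x_{\mathrm{foot}},\dot x_{\mathrm{apex}})$ --- is the degenerate locus $\{\dot x = 0\}\cup\{x = x_{\mathrm{foot}}\}$: at $x_0 = x_{\mathrm{foot}}$ the normalization divides by zero, and at $\dot x_0 = 0$ the exponentiation $\dot x^{\omega_{\mathrm{PIPM}}^2}$ is ill-posed, so I would explicitly restrict to forward locomotion ($\dot x_0 > 0$) off the pivot, on which the map is well-defined, and note that $\zeta_0$ is merely a positive normalization constant whose numerical value is fixed by the phase-progression convention of Section~\ref{subsec:PSPlanning}. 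Everything else --- the conserved-quantity identity, the separation of variables, and the normalization --- is routine, so I expect the bookkeeping of the degenerate cases (and of the sign/branch of $\dot x$ if the velocity vanishes within a step) to be the only real obstacle.
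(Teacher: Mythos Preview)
Your derivation is correct and self-contained. The paper does not actually prove this proposition in the text; it simply states the formula and defers the derivation to the earlier reference [\cite{zhao2017robust}], adding only the informal remark that ``this cotangent manifold represents the arc length along the tangent manifold $\sigma$.'' Your orthogonal-trajectory argument --- recognizing the tangent manifolds as the level sets $v^2-\omega_{\rm PIPM}^2 u^2=\mathrm{const}$, writing the orthogonality ODE $dv/du=-v/(\omega_{\rm PIPM}^2 u)$, separating variables, and normalizing --- is exactly the right way to recover Eq.~(\ref{eq:zeta_manifold}) from first principles, and your handling of the degenerate locus $\{x=x_{\rm foot}\}\cup\{\dot x=0\}$ correctly explains the hypothesis excluding the keyframe.

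One minor caveat: the paper's phrase ``arc length'' is loose. Your $\zeta$ is not Euclidean arc length along a trajectory but rather a phase-progression label that is constant on the orthogonal foliation and strictly monotone along each PIPM branch --- which is precisely what you verify. This is the correct reading and is consistent with how $\zeta$ is used downstream (e.g., in Def.~\ref{def:robustSetSimp}), so there is no discrepancy to resolve.
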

This cotangent manifold represents the arc length along the tangent manifold $\sigma$ in Eq.~{(\ref{eq:simplifiedPSM})}. We use this cotangent manifold to quantify the length of a phase-space robustness margin. Detailed derivations of these two closed-form solutions above, i.e., $\sigma(x, \dot{x}, x_{\rm foot}, \dot{x}_{\rm apex}) = 0$ and $\zeta(x, \dot{x}, x_0, \dot{x}_0) = 0$, are provided in [\cite{zhao2017robust}]. A similar analysis can be performed for other locomotion models as described in Section~\ref{subsec:low-level-imp} (see the propositions in Appendix~\ref{appen:PSManifold} for other locomotion modes). Given these analytical solutions, we define a mapping between the Euclidean and Riemmanian spaces as
\begin{align}\label{eq:manifold-mapping}
\begin{pmatrix}
\zeta  \\
\sigma
\end{pmatrix}
= 
\mathcal{Z}_p (\boldsymbol{\xi})
=
\begin{pmatrix}
\mathcal{Z}_{p,\zeta}(x, \dot x)  \\
\mathcal{Z}_{p,\sigma}(x, \dot x)
\end{pmatrix}
\end{align}
where $\mathcal{Z}_p (\boldsymbol{\xi})$ is a nonlinear mapping of the CoM state $(x, \dot x)$ to the Riemannian space states and obtained by using the phase space manifold of the $p^{\rm th}$ locomotion mode. This mapping will be used for the robust finite transition system definition in order to quantify the location of the phase-space state in the Riemmanian space.

\subsection{Robust finite transition system for one walking step}%{Robust hybrid abstraction}
We now focus on a case of the one-walking-step locomotion process as defined in Def.~\ref{def:one-walking-step}. %since the contact action, switching mode, and the keyframe transition are already determined by the discrete task planner in the previous section. %Given two consecutive keyframe states $\boldsymbol{q}_{\rm initial}$ and $\boldsymbol{q}_{\rm final}$,\footnote{In the robust abstraction layer, we use a bold symbol $\boldsymbol{q}$ to represent a keyframe state since it is a multi-dimensional state vector in phase-space. In the task planner, the keyframe state is represented by a non-bold $q$ due to the discrete domain reasoning.} and nominal phase increments $\delta \zeta_{p_1}$ and $\delta \zeta_{p_2}$ before and after contact switch, respectively, we define a robust finite transition subsystem for one walking step.
As illustrated in Fig.~\ref{fig:Discretization}(b), the discrete task planner uses a Riemannian discretization of the local state space, which is defined by an abstraction map $\mathcal{M}_{\rm Riem}: \Xi \rightarrow \mathcal{Q}$ such that for all $(\boldsymbol{\xi}, \boldsymbol{q}) \in \Xi \times \mathcal{Q}$,
\begin{align}\label{eq:keyframeMapping}
    |\boldsymbol{\xi} - \boldsymbol{q}| \preceq \nu \; \implies \; \boldsymbol{q} = \mathcal{M}_{\rm Riem}(\boldsymbol{\xi}),
\end{align}
where $\nu$ is the granularity of the discretization\footnote{In this section, we use a bold symbol $\boldsymbol{q}$ to represent a keyframe state since it is a multi-dimensional state vector. In the task planner, the keyframe state is represented by a non-bold symbol $q$ due to its pure discrete property.}. The operators $|\cdot|$ and $\preceq$ above represent vectorized absolute values and element-wise inequality, respectively.\footnote{For two given $n$-vectors $x$ and $y$, we have $|x| = (|x_1|, |x_2|, \ldots, |x_n|)$ and $|x| \preceq |y| \Leftrightarrow |x_i| \leq |y_i|, \forall i \in \{1, \ldots, n\}$.}

To guarantee that the motion planner yields feasible phase-space plans robust to disturbances, such as state measurement errors and disturbances in the dynamics, we introduce $\epsilon_1$ and $\epsilon_2$ as the bounds of initial and final robustness margins in the one-walking-step transition system. Namely, we not only consider the nominal initial and final keyframe states $\boldsymbol{q}_{\rm initial}$ and $\boldsymbol{q}_{\rm final}$ assigned by the task planner, but also neighbourhood keyframe cells overlapping the $\epsilon$-neighbourhood of nominal keyframe states. 
%The initial and final robustness margin sets are defined as
%
\begin{figure}[t]
 \centering
   \includegraphics[width=0.85\linewidth]{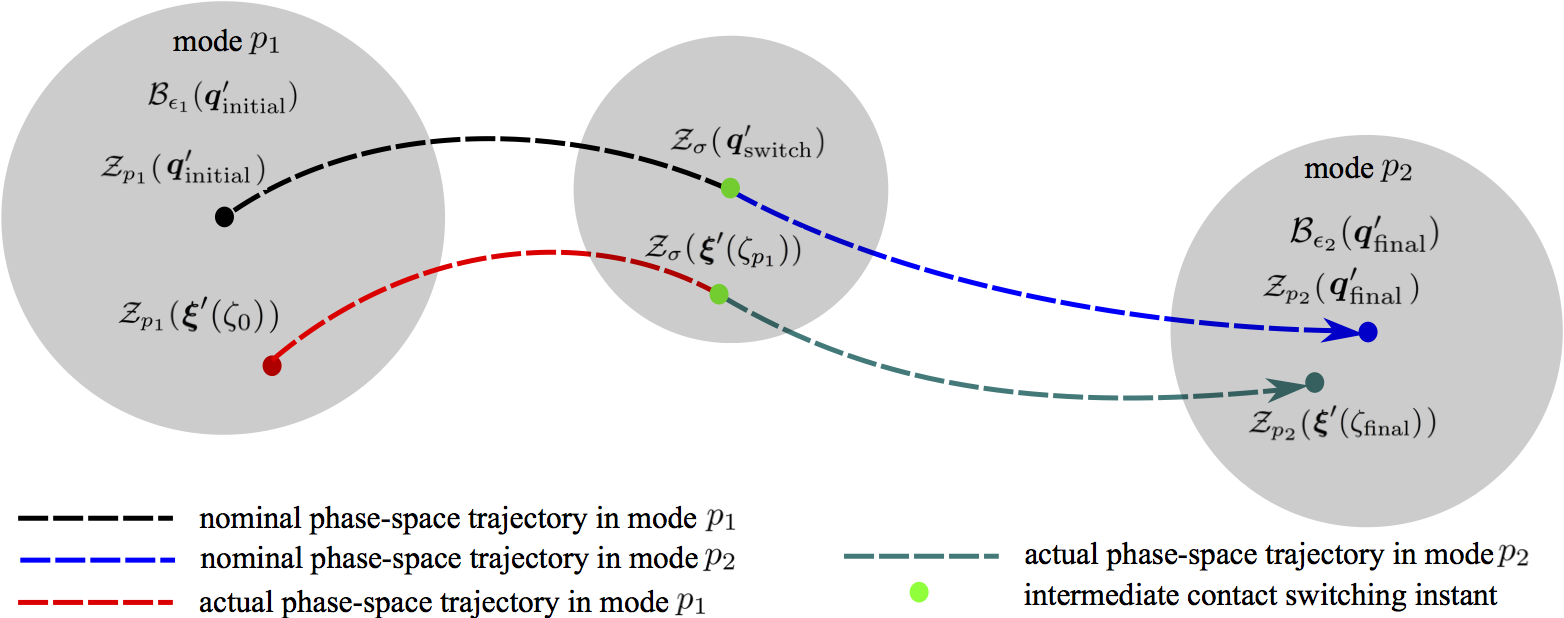}
 \caption{\captionsize Keyframe state reachability with robustness margins for one walking step. Due to measurement error or external disturbance, the initial state $\boldsymbol{\xi}'(\zeta_0)$ may deviate from the desired keyframe state $\boldsymbol{q}'_{\rm initial}$. A robustness region $\mathcal{B}_{\epsilon_1}(\boldsymbol{q}'_{\rm initial})$ is defined to bound the allowable state deviations. The actual and desired states evolve according to their respective system dynamics in locomotion modes $p_1$ and $p_2$, respectively. The state switches from mode $p_1$ to mode $p_2$ at the phase instant $\zeta_{p_1}$. The bound of distance between the nominal intermediate keyframe $\boldsymbol{q}'_{\rm switch}$ and $\boldsymbol{\xi}'(\zeta_{p_1})$ is shown in Eq.~(\ref{eq:intermediateCondition_1}). Finally, these two states reach $\boldsymbol{\xi}'(\zeta_{\rm final})$ and $\boldsymbol{q}'_{\rm final}$, respectively. To compute the keyframe transition, we require that   $\mathcal{Z}_{p_2}(\boldsymbol{\xi}'(\zeta_{\rm final}))$ should be $\epsilon_2$-close to $\mathcal{Z}_{p_2}(\boldsymbol{q}_{\rm final}')$, i.e., being in the margin $\mathcal{B}_{\epsilon_2}(\boldsymbol{q}'_{\rm final})$. More details of the definitions of robustness margin set are in Def.~\ref{def:robustSetSimp}. Note that, since the robustness margin set $\mathcal{B}$ is defined in the Riemannian space, the mapping $\mathcal{Z}_p$ is applied on the states $\boldsymbol{q}$ and $\boldsymbol{\xi}'$ in the figure symbols for consistency.}
\label{fig:RobustMargin}
\end{figure}

\begin{definition}[\textbf{Robustness margin sets}]\label{def:finalRobustSet}
The initial and final robustness margin sets around the nominal keyframe states $\boldsymbol{q}_{\rm initial}, \boldsymbol{q}_{\rm final} \in \mathcal{Q}$
%(i.e., $\boldsymbol{\xi}(\zeta_{\rm final}, \boldsymbol{q}_{\rm initial}, \boldsymbol{u}_{p_1,p_2})$)
are defined as
\begin{align}\nonumber
\tilde{\mathcal{B}}_{\epsilon_1}(\boldsymbol{q}_{\rm initial}) &\coloneqq \big\{ \boldsymbol{\xi}(\zeta_0) \; | \; |\mathcal{Z}_{p_1}(\boldsymbol{\xi}(\zeta_0)) - \mathcal{Z}_{p_1}(\boldsymbol{q}_{\rm initial})| \preceq \epsilon_1 \big\}. \\\nonumber
\tilde{\mathcal{B}}_{\epsilon_2}(\boldsymbol{q}_{\rm final}) &\coloneqq \big\{ \boldsymbol{\xi}(\zeta_{\rm final}) \; | \; |\mathcal{Z}_{p_2}(\boldsymbol{\xi}(\zeta_{\rm final})) - \mathcal{Z}_{p_2}(\boldsymbol{q}_{\rm final})| \preceq \epsilon_2 \big\}.
\end{align}
where $\epsilon_1, \epsilon_2 \in \mathbb{R}^2$ represent the bounds of $\tilde{\mathcal{B}}_{\epsilon_1}(\boldsymbol{q}_{\rm initial})$ and $\tilde{\mathcal{B}}_{\epsilon_2}(\boldsymbol{q}_{\rm final})$, respectively. $p_1$ and $p_2$ denote the locomotion modes before and after a contact switch, respectively.
\end{definition}

\begin{comment}
\begin{remark}
The robustness margins $\epsilon_1$ and $\epsilon_2$ in Def.~\ref{def:finalRobustSet} are in the Riemannian space. Thus, a mapping $\mathcal{Z}$ is applied on the states $\boldsymbol{\xi}$ and $\boldsymbol{q}$. We design $\epsilon_1\succeq \nu$ and $\epsilon_2\succeq\nu$ such that the robustness margin set overlaps the neighbouring cells. Additionally, to account for different levels of accuracy and reduce the conservatism, we allow for non-uniform grids (i.e., non-identical values) to parametrize the robustness margin variables $(\epsilon_1, \epsilon_2)$. This non-uniform choice significantly reduces the total number of allowable keyframe transitions.
\end{remark}
\end{comment}

The robustness margins $\epsilon_1$ and $\epsilon_2$ in Def.~{\ref{def:finalRobustSet}} are defined in the Riemannian space. A mapping $\mathcal{Z}$ is applied on the Euclidean states $\boldsymbol{\xi}$ and $\boldsymbol{q}$ to convert them to the Riemannian space. We design $\epsilon_1\succ \nu$ and $\epsilon_2\succ\nu$ such that the robustness margins are larger than the discretized cell. To provide different robust margins, we allow for non-uniform sets, i.e., non-identical values for $(\epsilon_1, \epsilon_2)$. This non-uniform set design makes the size of the total number of allowable keyframe transitions more manageable.

%and overlaps the neighbouring cells

%The robustness margin sets are redefined by using closed-form locomotion manifolds below. 
Now we describe how to simplify the robustness margin sets based on the closed-form phase-space manifolds defined in Section~\ref{subsection:mapping}.
%It is assumed that the disturbed states has the same phase progression values $\zeta_{p_1}$ and $\zeta_{p_2}$ as those of the nominal trajectory, respectively.
%
\begin{definition}[\textbf{Phase-space robustness margin sets}]\label{def:robustSetSimp}
Given closed-form locomotion phase-space manifolds from Propositions~\ref{prop:PSMTangent} and~\ref{prop:PSMCotangent}, the initial and final robustness margin sets are simplified to
\begin{gather}
\mathcal{B}_{\epsilon_1}(\boldsymbol{q}_{\rm initial}) \coloneqq \big\{ \boldsymbol{\xi}(\zeta, \sigma)\; | \; \zeta \in [\zeta_0 - \delta \zeta_{\epsilon_1}, \zeta_0 + \delta \zeta_{\epsilon_1}], \sigma \in [-\delta\sigma_{\epsilon_1}, \delta\sigma_{\epsilon_1}] \big\}, \\
\mathcal{B}_{\epsilon_2}(\boldsymbol{q}_{\rm final}) \coloneqq \big\{ \boldsymbol{\xi}(\zeta, \sigma) \;| \; \zeta \in [\zeta_{\rm final} - \delta \zeta_{\epsilon_2}, \zeta_{\rm final} + \delta \zeta_{\epsilon_2}], \sigma \in [-\delta \sigma_{\epsilon_2}, \delta \sigma_{\epsilon_2}] \big\},
\end{gather}
%
%\footnote{$\mathcal{B}_{\epsilon_2}(\boldsymbol{\xi}(\zeta_{\rm final}))$ and $\mathcal{B}_{\epsilon_2}(\boldsymbol{q}_{\rm final})$ are the same and short for $\mathcal{B}_{\epsilon_2}(\boldsymbol{\xi}(\zeta_{\rm final}, \boldsymbol{q}_{\rm initial}, \boldsymbol{u}_{p_1, p_2}))$.} 
where $\boldsymbol{\xi} = (x, \dot{x})$, the initial state $\boldsymbol{q}_{\rm initial} = \boldsymbol{\xi}(\zeta_0, 0)$ and the final state $\boldsymbol{q}_{\rm final} = \boldsymbol{\xi}(\zeta_{\rm final}, 0)$, $\epsilon_1 = [\delta \zeta_{\epsilon_1}, \delta \sigma_{\epsilon_1}]$ and $\epsilon_2 =  [\delta \zeta_{\epsilon_2}, \delta \sigma_{\epsilon_2}]$ quantify the uncertainty bounds of $\mathcal{B}_{\epsilon_1}(\boldsymbol{q}_{\rm initial})$ and $\mathcal{B}_{\epsilon_2}(\boldsymbol{q}_{\rm final})$, respectively.
\end{definition}
These two pairs of bounds represent Riemannian distances in phase-space, as shown in the upper right miniature subfigure in Fig.~\ref{fig:phase-space-discretization}.
A locomotion-manifold-based partition is illustrated in Fig.~\ref{fig:Discretization}(b). The proposed robust finite transition system will use this partition to design the robustness margin around keyframe states as shown in Fig.~\ref{fig:phase-space-discretization}. A merit of our analysis is that this partition is consistent with the vector field of the locomotion dynamics. Additionally, this partition simplifies mathematical descriptions of robustness margin sets.
%Given the phase-space robustness margin sets above, we propose the following robust finite transition subsystem for one-walking-step process.
\begin{definition}[\textbf{Robust finite transition system for one walking step}]\label{def:robust-finite-transition}
Given two triples composed of nominal keyframe states, locomotion modes, and system contact actions $(\boldsymbol{q}_{\rm initial},p_1, s_1)$ and $(\boldsymbol{q}_{\rm final}, p_2, s_2)$\footnote{In the robust finite transition system layer, we use a bold symbol $\boldsymbol{q}$ to represent a keyframe state since it is a multi-dimensional state vector in phase-space. In the task planner, the keyframe state is represented by a non-bold $q$ due to the discrete domain reasoning.}, a finite transition subsystem $\mathcal{TS}_{\rm OWS}$ with robustness margins $\epsilon_1$ and $\epsilon_2$ for one walking step (OWS) is defined as a tuple
% \footnote{$(\nu, \epsilon_1, \epsilon_2)$ are defined as n-dimensional vectors. $\nu$ is the granularity of the approximation, $\epsilon_1$ and $\epsilon_2$ are the bounds of initial and final robustness margins, representing state measurement errors and model uncertainties in the controller synthesis, respectively. 
% %$\lambda$ accounts for annotation modification of discrete states associated with atomic propositions enabling a robust specification for discrete synthesis. This guarantees the correctness of the low-level motion planner even in the presence of approximation errors. 
% For two given n-vector $x$ and $y$, we have $|x| = (|x_1|, |x_2|, \ldots |x_n|)$ and $x \leq y \Leftrightarrow x_i \leq y_i, \forall i \in \{1, \ldots, n\}$.}
\begin{align}\label{eq:TS_subs}
\mathcal{TS}_{\rm OWS}\coloneqq (\mathcal{Q}_{\rm OWS}, \mathcal{I}_{\rm OWS}, \mathcal{P}_{\rm OWS}, \mathcal{S}_{\rm OWS}, \mathcal{A}_{\rm OWS}, \mathcal{T}_{\rm OWS})%, AP_{\rm OWS} \tilde{\mathcal{L}}_{\rm OWS}
\end{align}
%is defined as a $(\nu, \epsilon_1, \epsilon_2)$-abstraction of the switched subsystem locomotion system $\mathcal{SS}_{\rm OWS}$ in Eq.~(\ref{eq:OWSsubsystem}), represented by $\mathcal{SS}_{\rm OWS} \preceq_{(\nu, \epsilon_1, \epsilon_2)} \mathcal{TS}_{\rm OWS}$, if $\epsilon_1 \geq \nu$ and $\epsilon_2 \geq \nu$ and the following properties are satisfied.
with the following elements
\begin{itemize}
\item $\mathcal{Q}_{\rm OWS}$ is a set of keyframe states determined by the nominal keyframe pair $(\boldsymbol{q}_{\rm initial}, \boldsymbol{q}_{\rm final})$ and robustness margins $(\epsilon_1, \epsilon_2)$. $\mathcal{Q}_{\rm OWS} \subseteq \mathcal{Q}$, where $\mathcal{Q}$ is the set of all the allowable keyframe states defined in Eq.~(\ref{eq:complete-keyframe-set}). $\mathcal{Q}_{\rm OWS} = \mathcal{Q}_{p_1, {\rm OWS}} \cup \mathcal{Q}_{p_2, {\rm OWS}}$,
%solved by the high-level planner,
where $\mathcal{Q}_{p_1, {\rm OWS}}$ and $\mathcal{Q}_{p_2, {\rm OWS}}$ are defined as %$\mathcal{Q}_{p_1, {\rm OWS}}$ and $\mathcal{Q}_{p_2, {\rm OWS}}$ represent discrete keyframe state sets in the $p^{\rm th}_1$ and $p^{\rm th}_2$ modes, respectively.
%\begin{align*}
%    &\mathcal{M}^{-1}(\boldsymbol{q}'_{\rm initial})\cap\mathcal{B}_{\epsilon_1}(\boldsymbol{q}_{\rm initial}) \neq \emptyset,\forall \boldsymbol{q}'_{\rm initial} \in\mathcal{Q} \implies\boldsymbol{q}'_{\rm initial}\in \mathcal{Q}_{p_1, {\rm OWS}},\\
%    &\mathcal{M}^{-1}(\boldsymbol{q}'_{\rm final})\cap\mathcal{B}_{\epsilon_2}(\boldsymbol{q}_{\rm final}) \neq \emptyset,\forall \boldsymbol{q}'_{\rm final}\in\mathcal{Q}\implies\boldsymbol{q}'_{\rm final}\in \mathcal{Q}_{p_2, {\rm OWS}}.
%\end{align*}
%
\begin{align}\label{eq:Q_p1_OWS}
    & \mathcal{Q}_{p_1, {\rm OWS}} = \{ \boldsymbol{q}'_{\rm initial} \in\mathcal{Q} \; \vert\;  \mathcal{M}_{\rm Riem}^{-1}(\boldsymbol{q}'_{\rm initial})\cap\mathcal{B}_{\epsilon_1}(\boldsymbol{q}_{\rm initial}) \neq \emptyset \}, \\\label{eq:Q_p2_OWS}
    & \mathcal{Q}_{p_2, {\rm OWS}} = \{ \boldsymbol{q}'_{\rm final} \in\mathcal{Q} \; \vert\;  \mathcal{M}_{\rm Riem}^{-1}(\boldsymbol{q}'_{\rm final})\cap\mathcal{B}_{\epsilon_2}(\boldsymbol{q}_{\rm final}) \neq \emptyset \}.
\end{align}
\item $\mathcal{I}_{\rm OWS} = \mathcal{Q}_{p_1, {\rm OWS}}$ is a set of initial states.
% \item (ii) There exists an abstraction map $\mathcal{M}: \Xi_{\rm OWS} \rightarrow 2^{\mathcal{Q}_{\rm OWS}}$ such that for all $(\boldsymbol{\xi}, \boldsymbol{q}) \in \Xi_{\rm OWS} \times \mathcal{Q}_{\rm OWS}$:
% \begin{align*}
%     |\boldsymbol{\xi} - \boldsymbol{q}| \leq \nu \implies\boldsymbol{q} \in \mathcal{M}(\boldsymbol{\xi});
% \end{align*}
% where the initial keyframe mapping has $\bigcup_{\boldsymbol{\xi}\in \Xi_{p_1, {\rm OWS}}}\mathcal{M}(\boldsymbol{\xi}) \subseteq \mathcal{Q}_{p_1, {\rm OWS}}$;
% \item (iii) $\boldsymbol{q}_{\rm initial}$ and $\boldsymbol{q}_{\rm final}$ denote the initial and finial keyframe states determined from the high-level task planner.
\item $\mathcal{P}_{\rm OWS}=\{p_1,p_2\}$ is a pair of locomotion modes for one walking step.
\item $\mathcal{S}_{\rm OWS}=\{s_1,s_2\}$ is a pair of contact actions for one walking step.
\item $\mathcal{A}_{\rm OWS}=\{\mathcal{A}_{p_1},\mathcal{A}_{p_2}\}$ with $\mathcal{A}_{p_1}=\bigcup_{\zeta_0\leq\zeta \leq \zeta_{p_1}} \mathcal{U}_{p_1}^{[\zeta_0,\zeta]}$ and $\mathcal{A}_{p_2}=\bigcup_{\zeta_{p_1}\leq\zeta \leq \zeta_{\rm final}} \mathcal{U}_{p_2}^{[\zeta_{p_1}, \zeta]}$,\footnote{It is worth noting that $\mathcal{A}_{p_1}$ and $\mathcal{A}_{p_2}$ represent a sequence of discrete control inputs, respectively, since this abstraction is designed for two consecutive walking steps involving a sequence of inter-sampling time steps.} where $\zeta_{p_1}$ represents the phase instant when the contact switch occurs.

\item $\left( (\boldsymbol{q}'_{\rm initial}, p_1, s_1),\boldsymbol{a},(\boldsymbol{q}'_{\rm final}, p_2, s_2)\right)\in\mathcal{T}_{\rm OWS}$ (i.e., $(\boldsymbol{q}'_{\rm initial}, p_1, s_1) \xrightarrow{\boldsymbol{a}} (\boldsymbol{q}'_{\rm final}, p_2, s_2)$) for $\boldsymbol{q}'_{\rm initial}\in\mathcal{Q}_{p_1, {\rm OWS}},\boldsymbol{q}'_{\rm final}\in\mathcal{Q}_{p_2, {\rm OWS}}$ if there exists a control sequence $\boldsymbol{a} = \{\boldsymbol{u}_{p_1}(\zeta_0), \ldots, \boldsymbol{u}_{p_1}(\zeta_{p_1}), \ldots, \boldsymbol{u}_{p_2}(\zeta_{p_2}), \ldots, \boldsymbol{u}_{p_2}(\zeta_{\rm final}) \} \in \mathcal{A}_{p_1} \cup \mathcal{A}_{p_2}$ for all bounded external disturbances $d: [\zeta_0, \zeta_{\rm final}] \rightarrow D_{\rm OWS} \subseteq \mathbb{R}^d$ such that the resulting solution $\boldsymbol{\xi}': [\zeta_0, \zeta_{\rm final}] \rightarrow \mathbb{R}^n$ follows,
\begin{align}\label{eq:phase1_1}
\dot{\boldsymbol{\xi}'}(\zeta) = f_{p_1}\big(\boldsymbol{\xi}'(\zeta),  \boldsymbol{u}_{p_1}(\zeta), d(\zeta)\big), \forall \zeta \in [\zeta_0, \zeta_{p_1}],\\\label{eq:phase2_1}
\dot{\boldsymbol{\xi}'}(\zeta) = f_{p_2}\big(\boldsymbol{\xi}'(\zeta),  \boldsymbol{u}_{p_2}(\zeta), d(\zeta)\big), \forall \zeta \in [\zeta_{p_1}, \zeta_{\rm final}],
\end{align}
satisfying
\begin{align}\label{eq:initial-bounds}
&|\mathcal{Z}_{p_1}(\boldsymbol{\xi}(\zeta_0)) - \mathcal{Z}_{p_1}(\boldsymbol{q}'_{\rm initial})| \preceq \epsilon_1, \boldsymbol{q}'_{\rm initial}\in\mathcal{M}_{\rm Riem}(\boldsymbol{\xi}(\zeta_0)),\\\label{eq:final-bounds}
&| \mathcal{Z}_{p_2}(\boldsymbol{\xi}(\zeta_{\rm final})) - \mathcal{Z}_{p_2}(\boldsymbol{q}'_{\rm final})| \preceq \epsilon_2, \boldsymbol{q}'_{\rm final}\in\mathcal{M}_{\rm Riem}(\boldsymbol{\xi}(\zeta_{\rm final})),\\\label{eq:intermediateCondition_1}
&|\mathcal{Z}_{p_1,\sigma}(\boldsymbol{\xi}'(\zeta_{p_1})) - \mathcal{Z}_{p_1,\sigma}(\boldsymbol{q}'_{\rm switch})| \leq \delta \sigma_{\epsilon_1}, |\mathcal{Z}_{p_2,\sigma}(\boldsymbol{\xi}'(\zeta_{p_1})) - \mathcal{Z}_{p_2,\sigma}(\boldsymbol{q}'_{\rm switch})| \leq \delta \sigma_{\epsilon_2},
\end{align}
where
%$\zeta_{\rm final}=\zeta_0+\delta\zeta_{p_1}+\delta\zeta_{p_2}$, $\zeta_0>0$ and 
$\mathcal{Z}(\cdot)$ is the two-dimensional Euclidean-to-Riemannian mapping introduced in Section~\ref{subsection:mapping}. The system vector fields $f_{p_1}$ and $f_{p_2}$ are jointly determined by the locomotion mode set $\mathcal{P}_{\rm OWS}$ and the contact action set $\mathcal{S}_{\rm OWS}$, respectively.

%\alert{Ye: the new version of Eq. (21) looks correct. $\zeta_{\rm final}$ is part of the final stopping condition, even thought it is a $\delta \zeta$ around the final keyframe? Is this right? Also, do we really need the last mapping parts in Eqs. (19) and (20). I understand that it is true, but does it add additional useful information?}

%{\color{red}Yinan: I see. So the bound of $\delta_{\rm final}$ is implicitly determined by (24). Without the last mapping parts in (23) and (24), there will be no rules for determining $\boldsymbol{q}$ and $\boldsymbol{q}'$.}

%\alert{Gotcha. In this case, $\boldsymbol{q}_{\rm initial}$ and $\boldsymbol{q}_{\rm final}$ in Eq. (19)-(21) should also be changed to $\boldsymbol{q}$ and $\boldsymbol{q}'$. I have updated them already.}

%\alert{Ye: How about using $\boldsymbol{q}'_{\rm init}$ and $\boldsymbol{q}'_{\rm final}$ instead of $\boldsymbol{q}$ and $\boldsymbol{q}'$?}

%\item (vi) $\tilde{\mathcal{L}}_{\rm OWS}: \mathcal{Q} \rightarrow 2^{AP_{\rm OWS}}$ is defined by ${\rm ap}_{\rm OWS} \in \mathcal{\tilde{L}}_{\rm OWS}(\boldsymbol{q}), \boldsymbol{q} \in \mathcal{Q} \Leftrightarrow {\rm ap}_{\rm OWS} \in \mathcal{L}(\boldsymbol{\xi}), \forall \boldsymbol{\xi} \in B_{\lambda}(\boldsymbol{q})$.
\end{itemize}
\end{definition}

The mapping function $\mathcal{Z}$ has two dimensions in the phase-space: tangent $\sigma$ and cotangent $\zeta$ manifolds as defined in Eq.~(\ref{eq:manifold-mapping}). Initial and final bound conditions are represented by Eqs.~(\ref{eq:initial-bounds}) and ~(\ref{eq:final-bounds}), respectively. Eq.~(\ref{eq:intermediateCondition_1}) essentially defines an intermediate set where the mode switch takes place, and determines the bound for the switching instant $\zeta_{p_1}$. The inequalities in Eqs.~(\ref{eq:initial-bounds}) and~(\ref{eq:final-bounds}) are element-wise.

A conceptual illustration of this transition computation is shown in Fig.~\ref{fig:RobustMargin}. Using the robustness margins, we construct the transition set $\mathcal{T}_{\rm OWS}$ of the robust finite transition subsystem $\mathcal{TS}_{\rm OWS}$, as defined in Def.~\ref{def:robust-finite-transition}, by adding all the feasible transitions $\{(\boldsymbol{q}'_{\rm initial}, p_1, s_1) \xrightarrow{\boldsymbol{a}} (\boldsymbol{q}'_{\rm final}, p_2, s_2)\}$ where $\mathcal{Z}_{p_2}(\boldsymbol{\xi}'(\zeta_{\rm final}))$ is within the $\epsilon_2$-distance to the targeted state $\mathcal{Z}_{p_2}(\boldsymbol{q}'_{\rm final})$.
%i.e., stay within the robustness margin $\mathcal{B}_{\epsilon_2}(\boldsymbol{q}'_{\rm final})$. 

The construction of $\mathcal{Q}_{\rm OWS}$ is shown in Fig.~\ref{fig:DiscreteStateSet}. The keyframe transitions in the robust finite transition subsystem $\mathcal{TS}_{\rm OWS}$
%a $(\nu, \epsilon_1, \epsilon_2)$-abstraction of the switched locomotion subsystem $\mathcal{SS}_{\rm OWS}$ 
can be computed as follows: for all $ \boldsymbol{q}'_{\rm initial} \in \mathcal{Q}_{p_1, {\rm OWS}}$ and $\boldsymbol{a} \in \mathcal{A}_{\rm OWS}$, if there exists $\boldsymbol{q}'_{\rm final} \in \mathcal{Q}_{p_2, {\rm OWS}} \cap \mathcal{B}_{\epsilon_2}(\boldsymbol{\xi})$, then we add $\{\boldsymbol{q}'_{\rm initial} \xrightarrow{\boldsymbol{a}} \boldsymbol{q}'_{\rm final}\}$ to the transition set $\mathcal{T}_{\rm OWS}$.

\begin{figure}[t]
 \centering
   \includegraphics[width=0.65\linewidth]{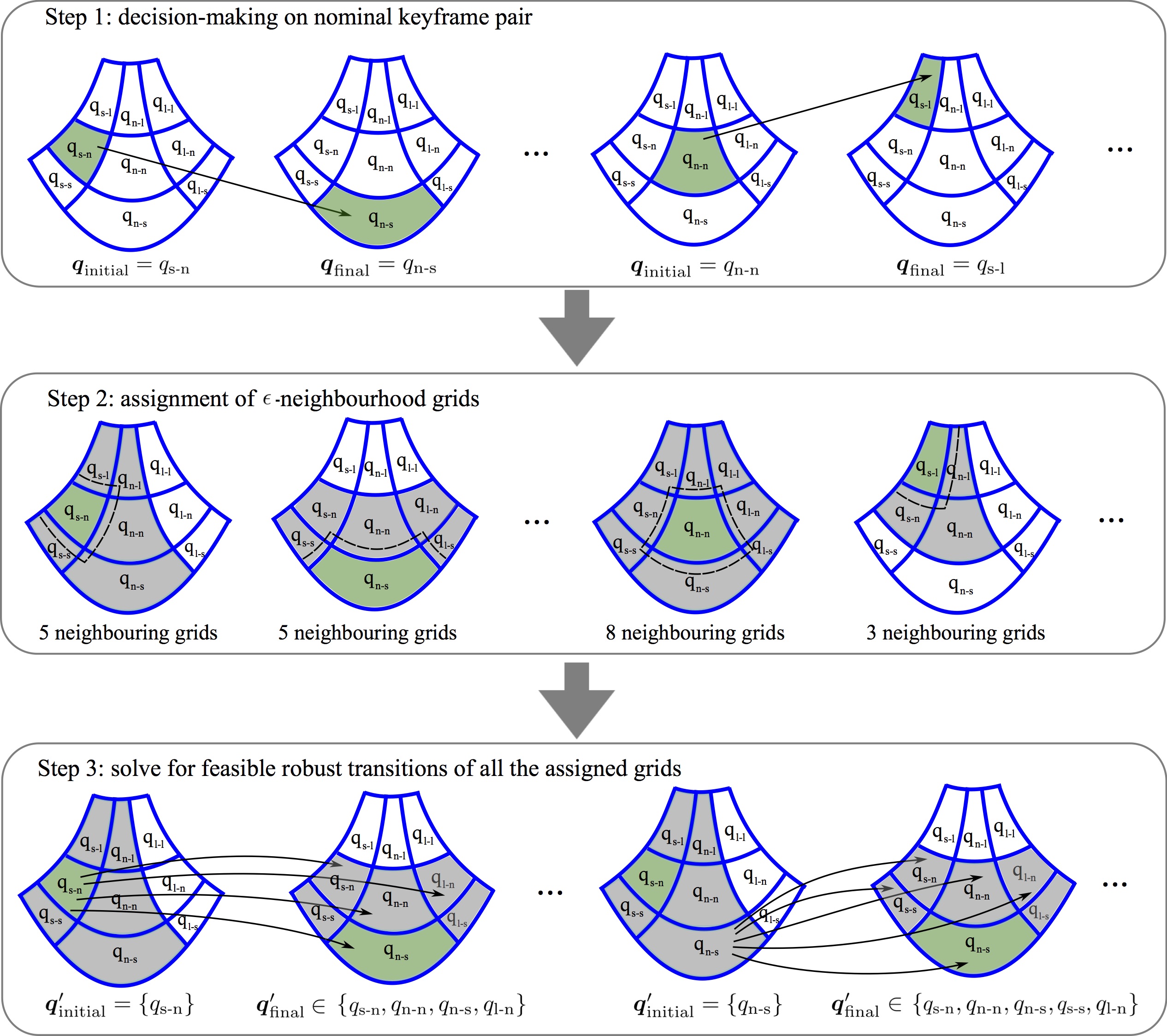}
 \caption{\captionsize Sequential procedure of designing feasible robust keyframe transitions. Step 1 determines the nominal keyframe state pair from the symbolic task planner. In Step 2, we design discrete state set $\mathcal{Q}_{\rm OWS}$ in the robust finite transition system. Four cases are shown for illustration. The green cell represents the nominal keyframe state determined from the task planner while its surrounding gray cells represent other allowable discrete states in $\mathcal{Q}_{\rm OWS}$. Finally, all feasible robust transitions are determined in Step 3.
}
\label{fig:DiscreteStateSet}
\end{figure}

\begin{algorithm}[t]
\caption{One-walking-step robust finite transition subsystem $\mathcal{TS}_{\rm OWS}$ with robustness margins $(\epsilon_1, \epsilon_2)$}\label{al:abstraction}
\begin{algorithmic}[1]
\STATE \textbf{Input:} $\epsilon_1, \epsilon_2, \boldsymbol{q}_{\rm initial}, \boldsymbol{q}_{\rm final}, p_1, p_2, s_1, s_2$
\STATE Define $\mathcal{A}_{p_1}, \mathcal{A}_{p_2}$ as finite subsets of $\bigcup_{\zeta_0\leq\zeta \leq \zeta_{p_1}} \mathcal{U}_{p_1}^{[\zeta_0,\zeta]}$
and $\bigcup_{\zeta_{p_1}\leq\zeta \leq \zeta_{\rm final}} \mathcal{U}_{p_2}^{[\zeta_{p_1}, \zeta].}$%\COMMENT{$\mathcal{S}_{p_1}, \mathcal{S}_{p_2}$ is contact configuration}
\STATE Define the discrete state of keyframe $\mathcal{Q}_{\rm OWS}$ according to Eqs.~(\ref{eq:Q_p1_OWS}) and (\ref{eq:Q_p2_OWS}) of Def.~\ref{def:robust-finite-transition} and Fig.~\ref{fig:DiscreteStateSet}.
%as a finite subset of $\Xi_{\rm OWS}$ according to the locomotion-manifold-based partition in Fig.~\ref{fig:Discretization}(b) such that $\forall \boldsymbol{\xi} \in \Xi_{\rm OWS}, \exists \boldsymbol{q}_{\rm initial} \in \mathcal{Q}_{\rm OWS}$ satisfies $|\boldsymbol{q}_{\rm initial} - \boldsymbol{\xi}| \leq \nu$ 
%\hspace{0.6in} \COMMENT{$\mathcal{Q}_{\rm OWS}$ is the discrete state of initial keyframe}
%\STATE Define $\mathcal{Q}_{{\rm OWS}, 0}$ as the set of all $\boldsymbol{q}_{\rm initial} \in \mathcal{Q}_{\rm OWS}$  such that $\exists \boldsymbol{\xi} \in \Xi_{{\rm OWS},0}$ satisfies $|\boldsymbol{q}_{\rm initial} - \boldsymbol{\xi}| \leq \nu$
%\STATE Define $\tilde{\mathcal{L}}_{\rm OWS}$ based on the condition (v) of Def.~\ref{def:robust-finite-transition}
\STATE Initialize transition set $\mathcal{T}_{\rm OWS} \gets (\mathcal{Q}_{p_1, {\rm OWS}} \times \{p_1\}) \times \mathcal{A}_{\rm OWS} \times (\mathcal{Q}_{p_2, {\rm OWS}} \times \{p_2\})$ \COMMENT{initialize all possible transitions}
\FOR {$\boldsymbol{q}'_{\rm initial} \in \mathcal{Q}_{p_1, {\rm OWS}}$}
\FOR{$\boldsymbol{q}'_{\rm final} \in \mathcal{Q}_{p_2, {\rm OWS}}$}
\STATE Construct an inter-sampling finite abstraction $\mathcal{TS}_{\rm OWS, INT}$
\STATE \textsf{\small isReachable} $\leftarrow$ ReachabilityControl($\mathcal{B}_{\epsilon_1}(\boldsymbol{q}'_{\rm initial}), \mathcal{B}_{\epsilon_2}(\boldsymbol{q}'_{\rm final}), \mathcal{TS}_{\rm OWS, INT}$)
%\FOR {$\boldsymbol{a} \in \mathcal{A}_{\rm OWS}$}
%\STATE $\zeta_{\rm final} \gets {\rm StepTransitionSolver()}$ \hspace{0.6in} \COMMENT{Computed from the phase-space solver in [\cite{zhao2017robust}]}
%\STATE Sequentially execute $\boldsymbol{\xi}'(\zeta) \gets \boldsymbol{\xi}'(\zeta, \boldsymbol{q}'_{\rm initial}, \boldsymbol{u}_{p_1})$ via system dynamics~(\ref{eq:phase1_1}) until $\zeta = \zeta_{p_1}$
%\STATE Sequentially execute $\boldsymbol{\xi}'(\zeta) \gets \boldsymbol{\xi}'(\zeta, \boldsymbol{q}'_{\rm initial}, \boldsymbol{u}_{p_2})$ via system dynamics~(\ref{eq:phase2_1}) until $\zeta = \zeta_{\rm final}$
%\IF {$\mathcal{B}_{\beta(\epsilon_1, \zeta_{\rm final})+ \epsilon_2}(\boldsymbol{\xi}(\zeta_{\rm final})) \nsubseteq \Xi_{\rm OWS}$ \textbf{or} $\boldsymbol{q}'_{\rm final} \notin \mathcal{B}_{\beta(\epsilon_1, \zeta_{\rm final})+ \epsilon_2}(\boldsymbol{\xi}(\zeta_{\rm final}))$} 
\IF{\textsf{\small isReachable} == \textbf{false}} 
\STATE $\mathcal{T}_{\rm OWS} = \mathcal{T}_{\rm OWS} \backslash \{(\boldsymbol{q}'_{\rm initial}, p_1) \xrightarrow{\boldsymbol{a}} (\boldsymbol{q}'_{\rm final}, p_2)\}$ \hspace{1.8in} \COMMENT{delete unqualified transitions}
\ENDIF
%\ENDFOR
\ENDFOR
\ENDFOR
\STATE \textbf{return} $\mathcal{TS}_{\rm OWS} = (\mathcal{Q}_{\rm OWS}, \mathcal{I}_{\rm OWS}, \mathcal{P}_{\rm OWS}, \mathcal{A}_{\rm OWS}, \mathcal{T}_{\rm OWS})$%, AP_{\rm OWS}, \mathcal{\tilde{L}}_{\rm OWS}
\end{algorithmic}
\end{algorithm}

Algorithm~\ref{al:abstraction} details the construction of the robust finite transition subsystem above. The high-level task planner specifies the inputs of the algorithm, i.e., two pairs of keyframe states, locomotion modes, and contact actions $(\boldsymbol{q}_{\rm initial}, p_1, s_1)$ and $(\boldsymbol{q}_{\rm final},p_2, s_2)$ with robustness margins $\epsilon_1$ and $\epsilon_2$, respectively, and by Def.~\ref{def:robust-finite-transition}, determines the set of finite states $\mathcal{Q}_{\rm OWS}$. This is the top-down component of our approach. 
The bottom-up component is the reachability control synthesis introduced in the next subsection.
Algorithm~\ref{al:abstraction} integrates the top-down and bottom-up components. 
%The construction of such a finite abstraction and the reachability control algorithm will be presented in Section~\ref{subsec:controlSynthesis}. The correctness of Algorithm~\ref{al:abstraction} is built on the correctness of the reachability control synthesis, as will be proved in Section ~\ref{sec:correctness-reasoning}.

%{\color{red}Yinan: Do we still need the following remark?} Yes, it's okay to leave it here.

The proposed robust finite transition system (RFTS) differs from the abstraction approaches in [\cite{liu2014abstraction, liu2016finite, tabuada2009verification,belta2017formal}] with respect to the following points: (i) The most salient difference is that our planning approach is a hierarchy consisting of both top-down and bottom-up components. The RFTS is an interface taking the desired command from the high-level symbolic task planner (i.e., the top-down component) and use this command to synthesize a reachability controller in the low-level motion planner (i.e., the bottom-up component). The approach in [\cite{liu2014abstraction, liu2016finite}] is an abstraction of the underlying continuous dynamical system and represents a bottom-up approach.
(ii) By using the proposed hierarhical structure, we are able to solve a more challenging problem with whole-body dynamic locomotion in a constrained environment, instead of simple examples such as 2D mobile robot or vehicle.
(iii) Our RFTS reasons about the robustness to bounded state disturbances at not only the inter-sampling level, but also the locomotion keyframe level capturing the essential locomotion dynamics.
(iv) We incorporate hybrid dynamics into our RFTS design, which is constructed for the one walking step. Overall, our planning framework sequentially composes multiple locomotion modes. (v) Instead of a grid-based partition, we use a locomotion-manifold-based partition to characterize the robustness margins of the keyframe states in the phase-space.

\subsection{One-walking-step reachability control synthesis}
\label{subsec:controlSynthesis}
To determine the transitions satisfying the conditions in Eqs.~(\ref{eq:initial-bounds})-(\ref{eq:intermediateCondition_1}) 
of Def.~\ref{def:robust-finite-transition}, we employ abstraction-based control synthesis developed for general dynamical systems. The idea of this approach is to automatically and rigorously compute the set of states that can be controlled to realize a given specification and generate feedback controllers for those states. Generally, abstraction-based control synthesis consists of three steps: 1) Construct a finite transition system (also called a finite abstraction) that over-approximates the dynamics of the original continuous system. 2) Design control algorithms based on the finite transition system with respect to the given specification. This step not only verifies whether the given specification is realizable by the low level robot dynamics, but also synthesizes a controller for the abstraction if realizable. 3) Interpolate the synthesized controller to be executed in the original continuous system.

We consider a one-walking-step locomotion subsystem defined on a local state space determined by two keyframe states.

\begin{definition}[\textbf{One-walking-step locomotion subsystem}]\label{def:SS_OWS}
Given the switched system tuple in Eq.~(\ref{eq:SwitchTuple}), a one-walking-step (OWS) locomotion subsystem from a given keyframe state $\boldsymbol{q}'_{\rm initial}$ with a robustness margin $\epsilon_1$ in the $(p_1^{\rm th}, s_1^{\rm th})$ mode to another keyframe state $\boldsymbol{q}'_{\rm final}$ with a robustness margin $\epsilon_2$ in the $(p_2^{\rm th}, s_2^{\rm th})$ mode is formulated as:
\begin{align}\label{eq:OWSsubsystem}
\mathcal{SS}_{\rm OWS} = (\Xi_{\rm OWS}, \Xi_{{\rm OWS}, 0}, \mathcal{U}_{\rm OWS}, \mathcal{P}_{\rm OWS}, f_{\rm OWS})
\end{align}
where the state space of the subsystem $\Xi_{\rm OWS} \subseteq\Xi$ is a local area determined by the two keyframe states $\boldsymbol{q}'_{\rm initial}$ and $\boldsymbol{q}'_{\rm final}$; $\Xi_{{\rm OWS}, 0}=\mathcal{B}_{\epsilon_1}(\boldsymbol{q}_{\rm initial})$ is the set of initial continuous states, and $\Xi_{\rm OWS}=\Xi_{p_1,{\rm OWS}}\cup\Xi_{p_2,{\rm OWS}}$ with $\Xi_{p_1,{\rm OWS}}$ and $\Xi_{p_2,{\rm OWS}}$ representing the local state space of the locomotion modes $p_1$ and $p_2$, respectively; $\mathcal{U}_{\rm OWS}$ is the allowable control input set for one walking step; $\mathcal{P}_{\rm OWS} = \{p_1, p_2\}$ represents a locomotion mode set composed of two consecutive walking steps; $f_{\rm OWS}=\{f_{p_1},f_{p_2}\}$ is the set of vector fields determined by $(p_1^{\rm th}, s_1^{\rm th})$ and ($p_2^{\rm th}, s_2^{\rm th})$ command pairs, respectively. The mode transition instant $\zeta_{p_1}$ is determined by Eq.~(\ref{eq:intermediateCondition_1}). %parameters in the walking phase before and after the impact are denoted with $p_1$ and $p_2$, respectively. Phase parameters has $\zeta_{p_1} = \zeta_0 + \delta \zeta_{p_1}, \zeta_{\rm final} = \zeta_{p_1} + \delta \zeta_{p_2} = \zeta_0 + \delta \zeta_{p_1} + \delta \zeta_{p_2}$.
%\alert{Ye: how are we going to handle these phase parameters? Our current implementation does not enforce any time constraints. Shall we just leave them as they are?}
\end{definition}

\begin{remark}
The state spaces $\Xi_{p_1,{\rm OWS}}$ and $\Xi_{p_2,{\rm OWS}}$ overlap so that a contact switch can happen during one walking step. This overlap should fully cover the intersection of two robust tubes defined in Eqs.~(\ref{eq:intermediateCondition_1}). A straightforward option is to make both $\Xi_{p_1,{\rm OWS}}$ and $\Xi_{p_2,{\rm OWS}}$ identical to the state space fully covering one walking step.
\end{remark}

%\alert{$\mathcal{SS}_{\rm OWS}$ in Def. 8 is a one-walking-step version of the system in Eq. (2). $\Xi_{\rm OWS}$ represents one-walking-step locomotion dynamics. Shall we say anything else more than that? I am okay to merge Defs. 8 and 9. Is there any specific reason for that? I kinda feel comfortable with the current separate definition.}

%\alert{Ye: this new definition looks better. Response to your two comments. (1): I agree. The mapping is defined at the high-level. But we still need it when we define $\mathcal{TS}_{\rm OWS}$ abstracted from $\mathcal{SS}_{\rm OWS}$. (2) I agree that it is a reachability problem. But this reachability problem also has a specification composed of atomic propositions, right? Do you think whether we need $AP_{\rm OWS}$ and $\tilde{\mathcal{L}}_{\rm OWS}$ or not depends on how we prove the one-walking-step reachability in Section 6? One comment?}

The control actions $\boldsymbol{a}$ defined in Def.~\ref{def:robust-finite-transition} are a sequence of control signals for one walking step. We discrete the control space and maintain a constant control signal for each time step. In the following, we propose a finite abstraction of the one-walking-step locomotion subsystem $\mathcal{SS}_{\rm OWS}$. This abstraction is based on a predefined time step $\delta \zeta$ for the construction of control signals, a bounded disturbance $d$, and a finer Euclidean space discretization (i.e., an abstraction map $\mathcal{M}_{\rm OWS, Euc}$) rather than the one used in the task planner.

\begin{figure}[t]
 \centering
   \includegraphics[width=0.75\linewidth]{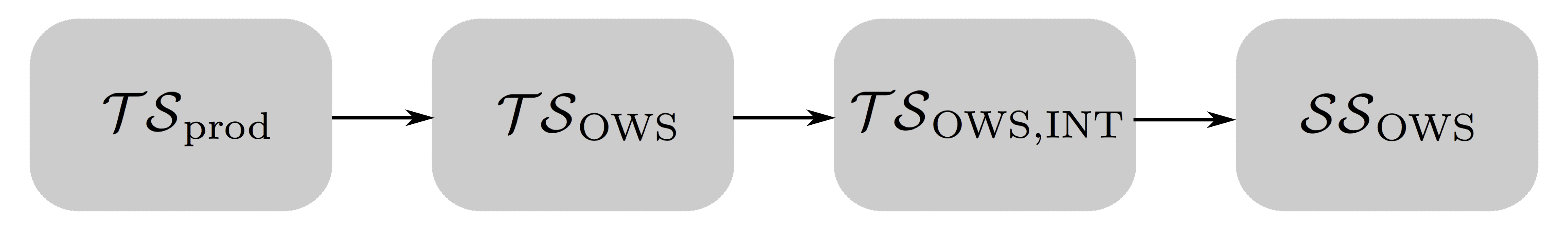}
 \caption{\captionsize Top-down hierarchy of layered abstractions: $\mathcal{TS}_{\rm prod}$ represents the finite product transition system of the high-level task planner in Section~\ref{sec:problem-formulation}; $\mathcal{TS}_{\rm OWS}$ denotes the robust finite transition system for one walking step in Def.~\ref{def:robust-finite-transition}; $\mathcal{TS}_{\rm OWS, INT}$ denotes the inter-sampling finite abstraction of one walking step in Def.~\ref{def:abstlowlevel}; $\mathcal{SS}_{\rm OWS}$ represents the continuous one-walking-step locomotion subsystem in Def.~\ref{def:SS_OWS}. }
\label{fig:FTS_sequence} 
\end{figure}

\begin{definition}[\textbf{Inter-sampling finite abstraction of one walking step}]\label{def:abstlowlevel}
Given a one-walking-step locomotion subsystem $\mathcal{SS}_{\rm OWS}$, an abstraction map $\mathcal{M}_{\rm OWS, Euc}: \Xi_{\rm OWS} \rightarrow \mathcal{Q}_{\rm OWS, INT}$, and a time step $\delta \zeta$, a finite transition system 
\begin{align*}
    \mathcal{TS}_{\rm OWS, INT}=(\mathcal{Q}_{\rm OWS, INT},\mathcal{Q}_{{\rm OWS, INT},0}, \mathcal{A}_{\rm OWS, INT},\mathcal{T}_{\rm OWS, INT})
\end{align*}
is defined as an inter-sampling finite abstraction of $\mathcal{SS}_{\rm OWS}$, denoted by $\mathcal{SS}_{\rm OWS}\preceq_{(\delta \zeta, d,  \mathcal{M}_{\rm OWS, Euc})}\mathcal{TS}_{\rm OWS, INT}$ if the following conditions hold

\begin{itemize}
    \item $\mathcal{Q}_{\rm OWS, INT}=\mathcal{Q}_{p_1,{\rm OWS, INT}}\cup\mathcal{Q}_{p_2,{\rm OWS, INT}}=\bigcup_{\boldsymbol{\xi}\in\Xi_{p_1,{\rm OWS}}}\mathcal{M}_{\rm OWS, Euc}(\boldsymbol{\xi})\cup\bigcup_{\boldsymbol{\xi}\in\Xi_{p_2,{\rm OWS}}}\mathcal{M}_{\rm OWS, Euc}(\boldsymbol{\xi})$ is a finite set of discrete states; an initial set of discrete states is defined as $\mathcal{Q}_{{\rm OWS, INT},0}=\bigcup_{\boldsymbol{\xi}\in\Xi_{\rm OWS,0}}\mathcal{M}_{\rm OWS, Euc}(\boldsymbol{\xi})$.
    \item $\mathcal{A}_{\rm OWS, INT} = \{ \mathcal{U}_{p_1}, \mathcal{U}_{p_2}\}$ is the set of control values, where $\mathcal{U}_{p_1}$ and $\mathcal{U}_{p_2}$ are the allowable control input ranges in the $p_1^{\rm th}$ and $p_2^{\rm th}$ locomotion modes, respectively.
    %\subset \mathcal{A}_{\rm OWS}$
    %is a finite subset of $\mathcal{U}_{\rm OWS}$.
    \item $(\boldsymbol{q},\boldsymbol{a}_{\rm INT},\boldsymbol{q}')\in\mathcal{T}_{\rm OWS, INT}$ for $\boldsymbol{q},\boldsymbol{q}'\in\mathcal{Q}_{p_1,{\rm OWS, INT}}$ (or $\boldsymbol{q},\boldsymbol{q}'\in\mathcal{Q}_{p_2,{\rm OWS, INT}}$, respectively), if there exists $\boldsymbol{a}_{\rm INT}\in\mathcal{A}_{\rm OWS, INT}$ being constant with one time-step duration $\delta \zeta$ and some external disturbance $d:[0,\delta \zeta]\to D_{\rm OWS}$ such that the resulting solution $\boldsymbol{\xi}:[0,\delta \zeta]\to\Xi_{p_1,{\rm OWS, INT}}$ (or $\boldsymbol{\xi}:[0,\delta \zeta]\to\Xi_{p_2,{\rm OWS, INT}}$, respectively) satisfies $\boldsymbol{\xi}(0)=\boldsymbol{\xi}_0$, $\boldsymbol{\xi}_0\in\mathcal{M}_{\rm OWS, Euc}^{-1}(\boldsymbol{q})$, $\boldsymbol{\xi}(\delta \zeta)\in\mathcal{M}_{\rm OWS, Euc}^{-1}(\boldsymbol{q}')$ and the system dynamics in Eq.~ (\ref{eq:phase1_1}) (or Eq.~(\ref{eq:phase2_1}), respectively).
    %For $\boldsymbol{q},\boldsymbol{q}'\in\mathcal{Q}_{\rm OWS, INT}$ and $\boldsymbol{a}_{\rm INC}\in\mathcal{A}_{\rm OWS, INT}$, $(\boldsymbol{q},\boldsymbol{a}_{\rm INC},\boldsymbol{q}')\in\mathcal{T}_{\rm OWS, INT}$ if, under $\boldsymbol{a}_{\rm INC}$ with a time duration $\delta \zeta$, there exists $\xi_0\in\mathcal{M}_{\rm OWS}^{-1}(\boldsymbol{q})$ and some external disturbance $d:[0,\delta \zeta]\to D_{\rm OWS}$ such that
    % \begin{align}
    % \boldsymbol{\xi}(0)=\boldsymbol{\xi}_0,\; \boldsymbol{\xi}_0\in\mathcal{M}_{\rm OWS}^{-1}(\boldsymbol{q}),\;
    %     \boldsymbol{\xi}(\delta \zeta)\in\mathcal{M}_{\rm OWS}^{-1}(\boldsymbol{q}'),
    % \end{align}
    % Eq. (29) satisfies the boundary condition $\epsilon_2$.
\end{itemize}
\end{definition}

% \begin{remark}
% Def.~\ref{def:abstlowlevel} looks similar to Def.~\ref{def:one-walking-step}, but it is defined on local continuous state space while the one-walking-step transition subsystem is defined on discrete keyframe state space.
% \end{remark}

The abstraction map $\mathcal{M}_{\rm OWS, Euc}: \Xi_{\rm OWS, INT} \rightarrow \mathcal{Q}_{\rm OWS, INT}$ maps a continuous state in $\Xi_{\rm OWS, INT}$ into a discrete state in the set $\mathcal{Q}_{\rm OWS, INT}$. Equivalently, $\Xi_{\rm OWS, INT} \coloneqq \bigcup_{\boldsymbol{q}\in \mathcal{Q}_{\rm OWS, INT}}\mathcal{M}^{-1}_{\rm OWS, Euc}(\boldsymbol{q})$. A typical implementation of such a map $\mathcal{M}_{\rm OWS, Euc}$ is a uniform partition with a specific granularity. The condition in the last item  of Def.~\ref{def:abstlowlevel} indicates that $\mathcal{TS}_{\rm OWS, INT}$ is an over-approximation of $\mathcal{SS}_{\rm OWS}$. That is, all the transitions will be included as long as a transition is possible by using the locomotion dynamics under bounded disturbances. For instance, let us examine two consecutive inter-sampling discrete states $\boldsymbol{q}$ and $\boldsymbol{q}'$. We add a transition $(\boldsymbol{q},\boldsymbol{a}_{\rm INT},\boldsymbol{q}')$ if
\begin{align}\label{eq:reachset_intersection}
    \mathcal{M}_{\rm OWS, Euc}^{-1}(\boldsymbol{q}')\cap\mathcal{R}_{\delta \zeta}(\mathcal{M}_{\rm OWS, Euc}^{-1}(\boldsymbol{q}),\boldsymbol{a}_{\rm INT})\neq\emptyset,
\end{align}
where $\mathcal{R}_{\delta \zeta}(\cdot,\cdot)$ is defined as $\mathcal{R}_{\delta \zeta}(\Xi_0,\boldsymbol{u}):=\{\boldsymbol{\xi}(\delta \zeta)\,|\;\dot{\boldsymbol{\xi}}(\tau_r)= f_{p_i}\big(\boldsymbol{\xi}(\tau_r),\boldsymbol{u}(\tau_r), d(\tau_r)\big), \tau_r \in[0,\delta \zeta], \boldsymbol{\xi}(0)\in \Xi_0, i=1,2\}$, representing the reachable set of $\Xi_0\subseteq\Xi_{\rm OWS, INT}$ after a time step $\delta \zeta$ under the constant control input $\boldsymbol{u}$. For nonlinear dynamics, it is difficult to compute the exact reachable set $\mathcal{R}_{\delta \zeta}(\Xi_0,\boldsymbol{u})$. To circumvent this hurdle, we compute an over-approximation of the exact reachable set $\mathcal{R}_{\delta \zeta}(\Xi_0,\boldsymbol{u})$, denoted as $\widehat{\mathcal{R}}_{\delta \zeta}(\Xi_0,\boldsymbol{u})$. This over-approximation is obtained via employing interval-valued functions (refer to [\cite{jaulin2001applied}] for the details) of the discretized low-level dynamics. As a counterpart of real-valued functions, such an interval-valued function is evaluated over intervals and obeys interval arithmetic. As such, all the reachable states after a time step $\delta \zeta$ from any state in $\Xi_0$ are captured in the output of an interval-valued function. By refining the set $\Xi_0$ into smaller intervals, we can approximate the reachable set $\mathcal{R}_{\delta \zeta}(\Xi_0,\boldsymbol{u})$ with an arbitrary precision [\cite{liu2017robust}].
%we employ an over-approximation, denoted by $\widehat{\mathcal{R}}_{\delta \zeta}(X,u)$, which is obtained by using convergent interval inclusion functions and interval arithmetic (refer to [\cite{jaulin2001applied}] for the details). \alert{add an interpretation}
%In our case, we assign the interval as one discretized grid and leverage all the arthmetic operations on the interval to construct a conservative. 

 %\text{the solution }\boldsymbol{\xi} \text{ satisfies } (\ref{eq:phase1_1}) \text{ or }(\ref{eq:phase2_1})

Next, we will discuss in detail how to compute the over-approximation $\widehat{\mathcal{R}}_{\delta \zeta}(\cdot,\cdot)$. %Before that, let us first claim an assumption on the disturbance.
\begin{assumption}[\textbf{Disturbance additivity and boundedness}]\label{assum:disturbanceAddBound}
We assume that the right-hand side of the disturbed switched system in Eq.~(\ref{eq:switchedsystem}) can be divided into a nominal part and a disturbance part: 
\begin{align}\label{eq:fullmodel}
f_p\big(\boldsymbol{\xi}(\zeta), \boldsymbol{u}(\zeta), d(\zeta)\big) = f_p\big(\boldsymbol{\xi}(\zeta), \boldsymbol{u}(\zeta)\big) + g_p\big(\boldsymbol{\xi}(\zeta), \boldsymbol{u}(\zeta)\big), \; p \in \mathcal{P}, \zeta \geq 0,
\end{align}
and the disturbance part is element-wise upper bounded by 
\begin{align}
\big|g_p\big(\boldsymbol{\xi}, \boldsymbol{u}\big)\big| \preceq r, \;\forall p \in \mathcal{P},\;\boldsymbol{\xi} \in \Xi,\;  \boldsymbol{u} \in \mathcal{U}, 
\end{align}
with the bound vector $r \in \mathbb{R}^n$. 
%and for all feasible solutions $\boldsymbol{\xi} \in \Xi$ of Eq.~(\ref{eq:fullmodel}). $r \in \mathbb{R}^n$.
\end{assumption}

\begin{algorithm}[t]
\caption{Reachability control synthesis}
\label{al:reachcontrol}
\begin{algorithmic}[1]
    \STATE \textbf{procedure} ReachabilityControl(initial set $I$, target set $G$, inter-sampling finite abstraction $\mathcal{TS}_{\rm OWS, INT}$)
    \STATE assign a queue ${\rm Que}\leftarrow G$, $G\subseteq\mathcal{Q}$ \hspace{3.2in}\COMMENT{define a FIFO queue}
    \STATE initialize a winning set $\mathcal{WIN}\leftarrow G$
    \STATE $\mathcal{K}\leftarrow \boldsymbol{0}^{N\times M}$ \hspace{2.6in} \COMMENT{$N=|\mathcal{Q}_{\rm OWS, INT}|_{\rm numel}$ and $M=|\mathcal{A}_{\rm OWS, INT}|_{\rm numel}$}
    \WHILE{${\rm Que}\neq\emptyset$}
    \STATE $\boldsymbol{q}'\leftarrow {\rm Que.pop}()$
    \FORALL{$\boldsymbol{q}\in\mathcal{Q}_{\rm OWS, INT}$ and $a\in\mathcal{A}_{\rm OWS, INT}$ such that $\boldsymbol{q}\xrightarrow{a}\boldsymbol{q}'$}
    \IF{$\boldsymbol{q}''\in \mathcal{WIN}$ for all $\boldsymbol{q}''$ such that $\boldsymbol{q}\xrightarrow{a}\boldsymbol{q}''$}
    \IF{$\boldsymbol{q}\notin \mathcal{WIN}$}
    \STATE ${\rm Que}\leftarrow \boldsymbol{q}$
    \STATE $\mathcal{WIN}\leftarrow \boldsymbol{q}$
    \STATE $\mathcal{K}(\boldsymbol{q},a)\leftarrow 1$
    \ENDIF
    \ENDIF
    \ENDFOR
    \ENDWHILE
    \IF{$\mathcal{WIN}\cap I\neq\emptyset$}
    \STATE \textsf{\small isReachable} $\leftarrow \textbf{true}$
    \ELSE
    \STATE \textsf{\small isReachable}$\leftarrow \textbf{false}$
    \ENDIF
    \RETURN \textsf{\small isReachable}, $\mathcal{WIN}$, $\mathcal{K}$
\end{algorithmic}
\end{algorithm}

Given a locomotion mode, we denote by $\delta\boldsymbol{\xi}(\zeta)$ the difference of two trajectories $\boldsymbol{\xi}_1$ and $\boldsymbol{\xi}_2$ at the same instant $\zeta$. These two trajectories start from their initial states $\boldsymbol{\xi}_{1,0}$ and $\boldsymbol{\xi}_{2,0}$, respectively. With the Lipschitz condition and Assumption~\ref{assum:disturbanceAddBound}, we have
\begin{align*}
    |\delta\dot{\boldsymbol{\xi}}(\zeta)|\preceq L|\delta\boldsymbol{\xi}(\zeta)|+r 
    \;\; \implies \;\; |\delta\boldsymbol{\xi}(\zeta)|\preceq  |\boldsymbol{\xi}_{1,0}- \boldsymbol{\xi}_{2,0}|e^{L\zeta}+  L^{-1}(e^{L\zeta}-\boldsymbol{I}_{n\times n})r,\;\forall \zeta\in[0,\delta \zeta],
\end{align*}
which implies that under a disturbance bounded by the vector $r$, all the possible states after a time step $\delta \zeta$ stay within a ball centered at the nominal trajectory state with a radius vector $r_{\delta \zeta}=|\boldsymbol{\xi}_{1,0}- \boldsymbol{\xi}_{2,0}|e^{L\zeta} + L^{-1}(e^{L\delta \zeta}-\boldsymbol{I}_{n\times n})r$. Hence, the reachable set $\mathcal{R}_{\delta \zeta}(\mathcal{M}_{\rm OWS, Euc}^{-1}(\boldsymbol{q}),\boldsymbol{a}_{\rm INT})$ in Eq.~(\ref{eq:reachset_intersection}) can be over-approximated by the estimated reachable set of the nominal system $\dot{\boldsymbol{\xi}}(\zeta)= f_{p_i}\big(\boldsymbol{\xi}(\zeta),\boldsymbol{u}(\zeta))$ enlarged by $r_{\delta \zeta}$.

Given the abstraction defined in Def.~\ref{def:abstlowlevel}, we synthesize a reachability controller for the inter-sampling finite abstraction $\mathcal{TS}_{\rm OWS, INT}$ of a one-walking-step subsystem $\mathcal{SS}_{\rm OWS}$ as shown in Algorithm~\ref{al:reachcontrol}. This algorithm takes as inputs an initial set $I$, a target set $G$, and a finite abstraction $\mathcal{TS}_{\rm OWS, INT}$. Backward dynamics propagation is used to determine the realizability of the reachability controller. This algorithm returns a boolean value \textsf{\small isReachable} indicating the realizability of the target set $G$. If this target set is realizable, it outputs two additional sets: (i) a winning set $\mathcal{WIN}$ defined as all the states from which the reachability goal is satisfied under bounded state disturbances; and (ii) a boolean matrix $\mathcal{K}$ indexing the control strategy $\Omega: \mathcal{Q}\to 2^{\mathcal{A}}$. Otherwise, $\mathcal{WIN}$ and $\Omega$ are returned as empty sets. Note that, the operator $|A|_{\rm numel}$ on Line 4 of Algorithm~\ref{al:reachcontrol} represents the total number of elements in the set $A$. Given a library of synthesized controllers in Algorithm~\ref{al:reachcontrol}, an execution of the complete reachability controller based on the robust finite transition system is shown in Algorithm~\ref{al:execution_reachcontrol} in the Appendix.

A merit of the proposed hierarchical structure is to decompose the overall high-dimensional contact-rich planning problem into tractable sub-problems with smaller state dimensions, circumventing prohibitive computational complexity. In particular, the symbolic task planner takes charge of the high-level decisions being reactive to the environment actions. The middle-level robust finite transition system reasons about the robustness of a local phase space region around the nominal keyframe state w.r.t bounded state disturbances. The low-level phase-space planner executes the continuous locomotion dynamics. This hierarchy is analogous to the receding horizon control approach in [\cite{wongpiromsarn2012receding}], where the complex high-dimensional planning problem is decomposed into a set of solvable sub-problems. This strategy facilitates efficient decision making during dynamic interactions with uncertain environments.

%The aforementioned one-walking-step reachability and correctness of robust continuous implementation lay down the foundation of the hierarchical planning framework.

\begin{remark}
$\mathcal{TS}_{\rm prod}$ and $\mathcal{TS}_{\rm OWS}$ establish a hierarchical relationship for task decomposition. $\mathcal{TS}_{\rm prod}$ is a high-level decision maker of a nominal keyframe state while $\mathcal{TS}_{\rm OWS}$ reasons about the robustness of the local phase-space region around the nominal keyframe state determined from $\mathcal{TS}_{\rm prod}$. Overall, $\mathcal{TS}_{\rm prod}$ and $\mathcal{TS}_{\rm OWS}$ form a top-down hierarchy (see Fig.~\ref{fig:FTS_sequence}) to simultaneously achieve ``global'' phase-space decision making and ``local'' robustness reasoning.
\end{remark}
In next section, we will prove the robust reachability of the synthesized controller, i.e., the reachability goal is realizable for $\mathcal{SS}_{\rm OWS}$ if it is realizable for $\mathcal{TS}_{\rm OWS}$. With such a guarantee, the robust finite transition system $\mathcal{TS}_{\rm OWS}$ interfaces the high-level task planner commands with the low-level hybrid motion planner.

%--------------------%-------------------%

%--------------------%-------------------%
 \section{Correctness of The Reactive Task and Motion Planner}
\label{sec:correctness-reasoning}
Correctness guarantees of the whole-body dynamic locomotion (WBDL) planner play a key role in the successful execution of robust legged locomotion interacting with dynamic environments. The objective of this section is to prove such a correctness. In particular, the correctness of our planning framework is interpreted as successful implementations of the high-level task planner on the low-level motion planner under bounded disturbances. Our locomotion planner is a hierarchy composed of a task planning layer with reactive synthesis and a robust motion planner layer synthesized by a robust finite transition system. A high-level task planner, i.e., a WBDL winning strategy $\mathcal{A}_{\rm WBDL}$, is synthesized via a two player game $\mathcal{G}$.
%(2) middle-level reachability control $\Omega$ via robust finite transition system $\mathcal{TS}_{\rm OWS}$ introduced in Section~\ref{sec:Abstraction}. 
The two-player game $\mathcal{G}$ is solved between the robot and its environment to make a decision $(p, s, \boldsymbol{q})$ representing the locomotion mode $p$, the contact action $s$, and the keyframe state $\boldsymbol{q}$, respectively. This locomotion decision determines a nominal phase-space motion plan and is sent to the robust finite transition system $\mathcal{TS}_{\rm OWS}$ such that the high-level decision is achieved in a robust manner by the low-level continuous motion planner.

%Guaranteeing the correctness of this hierarchical planning framework provides a theoretical foundation of our formal methods for locomotion task and motion planning.
%(i) one-walking-step robust reachability; (ii) the guarantee of implementing the high-level discrete decision on the low-level continuous motion planner.

\subsection{One-walking-step robust reachability}
\label{subsection:one-walking-step-reachability}

%\textbf{\alert{(What is the benefit of robustness? What happens if robustness is not considered? What example to validate this?)}}

%\begin{proposition}[\textbf{Realizability of continuous system}]
%Assume a subsystem $\mathcal{S}_{\rm OWS} \preceq_{(\nu, \epsilon_1, \epsilon_2, \lambda)} \mathcal{TS}_{\rm OWS}$.  Given any LTL formula $\varphi$, if $\lambda \geq M \cdot \Delta \zeta/2 + \nu$ and $\varphi$ is realizable for $\mathcal{TS}_{\rm OWS}$, then $\varphi$ is realizable for $\mathcal{SS}_{\rm OWS}$.
%\end{proposition}
%\begin{proof}
%See [Theorem 4.1, \cite{liu2016finite}].
%\end{proof}
%
To guarantee the robust finite transition system to be realizable by its underlying continuous system, we need to prove that the conditions in Eqs.~(\ref{eq:initial-bounds})-(\ref{eq:intermediateCondition_1}) of Def.~\ref{def:robust-finite-transition} also hold for continuous states. Since the robust finite transition system is based on the keyframes of one walking step, we name the keyframe reachability problem as ``one-walking-step robust reachability''. We model the bounded disturbance causing initial state deviations, model uncertainties, and external perturbations during the evolution of the locomotion trajectory. The term ``robust reachability'' refers to the reachability of the goal robustness margin set centered around the final keyframe from the initial robustness margin set.

\begin{theorem}[\textbf{One-walking-step robust reachability}]\label{theorem:OWSReachability}
Consider a one-walking-step locomotion subsystem $\mathcal{SS}_{\rm OWS}$ with two pairs of decisions $(\boldsymbol{q}_{\rm initial},p_1, s_1)$ and $(\boldsymbol{q}_{\rm final}, p_2, s_2)$ and its inter-sampling finite abstraction $\mathcal{TS}_{\rm OWS, INT}$. Assume that $\mathcal{SS}_{\rm OWS}\preceq_{(\delta \zeta, d,  \mathcal{M}_{\rm OWS, Euc})}\mathcal{TS}_{\rm OWS, INT}$ as defined in Def.~\ref{def:abstlowlevel}. If it is realizable for $\mathcal{TS}_{\rm OWS, INT}$, this walking step is realizable for $\mathcal{SS}_{\rm OWS}$, i.e., the robustness margin set $\mathcal{B}_{\epsilon_2}(\boldsymbol{q}_{\rm final})$ of the final keyframe $\boldsymbol{q}_{\rm final}$ is reachable from $\mathcal{B}_{\epsilon_1}(\boldsymbol{q}_{\rm initial})$.
\end{theorem}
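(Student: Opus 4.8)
The plan is to \emph{refine} a discrete winning strategy computed on the inter-sampling abstraction $\mathcal{TS}_{\rm OWS, INT}$ into a sampled-data feedback controller for the continuous subsystem $\mathcal{SS}_{\rm OWS}$, exploiting the fact that $\mathcal{TS}_{\rm OWS, INT}$ \emph{over-approximates} $\mathcal{SS}_{\rm OWS}$ in the sense of Def.~\ref{def:abstlowlevel}. First I would isolate the two ingredients the argument rests on. (i) \emph{Soundness of the abstraction}: for every continuous state $\boldsymbol{\xi}$, every $\boldsymbol{q}\in\mathcal{M}_{\rm OWS, Euc}(\boldsymbol{\xi})$, every $\boldsymbol{a}_{\rm INT}\in\mathcal{A}_{\rm OWS, INT}$ held constant over a sampling period $\delta\zeta$, and every admissible disturbance $d:[0,\delta\zeta]\to D_{\rm OWS}$, the resulting state $\boldsymbol{\xi}(\delta\zeta)$ satisfies $\boldsymbol{q}\xrightarrow{\boldsymbol{a}_{\rm INT}}\boldsymbol{q}'$ in $\mathcal{T}_{\rm OWS, INT}$ for some $\boldsymbol{q}'\in\mathcal{M}_{\rm OWS, Euc}(\boldsymbol{\xi}(\delta\zeta))$; this is exactly the last bullet of Def.~\ref{def:abstlowlevel}, and it holds because the transition set of $\mathcal{TS}_{\rm OWS, INT}$ is built from the over-approximated reachable set $\widehat{\mathcal{R}}_{\delta\zeta}(\cdot,\cdot)$, obtained from the Lipschitz--Gr\"onwall inflation radius $r_{\delta\zeta}$ under Assumption~\ref{assum:disturbanceAddBound}. (ii) \emph{Correctness of Algorithm~\ref{al:reachcontrol}}: writing the set produced by the backward breadth-first iteration as $\mathcal{WIN}=\bigcup_{k\geq 0}\mathcal{W}_k$ with $\mathcal{W}_0=G$ and $\mathcal{W}_{k+1}=\mathcal{W}_k\cup\{\boldsymbol{q}\;|\;\exists a,\ \forall\boldsymbol{q}''\text{ with }\boldsymbol{q}\xrightarrow{a}\boldsymbol{q}'',\ \boldsymbol{q}''\in\mathcal{W}_k\}$, define the rank $\rho(\boldsymbol{q})=\min\{k:\boldsymbol{q}\in\mathcal{W}_k\}$; by construction $\mathcal{K}(\boldsymbol{q},a)=1$ only when every $a$-successor of $\boldsymbol{q}$ lies in $\mathcal{WIN}$ with strictly smaller rank.

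From (ii) alone one obtains that any discrete run which at each step applies an action $a$ with $\mathcal{K}(\boldsymbol{q},a)=1$ stays inside $\mathcal{WIN}$ and strictly decreases $\rho$ against \emph{every} nondeterministic choice of successor, hence reaches $G$ after at most $|\mathcal{Q}_{\rm OWS, INT}|_{\rm numel}$ steps. Because the disturbance $d$ in $\mathcal{SS}_{\rm OWS}$ manifests, through (i), precisely as the nondeterminism of $\mathcal{T}_{\rm OWS, INT}$, this is the robustness we need.

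Next I would carry out the refinement and thread through the hybrid mode switch. Take any $\boldsymbol{\xi}(\zeta_0)\in\mathcal{B}_{\epsilon_1}(\boldsymbol{q}_{\rm initial})=\Xi_{{\rm OWS},0}$; realizability of $\mathcal{TS}_{\rm OWS, INT}$ gives $\mathcal{M}_{\rm OWS, Euc}(\boldsymbol{\xi}(\zeta_0))\subseteq\mathcal{WIN}$. Define the controller recursively over sampling instants $\zeta_k=\zeta_0+k\,\delta\zeta$: pick $\boldsymbol{q}_k\in\mathcal{M}_{\rm OWS, Euc}(\boldsymbol{\xi}(\zeta_k))\cap\mathcal{WIN}$ and any $a_k$ with $\mathcal{K}(\boldsymbol{q}_k,a_k)=1$, apply $a_k$ constantly on $[\zeta_k,\zeta_{k+1}]$, integrating $f_{p_1}$ while $\boldsymbol{q}_k\in\mathcal{Q}_{p_1,{\rm OWS, INT}}$ and $f_{p_2}$ afterwards, and set $\zeta_{p_1}$ to be the first sampling instant at which the trajectory enters $\mathcal{M}_{\rm OWS, Euc}^{-1}(\mathcal{Q}_{p_2,{\rm OWS, INT}})$. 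By (i), whatever disturbance acts, $\boldsymbol{q}_k\xrightarrow{a_k}\boldsymbol{q}_{k+1}$ for some $\boldsymbol{q}_{k+1}\in\mathcal{M}_{\rm OWS, Euc}(\boldsymbol{\xi}(\zeta_{k+1}))$, so by (ii) $\boldsymbol{q}_{k+1}\in\mathcal{WIN}$ and $\rho$ strictly decreases; hence in finitely many periods $\boldsymbol{\xi}$ enters $\mathcal{M}_{\rm OWS, Euc}^{-1}(G)\subseteq\mathcal{B}_{\epsilon_2}(\boldsymbol{q}_{\rm final})$. The mode-switch transitions of $\mathcal{T}_{\rm OWS, INT}$ are, by construction, only those $p_1$-to-$p_2$ transitions whose endpoints meet the $\sigma$-bounds of Eq.~\eqref{eq:intermediateCondition_1}, so the continuous switch at $\zeta_{p_1}$ inherits those bounds, while the overlap of $\Xi_{p_1,{\rm OWS}}$ and $\Xi_{p_2,{\rm OWS}}$ (the Remark after Def.~\ref{def:SS_OWS}) absorbs the sampling-induced delay in detecting the switch. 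The resulting continuous execution then solves Eqs.~\eqref{eq:phase1_1}--\eqref{eq:phase2_1} and satisfies Eqs.~\eqref{eq:initial-bounds}--\eqref{eq:intermediateCondition_1}, which is exactly realizability of the walking step for $\mathcal{SS}_{\rm OWS}$.

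I expect the main obstacle to be making ingredient (i) fully rigorous: showing that holding the input constant over $\delta\zeta$ and inflating the nominal reachable set by $r_{\delta\zeta}=|\boldsymbol{\xi}_{1,0}-\boldsymbol{\xi}_{2,0}|e^{L\delta\zeta}+L^{-1}(e^{L\delta\zeta}-\boldsymbol{I}_{n\times n})r$ genuinely over-covers every disturbed trajectory starting in a cell requires care with the interval-arithmetic over-approximation $\widehat{\mathcal{R}}_{\delta\zeta}$ and with applying the Gr\"onwall estimate uniformly over all initial states in the cell and all $(\boldsymbol{u},d)$. A secondary subtlety is the hybrid bookkeeping around the switch, and pinning down what ``realizable'' means: a robustness guarantee over the \emph{whole} margin $\mathcal{B}_{\epsilon_1}(\boldsymbol{q}_{\rm initial})$ requires the initial cell set to satisfy $I\subseteq\mathcal{WIN}$, which is stronger than the $\mathcal{WIN}\cap I\neq\emptyset$ test returned by Algorithm~\ref{al:reachcontrol}, and this reading should be adopted in the statement.
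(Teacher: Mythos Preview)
Your proposal is correct and follows essentially the same refinement-via-over-approximation approach as the paper: pull back the discrete winning strategy through $\mathcal{M}_{\rm OWS, Euc}$, use the over-approximation property of Def.~\ref{def:abstlowlevel} to guarantee the sampled continuous trajectory stays inside $\mathcal{WIN}$, and conclude termination in the target. Your version is in fact more careful than the paper's---the rank function $\rho$ and the explicit mode-switch bookkeeping are details the paper's proof elides, and your final remark about $I\subseteq\mathcal{WIN}$ versus the weaker $\mathcal{WIN}\cap I\neq\emptyset$ test of Algorithm~\ref{al:reachcontrol} correctly flags an ambiguity the paper leaves unresolved (its proof, like yours, implicitly takes the stronger reading).
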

\begin{proof}
Suppose that the walking step from $(\boldsymbol{q}_{\rm initial},p_1, s_1)$ to $(\boldsymbol{q}_{\rm final}, p_2, s_2)$ is realizable for $\mathcal{TS}_{\rm OWS, INT}$, i.e., the winning set $\mathcal{WIN}_{\rm TS}\neq\emptyset$ and there exists a control strategy $\Omega_{\rm TS}:\mathcal{Q}\to 2^{\mathcal{A}}$ for $\mathcal{TS}_{\rm OWS, INT}$. Let $\boldsymbol{q}\in\mathcal{B}_{\epsilon_1}(\boldsymbol{q}_{\rm initial})$ and $\boldsymbol{q}\in\mathcal{WIN}_{\rm TS}$. Under bounded disturbances, all the possible state sequences starting from $\boldsymbol{q}$, generated by the locomotion dynamics and the control strategy synthesized by Algorithm~\ref{al:reachcontrol}, will finally reach the target set $\mathcal{B}_{\epsilon_2}(\boldsymbol{q}_{\rm final})$. Let $\{\boldsymbol{q}_i\}_0^l$ ($l\in \mathbb{Z}$) be one of such sequences generated under a control sequence $\{\boldsymbol{a}_{i,{\rm INT}}\}_0^{l-1}$ and a sequence of disturbances $\{\boldsymbol{d}_i\}_0^{l-1}$ such that $\boldsymbol{q}_0=\boldsymbol{q}$ and $\boldsymbol{q}_l\in\mathcal{B}_{\epsilon_2}(\boldsymbol{q}_{\rm final})$.

We construct a control strategy $\Omega_{\rm OWS}:\Xi_{\rm OWS}\to 2^{\mathcal{A}}$ for the one-walking-step locomotion system $\mathcal{SS}_{\rm OWS}$ by
\begin{align*}
    \boldsymbol{a}_{\rm INT}\in\Omega_{\rm OWS}(\boldsymbol{\xi}),\;\text{if}\;\boldsymbol{a}_{\rm INT}\in\Omega_{\rm TS}(\mathcal{M}_{\rm OWS, Euc}(\boldsymbol{\xi})).
\end{align*}
Given the transitions of $\mathcal{TS}_{\rm OWS, INT}$ assigned by Eq.~(\ref{eq:reachset_intersection}), for $\forall \boldsymbol{\xi}(0)\in\mathcal{M}_{\rm OWS, Euc}^{-1}(\boldsymbol{q})$, there always exist a state $\boldsymbol{q}'$ and a control input $\boldsymbol{a}_{\rm INT}$ such that $\boldsymbol{\xi}(\delta \zeta)\in \mathcal{M}_{\rm OWS, Euc}^{-1}(\boldsymbol{q}')$. Thus,
for any $\boldsymbol{\xi}(0)\in\mathcal{M}_{\rm OWS, Euc}^{-1}(\boldsymbol{q})$, the resulting solution with the same control sequence $\{\boldsymbol{a}_{i,{\rm INT}}\}_0^{l-1}$ and disturbance $\{\boldsymbol{d}_i\}_0^{l-1}$
%a feasible control sequence $\boldsymbol{a}=\{\boldsymbol{a}_{0,{\rm INT}},\boldsymbol{a}_{1,{\rm INT}},\cdots,\boldsymbol{a}_{l,{\rm INT}}\}$ ($l\in \mathbb{Z}$) 
generated by $\Omega_{\rm OWS}$ will be guaranteed to reach the target set $\mathcal{B}_{\epsilon_2}(\boldsymbol{q}_{\rm final})$. This implies that at time $t_k = k\cdot \delta \zeta, \forall k \leq l$, $\boldsymbol{\xi}(t_k)\in\mathcal{WIN}_{\rm SS}$ (i.e., the winning set of $\mathcal{SS_{\rm OWS}}$). %as long as $\boldsymbol{\xi}(t_k) \in\mathcal{M}_{\rm OWS, Euc}^{-1}(\boldsymbol{q}_k)$. 
Therefore, $\mathcal{WIN}_{\rm SS}\neq\emptyset$ and $\Omega_{\rm OWS}$ is such a controller that can realize the one walking step. This completes the proof.
\end{proof}

\subsection{Correctness of the hierarchical WBDL planner}
\label{subsec:correctness-WBDL}

Given the one-walking-step robust reachability of Theorem~\ref{theorem:OWSReachability}, we now prove the correctness of the top-down planning hierarchy, i.e., $\mathcal{TS}_{\rm prod} \rightarrow \mathcal{TS}_{\rm OWS} \rightarrow \mathcal{TS}_{\rm OWS, INT} \rightarrow \mathcal{SS}_{\rm OWS}$ as shown in Fig.~\ref{fig:FTS_sequence}.
%\footnote{the high-level discrete contact planner $\mathcal{TS}_{\rm prod}$ in Definition~\ref{def:OFPTS}, to middle-layer robust finite transition system $\mathcal{TS}_{\rm OWS}$ in Definition~\ref{def:robust-finite-transition}, to the inter-sampling finite abstraction $\mathcal{TS}_{\rm prod}$ in Definition~\ref{def:abstlowlevel}, to the low-level motion planner $\mathcal{SS}_{\rm OWS}$ in Section~\ref{eq:OWSsubsystem}.}. 
The correctness is defined in a robust sense, i.e., the actual keyframe states of the phase-space trajectory always stays within the robustness margins of the nominal keyframe states determined by the high-level task planner.

\begin{definition}
Assume a low-level locomotion trajectory $\kappa = (\boldsymbol{\xi}, \rho, \eta, \mu)$ and a high-level decision sequence $\gamma = (q, e, s, p)$ as defined in Definition~\ref{def:Execution}. The low-level trajectory $\kappa$ is a continuous implementation of the high-level execution $\gamma$, if there exists a sequence of non-overlapping phase intervals $\mathcal{H} = H_1 \cup H_2\cup H_3 \ldots$ and $\cup^{\infty}_{i=1} H_i = \mathbb{R}^+$ such that $\forall \zeta \in H_k, \forall k \geq 1$, the following mappings hold
\begin{align}\nonumber
\boldsymbol{\xi}(\zeta_{\rm left\text{-}bnd}) \in \mathcal{M}_{\rm Riem}^{-1}(q_k), \boldsymbol{\xi}(\zeta_{\rm right\text{-}bnd}) \in \mathcal{M}_{\rm Riem}^{-1}(q_{k+1}), \rho(\zeta) = e_k, \eta(\zeta) = s_k, \mu(\zeta) = p_k,
\end{align}
where $\mathcal{M}_{\rm Riem}$ is the Riemannian space abstraction defined in Eq.~(\ref{eq:keyframeMapping}), which maps the continuous state $\boldsymbol{\xi}$ region centered at the keyframe state into the discrete keyframe $q$ [\cite{liu2013synthesis}].
$\zeta_{\rm left\text{-}bnd}$ and $\zeta_{\rm right\text{-}bnd}$ are the left and right boundary value of the interval $H_k$.
%and the implementation is non-Zeno if $l(T_k) \rightarrow \infty$ as $k \rightarrow \infty$, where $l(T_k)$ represents the left end-point of $T_k$.
\end{definition}

\noindent By this definition and stutter-equivalence [\cite{baier2008principles}], we can conclude $\kappa \models \varphi$ if and only if $\gamma \models \varphi$, where $\varphi$ is the task specifications in the symbolic task planner. For our phase-space planning, the interval $H_k$ represents the phase duration of the $k^{\rm th}$ walking step. This guarantees that the left boundary point of $H_k$ approaches to infinity as $k\rightarrow\infty$, and thus the continuous implementation guarantees the Zeno behavior to be ruled out. For detailed explanations, reader can refer to [\cite{liu2013synthesis}] and the reference therein. Given these preliminaries, we have the following correctness theorem:
\begin{theorem}[\textbf{Correctness of the WBDL task and motion planner}]
\noindent\textit{Given a robust finite transition system $\mathcal{TS}_{\rm OWS}$, a winning WBDL strategy synthesized from the two-player game is guaranteed to be implementable by the underlying low-level phase-space motion planner in a provable correct manner.
}
%A winning switching strategy synthesized from the two-player game solves the discrete WBDL planning problem. Given an over-approximation model, this switching strategy is guaranteed to be correctly implemented by the underlying low-level phase-space planners.
\end{theorem}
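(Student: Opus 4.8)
The plan is to chain the three results already in hand: the existence of a discrete winning strategy (Proposition~\ref{prop:winningstrategy}), the one-walking-step robust reachability (Theorem~\ref{theorem:OWSReachability}), and the continuous-implementation/stutter-equivalence setup introduced just before the theorem, and then to close the argument with a Zeno-freeness observation over the hierarchy $\mathcal{TS}_{\rm prod} \to \mathcal{TS}_{\rm OWS} \to \mathcal{TS}_{\rm OWS, INT} \to \mathcal{SS}_{\rm OWS}$ of Fig.~\ref{fig:FTS_sequence}.

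First I would fix an arbitrary admissible environment run $e_0 e_1 e_2 \cdots$ and feed it to the winning WBDL strategy $\mathcal{A}_{\rm WBDL}$, which exists by Proposition~\ref{prop:winningstrategy} because $(\mathcal{TS}_{\rm prod}, \varphi)$ is realizable. This yields a discrete execution $\gamma = (q, e, s, p)$ of $\mathcal{TS}_{\rm prod}$ with $\gamma \models \varphi$, together with the induced sequence of nominal keyframe/mode/contact decisions $(\boldsymbol{q}_k, p_k, s_k)_{k \geq 0}$. Each consecutive pair $\big((\boldsymbol{q}_k, p_k, s_k), (\boldsymbol{q}_{k+1}, p_{k+1}, s_{k+1})\big)$ is exactly the input of a one-walking-step robust finite transition system $\mathcal{TS}_{\rm OWS}^{(k)}$ of Def.~\ref{def:robust-finite-transition}, built with the prescribed robustness margins $(\epsilon_1, \epsilon_2)$; by Algorithm~\ref{al:abstraction} the transition realizing this step is retained in $\mathcal{TS}_{\rm OWS}^{(k)}$ precisely when the corresponding reachability is realizable for the inter-sampling abstraction $\mathcal{TS}_{\rm OWS, INT}^{(k)}$.

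Next, for each $k$ I would apply Theorem~\ref{theorem:OWSReachability}: realizability for $\mathcal{TS}_{\rm OWS, INT}^{(k)}$ yields a control strategy $\Omega_{\rm OWS}^{(k)}$ and a nonempty winning set $\mathcal{WIN}_{\rm SS}^{(k)}$ such that, under any bounded disturbance, every continuous trajectory of $\mathcal{SS}_{\rm OWS}^{(k)}$ originating in $\mathcal{B}_{\epsilon_1}(\boldsymbol{q}_k)$ is driven into $\mathcal{B}_{\epsilon_2}(\boldsymbol{q}_{k+1})$ in finitely many sampling steps, with the mode switch occurring at the phase instant $\zeta_{p_k}$ determined by Eq.~(\ref{eq:intermediateCondition_1}). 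I would then concatenate these per-step controllers along a partition $\mathcal{H} = H_1 \cup H_2 \cup \cdots$ of $\mathbb{R}^+$, with $H_k = [\zeta^{(k)}_{\rm left}, \zeta^{(k)}_{\rm right}]$ the phase interval of step $k$ and $\zeta^{(k)}_{\rm right} = \zeta^{(k+1)}_{\rm left}$. The delicate point, where the design choices $\epsilon_1 \succeq \nu$, $\epsilon_2 \succeq \nu$ and $\epsilon_2 \preceq \epsilon_1$ (cf. the remark after Def.~\ref{def:finalRobustSet}) enter, is that the endpoint $\boldsymbol{\xi}(\zeta^{(k)}_{\rm right}) \in \mathcal{B}_{\epsilon_2}(\boldsymbol{q}_{k+1})$ must be a legitimate initial state for step $k+1$: since $\mathcal{B}_{\epsilon_2}(\boldsymbol{q}_{k+1}) \subseteq \mathcal{B}_{\epsilon_1}(\boldsymbol{q}_{k+1})$, it lies in $\mathcal{M}_{\rm Riem}^{-1}(\boldsymbol{q}')$ for some keyframe $\boldsymbol{q}' \in \mathcal{Q}_{p_{k+1}, {\rm OWS}} = \mathcal{I}_{\rm OWS}^{(k+1)}$, and, because the non-deterministic transition structure of $\mathcal{TS}_{\rm prod}$ keeps $\mathcal{A}_{\rm WBDL}$ playable from any such neighbouring keyframe, the controller $\Omega_{\rm OWS}^{(k+1)}$ is applicable. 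By induction on $k$ the glued trajectory $\kappa = (\boldsymbol{\xi}, \rho, \eta, \mu)$ satisfies, for every $\zeta \in H_k$, the mappings $\boldsymbol{\xi}(\zeta^{(k)}_{\rm left}) \in \mathcal{M}_{\rm Riem}^{-1}(q_k)$, $\boldsymbol{\xi}(\zeta^{(k)}_{\rm right}) \in \mathcal{M}_{\rm Riem}^{-1}(q_{k+1})$, $\rho(\zeta) = e_k$, $\eta(\zeta) = s_k$, $\mu(\zeta) = p_k$; hence $\kappa$ is a continuous implementation of $\gamma$ in the sense of the definition preceding the theorem.

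Finally, by stutter-equivalence [\cite{baier2008principles}] one has $\kappa \models \varphi$ iff $\gamma \models \varphi$; since $\gamma \models \varphi$ holds for the arbitrary admissible environment run chosen above, the low-level phase-space motion planner implements the task specification correctly against every admissible environment, which is the assertion. Zeno behavior is ruled out because every walking step consists of two non-degenerate semi-steps, so each $H_k$ has a uniformly positive phase duration, whence $\zeta^{(k)}_{\rm left} \to \infty$ and $\bigcup_k H_k = \mathbb{R}^+$. The main obstacle is precisely the concatenation argument: one must certify that the possibly-perturbed keyframe state actually reached at the end of a step is both a valid initial condition for the next robust finite transition system and one from which the discrete winning strategy can still respond, i.e., that the interplay of the Riemannian granularity $\nu$, the margins $(\epsilon_1, \epsilon_2)$, and the non-determinism of $\mathcal{TS}_{\rm prod}$ is mutually consistent; everything else is bookkeeping along the layered abstractions of Fig.~\ref{fig:FTS_sequence}.
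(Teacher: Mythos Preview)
Your proof follows essentially the same approach as the paper's: invoke Proposition~\ref{prop:winningstrategy} to obtain the discrete winning strategy, apply Theorem~\ref{theorem:OWSReachability} per walking step to push the realizability down to $\mathcal{SS}_{\rm OWS}$, and iterate over $k$ to conclude that $\kappa$ is a continuous implementation of $\gamma$. You are in fact more careful than the paper about the concatenation step---the paper's proof simply repeats the procedure without explicitly arguing that the endpoint of step $k$ is a valid initial condition for step $k{+}1$---but note that the inclusion $\epsilon_2 \preceq \epsilon_1$ you invoke is not actually stated in the remark after Def.~\ref{def:finalRobustSet} (only $\epsilon_1,\epsilon_2 \succeq \nu$ appears there), so you are supplying an assumption the paper leaves implicit.
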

%
%\alert{\textbf{(This proof needs to be solidified!)}}
\begin{proof}
By Proposition~\ref{prop:winningstrategy}, a winning WBDL strategy $\mathcal{A}_{\rm WBDL}$ synthesized from the WBDL task planner game solves the discrete locomotion planning problem on $\mathcal{TS}_{\rm prod}$. This synthesis is correct-by-construction thanks to the properties of GR(1) formulae. According to $\mathcal{A}_{\rm WBDL}$, the system action $s_{k+1}$, switching mode $p_{k+1}$, and keyframe $q_{k+1}$ at the next $(k+1)^{\rm th}$ walking step are derived from next environment actions $e_{k+1}$ and current keyframe state $q_k$. To verify the correct implementation of a high-level decision sequence $\gamma$, we use the switching strategy semantics: given an initial state $\boldsymbol{\xi}(\zeta_0)$ and an initial environment action $\rho(\zeta_0) = e_0$, we assign $\eta(\zeta_0) = s_0$ and $\mu(\zeta_0) = p_0$ according to $\mathcal{A}_{\rm WBDL}$, where the step index $k = 0$. By using the control library synthesized from the robust finite transition system $\mathcal{TS}_{\rm OWS}$ with decision tuples of two consecutive walking steps (i.e., ($p_0, s_0, q_0$) and ($p_{1}, s_{1}, q_{1}$)) , we select a specific reachability controller synthesized by $\mathcal{TS}_{\rm OWS, INT}$ to achieve a robust keyframe transition at the next walking step. This is guaranteed by the one-walking-step robust reachability in Theorem~\ref{theorem:OWSReachability}, where the realizability of $\mathcal{TS}_{\rm OWS, INT}$ implies the realizability of the underlying continuous system $\mathcal{SS}_{\rm OWS}$. By executing this reachability controller, the continuous dynamics $\boldsymbol{\xi}(\zeta)$ evolve by following the dynamics of a specific locomotion mode $\dot{\boldsymbol{\xi}}(\zeta) = f_{p_0}(\boldsymbol{\xi}(\zeta), \boldsymbol{u}(\zeta), d(\zeta))$ under the bounded disturbance $d$.
Once we detect a new environment action $e_2$ before the locomotion contact switch, a new decision tuple ($p_{2}, s_{2}, q_{2}$) is generated immediately based on $\mathcal{A}_{\rm WBDL}$. 
%If no environment changes are detected, there are two possibilities: first, a self-transition exists for this particular mode; second,  the discrete state transits to a state sharing the same environment action. Both of these two cases still obeys transitions within the automaton.  
Given this new decision tuple, the same procedure is repeated as above for the future $k^{\rm th}$ walking step where $k \in \mathbb{Z}_{\geq 2}$. Therefore, it is proved that the low-level trajectory correctly implements the high-level decision sequence.
\end{proof}

\begin{figure}[t]
 \centering
   \includegraphics[width=0.75\linewidth]{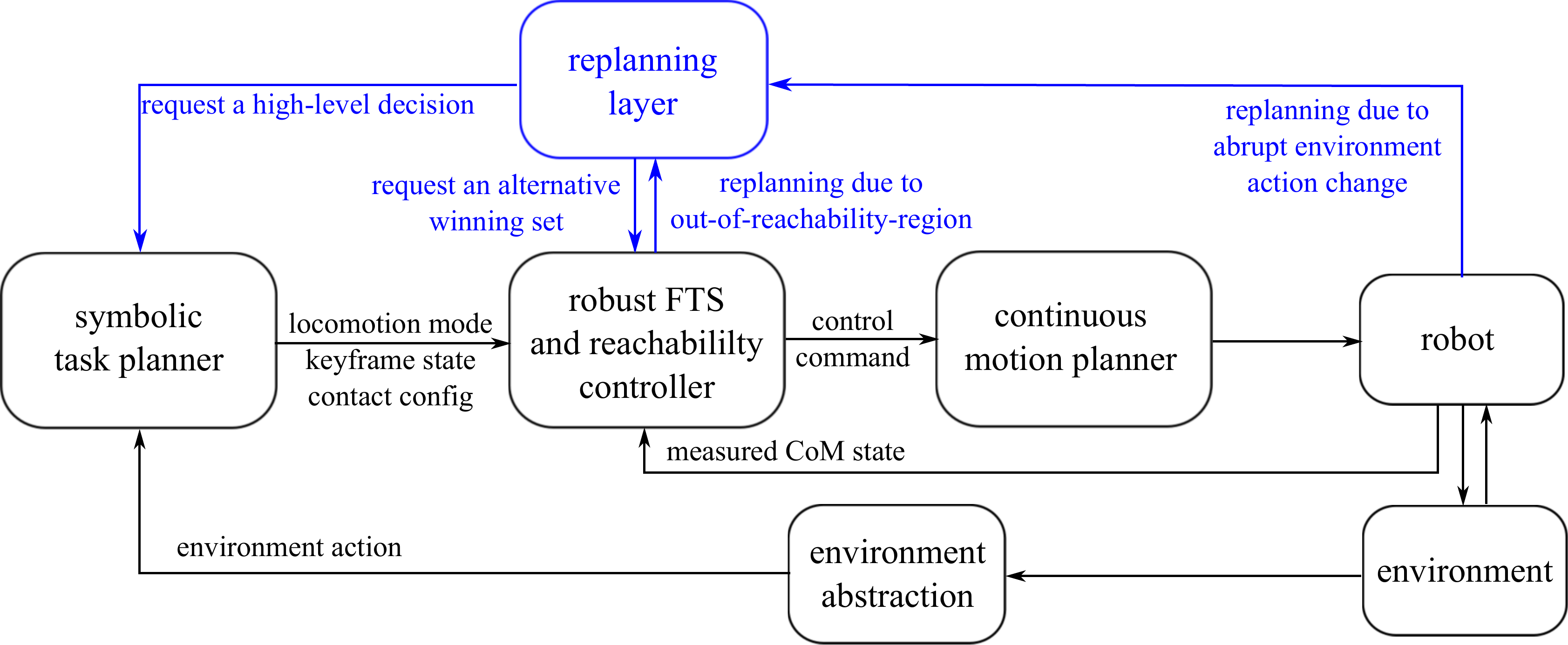}
 \caption{\captionsize Hierarchical structure of robust WBDL task and motion planner. This planner has three cascaded planning layers: high-level task planner, middle-level keyframe-based robust finite transition system, and low-level continuous motion planner. A replanning process (see blue lines) is triggered when the state is out of the reachability region (i.e., the winning set) or a sudden environmental change occurs.
}
\label{fig:planning_diagram}
\end{figure}

\subsection{Replanning strategy and robustness}
\label{subsec:planningAndRobustness}
It is worth noting that the proposed correctness holds under a set of assumptions on allowable environmental and system actions and disturbance boundedness. However, sometimes the real-world disturbance can violate the bounded disturbance assumption and perturb the state to be out of the local reachability region (i.e., the winning set). To handle this situation, we establish a replanning strategy to request a new high-level task planner command. Ideally, the union set of all local winning sets is expected to cover the entire state space of interest. However, the existence of a such a winning set union often can not be guaranteed. Thus, it is difficult to generalize formal correctness of one winning set to that of the union set of all winning sets. What we strive to is to maximize the phase-space coverage by the union set of all winning sets. From a practical implementation viewpoint, synthesizing a large number of winning sets enables our planner to cover a sufficiently large phase space such that it is always likely to find a feasible winning set when large disturbances occur.

In other words, there is no ground truth of ``formal correctness'' for real robotic systems. Even though we have a provably correct planner and implement it on a real robot in a correct way, the actual planner may not be formally ``correct'' due to many potential hardware issues. For instance, unmodelled actuator dynamics can easily break the correctness guarantee of task specifications at the high level. Thus, it makes more practical sense to target a formally correct approach that generates a palette of robust controllers, the winning sets of which jointly cover a sufficiently large phase-space of interest (if not a global state space). Our results indicate that a properly designed controller switching mechanism among these locomotion winning sets enables an effective replanning strategy such that a set of contiguous phase-space initial robustness margin sets can be controlled to reach a set of contiguous goal robustness margin sets under bounded disturbances.

Overall, the proposed planning framework reasons about robustness at the following three levels.
\begin{itemize}
    \item The robust finite transition system $\mathcal{TS}_{\rm OWS}$ explicitly incorporates neighbouring keyframe states via the robustness margin around the nominal keyframe state to handle initial state uncertainty in each walking step.
    \item If the disturbance is larger than the boundary value modeled in the controller synthesis, the state may be disturbed out of the winning set. In this case, an alternative winning set will be searched in the control library of allowable keyframe transitions determined by $\mathcal{TS}_{\rm OWS}$.
    If no alternative winning set is feasible, a replanning signal will be sent to the high-level task planner. The task planner will use the synthesized automaton $\mathcal{A}_{\rm WBDL}$ to assign a new locomotion decision $(p, s, \boldsymbol{q})$ and send it to the motion planner layer for replanning the next walking step.
    %Within the current step, the planner will use its maximal control capability to recover, although not being able to recover into the predefined intermediate robust set in the step transition. The replanned next keyframe state is designed such that the disturbed state at the step transition instant lands in the new intermediate robust set jointed determined by the current and next winning sets.
    \item When an environment event changes suddenly, a replanning signal will be sent to the high-level task planner. Note that, this replanning process can only be executed before the next step transition. Otherwise, the contact of the next walking step already occurs. Fig.~\ref{fig:planning_sequence} in the Appendix shows a timing sequence of the replanning process. More details of this replanning process are shown in Algorithm~\ref{al:execution_reachcontrol} in the Appendix.
\end{itemize}

\begin{algorithm}[t]
\caption{Execution of reactive task and motion planner}\label{al:execution_highlevel}
\begin{algorithmic}[1]
\STATE \textbf{procedure} ExecuteReactivePlanner(nextEnvironmentAction $e_{\rm next}$, currentKeyframe $q_{\rm current}$, currentLocomotionMode $p_{\rm current}$, automaton $\mathcal{A}_{\rm WBDL}$)
\STATE $e_{\rm next} \in \mathcal{E} $ %\Comment{not \textsf{\small HumanAppear}} 
\STATE $q_{\rm next} \gets$ getNextKeyframe($e_{\rm next}, q_{\rm current}$) \hspace{3.0in} \COMMENT{look up $\mathcal{A}_{\rm WBDL}$}
\STATE $s_{\rm next} \gets$ getNextContactAction($e_{\rm next}, q_{\rm next}$) %\Comment{read automaton}
\STATE $p_{\rm next} \gets$ getNextLocomotionMode($e_{\rm next}, q_{\rm next}$) %\Comment{read automaton}
\STATE $(\boldsymbol{\xi}_{{\rm trans}}, t_{\rm trans}) \gets$ searchContactTransition($q_{\rm current}, q_{\rm next}$)
\STATE $(\boldsymbol{y}_{\rm limb, next}) \gets$ searchLateralLimbLocation($q_{\rm current}, q_{\rm next}$)
\STATE $(\boldsymbol{\xi}, \boldsymbol{\chi}) \gets$ ExecuteOWSReachabilityControl(keyframeState $q_{\rm current}$, $q_{\rm next}$, locomotionMode $p_{\rm current}$, $p_{\rm next}$, contactAction $s_{\rm current}, s_{\rm next}$, initialCoMState $\boldsymbol{\xi}_{\rm init}$, transitionTime $t_{\rm trans}$, nextEnvironmentAction $e_{\rm next}$)
%\ElsIf {$e_n = e_{ha}$} \Comment{\textsf{\small HumanAppear}} 
%\ELSE {}
%\STATE \textbf{failure} occurs, \textbf{exit}, and \textbf{replanning()} \hspace{2.6in} \COMMENT{abrupt environment action}
%\ENDIF
\STATE $(e, q, p, s)_{\rm current} \gets (e, q, p, s)_{\rm next}$ \hspace{3in} \COMMENT{update task planner states}
\STATE \textbf{repeat} from Line 2%$(e, q, p, s)_{\rm current}$
\STATE  \textbf{end procedure}
\end{algorithmic}
\end{algorithm}
%--------------------%-------------------%
%
%%--------------------%-------------------%
%% \input{low_level_mode}
%%--------------------%-------------------%
%
%--------------------%-------------------%
 \section{Results}
\label{section:results}
\begin{figure*}[t]
 \centering
   \includegraphics[width=0.8\linewidth]{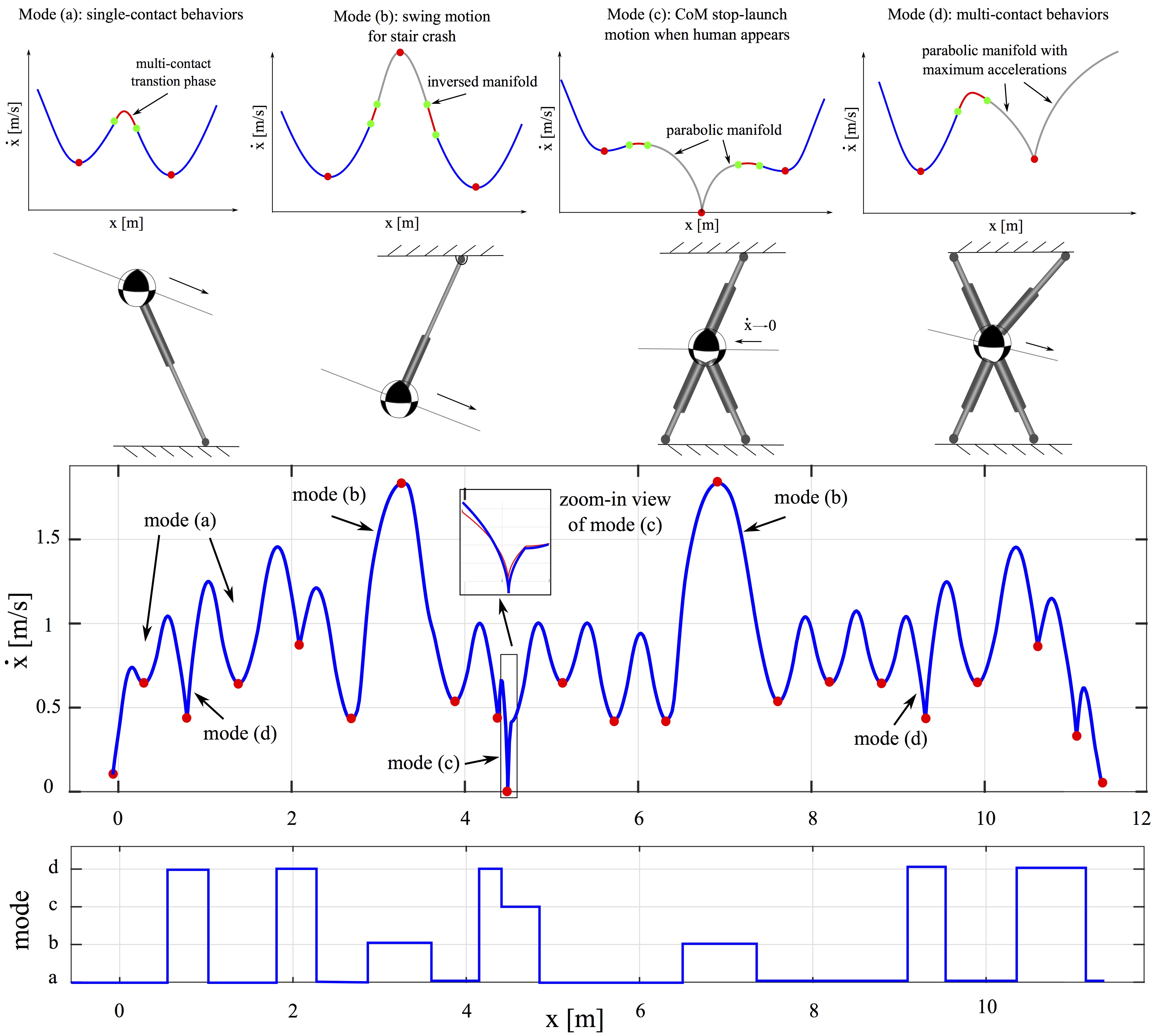}
 \caption{\captionsize Sequential composition of the sagittal CoM phase-space trajectories and mode switchings for a 20-step WBDL maneuver. The top four figures illustrate phase-space manifolds of the four locomotion modes. The mode switching is governed by the synthesized high-level contact planner. Among these steps, two terrain crack and one human appearance events are taken into account. A short multi-contact phase is designed between every two consecutive modes for a smooth transition (see the short red trajectory between two green dots).}
\label{fig:Specializedplanner}
\end{figure*}
\begin{figure}[t]
 \centering
   \includegraphics[width=.55\linewidth]{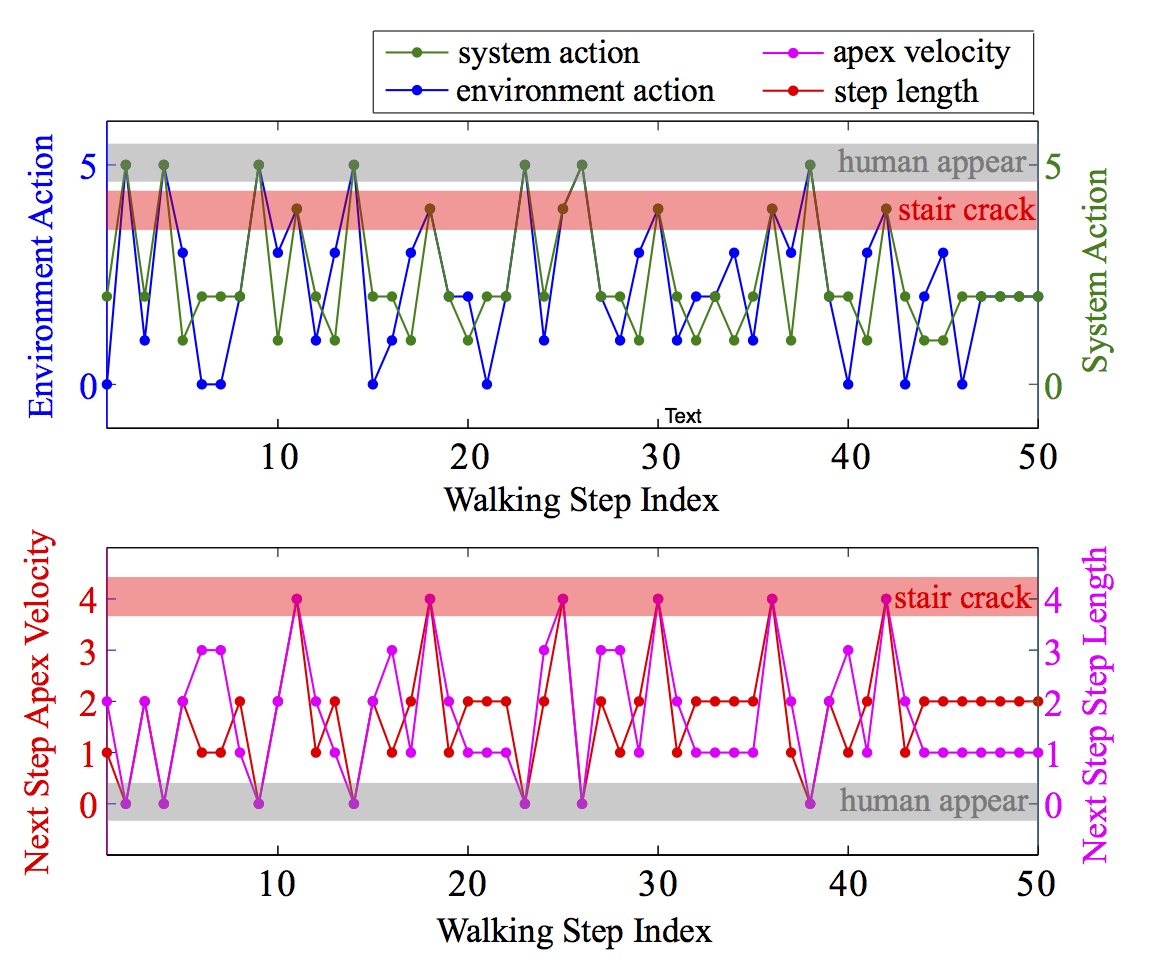}
 \caption{\captionsize Environment events, system actions and keyframe states of 50 walking steps according to the synthesized automaton. Actions and states are indexed by numbers. Emergency events, i.e, human appearance and terrain crack, are highlighted in the shaded regions. In the bottom subfigure, the numbers $0$ to $4$ on the vertical axis correspond to $\{0, 0.4, 0.6, 0.8, 1.7\}$ m/s for next step apex velocity and $\{0.15, 0.5, 0.6, 0.7, 0.6\}$ m for next step step length. }
\label{fig:ActionAndState}
\end{figure}
\begin{figure}[!th]
 \centering
   \includegraphics[width=0.9\linewidth]{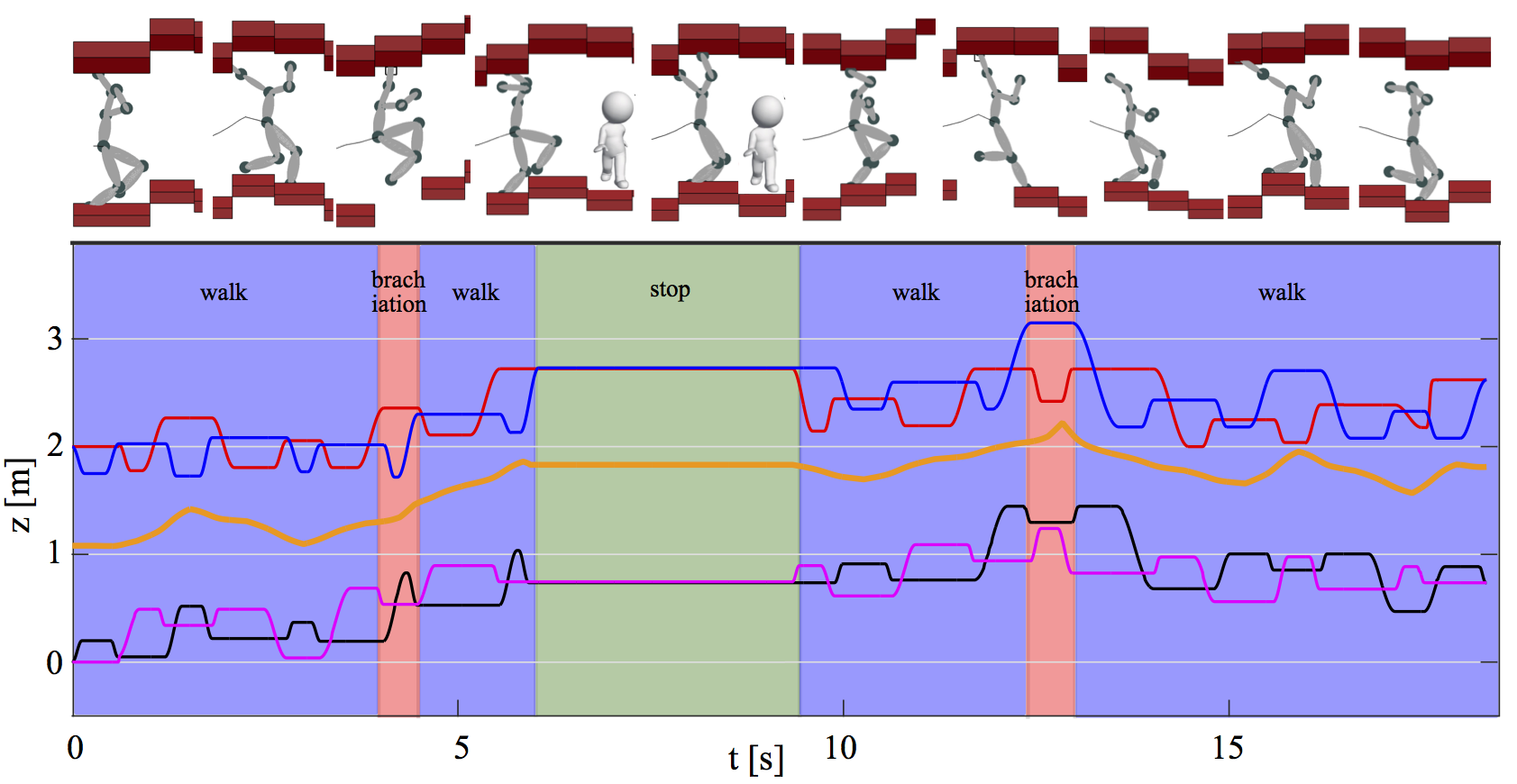}
 \caption{\captionsize Snapshots of the WBDL motions in respond to two environmental emergencies. The snapshots show a sequence of locomotion behaviors including a brachiation motion over the cracked terrain and a stopping motion when a human appears. The figure at the bottom shows the CoM vertical position trajectory (orange thick line), hand and feet trajectories (thin interlaced lines).}
\label{fig:motionSnapshot}
\end{figure}
We demonstrate whole-body dynamic locomotion (WBDL) results by sequentially composing the low-level locomotion modes via the symbolic task planner. In particular, we analyze in detail the robustness performance of the reachability control with respect to several key parameters. The Temporal Logic Planning (TuLiP) toolbox, a python-based embedded control software [\cite{wongpiromsarn2011tulip}], is used to synthesize the symbolic task planner. The gr1c\footnote{\url{http://scottman.net/2012/gr1c}} tool, involving the CU Decision Diagram Package, is used by TuLiP as the underlying synthesis solver.
%\textit{Realizability} of this planner is checked by an off-the-shelf function.
%If the task specifications are \textit{realizable}, 
The synthesized planner is correct by construction, i.e., satisfying all the proposed specifications. The LTL synthesis procedure is offline and will take around 20 minutes to generate an automaton on a MacBook with a 2.9 GHz Intel Core i9 processor and 32 GB of memory. Once the automaton is generated, the task planning process will be executed at run-time. To guarantee the successful implementations of the low-level motion plans by the high-level task planner, we perform a robust reachability analysis of the keyframe states by using the so-called robustly complete control synthesis (ROCS)\footnote{\url{https://git.uwaterloo.ca/hybrid-systems-lab/rocs}} tool [\cite{Li2018rocs}], which currently supports abstraction-based control synthesis using both uniform and non-uniform discretizations. ROCS also generates feedback control strategies, which are used to design the control library of our locomotion tasks.

\begin{figure}[t]
 \centering
   \includegraphics[width=0.7\linewidth]{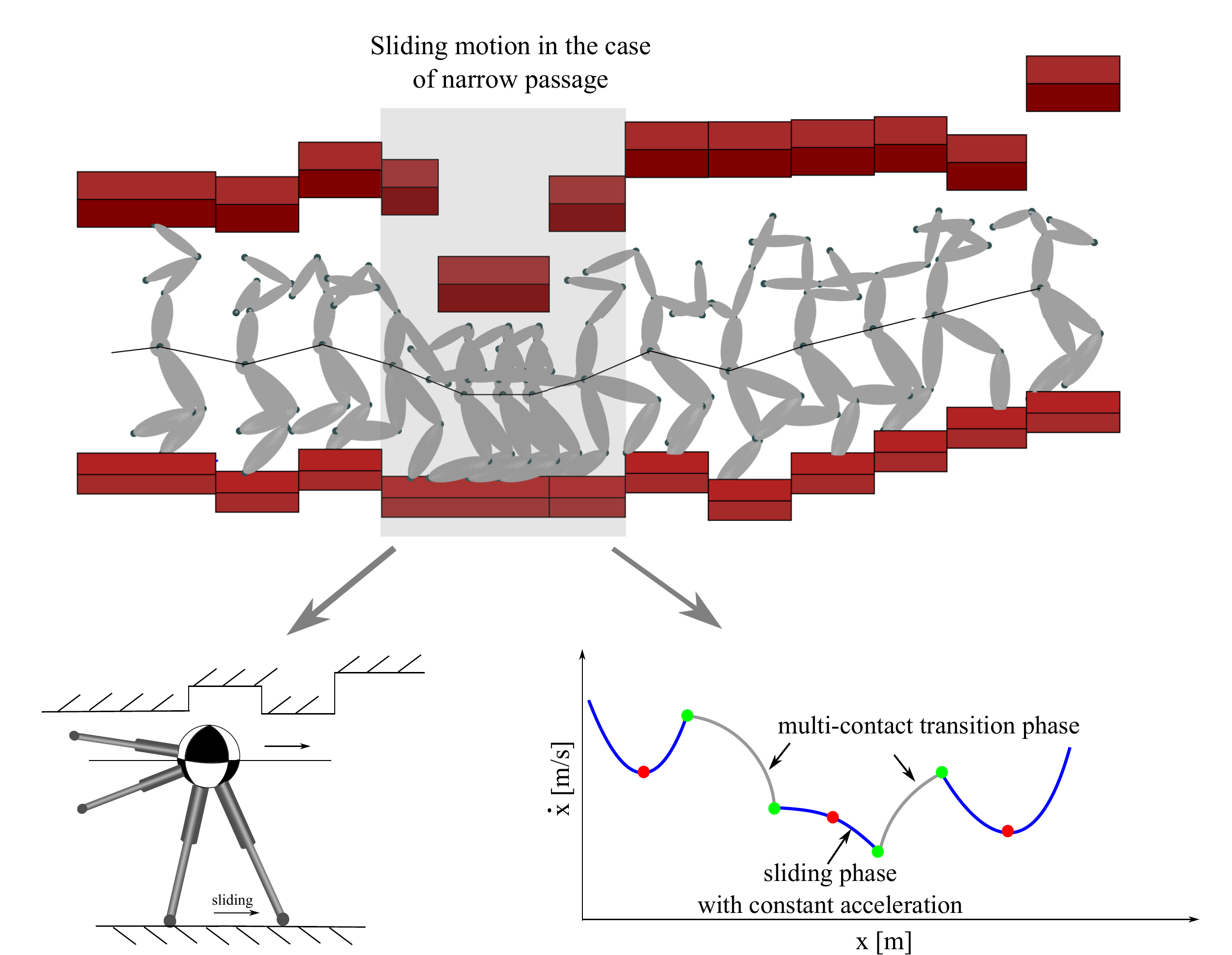}
 \caption{\captionsize Ground sliding motion when a narrow passage appears in the scene. The robot crouches and slides on the ground through the low-ceiling area with a constant negative acceleration. This ground sliding motion is preceded and succeeded by a multi-contact transition phase as shown in the phase-space subfigure at the lower right corner.}
\label{fig:slidingMotion}
\end{figure}

\begin{figure}[t]
 \centering
   \includegraphics[width=0.9\linewidth]{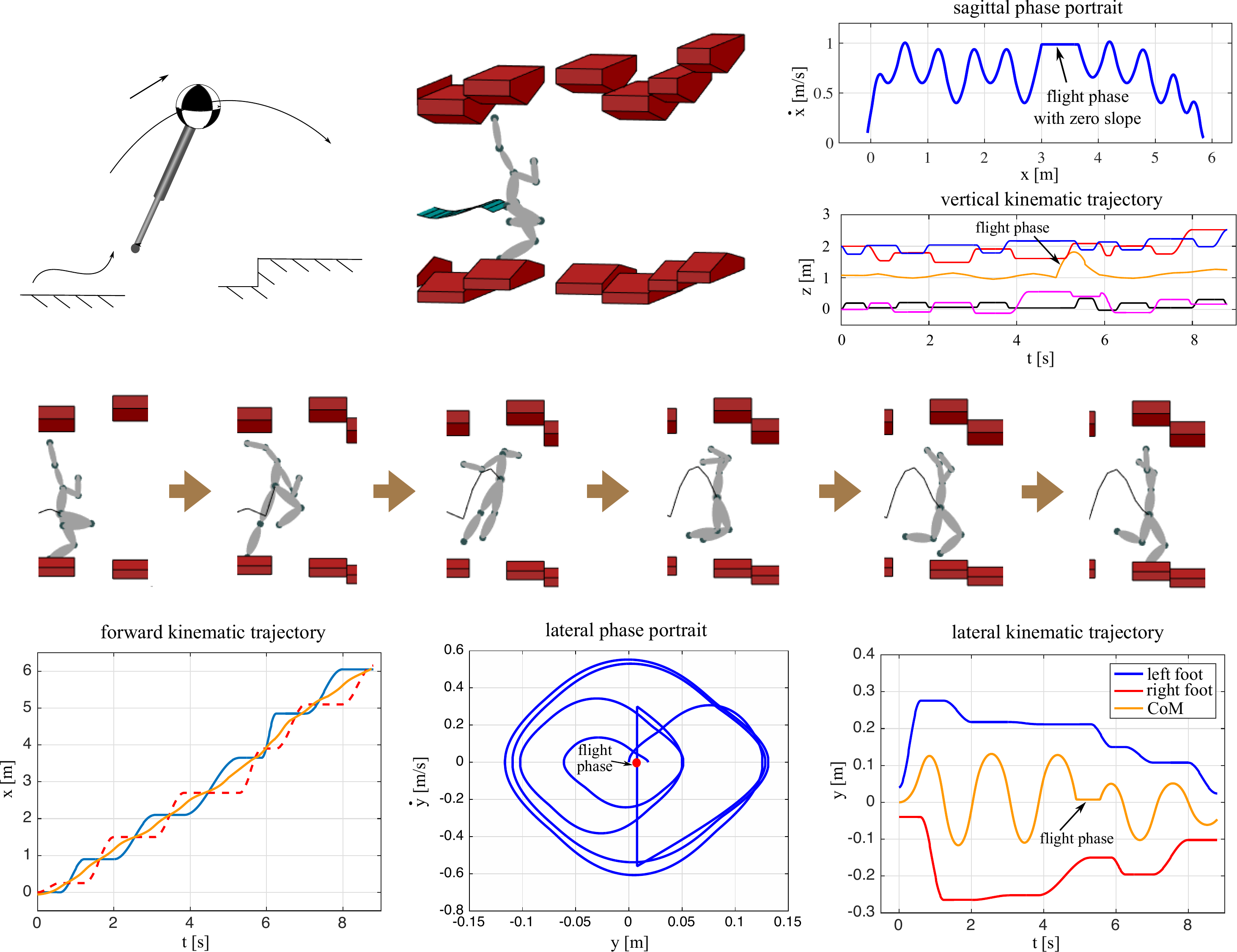}
 \caption{\captionsize Hopping motion when a crack in the terrain and a high ceiling occur simultaneously. In this case, no overhead support is available for grasping so the robot has to jump over the cracked terrain.}
\label{fig:hopping-motion}
\end{figure}
\subsection{Case I: locomotion with stopping and brachiation behaviors}
We first demonstrate a locomotion scenario involving environmental actions such as the appearance of a human and the terrain being crack in the scene. The synthesized discrete task planner is represented by a finite state automaton with $27$ states and $148$ transitions.
The two-dimensional keyframe state $q$ is composed of the apex velocity and step length. For either dimension, \textsf{\small Small, Medium} and \textsf{\small Large} labels are assigned to $\{1, 2, 3\}$ in order while \textsf{\small Stop} and \textsf{\small Swing} labels are assigned to $\{0, 4\}$. 
Fig.~\ref{fig:Specializedplanner} illustrates the sequentially composed center-of-mass (CoM) sagittal phase-space trajectory of a 20-step walking process.
Fig.~\ref{fig:ActionAndState} illustrates the discrete environment and system contact actions, and the corresponding keyframe states. 
%As it shows, system actions and system modes are designed according to non-deterministic environment actions and keyframe states. 
%In particular, two emergent behaviors are highlighted in shaded regions. 
At the low level, four locomotion modes (i.e., PIPM, PPM, MCM, SLM) are alternated according to the high-level decisions\footnote{For simplification, our model implements a simlified version of the above model with assumptions of a constant CoM acceleration and zero values of the states $\varphi$ and $\theta$.}. These high-level decisions are sent to the low-level motion planner one walking step ahead, i.e., a one-walking-step horizon. By inspecting the discrete sequences, we can verify that all the system contact actions and keyframe states respond to the environmental actions correctly, i.e., all the task specifications are satisfied. %(as shown in Algorithm~\ref{algorithm:PSExecution}).
Fig.~\ref{fig:motionSnapshot} illustrates dynamic motion snapshots and continuous kinematic trajectories of the vertical CoM, foot, and hand positions. An accompanying video about the WBDL behaviors is available at \url{https://youtu.be/BdxYCmhRIMg}. %Currently, we are leveraging the proposed planner synthesis to more generalized locomotion behaviors, such as locomotion with steering directions and maneuvering in cluttered environments. Also, other practical issues are being studied, such as reachability of the contacts, robot actuation limits and dynamic simulation evaluations.

\subsection{Case II: locomotion with ground sliding and hopping behaviors}
When the robot maneuvers through a narrow passage, an ordinary locomotion mode (e.g., \textsf{\small walk} and \textsf{\small brachiation}) will not work anymore due to the confined height. As such, a natural solution is to use a ground sliding mode: the robot crouches and slides with two feet through this constrained space as shown in Fig.~\ref{fig:slidingMotion}. The two arms are placed at a low position to avoid the contact with the ceiling. As shown in the bottom right subfigure of Fig.~\ref{fig:slidingMotion}, there are two multi-contact transition phases before and after the sliding phase (see the gray trajectory segments). We assume a constant negative CoM acceleration during the sliding phase, and thus the phase space trajectory of the sliding phase is a parabola. The low-level locomotion model corresponds to the mode (f) in Section~\ref{subsec:low-level-imp}.

When a crack in the terrain and a high ceiling occur simultaneously, the robot can not grasp the overhead support any longer. To maneuver forward successfully, the robot has to leap over the unsafe region as shown in Fig.~\ref{fig:hopping-motion}. Thus, a hopping phase will be executed with no contact with the environment. A constant CoM sagittal velocity shows up in the sagittal phase portrait while a parabola appears in the vertical position trajectory of Fig.~\ref{fig:hopping-motion}(b). The keyframe state of this hopping motion is chosen to be the center of the horizontal line segment in the sagittal phase-space. The lateral velocity is set to zero to avoid a lateral drift. 
%This corresponds to a lateral velocity jump as shown in the lateral phase portrait.
The state will stay at the red dot in Fig.~\ref{fig:hopping-motion}(d) during the hopping motion. Since the robot locomotes forward, the lateral phase portrait in Fig.~\ref{fig:hopping-motion}(d) behaves like a limit cycle but non-periodically due to the rough terrain. The low-level locomotion model corresponds to the mode (e) in Section~\ref{subsec:low-level-imp}. Via introducing specific locomotion modes, our planner is capable of handling emergency-motivated scenarios.

\subsection{Case III: locomotion replanning strategy}
When the robot is already leaping in the air and detects a sudden change from an ordinary terrain to a cracked terrain, it has to replan its contact action and locomotion mode to accommodate this sudden change on the fly. The robot will execute a replanning process to ask the high-level planner for a new decision, i.e., grasp the ceiling support and swing the robot's body over the cracked region as shown in Fig.~\ref{fig:replanning}. Otherwise, the robot will fail to locomote. The top subfigure of Fig.~\ref{fig:replanning} shows a decision sequence including the replanning process. The three rows represent environment actions, locomotion modes, and keyframe states, respectively. The second column with three dotted blocks is the decision before the replanning process and not executed yet until the next walking step. The third column represents the replanned decision which is executed in response to the sudden environmental action change. This replanning process in the phase-space is illustrated in Fig.~\ref{fig:planning_sequence}. Let us consider a more challenging terrain scenario (i.e., \textsf{\small terrainCrack\text{-}highCeiling} as described in Section~\ref{sec:WBDLSpec}), where there is a cracked terrain with a high ceiling. Then the robot can not replan by grasping the overhead support anymore, and the locomotion process will fail inevitably. To avoid this situation, the \textsf{\small terrainCrack\text{-}highCeiling} environment action is not allowed to occur consecutively in our environment specification $S_{\rm e\text{-}1}$.

Besides the above replanning strategy in response to environment changes, there is another replanning strategy embedded in the reachability control library. When the robot state is perturbed to be out of the reachability region, i.e., the winning set currently being executed, the robot can not reach the robustness margin of the targeted keyframe. Then the robot calls for a new reachability controller in the library that covers the current perturbed state and uses this replanned controller to reach a new keyframe goal. Overall, the replanning process determines which new controller to call in the control library.
%or which high-level decision to request in the task planner. %Again, this empirical replanning strategy is practically helpful for locomotion recovery, although no formal correctness is guaranteed for this strategy. 

\begin{figure}[t]
 \centering
   \includegraphics[width=0.95\linewidth]{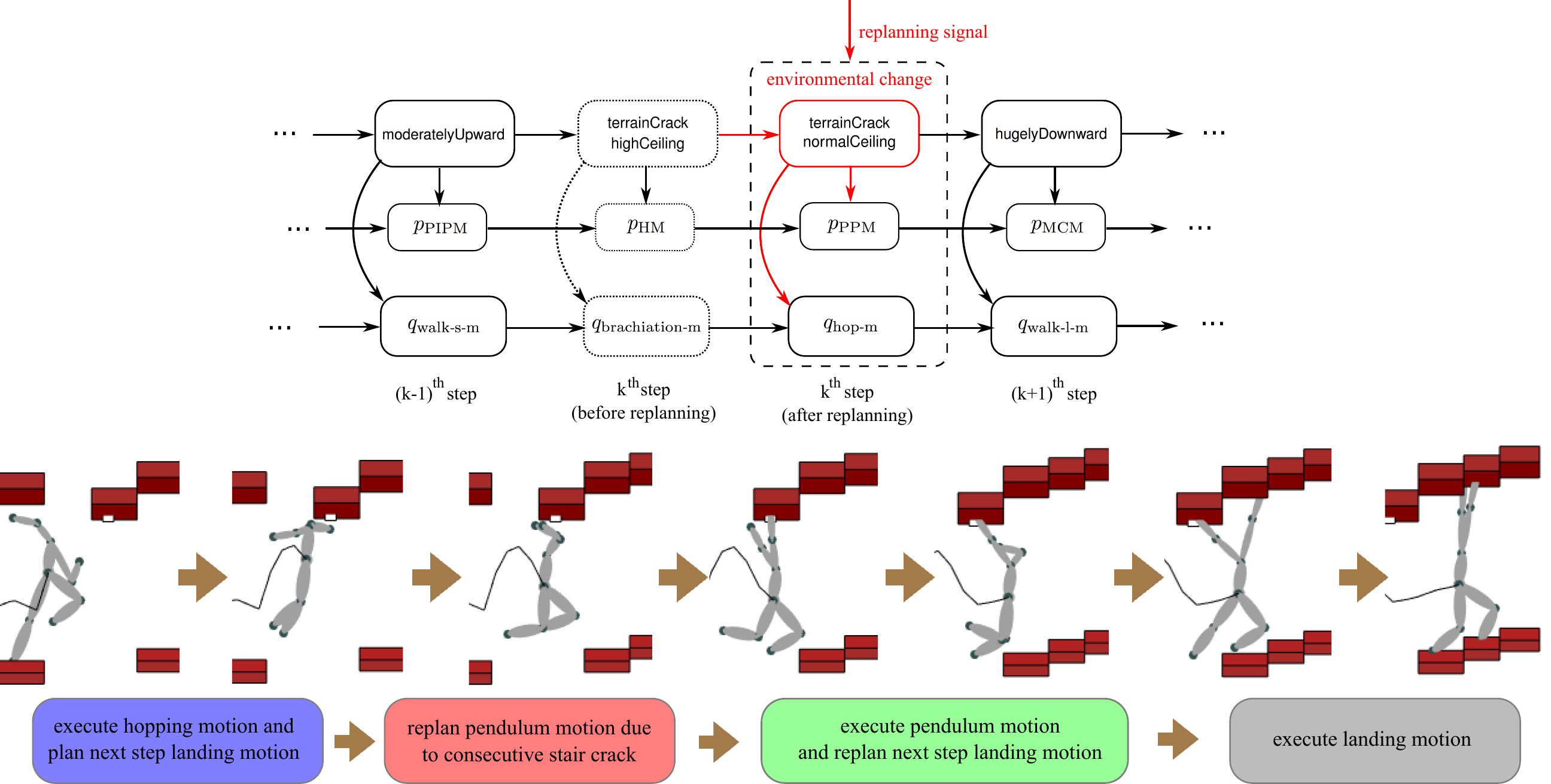}
 \caption{\captionsize Replanning in response to a sudden environment event change. Suppose that during the flight phase, the robot finds out that the next terrain is cracked. Accordingly, the robot triggers its replanning process by changing the locomotion mode to the prismatic pendulum model, i.e., grasping the overhead support, swinging the body over the second terrain crack region, and then landing on the non-cracked terrain. The top subfigure shows the decision sequence in the symbolic task planner. The bottom subfigure shows the snapshot sequence of the locomotion process.}
\label{fig:replanning}
\end{figure}

%\alert{A list of tasks to do: (1) we should exam parameters in this example again. Make sure all we have in this example match those in the section of theoretical results. (2) comment that we use a vector-based granularity. (3) Add statement how $\zeta_{p_1}$ and $\zeta_{\rm final}$ are designed.
%(4) a high-level question we should always keep in mind is what is the benefit of introducing such a complex definition of robust finite transition system. For instance, if we simply apply the nominal control input, we can also have a nominal phase-space trajectory for the planning, but not robust to uncertainties. The histrogram should be able to demonstrate the advantage of using this robust version over the non-robust version via success rate.
%(4) Besides the asymptote slope $\omega$, we should incorporate at least one more control input $\tau_y$, e.g., torso torque in the pitch direction. So we probably should not use very small granularity to avoid intractable combinatorial possibilities of state and control space. 
%(6) The abstraction itself induces nondeterministic transitions, which can be viewed as disturbances. Comment on the trade-off between the granularity and winning set. Decreasing the granularity can increase the winning set but also impair its robustness to the disturbance. 
%}

\subsection{Case IV: Validation of the robust reachability controller}
This case study evaluates the performance of the synthesized controller given a keyframe robust reachability goal. Let us first consider the prismatic inverted pendulum model (PIPM). Assume that we have a sagittal CoM state vector $\boldsymbol{\xi} = (x, v_x)^T$ and two consecutive locomotion modes denoted by ${\rm PIPM}_1$ and ${\rm PIPM}_2$, respectively. The PIPM dynamics in Eq.~(\ref{eq:PIPM}) are reformulated as the CoM dynamics below %one walking mode are
\begin{align}\label{eq:PIPM-simplified}
\begin{pmatrix}
\dot{x}(\zeta)\\
\dot{v}_x(\zeta)
\end{pmatrix}
= 
 \begin{pmatrix}
   v_x(\zeta)\\[2mm]
  \omega_{\rm PIPM}^2 (x(\zeta) - x_{\rm foot})
 \end{pmatrix},
\end{align}
where we assume zero torso angular momentum $(\tau_x, \tau_y) = \boldsymbol{0}$ and a predefined foot placement position $x_{\rm foot}$ for simplicity. The continuous control input $\omega_{\rm PIPM} \in [\omega_{\rm nominal} - \delta \omega, \omega_{\rm nominal} + \delta \omega]$, where $\delta \omega$ is a predefined bound. Note that the parameters $x_{\rm foot, 1}, x_{\rm foot, 2}$, and $\omega_{\rm nominal}$ are determined by the high-level symbolic task planner.
%The actual system state starts from $\boldsymbol{\xi}(\zeta_0)$. After $\delta \zeta_{p_1}$ phase progression in the $p^{\rm th}_1$ mode, the dynamical system reaches the intermediate set and switches to $p_2$ locomotion mode. 
%Accordingly, the contact foot position switches from $x_{\rm foot, 1} = 0 m$ to $x_{\rm foot, 2} = 0.5 m$. 
%Then the state progresses until the robustness margin of final keyframe state $\boldsymbol{q}_{\rm final}$ is reached.
%

Let us define two nominal keyframe states $\boldsymbol{q}_{\rm initial} = \big(\dot{x}(\zeta_0), v_x(\zeta_0)\big) = (0 \text{m}, 0.5 \text{m/s})$ and $\boldsymbol{q}_{\rm final} = \big(\dot{x}(\zeta_{\rm final}), v_x(\zeta_{\rm final})\big) = (0.5 \text{m}, 0.6 \text{m/s})$\footnote{In this paper, $(a, b)$ represents a vector of two values $a$ and $b$ while $[a, b]$ represents an interval bounded by $a$ and $b$.} determined from the high-level planner. The goal is to solve the PIPM closed-loop phase-space trajectories starting from the initial robustness margin set $\mathcal{B}_{\epsilon_1}(\boldsymbol{q}_{\rm initial})$ and reaching the final robustness margin set $\mathcal{B}_{\epsilon_2}(\boldsymbol{q}_{\rm final})$ as defined in Def.~\ref{def:robustSetSimp}.
%we determine the step transition of these two walking steps (i.e., $p_1$ and $p_2$ locomotion modes). 
%We assume phase progressions within each locomotion mode are $\delta \zeta_{p_1}$ and $\delta \zeta_{p_2}$, respectively.
To this end, we synthesize a controller to determine the realizability of a keyframe transition in one walking step for $\mathcal{TS}_{\rm OWS}$ and generate a control strategy $\Omega: \Xi_{\rm OWS}\to 2^{\mathcal{A}_{\rm OWS}}$ if it is realizable.
%Since there is a switch of locomotion modes in one walking step and the safety set $\{(x,v_x)^T:v_x > v_{x, {\rm asymp}}(x)\}$ associated with each mode (specified in Eq.~ (\ref{eq:spec_expample1})) also alters correspondingly, 
The intermediate robustness margin set $\mathcal{B}_{\rm inter}$ between the locomotion modes ${\rm PIPM}_1$ and ${\rm PIPM}_2$ is defined by Eq.~(\ref{eq:intermediateCondition_1}).
%
%\begin{align}\label{eq:intermediate-robust-margin}
%\mathcal{B}_{\rm inter}=\{\boldsymbol{\xi}':|\mathcal{Z}_{{\rm PIPM1}, \sigma}(\boldsymbol{\xi}')| \leq \delta \sigma_{\rm bound, init}\wedge |\mathcal{Z}_{{\rm PIPM2}, \sigma}(\boldsymbol{\xi}')| \leq \delta \sigma_{\rm bound, final}\}.
%\end{align}
%
Locomotion mode switching is only allowed when the state is within $\mathcal{B}_{\rm inter}$. Overall, the controller synthesis of one walking step is composed of three steps: first, the CoM trajectory starts from $\mathcal{B}_{\epsilon_1}(\boldsymbol{q}_{\rm initial})$ and moves towards $\mathcal{B}_{\rm inter}$; second, the state reaches $\mathcal{B}_{\rm inter}$ and switches the locomotion mode; third, the CoM state reaches $\mathcal{B}_{\epsilon_2}(\boldsymbol{q}_{\rm final})$. To reach $\mathcal{B}_{\epsilon_2}(\boldsymbol{q}_{\rm final})$, the conditions in Eqs.~(\ref{eq:initial-bounds})-(\ref{eq:intermediateCondition_1}) need to be satisfied by propagating the PIPM dynamics forward under bounded state disturbances and a bounded control input $\omega_{\rm PIPM}$. 
For this example, we assign the initial and final robustness margins of $\mathcal{B}_{\epsilon_1}(\boldsymbol{q}_{\rm initial})$ and $\mathcal{B}_{\epsilon_2}(\boldsymbol{q}_{\rm final})$ as $\delta \zeta_{\epsilon_1} = 0.05, \delta \sigma_{\epsilon_1} = 0.002$ and $\delta \zeta_{\epsilon_2} = 0.05, \delta \sigma_{\epsilon_2} = 0.006$, respectively.
%The disturbance in $(x, v_x)$ are bounded by $[- \boldsymbol{w}, \boldsymbol{w}]$ (different levels of uncertainties can be tested).
%Granularity of Riemannian phase-space discretization is chosen as $\nu = [\nu_{\zeta}; \nu_{\sigma}] = [0.005; 0.005]$.

As to the underlying continuous locomotion subsystem $\mathcal{SS}_{\rm OWS}$, we assign the one-walking-step state space $\Xi_{\rm OWS}=\bigcup_{p\in\mathcal{P}_{\rm OWS}}\Xi_{p}$ where $\mathcal{P}_{\rm OWS} = \{{\rm PIPM}_1, {\rm PIPM}_2\}$, $\Xi_p=[-0.1\text{m},0.7\text{m}]\times[0.1\text{m/s},1.2\text{m/s}]$ for all $p\in\mathcal{P}_{\rm OWS}$, the initial state set $\Xi_{\rm OWS,0}=\{\boldsymbol{\xi}:|\mathcal{Z}_{p_1, \sigma}(\boldsymbol{\xi})| \leq \delta \sigma_{\rm bound, init}\}$, the control space $\mathcal{U}_{\rm OWS}=[2,4]$. To construct an inter-sampling finite abstraction $\mathcal{TS}_{\rm OWS, INT}$, we uniformly discretize the state space $\Xi_{\rm OWS}$ with a granularity $[0.005 \text{m},0.005 \text{m/s}]$ and sample the control space $\mathcal{U}_{\rm OWS}$ with a $0.02\text{rad/s}$ granularity, resulting in a sampled finite control set $\widehat{\mathcal{U}}_{\rm OWS}=0.02\mathbb{Z}\cap[2,4]$. Let $\mathcal{A}_{\rm OWS}=\widehat{\mathcal{U}}_{\rm OWS}^{[0,\delta \zeta]}$ be a piece-wise trajectory with zero-order-hold values in $[0, \delta \zeta]$, where $\delta \zeta = 2$ ms is the time duration for each state in $\mathcal{T}_{\rm OWS}$. The system dynamics in (\ref{eq:PIPM-simplified}) are subject to additive disturbances bounded by $D_r=(0.05\text{m};0.1\text{m/s})$, i.e., position and velocity disturbances, respectively. 
%To guarantee that $\mathcal{TS}_{\rm OWS}$ correctly captures the dynamics of $\mathcal{SS}_{\rm OWS}$, transitions are assigned according to Theorem~\ref{corol:keyframe-correctness}.
Given the PIPM parameters above, we synthesize a reachability controller of $\mathcal{TS}_{\rm OWS, INT}$ and the computed winning sets %with disturbances bounded by $D_r=[0.05;0.1]$
are shown in Fig.~\ref{fig:reachabilityPIPM12}. As the result shows, the one-walking step reachability is realizable as long as the winning set overlaps (at least partially) the initial and final robustness margin sets. Five simulated trajectories under randomly-sampled bounded disturbances are shown as the black lines. Fig.~\ref{fig:winsets_uncertainlevels} evaluates the changing size of the winning set under different levels of the disturbance. The winning set shrinks as the disturbance set increases because the  synthesized controller needs to reach the goal robust set against a larger set of disturbances.

\begin{figure}[t]
    \centering
    \subfigure[Controlled trajectories of PIPM one-walking step.]
    {
    \includegraphics[scale=0.025]{walk_pipm2pipm2.pdf}
    \label{fig:reachabilityPIPM12}
    }
    \subfigure[Winning sets under different levels of disturbances.]
    {
    \includegraphics[scale=0.17]{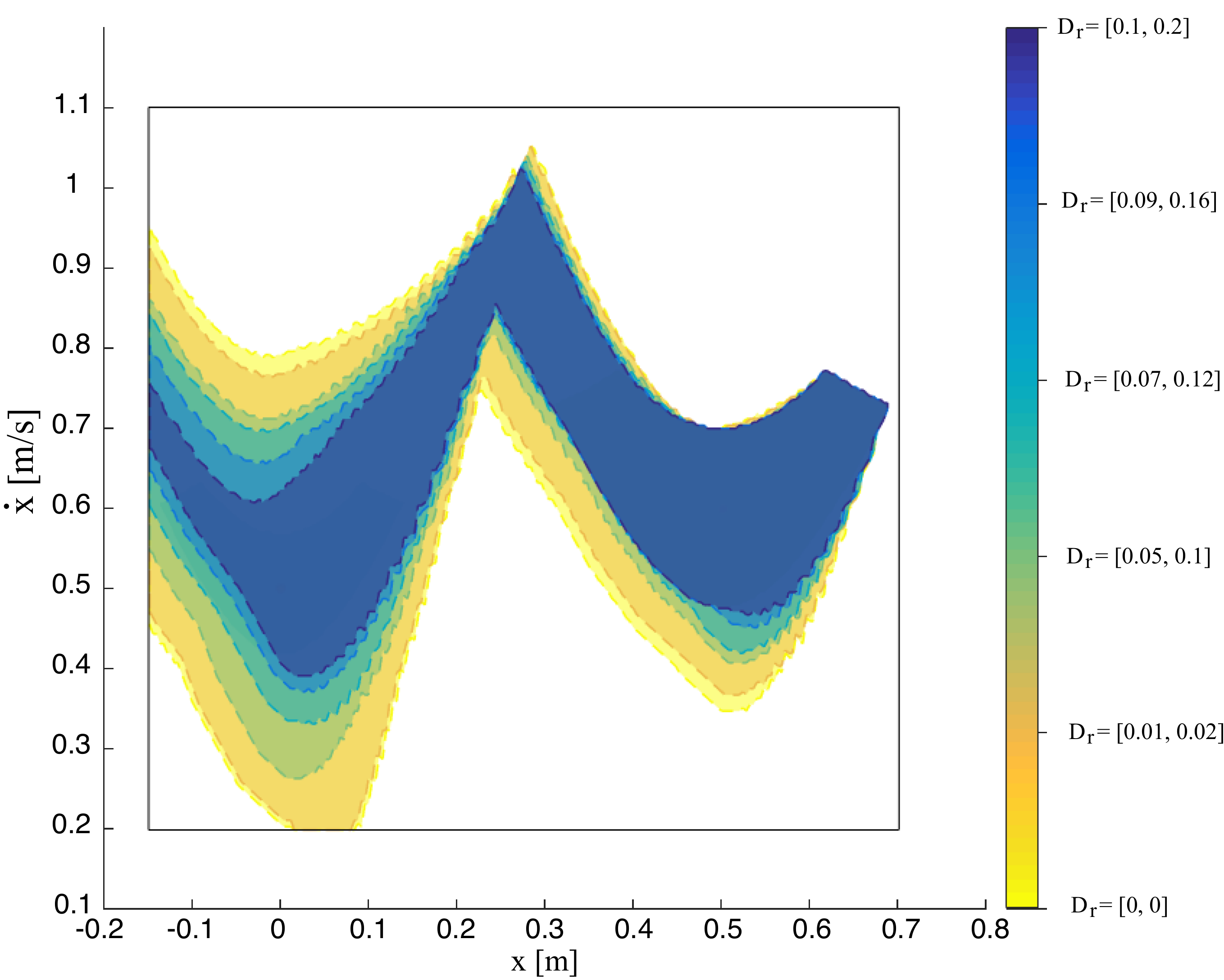}
    \label{fig:winsets_uncertainlevels}
    }
    \caption{\captionsize The additive disturbances to the dynamics are bounded by $D_r=(0.05\text{m}; 0.1 \text{m/s})$ in the subfigure (a). The shaded yellow region represents the winning set. The black trajectories are the five closed-loop trajectories simulated in five trials. The blue trajectory represents a trial suffering a large disturbance, i.e., a velocity jump in the phase-space. Since the disturbed state is still in the winning set, the CoM trajectory is guaranteed to reach the final robustness margin set. In subfigure (b), different levels of bounded disturbances are modeled in the computation of the winning sets. Naturally, a larger magnitude of the disturbance results in a smaller winning set.}
    \label{fig:phase-space-reachability}
\end{figure}

\begin{comment}
\begin{remark}
In the construction of $\mathcal{TS}_{\rm OWS}$, we choose the progression time $\delta t=20ms$. A smaller time step will incur self-transitions that lead to the failure in middle-level control synthesis. This is because the self-transitions prohibit the progress of the backward reachable set computation in control synthesis. It is also worth noting that smaller sampling time can be used in actual one-walking step control with the generated control strategy, since $\delta t$ is only used for abstraction construction.
\end{remark}
\end{comment}

%We compute all initial conditions (i.e., the first keyframe) from which the specifications are satisfied under state disturbance and limited control capabilities. This is a one-walking-step reachability problem. Backward propagation is used to compute the reachable phase-space region of initial keyframe state as shown in Fig.~\ref{fig:phase-space-reachability}. Another way to estimate this initial reachability region is a shooting approach, i.e., simulating sampled trajectories from a large initial phase-space region and selecting the ones capable of reaching the targeted region. Fig.~\ref{fig:phase-space-reachability} shows an illustration of the one-walking-step backward propagation.

We design reachability controllers for all the combinations of the locomotion mode set $\mathcal{P}$. Consider another locomotion mode transition from the PIPM to the prismatic pendulum model (PPM). The PPM dynamics in Eq.~(\ref{eq:accel2}) are reformulated as follows
\begin{align}\label{eq:PPM-simplified}
\begin{pmatrix}
\dot{x}(\zeta)\\
\dot{v}_x(\zeta)
\end{pmatrix}
= 
 \begin{pmatrix}
   v_x(\zeta)\\[2mm]
  - \omega_{\rm PPM}^2 (x(\zeta) - x_{\rm hand})
 \end{pmatrix},
\end{align}
%
%Intermediate robustness margin set $\mathcal{B}_{\rm inter}$ of PIPM and PPM mode is the same as those in Eq.~(\ref{eq:intermediate-robust-margin}).
%$\mathcal{B}_{\rm inter}=\{\boldsymbol{\xi}':|\mathcal{Z}_{\sigma,p_1}(\boldsymbol{\xi}')| \leq \delta \sigma_{\rm bound, init}\wedge |\mathcal{Z}_{\sigma,p_2}(\boldsymbol{\xi}')| \leq \delta \sigma_{\rm bound, finite} \wedge \zeta_{p1, {\rm inter, lower bound}} \leq \mathcal{Z}_{\zeta,p_1}(\boldsymbol{\xi}') \leq \zeta_{p1, {\rm inter, upper bound}}\wedge \zeta_{p2, {\rm inter, lower bound}} \leq \mathcal{Z}_{\zeta,p_2}(\boldsymbol{\xi}') \leq \zeta_{p2,{\rm inter, upper bound}}\}$.
%\alert{Note that, for a state $\boldsymbol{\xi}_o$, if $\mathcal{Z}_{\zeta,p_1}(\boldsymbol{\xi}_o) > 0$, it has $\mathcal{Z}_{\zeta,p_2}(\boldsymbol{\xi}_o) < 0$. Vice versa. Essentially, the $\sigma$ manifold for $p1$ mode  }
%
with the assumption of $\tau_x = \tau_y = 0$ and a predefined hand contact position $x_{\rm hand}$. Other parameters are defined in Table~\ref{table:PIPM_PPM_transition}.
%Two keyframe states are $\boldsymbol{q}_{\rm initial} = (0 \text{m}, 0.5 \text{m/s})$ and $\boldsymbol{q}_{\rm final} = (0.6 \text{m}, 1.7 \text{m/s})$, respectively. We define the initial Riemannian space bounds $\delta \zeta_{\rm bound, init} = 0.05, $ and the final bounds $\delta \zeta_{\rm bound, final} = 0.005, \delta \sigma_{\rm bound, final} = {\color{red}0.06}$. The corresponding locomotion subsystem $\mathcal{SS}_{\rm OWS}$ has the mode set $\mathcal{P}_{\rm OWS}= \{{\rm PIPM,PPM}\}$, the state space $\Xi_{\rm OWS}=\bigcup_{p\in\mathcal{P}_{\rm OWS}}\Xi_{p}$ where $\Xi_p=[-0.1 \text{m},0.7 \text{m}]\times[0.1 \text{m/s},1.8 \text{m/s}]$ for all $p$, the initial condition $\Xi_{\rm OWS,0}=\{\boldsymbol{\xi}:|\mathcal{Z}_{\sigma}(\boldsymbol{\xi})| \leq \delta \sigma_{\rm bound, init}\}$, the . The goal is to reach the final robustness margin set.
%defined as $\{\boldsymbol{\xi}':|\mathcal{Z}_{p_{\rm PPM}}(\boldsymbol{\xi}')| \leq \delta \sigma_{\rm bound, final}\}$.
To evaluate the robustness performance of the synthesized controller, we examine the success rate of reaching the goal robust set through 50 simulation tests under different granularities and bounded disturbances. In Fig.~\ref{fig:traj50_robust}, each trial is run for the one walking step with the PIPM-PPM mode pair. The exerted disturbance in the simulation is the same as the one used in the controller synthesis process, i.e., $D_r = (0.15 \text{m}, 0.3 \text{m/s})$. As shown in Fig.~\ref{fig:traj50_robust}, all the trials reach the final robustness margin successfully. This agrees with the correctness guarantee by the one-walking-step robust reachability property of Theorem~\ref{theorem:OWSReachability}.

\begin{table}[t]
 \caption{\normalsize Parameters of the PIPM-PPM mode transition.}
  \begin{center}\vspace{-5mm}
   \begin{tabular}{c||c|c|c} \hline\hline
    {\bf Parameters}        & {\bf Values}     & {\bf Parameters} &          {\bf Values}             \\     \hline \hline
    initial keyframe $\boldsymbol{q}_{\rm initial}$ & $(0 \text{m}, 0.5 \text{m/s})$
                & final keyframe $\boldsymbol{q}_{\rm final}$ & $(0.6 \text{m}, 1.7 \text{m/s})$\\ 
    \hline 
    initial tangent bound $\delta \sigma_{\rm bound, init}$ & $0.002$ 
                & initial cotangent bound $\delta \zeta_{\rm bound, init}$ & $0.05$ \\ 
    \hline 
    final tangent bound $\delta \sigma_{\rm bound, final}$ & $0.06$ 
                & final cotangent bound $\delta \zeta_{\rm bound, init}$ & $0.005$ \\            
    \hline 
    mode set  $\mathcal{P}_{\rm OWS}$    &  $\{{\rm PIPM,PPM}\}$
                & disturbance range $D_r$ & $(0.15 \text{m}, 0.3 \text{m/s})$\\ 
    \hline 
    OWS state space $\Xi_p$ & $[-0.1 \text{m},0.7 \text{m}]\times[0.1 \text{m/s},1.8 \text{m/s}]$
                & control space $\mathcal{U}_{\rm OWS}$ & $[2 \text{rad/s},4 \text{rad/s}]$\\ 
    \hline 
   \end{tabular}
  \end{center}
  \label{table:PIPM_PPM_transition}
\end{table}

\begin{figure}[t]
    \centering
    \subfigure[50 simulated trajectories using the synthesized controller.]{
    \includegraphics[scale=0.16]{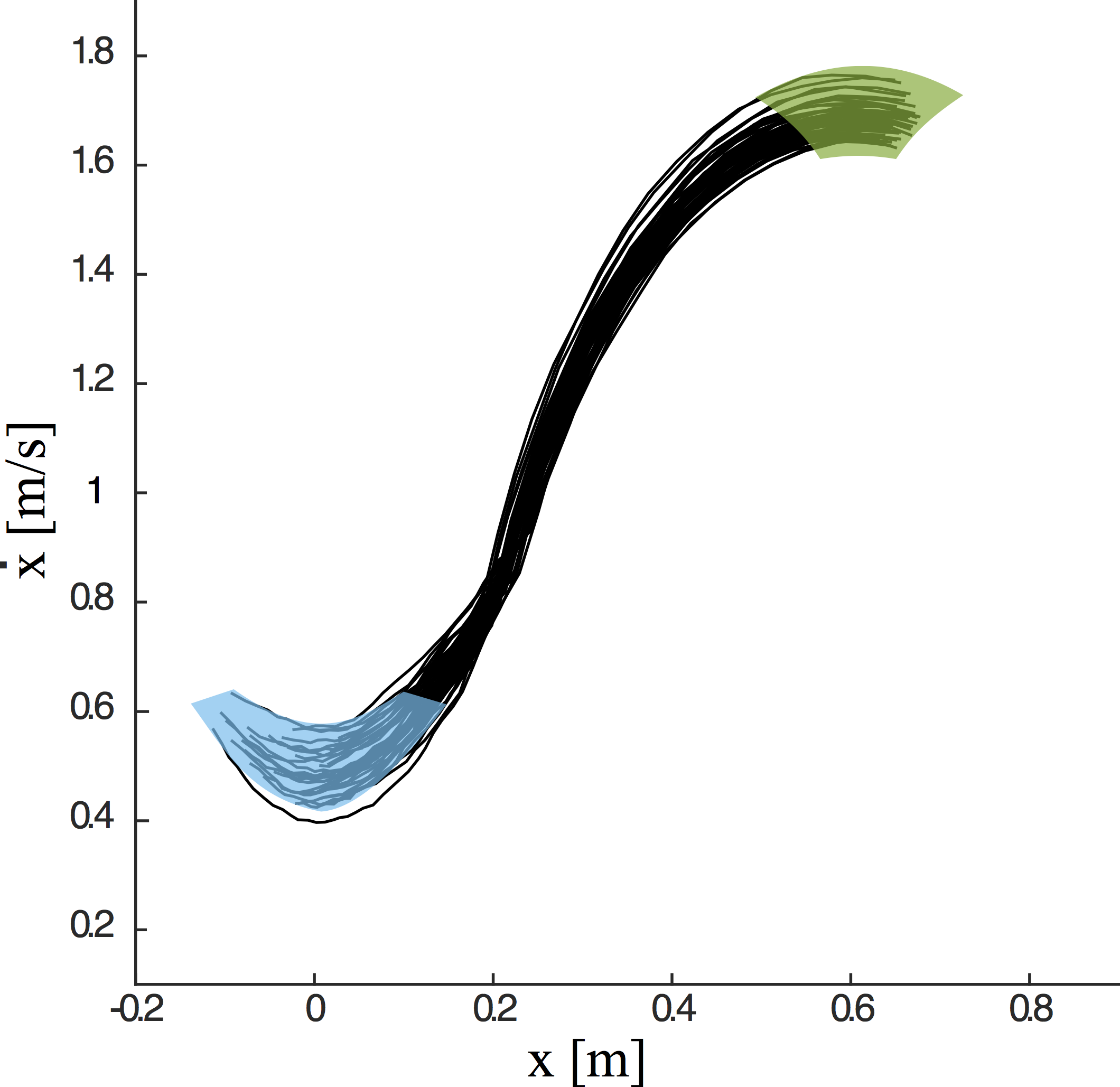}
    \label{fig:traj50_robust}
    }
    \subfigure[Simulation success rate under various granularities and disturbances.]{
    \includegraphics[scale=0.20]{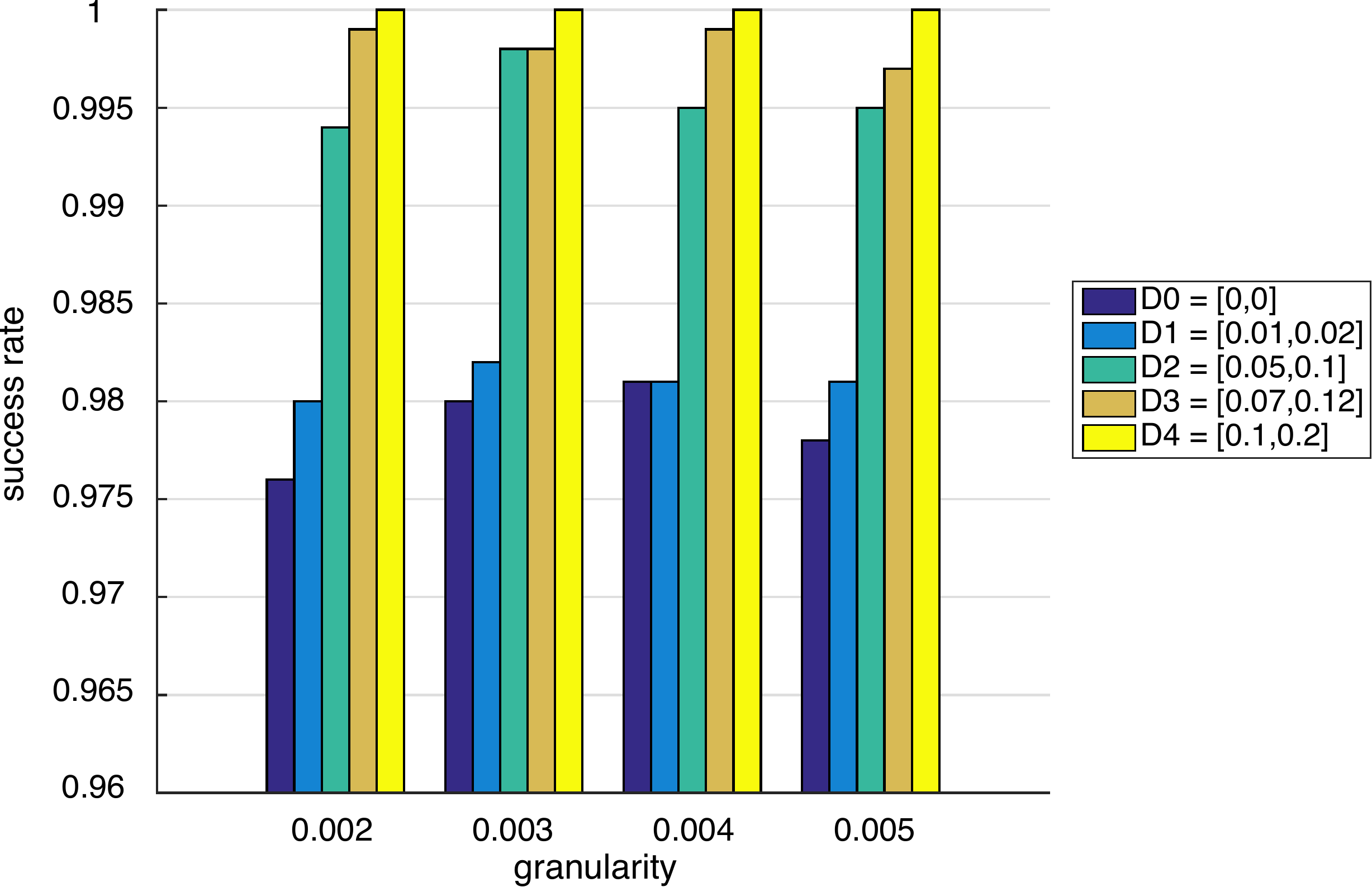}
    \label{fig:success_rate}
    }
    \caption{Success rate of the simulations under varying granularities and disturbances. In subfigure (a), the system is subjected to disturbances bounded by $D_r=(0.15\text{m}; 0.3\text{m/s})$. All the 50 simulation trails can reach the goal robustness margin set successfully. In subfigure (b), we run 1000 trials for each case with a specific granularity and a bounded disturbance. The disturbance exerted in the simulation remains the same, i.e., $D_r = (0.1\text{m}, 0.2\text{m/s})$.}
\end{figure}

We evaluate the effect of the discretization granularity and the magnitude of disturbances used in the controller synthesis process as shown in Fig.~\ref{fig:success_rate}. Given each controller synthesized using a specific granularity and for a specific disturbance bound, we simulate 1000 trials with the bounded disturbance $D_r = (0.1\text{m}, 0.2\text{m/s})$. Fig.~\ref{fig:success_rate} shows four sets of simulation results for different granularities ranging from 0.002 to 0.005. For each set of simulations, the success rate increases as the modeled disturbance in the controller synthesis increases, and it reaches 100\% when the modeled disturbance matches the actual disturbance $D_r$ used in the simulation. This is consistent with our expectation. Let us inspect the figure from another perspective. If we compare the results for different granularities with a specific disturbance set $D_i$ ($i=0,1,2,3,4$), the success rate almost remains the same. This is because when constructing the abstraction for the robust reachability analysis, we have taken into consideration the effects of approximation errors caused by different discretization granularities, by using non-deterministic transitions that over-approximate the dynamics of the system. In addition, we observe that the success rates for all the synthesized controllers are greater than $97\%$, even in the case no disturbance is considered in the controller synthesis. This can again be interpreted by the over-approximation used in the abstraction. Nonetheless, as shown in the simulations, to achieve 100\% correctness guarantee, the modeled disturbance has to be larger than (or at least match) the actual disturbance in the simulation. Moreover, under the same disturbance $D_r$, the nominal phase-space planner with a fixed open-loop control input only achieves a success rate of $29\%$. This huge discrepancy in success rate clearly shows the advantage of using an abstraction-based feedback controller over an open-loop phase-space planner.

\subsection{Case V: Integrated multi-step locomotion via the reachability control library}

This case evaluates an integrated multi-step locomotion example with the robust finite transition system $\mathcal{TS}_{\rm OWS}$, the inter-sampling finite abstraction $\mathcal{TS}_{\rm OWS, INT}$, and the replanning strategy. Assume that the decision of the task planner renders a locomotion mode sequence involving the PIPM, PPM, and MCM modes below,
\begin{align}\nonumber
{\rm PIPM\rightarrow PIPM \rightarrow PPM\rightarrow PIPM\rightarrow MCM\rightarrow PIPM \rightarrow PIPM}
\end{align}
To enable the initial and final keyframe robustness margin sets to cover a sufficiently larger phase space, we extend the default $3\times3$ keyframe grid to a $5\times5$ keyframe grid for each mode. This allows the reachability controllers to be applicable to a larger set of keyframe states. For each locomotion mode pair, we synthesize all the feasible controllers that reach the final keyframe robustness margin set under a bounded disturbance. We enumerate all the combinations of the allowable locomotion mode pairs and generate all the reachability control policies offline. These controllers are saved as a control library and are executed at runtime according to the high-level decision and measured states under bounded disturbances. 

Parameters of constructing the inter-sampling finite abstraction $\mathcal{TS}_{\rm OWS, INT}$ are defined as follows. The controller synthesis and execution process use the same disturbance bound $D_r= (0.05 \text{m}; 0.1 \text{m/s})$. The full discretized state space is $\Xi_{\rm full}=[-0.2\text{m},3.8\text{m}]\times [0.2\text{m/s},1.9\text{m/s}]$ with a granularity $(0.003 \text{m},0.003 \text{m/s})$. The local state space of each walking step is chosen so that it is sufficiently large to cover the space around the two keyframe states. A time step $\delta\zeta=0.02$s is used for the abstraction construction of each walking step. The control inputs for PIPM, PPM and MCM satisfy $\omega_{\rm PIPM} \in [2, 4]$, $\omega_{\rm PPM}\in [2, 4]$ and $\omega_{\rm MCM} \in [1, 3]$. 
%\alert{do we need negative values?} 
We obtain the sets of sampled control values by a granularity of 0.02. The robustness margins of the phase space manifolds are $\delta\sigma_{\rm PIPM}=0.002$, $\delta \zeta_{\rm PIPM}=0.002$; $\delta\sigma_{\rm PPM}=0.04$, $\delta \zeta_{\rm PPM}=0.003$; $\delta\sigma_{\rm MCM}=0.15$, $\delta \zeta_{\rm MCM}=0.9\times 10^{-5}$.%; $\zeta_0=10^{-5}$. \alert{do we need to use $\zeta_0$}

The computational time for constructing abstractions is around $30$s on average, and $5$s to $15$s for synthesizing a reachability controller corresponding to each keyframe pair, depending on the number of states and transitions of the abstraction. Since we synthesize 625 (i.e., $25\times25$) reachability controllers for each walking step, the time to generate them is approximately $90$ mins. In our simulation of six consecutive walking steps, all the local reachability control strategies are patched together to cover the overall state space. The time for simulating a single closed-loop walking trajectory is around $2$s.
As the results show in Fig.~\ref{fig:integrated_control}, we simulate six different trials with different initial conditions, i.e., starting from different initial robustness margin sets. Each locomotion trajectory is guaranteed to reach one of the robustness margin sets at the next walking step via using the reachability controller from the control library. In particular, a trial is tested to evaluate the replanning strategy when the state is perturbed out of the winning set.

\begin{figure}[t]
 \centering
   \includegraphics[width=0.95\linewidth]{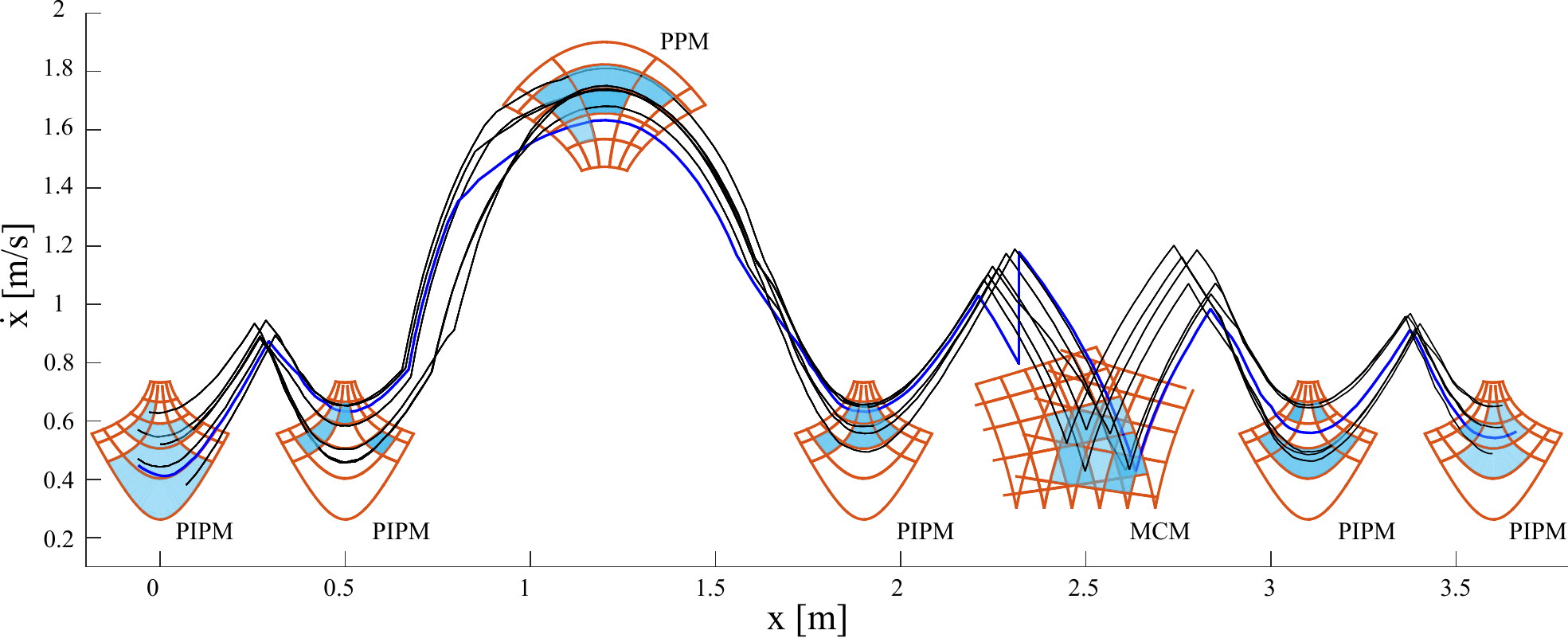}
 \caption{\captionsize Integrated phase-space trajectories of multi-walking step simulations under bounded disturbances. The replanning strategy is evaluated with a trial (see the blue trajectory) exerted with a velocity disturbance larger than the modeled disturbance in the MCM mode (around the position $x = 2.3 $ m). In this case, the state is perturbed out of the winning set of the currently used reachability controller. A replanning signal is triggered, and the planner searches within the control library for a new winning set (together with a new reachability controller) that covers the perturbed state. Then the perturbed state will use that new controller to reach a new robustness margin set for continuous locomotion maneuvering.}
\label{fig:integrated_control}
\end{figure}

 \section{Discussions and Future Work}
\label{section:discussions}
%
%We discuss several promising extensions and future directions of the proposed task and motion planning approach.
%
\subsection{Low-level uncertainties}
This paper proposes a hierarchical approach to the task and motion planning of dynamic locomotion in complex environments. We achieve robustness against a general, bounded disturbance by synthesizing a middle-layer robust reachability controller with robustness margins to accommodate low-level uncertainties.
%This paper mainly addresses the task and motion planning problem 
%while leaving the underlying trajectory stabilization and control problems for future research work. 
Undoubtedly, a variety of low-level uncertainties can come from time delays, actuation limits, unmodeled dynamics, state estimation, and measurement error from the environment. These uncertainties severely deteriorate the execution success rate of the high-level planner, in particular when the robot performs highly agile motions in complex and unstructured environments. In addition, the abstraction methods can induce approximation errors between the high-level and low-level planners. Although not directly dealing with these low-level uncertainties and abstraction approximation errors, the keyframe-based robustness margin proposed in this paper can be viewed as an abstract representation of these uncertainties in the center-of-mass (CoM) state space. As long as a mapping can be established between these low-level uncertainties and the CoM phase-space deviations from the nominal trajectory, these uncertainties can be handled indirectly by the proposed reachability controller at run-time. Additionally, a replanning strategy is designed to handle large uncertainties that are not explicitly modeled in the reachability controller. 
%Our strategy can be viewed as the one that abstracts the low-level uncertainties, characterizes them in the phase-space, and designs robust strategies at high- and middle-level layers. 
In the future, abstraction refinement [\cite{nilsson2014incremental}] will be inspirational for designing a model abstraction with a proper granularity. More importantly, implementing the proposed high-level decision-making algorithms in the dynamic simulation and real hardware [\cite{kim2016stabilizing, luo2017locomotion}], and evaluating the robustness performance against low-level uncertainties will be our main upcoming task. 

%To evaluate the practical performance, we will evaluate the approach on a bipedal robot for experimental validation. We will report implementational results as they are ready.

%\alert{Jun, a question in my mind is whether it makes sense to design specification for replanning process, say if a new environment action occurs, we initialize the replanning, although having no idea of what specific decisions to make. Otherwise, this specification is explicitly modelled and no need for the replanning.}

%On the other hand, additional constraints on the low-level controller design can be enforced to comply with the high-level planner. For instance, given a low-level controller library, it will be promising to design switched controllers according to high-level locomotion tasks and behaviors.

\subsection{System and environment assumption relaxation}
%
%Although the modeled environment actions are assumed to be selected from a fixed number of discrete step heights, continuous terrain height and tilting angle variations are indirectly handled by the proposed robust finite transition system. These types of environmental discrepancy are abstracted as either initial state uncertainty in the phase-space robustness margin or uncertainty in control parameters. Thus, although our robust finite transition system is designed in the robot's center-of-mass state-space, it is actually capable of alleviating more practical uncertainties as long as they can be quantitatively mapped as phase-space perturbations.

To make the proposed hierarchical planning approach applicable to locomotion tasks in more complex and cluttered environments, it is important to relax the assumptions and approximations of the environment and model more realistic scenes. For instance, how to formally design recovery strategies for slippery terrains (i.e., with friction coefficient inaccuracies), large tilting angles, and swing foot obstacle collision is a practically meaningful topic.

Our current planner assumes that all limb contacts switch synchronously. To relax this conservative assumption, we will explore the asynchronous contact switching strategy in the future. This relaxation opens up the opportunity for designing more natural and diverse locomotion contact behaviors. From a more general perspective, contact actions and keyframe states may exhibit probabilistic features. Incorporating probabilistic models, such as Markov decision process (MDP) [\cite{platt2004manipulation, fu2014probably, feng2015controller}], into the high-level decisions will be a promising direction. Accordingly, studying probabilistic correctness and completeness will be of our interest.

%\item Another significant extension is to incorporate more realistic human interaction models for safe and cooperative locomotion.

\subsection{Generalization to complex tasks}
Generalizing the proposed planning framework to more realistic locomotion tasks is of practical importance, in particular when robots are unleashed into the real world. Some more practical locomotion tasks include walking while carrying a payload, walking alongside human teammates, dynamically interacting with a human during motion [\cite{alonso2018reactive}], and multi-robot coordination [\cite{da2016combined}]. To this end, how to design an automatic method for generating locomotion primitives of diverse tasks becomes important. Also, allocating computing resources efficiently among different planning layers is an essential topic. A mission planner will be needed to operate at a more abstract level to make decisions on task allocation, coordination, and synchronization. A key problem is how to properly design integrated, scalable, and reactive mission and motion planners [\cite{da2016combined}] for legged robots to accomplish collaborative tasks in dynamic and unstructured environments.

At the individual robot level, our motion planner is designed for the three-dimensional case, although the demonstrated locomotion tasks are primarily straight walking. 
%Lateral dynamics are propagated by searching proper lateral foot locations and fulfilling straight walking timing constraints. This has already been shown in our simulation results.
In the future, we will incorporate steering models [\cite{zhao2017robust}] such that the locomotion behaviors are extendable to complex 3D motions with steering capabilities. An advantage of our planning framework is to use simplified models which allow us to efficiently compose multiple locomotion modes and achieve dynamic and complex locomotion behaviors in constrained environments. The high-level symbolic planner automates this sequential composition process and guarantees the formal correctness of the overall planning framework.

\textit{An application of the proposed whole-body dynamic locomotion methodology in the constrained environment is the following:} The US Defense Advanced Research Projects Agency (DARPA) created a Subterranean Challenge [\cite{DARPASubT}] aiming at augmenting underground operation capabilities.
\textit{``The Challenge aims to explore new approaches to rapidly map, navigate, and search underground environments ... and propose novel methods for tackling time-critical scenarios through unknown courses in mapping subsurface networks and unpredictable conditions, which are too hazardous for human first responders''}.
Our proposed hierarchical decision-making approach for whole-body dynamic locomotion in constrained environments raise the importance of decision-making algorithms with formal guarantees for robots as complex as humanoid robots, a research topic of increasing importance as these robots begin to move out of the laboratories and work outdoors. 

\subsection{Planning horizon}

Making planner decisions with a sufficiently long predictive horizon has great potential to enable intelligent and robust behaviors in complex and dynamically changing environments [\cite{egerstedt2018robot}]. Our task planner has a one-walking-step horizon and may sometimes result in myopic locomotion decisions. For instance, if the disturbance is so large that the robot can not recover within one walking step, our planner will execute a replanning process. However, a natural alternative is to design a recovery strategy over the next multiple walking steps, which is  commonly used in the recovery process of human locomotion. The downside of this strategy is the increase in computational complexity. Our planning process substantially relieves this computational burden by using the simplified locomotion models. In addition to this computational consideration, the design of the planning horizon should take into account the versatility of the locomotion behaviors. For instance, if the locomotion process is of high speed, being able to make predictions over a longer horizon will be advantageous. Overall, we should take into account the computational power and behavior versatility when designing the planning horizon. %Also, it is important for the robot to make keyframe decisions based on not only its previous walking-step state, but also the memory states of a few preceding walking steps. The benefit of incorporating more memories is to enable the robot to have a perception of . 
Algorithm~\ref{al:reachcontrol} is designed in a general form and should be extendable to the multiple-walking-step scenario. 

%\subsection{Completeness}
%Our problem is not complete but we can guarantee the local correctness of the problem. Completeness means if the control synthesis gives an empty set, it can guarantee that there is no solution. 

\section{Conclusions}
\label{sec:conclusion}
This paper employs temporal-logic-based formal methods to synthesize a high-level reactive task planner and designs a middle-level discrete control to achieve the one-walking-step robust reachability process for complex whole-body dynamic locomotion (WBDL) behaviors in constrained environments. A particular focus has been given to (i) the robustness of the keyframe state reachability with respect to bounded disturbances; (ii) the correctness of the top-down hierarchy from the high-level task planner to the low-level motion planner processes.

A diverse set of locomotion models are devised at the low-level to form a template library in response to various environmental events, including those adversarial ones such as cracked terrain, human appearance, and narrow passage. These adversarial events require specifically-designed locomotion modes to enable desired locomotion behaviors. Deviating from numerous existing studies primarily using a single simplified model for a specific locomotion task, our symbolic task planner focuses on integrating and unifying a variety of simplified models and achieves complex locomotion behaviors via sequential composition of trajectories. A key novelty of this task planner lies in solving the traditional contact planning problem via a two-player game. Contact decisions are determined according to the synthesized switching protocol in response to possibly-adversarial environment actions. 

As for the reachability control under disturbances, we propose a robust metric of the keyframe state and use it to design a robust finite transition system realized by the underlying reachability synthesis.
%Temporal logic formally specifies locomotion behaviors and a game is formulated for the discrete planner synthesis.  The low-level planner switches its modes to satisfy the high-level specifications.
The proposed task and motion planner is validated through simulations of WBDL maneuvers in constrained environments. The performance of the reachability control is benchmarked via a series of synthesis and execution tests. We expect that this line of work acts as an entry point for the locomotion community to employ formal methods to verify and synthesize planners and controllers in legged and humanoid robots [\cite{kuindersma2016optimization, hereid20163d, ramezani2014performance}]. Evaluating the proposed framework on dynamic bipedal robots is one of our high-priority future works. 
%Meanwhile, we will make the polished code open source in the near future.

%In this case, the arm trajectory in the video \citep{UnifiedLocomotionVideo} needs to be optimized. For instance, when the arm is contact-free and swings in the air, it should always tend to maintain its end-effector in front of its body such that when the emergent behavior occurs, the robot can rapidly react by grasping the upper terrain.

%The design of keyframe states in Section~\ref{subsec:LTLDS} involves heuristic rules. In the future, we would like to adopt more rigorous criteria, such as energy-based or reachability-based methods, for this non-deterministic transition design.

\begin{funding}
This work was partially supported by the NSF Grant [\#1924978, \#1724360, \#1652113], Office of Naval Research (ONR) Grant [grant \#N000141512507], and partially supported by NSERC of Canada, the Canada Research Chairs program, and an Ontario MRIS Early Researcher Award.
\end{funding}
%--------------------%-------------------%

%--------------------%-------------------%
 
\begin{appendix_sec}
\appendix
\label{appen}
\section{Index to Multimedia Extensions}

\begin{tabular}{l*{6}{c}r}
Extension  & Type & Description \\
\hline
1 & Video & Reactive task and motion planning for whole-body dynamic locomotion  \\
%2 & Video & Locomotion in response to emergent environment events;\\
%3 & Video & Locomotion replanning strategy;
%& & Video \\
%\hline
\end{tabular}

\begin{figure}[t]
 \centering
   \includegraphics[width=0.85\linewidth]{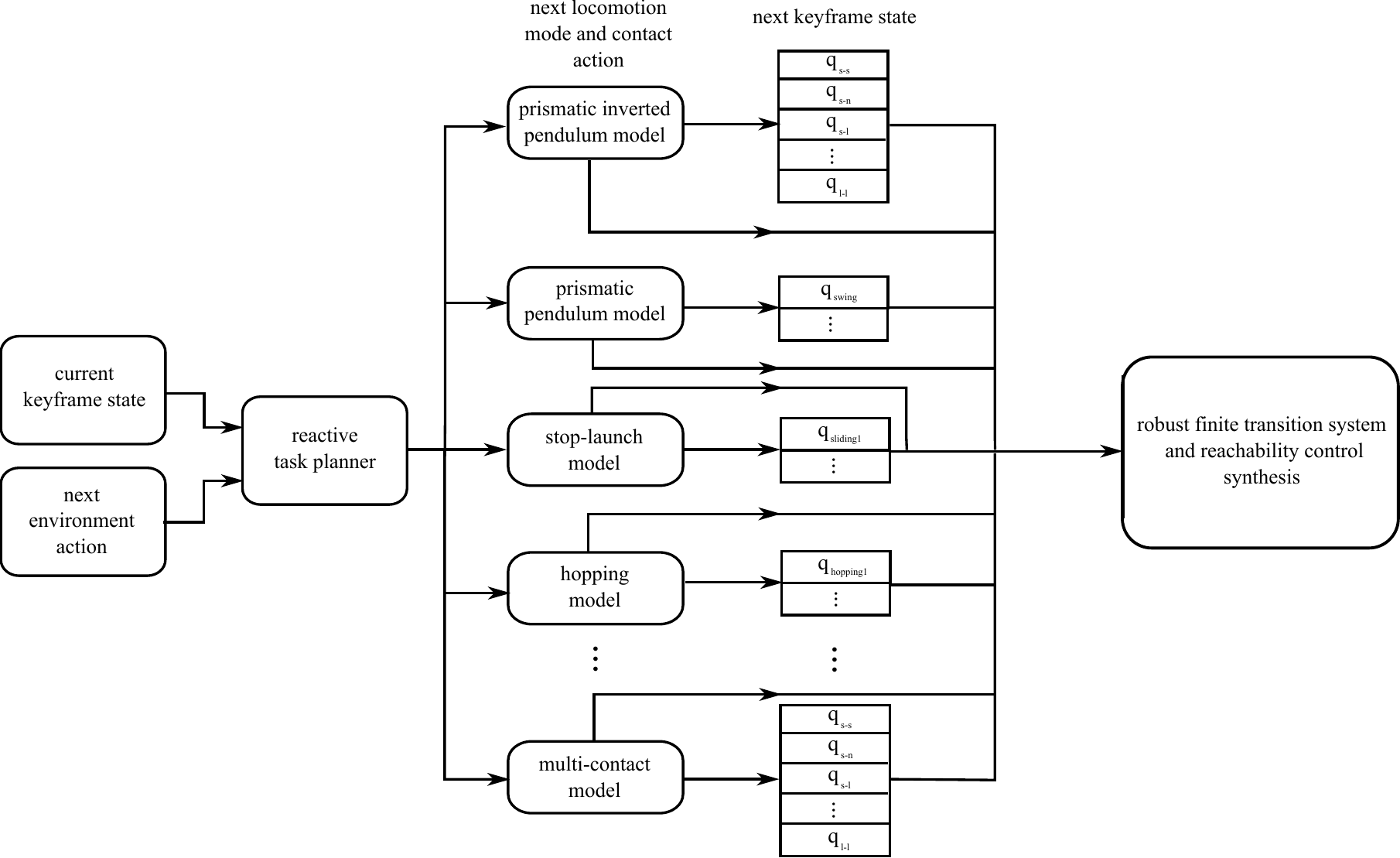}
 \caption{\captionsize An illustration of the top-down decision sequence of the high-level reactive task planner and middle-level reachability controller synthesis. It illustrates the relationship between the keyframe state, environment action, and system mode.
}
\label{fig:decision_sequence}
\end{figure}

\begin{figure}[t]
 \centering
   \includegraphics[width=0.65\linewidth]{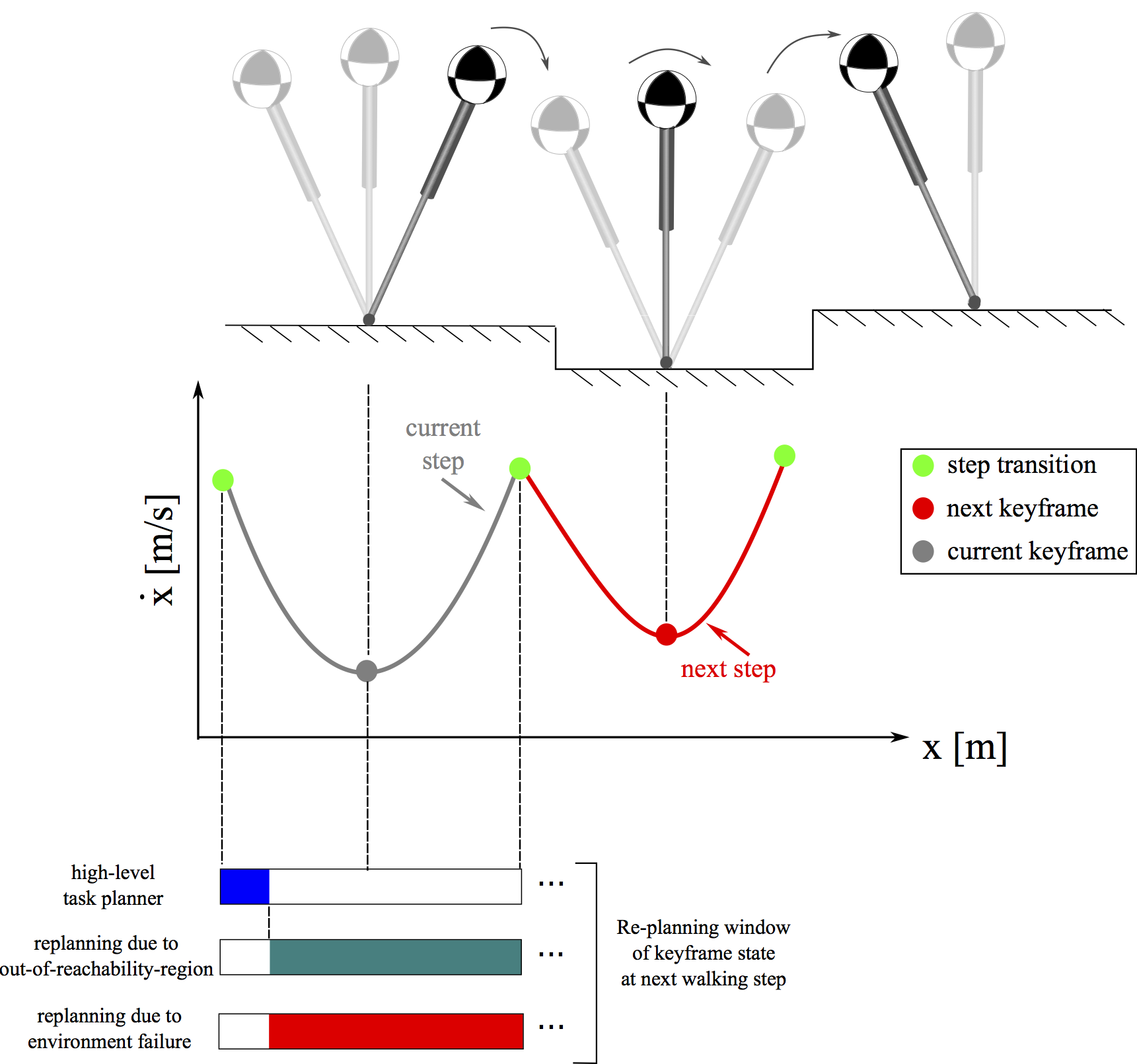}
 \caption{\captionsize Replanning timing for the next walking step. The high-level task planning for the next walking step is determined at the beginning of one walking step. Then during the remaining time of the current walking step (before switching to the next walking step), a replanning process can be triggered anytime if the state is out of the reachability region (i.e., the winning set) or the environment action change suddenly. This figure uses single-contact prismatic inverted pendulum model for illustration.
}
\label{fig:planning_sequence}
\end{figure}

\begin{algorithm}
\caption{Execution of the synthesized controller based on robust finite transition system $\mathcal{TS}_{\rm OWS}$}\label{al:execution_reachcontrol}
\begin{algorithmic}[1]
\STATE \textbf{procedure} ExecuteOWSReachabilityControl(keyframeState $\boldsymbol{q}_{\rm initial}$, $\boldsymbol{q}_{\rm final}$, locomotionMode $p_1$, $p_2$, contactConfiguration $s_{1}, s_{2}$, initialState $\boldsymbol{\xi}_{\rm init}$, transitionTime $\zeta_{\rm trans}$, environmentActionOfNextOWS $e_{\rm next\text{-}OWS}$)
\STATE choose winning sets of the first and second semi-steps $\mathcal{WIN}_{p_1}$ and $\mathcal{WIN}_{p_2}$ via ($p_{1}, p_{2}$), ($s_{1}, s_{2}$), and ($\boldsymbol{q}_{\rm initial}, \boldsymbol{q}_{\rm final}$).
\STATE set initial and final robustness margins ($\delta \zeta_{\rm bound, init}, \delta \sigma_{\rm bound, init}$) and ($\delta \zeta_{\rm bound, final}, \delta \sigma_{\rm bound, final}$)
\STATE set the intermediate robustness margin set $\mathcal{R}_{\rm inter}=\{\boldsymbol{\xi}':|\mathcal{Z}_{p_1,\sigma}(\boldsymbol{\xi}')| \leq \delta \sigma_{\rm bound, init}\wedge |\mathcal{Z}_{p_2,\sigma}(\boldsymbol{\xi}')| \leq \delta \sigma_{\rm bound, final}\}$.
\STATE initialize CoM state $\boldsymbol{\xi}_{\rm current} \leftarrow \boldsymbol{\xi}_{\rm init}$.
\STATE \textsf{\small isEnvironmentAbruptChange} $\leftarrow$ \textbf{false}
\WHILE {$\boldsymbol{\xi}_{\rm current} \notin \mathcal{R}_{\rm inter}$}
%\STATE assign control input $u_{p_1}$ from $\mathcal{WIN}_{p_1}$ based on current state $\boldsymbol{\xi}_{\rm current}$
%\IF {$u_{p_1} = \emptyset$}
%\STATE \textbf{failure} occurs, \textbf{exit}, and \textbf{replanning} \hspace{0.8in} \COMMENT{CoM state out of reachability region and trigger replanning}
%\ENDIF
\STATE \textsf{\small isEnvironmentAbruptChange} $\leftarrow$ detectEnvironmentAbruptChange($e_{\rm next\text{-}OWS}$)
\IF{$\boldsymbol{\xi}_{\rm current} \in \mathcal{WIN}_{p_1}$ \textbf{and} \textsf{\small isEnvironmentAbruptChange} == \textbf{false}}
\STATE assign control input $u_{p_1}$ from $\mathcal{WIN}_{p_1}$ based on current state $\boldsymbol{\xi}_{\rm current}$
\ELSIF{$\boldsymbol{\xi}_{\rm current} \notin \mathcal{WIN}_{p_1}$ \textbf{and} \textsf{\small isEnvironmentAbruptChange} == \textbf{false}}
\STATE (\textsf{\small isAlternativeWinningSetFeasible}, $\mathcal{WIN}_{\rm alternative}$) $\leftarrow$ detectAlternativeWinningSet($p_1$, $\mathcal{TS}_{\rm OWS}$)
\IF{\textsf{\small isAlternativeWinningSetFeasible} == \textbf{true}}
\STATE $\mathcal{WIN}_{p_1} \leftarrow \mathcal{WIN}_{\rm alternative}$
\ELSE
\STATE \textbf{failure} occurs, \textbf{exit}, \textbf{replanning}, and \textbf{move} to Line 2 \hspace{1.6in} \COMMENT{no alternative winning set}
\ENDIF
\ELSIF {\textsf{\small isEnvironmentAbruptChange} == \textbf{true}}
\STATE \textbf{failure} occurs, \textbf{exit}, \textbf{replanning}, and \textbf{move} to Line 2 \hspace{1.6in} \COMMENT{environment abrupt change}
\ENDIF
\STATE send control command $u_{p_1}$ to robot for execution.  
\STATE measure the actual state $\boldsymbol{\xi}_{\rm next}$ at next time step
\STATE $\boldsymbol{\xi}_{\rm current} \leftarrow \boldsymbol{\xi}_{\rm next}, \zeta \leftarrow \zeta + \delta \zeta_{p_1}$ $\boldsymbol{\chi}_{\rm current} \gets$ generateLimbTraj($s_{p_1}, \boldsymbol{\xi}_{\rm current}$)
\ENDWHILE
%\STATE assign transition time $\zeta_{p_1} \leftarrow \zeta_{\rm trans}$
\WHILE {$\boldsymbol{\xi}_{\rm current} \in \mathcal{R}_{\rm inter} $ and $\zeta < \zeta_{\rm trans}$}
\STATE send control command $u_{p_1}$ to robot/simulator for execution.  
\STATE measured actual state at next time step $\boldsymbol{\xi}_{\rm next}$
\STATE $\boldsymbol{\xi}_{\rm current} \leftarrow \boldsymbol{\xi}_{\rm next}, \zeta \leftarrow \zeta + \delta \zeta_{p_1}$, $\boldsymbol{\chi}_{\rm current} \gets$ generateLimbTraj($s_{p_1}, \boldsymbol{\xi}_{\rm current}$)
\ENDWHILE
\STATE $(\sigma, \zeta) \leftarrow \mathcal{Z}_{p_2}(\boldsymbol{\zeta}_{\rm current})$
%\STATE assign final phase instant $\zeta_{p_2} \leftarrow \zeta_{\rm final}$
\WHILE {$|\zeta - \zeta_{p_2}| > \delta \zeta_{\rm bound, final}$ \textbf{or} $|\sigma - \sigma_{p_2}| > \delta \sigma_{\rm bound, final}$}
\IF{$\boldsymbol{\xi}_{\rm current} \in \mathcal{WIN}_{p_2}$}
\STATE assign control input $u_{p_2}$ from $\mathcal{WIN}_{p_2}$ based on current state $\boldsymbol{\xi}_{\rm current}$
\ELSIF{$\boldsymbol{\xi}_{\rm current} \notin \mathcal{WIN}_{p_2}$}
\STATE (\textsf{\small isAlternativeWinningSetFeasible}, $\mathcal{WIN}_{\rm alternative}$) $\leftarrow$ detectAlternativeWinningSet($p_2$, $\mathcal{TS}_{\rm OWS}$)
\IF{\textsf{\small isAlternativeWinningSetFeasible} == \textbf{true}}
\STATE $\mathcal{WIN}_{p_2} \leftarrow \mathcal{WIN}_{\rm alternative}$
\ELSE
\STATE \textbf{failure} occurs, \textbf{exit}, \textbf{replanning}, \textbf{select} a new feasible goal set from the task planner, and \textbf{move} to Line 30 
\STATE \hspace{4.3in} \COMMENT{no alternative winning set}
\ENDIF
\ENDIF
\STATE send control command $u_{p_2}$ to the robot/simulator for execution.
\STATE measure actual state at next time step $\boldsymbol{\xi}_{\rm next}$
\STATE $\boldsymbol{\xi}_{\rm current} \leftarrow \boldsymbol{\xi}_{\rm next}, \zeta \leftarrow \zeta + \delta \zeta_{p_2}$, $\boldsymbol{\chi}_{\rm current} \gets$ generateLimbTraj($s_{p_2}, \boldsymbol{\xi}_{\rm current}$)
\STATE $(\sigma, \zeta) \leftarrow \mathcal{Z}_{p_2}(\boldsymbol{\zeta}_{\rm current})$.
\ENDWHILE
\RETURN a sequence of $(\boldsymbol{\xi}_{\rm current}, \boldsymbol{\chi}_{\rm current})$.
\end{algorithmic}
\end{algorithm}

\section{Linear temporal logic}
\label{subsec:LTL-preliminary}

Linear temporal logic is an extension of propositional logic that incorporate temporal operators. An LTL formula $\varphi$ is composed of atomic propositions $\pi \in AP$. The generic form of a LTL formula has the following grammar,
\begin{equation}\nonumber
\varphi ::= \pi \; \Big|  \; \neg \varphi \; \Big| \; \varphi_1 \wedge \varphi_2 \; \Big|\; \varphi_1 \vee \varphi_2 \; \Big| \; \bigcirc \varphi \;\Big|\; \varphi_1 \mathcal{U} \varphi_2,
\end{equation}
where the Boolean constants ${\rm true}$ and ${\rm false}$ are expressed by ${\rm false} = \neg {\rm true}$ and ${\rm true} = \varphi \vee \neg \varphi$, and we have the temporal operators $\bigcirc$ (``next"), $\mathcal{U}$ (``until"), $\neg$ (``negation") and $\wedge$ (``conjunction"). We can also define $\vee$ (``disjunction"), $\Rightarrow$ (``implication"), $\Leftrightarrow$ (``equivalence"). Another two key operators in LTL are $\Diamond$ (``Eventually") and $\Box$ (``Always"). We can interpret them $\Diamond \varphi := {\rm true}\; \mathcal{U} \varphi$ for ``Eventually" and $\Box \varphi := \neg \Diamond \neg \varphi$ for ``Always".

LTL formulae are interpreted over an infinite sequence of states $\boldsymbol{q}$. We define $\boldsymbol{q}_i = q_i q_{i+1} q_{i+2} \ldots$ as a run from $i^{\rm th}$ position. It is said that a LTL formula $\varphi$ holds at $i^{\rm th}$ position of $\boldsymbol{q}$, represented as $q_i \models \varphi$, if and only if $\varphi$ holds for the remaining sequence of $\boldsymbol{q}$ starting at $i^{\rm th}$ position.
%Combinations of these operators can describe a variety of specifications for system behaviors. 
The semantics of LTL are defined inductively as 
\begin{align}\nonumber
q_i  \models \neg \varphi \; &{\rm iff} \; q_i  \not\models \varphi\\\nonumber
q_i  \models \varphi_1 \wedge \varphi_2 \; &{\rm iff} \; q_i  \models \varphi_1 \wedge q_i \models \varphi_2\\\nonumber
q_i  \models \varphi_1 \vee \varphi_2 \; &{\rm iff} \; q_i \models \varphi_1 \vee q_i \models \varphi_2\\\nonumber
q_i  \models \bigcirc \varphi \; &{\rm iff} \; q_{i+1} \models \varphi\\\nonumber
q_i \models \varphi_1 \mathcal{U} \varphi_2\; &{\rm iff} \; \exists j \geq i, {\rm s.t.}  q_j \models \varphi_2 \\\nonumber
 & {\rm and} \;q_k \models \varphi_1, \forall i \leq k \leq j
\end{align}
In these definitions, the notation $\bigcirc \varphi$ represents that $\varphi$ is true at the next ``step" (i.e., next position in the sequence), $\Box \varphi$ represents $\varphi$ is always true (i.e., true at every position of the sequence), $\Diamond \varphi$ represents that $\varphi$ is eventually true at some position of the sequence, $\Box \Diamond \varphi$ represents that $\varphi$ is true infinitely often (i.e., eventually become true starting from any position), and $\Diamond \Box \varphi$ represents that $\varphi$ is eventually always true (i.e., always becomes true after some point in time in the sequence) [\cite{baier2008principles}].

\section{Phase-Space Manifold}
\label{appen:PSManifold}
Closed-form solutions of the phase-space manifolds are required to define the robustness margin sets in Def.~\ref{def:robustSetSimp}. Besides the ones proposed for the prismatic inverted pendulum model (PIPM) in Propositions~\ref{prop:PSMTangent} and~\ref{prop:PSMCotangent}, this subsection proposes the closed-form solutions of additional locomotion modes, including the prismatic pendulum model (PPM) and the multi-contact model (MCM) as defined in Section~\ref{subsec:low-level-imp}.
\begin{proposition}[\textbf{PPM phase-space tangent manifold}]\label{theorem:PSM}
Given the PPM of Eq.~(\ref{eq:PPM-simplified}) with initial conditions $(x_0, \dot{x}_0) = (x_{\rm foot}, \dot{x}_{\rm apex})$ and known arm placement $x_{\rm foot}$, the PPM phase-space tangent manifold is defined as
\begin{align}\label{eq:simplifiedPPMPSM}
\sigma(x, \dot{x}, \dot{x}_{\rm apex}, x_{\rm foot}) = \dfrac{\dot{x}^2_{\rm apex}}{-\omega_{\rm PPM}^2} \big(\dot{x}^2 - \dot{x}^2_{\rm apex} + \omega_{\rm PPM}^2(x - x_{\rm foot})^2\big),
\end{align}
\end{proposition}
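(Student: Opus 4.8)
\textbf{Proof proposal for Proposition~\ref{theorem:PSM} (PPM phase-space tangent manifold).}

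The plan is to mirror the derivation that produces the PIPM tangent manifold in Proposition~\ref{prop:PSMTangent}, exploiting the fact that the PPM dynamics in Eq.~(\ref{eq:PPM-simplified}) differ from the PIPM dynamics in Eq.~(\ref{eq:PIPM-simplified}) only by the sign of the stiffness term: $\dot v_x = -\omega_{\rm PPM}^2(x - x_{\rm foot})$ rather than $\dot v_x = +\omega_{\rm PIPM}^2(x - x_{\rm foot})$. Concretely, first I would introduce the shifted coordinate $\tilde{x} = x - x_{\rm foot}$ so that the sagittal CoM obeys $\ddot{\tilde x} = -\omega_{\rm PPM}^2 \tilde x$, a simple harmonic oscillator. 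The phase-space orbits of this system are ellipses, and the conserved quantity along any trajectory is the ``energy-like'' invariant obtained by multiplying the dynamics equation by $\dot{\tilde x}$ and integrating: $\tfrac{d}{dt}\bigl(\dot{\tilde x}^2 + \omega_{\rm PPM}^2 \tilde x^2\bigr) = 0$, hence $\dot{x}^2 + \omega_{\rm PPM}^2 (x - x_{\rm foot})^2$ is constant along each nominal trajectory.

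Next I would evaluate this invariant at the keyframe (apex) state. At the apex $(x_0,\dot x_0) = (x_{\rm foot}, \dot x_{\rm apex})$, so the shift $\tilde x_0 = 0$ and the invariant takes the value $\dot{x}^2_{\rm apex}$. Therefore the nominal manifold through the apex is characterized by $\dot{x}^2 + \omega_{\rm PPM}^2 (x - x_{\rm foot})^2 = \dot{x}^2_{\rm apex}$, i.e.\ $\dot{x}^2 - \dot{x}^2_{\rm apex} + \omega_{\rm PPM}^2 (x - x_{\rm foot})^2 = 0$. To match the normalization convention used for $\sigma$ in Proposition~\ref{prop:PSMTangent} — where the scalar $\sigma$ is the signed Riemannian deviation and $\sigma = 0$ picks out the nominal manifold — I would multiply this expression by the normalizing prefactor $\dot{x}^2_{\rm apex}/(-\omega_{\rm PPM}^2)$, which is precisely the analog of the PIPM prefactor $\dot{x}^2_{\rm apex}/\omega_{\rm PIPM}^2$ with the sign flip inherited from the attractive (as opposed to repulsive) nature of the PPM vector field. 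This yields exactly Eq.~(\ref{eq:simplifiedPPMPSM}), and one checks that $\sigma = 0$ iff the state lies on the nominal elliptical orbit, while $\sigma \neq 0$ quantifies the deviation, consistently with the interpretation in Proposition~\ref{prop:PSMTangent}. For off-nominal states with invariant value $c \neq \dot x_{\rm apex}^2$, the expression $\dot x^2 - \dot x_{\rm apex}^2 + \omega_{\rm PPM}^2(x-x_{\rm foot})^2 = c - \dot x_{\rm apex}^2$ is constant along that trajectory, confirming that $\sigma$ is a first integral and hence a well-defined coordinate transverse to the flow.

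The main obstacle is not the integration — that is elementary — but justifying that the chosen normalizing prefactor $\dot{x}^2_{\rm apex}/(-\omega_{\rm PPM}^2)$ is the correct one to make $\sigma$ measure genuine \emph{Riemannian} arc-length-transverse distance rather than merely a level-set label, and in particular getting the sign right so that the PPM manifold's stability (the CoM is attracted toward the apex, as noted in the discussion of Mode~(b)) is encoded with the appropriate orientation. I would resolve this by appealing directly to the construction in [\cite{zhao2017robust}] cited for Propositions~\ref{prop:PSMTangent} and~\ref{prop:PSMCotangent}: the prefactor is fixed by requiring the linearization of $\sigma$ about the nominal manifold to agree with the Riemannian metric induced by the phase-space flow, and the substitution $\omega_{\rm PIPM}^2 \mapsto -\omega_{\rm PPM}^2$ (which is exactly how Eq.~(\ref{eq:accel2}) is obtained from Eq.~(\ref{eq:PIPM})) propagates unchanged through that construction, giving Eq.~(\ref{eq:simplifiedPPMPSM}). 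This completes the proof.
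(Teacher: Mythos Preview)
Your derivation is correct and aligns with the paper's own treatment: the paper does not give an explicit proof of this proposition but simply remarks that the PPM tangent manifold is obtained from the PIPM one by the substitution $\omega_{\rm PIPM}^2 \mapsto -\omega_{\rm PPM}^2$, which is exactly what you carry out (with the added detail of actually integrating the harmonic-oscillator invariant). Your honest caveat about the normalizing prefactor is well-placed; the paper likewise defers that justification to [\cite{zhao2017robust}].
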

Compared to the PIPM tangent manifold in Proposition~\ref{prop:PSMTangent}, the PPM tangent manifold has a negative asymptote slope square, i.e., $-\omega_{\rm PPM}^2$. Thus, the tangent manifold with $\sigma > 0$ locates beneath the nominal $\sigma = 0$ tangent manifold. This property is in contrast to that of the PIPM tangent manifold.

\begin{proposition}[\textbf{PPM phase-space cotangent manifold}]\label{prop:PSCoM}
Given the PPM of Eq.~(\ref{eq:PPM-simplified}), the PPM cotangent manifold is
\begin{align}\label{eq:zeta_manifold}
\zeta  = \zeta_0(\dfrac{\dot{x}}{\dot{x}_0})^{-\omega_{\rm PPM}^2} \dfrac{x - x_{\rm foot}}{x_0 - x_{\rm foot}},
\end{align}
\end{proposition}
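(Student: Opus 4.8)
\textbf{Proof proposal for Proposition~\ref{prop:PSCoM} (PPM phase-space cotangent manifold).}

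The plan is to mirror the derivation of the PIPM cotangent manifold (Proposition~\ref{prop:PSMCotangent}), adapting it to the sign change inherent in the PPM dynamics. First I would recall that for the PPM of Eq.~(\ref{eq:PPM-simplified}) the closed-form phase-space (tangent) trajectory is already given by Proposition~\ref{theorem:PSM}, namely $\sigma = 0$ yields $\dot{x}^2 = \dot{x}_0^2 - \omega_{\rm PPM}^2(x - x_{\rm foot})^2$ along a nominal trajectory. The cotangent coordinate $\zeta$ is meant to parametrize arc length \emph{along} such a trajectory, playing the role of a ``phase progression'' variable analogous to time. So the first step is to write the defining ODE for $\zeta$ along the flow: on a nominal PPM trajectory the pair $(x,\dot{x})$ evolves with $\dot{x} = v$, $\dot{v} = -\omega_{\rm PPM}^2 (x - x_{\rm foot})$, which is simple harmonic motion about $x_{\rm foot}$. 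I would solve this flow explicitly, obtaining $x(t) - x_{\rm foot} = A\cos(\omega_{\rm PPM} t + \phi)$ and $\dot{x}(t) = -A\,\omega_{\rm PPM}\sin(\omega_{\rm PPM} t + \phi)$ for constants $A,\phi$ fixed by the initial state $(x_0,\dot{x}_0)$.

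Next I would eliminate the explicit parameter. The key algebraic move, exactly as in the PIPM case, is to observe that the ratio $(x - x_{\rm foot})/(x_0 - x_{\rm foot})$ and the ratio $\dot{x}/\dot{x}_0$ can be combined so that the free parameter cancels, leaving a quantity that is invariant along the trajectory up to the prescribed scaling $\zeta_0$. Concretely, from the harmonic solution one gets $\dot{x}/\dot{x}_0 = \sin(\omega_{\rm PPM} t + \phi)/\sin\phi$ and $(x-x_{\rm foot})/(x_0-x_{\rm foot}) = \cos(\omega_{\rm PPM}t+\phi)/\cos\phi$; I would then check that the product $(\dot{x}/\dot{x}_0)^{-\omega_{\rm PPM}^2}\,(x-x_{\rm foot})/(x_0-x_{\rm foot})$ satisfies the same first-order linear ODE in the progression variable that $\zeta/\zeta_0$ does, with the same initial value $1$ at the keyframe, hence equals $\zeta/\zeta_0$ by uniqueness. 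This yields exactly Eq.~(\ref{eq:zeta_manifold}) in the statement. The sign flip relative to the PIPM cotangent manifold (exponent $-\omega_{\rm PPM}^2$ versus $+\omega_{\rm PIPM}^2$) is the direct consequence of the restoring rather than repelling nature of the PPM vector field, i.e.\ of the $-\omega_{\rm PPM}^2$ appearing in Eq.~(\ref{eq:PPM-simplified}) as opposed to $+\omega_{\rm PIPM}^2$ in Eq.~(\ref{eq:PIPM}).

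An alternative and perhaps cleaner route, which I would use to cross-check, is to note that the PPM equations are obtained from the PIPM equations by the formal substitution $\omega_{\rm PIPM}^2 \mapsto -\omega_{\rm PPM}^2$ (and $x_{\rm foot}\mapsto x_{\rm hand}$, though here the statement writes $x_{\rm foot}$). Since Propositions~\ref{prop:PSMTangent} and~\ref{prop:PSMCotangent} are proved for the PIPM and their derivations (detailed in [\cite{zhao2017robust}]) depend on $\omega^2$ only through this coefficient, one can analytically continue those closed-form expressions under the substitution, obtaining Proposition~\ref{theorem:PSM} and Eq.~(\ref{eq:zeta_manifold}) directly. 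I would verify that the substitution is legitimate — that no step in the PIPM derivation implicitly used positivity of $\omega_{\rm PIPM}^2$ (e.g.\ a square root or a case split on the sign of a discriminant) — which is really the only subtle point. The main obstacle, then, is not any hard computation but confirming that the closed-form manifold genuinely solves the constraint equation characterizing cotangent manifolds (that it is constant along the harmonic flow and reduces to $\zeta_0$ at the keyframe); once that invariance is checked, the proposition follows immediately, and I would simply remark that the negativity of the exponent places trajectories with $\sigma>0$ on the opposite side of the nominal manifold compared to the PIPM case, consistent with the stability discussion for Mode~(b).
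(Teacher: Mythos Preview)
Your proposal is sound, and in fact more detailed than what the paper offers: the paper gives no explicit proof of this proposition at all, merely stating the formula in the appendix and relying implicitly on the observation (made just before, for the tangent manifold) that the PPM expressions arise from the PIPM ones via the substitution $\omega_{\rm PIPM}^2 \mapsto -\omega_{\rm PPM}^2$, with the PIPM derivations themselves deferred to [\cite{zhao2017robust}]. Your second route --- the formal substitution, together with the check that no step in the PIPM derivation used positivity of $\omega^2$ --- is therefore exactly the paper's implicit argument. Your first route (solving the harmonic flow explicitly and verifying an invariance/ODE property) goes beyond what the paper does; the only soft spot there is the phrase ``the same first-order linear ODE in the progression variable that $\zeta/\zeta_0$ does,'' which presupposes an independent defining equation for $\zeta$ that neither the paper nor your write-up actually states. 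Since your substitution route already suffices and matches the paper, this is not a real gap.
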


\begin{proposition}[\textbf{MCM phase-space tangent manifold}]\label{proposition:HP-PSTM}
Given the multi-contact model with a constant acceleration $\omega_{\rm MCM}$ (i.e., the control input), an initial condition $(x_0, \dot{x}_0) = (x_{\rm foot}, \dot{x}_{\rm apex})$, and a known foot placement $x_{\rm foot}$, the MCM phase-space tangent manifold is
\begin{align}\label{eq:simplifiedPSM-2}
%\sigma(x, \dot{x}, x_{\rm foot}, \dot{x}_{\rm apex}) = 2 \omega_{\rm MCM} (x - x_{\rm apex}) - 3 \dot{x}^2 - \dot{x}^2_{\rm apex} + 4\dot{x}_{\rm apex} \dot{x},
\sigma(x, \dot{x}, x_{\rm foot}, \dot{x}_{\rm apex}) = 2 \omega_{\rm MCM} (x - x_{\rm apex}) - (\dot{x}^2 - \dot{x}^2_{\rm apex}),
\end{align}
where $\sigma = 0$ represents the nominal phase-space tangent manifold.
\end{proposition}

\begin{proposition}[\textbf{MCM phase-space cotangent manifold}]\label{proposition:HP-PSCTM}
Given the multi-contact model with a constant acceleration and initial conditions $(x_0, \dot{x}_0) = (x_{\rm foot}, \dot{x}_{\rm apex})$ and known foot placement $x_{\rm foot}$, the phase-space cotangent manifold is
\begin{align}\label{eq:simplifiedPSM-3}
\zeta(x, \dot{x}, x_{\rm foot}, \dot{x}_{\rm apex}) = \omega_{\rm MCM} \cdot {\rm ln} (\dfrac{\dot{x}}{\dot{x}_{\rm apex}}) - (x - x_{\rm foot}),
%\zeta(x, \dot{x}, x_{\rm foot}, \dot{x}_{\rm apex}) = \dfrac{\omega_{\rm MCM}}{3} \cdot {\rm ln} (\dfrac{3\dot{x} - 2\dot{x}_{\rm apex}}{\dot{x}_{\rm apex}}) - (x - x_{\rm foot}),
\end{align}
%where $\zeta = 0$ represents the nominal phase-space cotangent manifold.
\end{proposition}
Again, for all the manifolds above, $\sigma = 0$ represents the nominal phase-space tangent manifold. The phase-space manifolds of the hopping model are trivial since its tangent phase-space manifold is a horizontal line.
The stop-launch model and sliding model have similar phase-space manifolds (i.e., parabolic trajectories) as those of the multi-contact model since all of them has a constant sagittal acceleration. Their derivations are omitted for brevity.

\end{appendix_sec}
%--------------------%-------------------%

\bibliographystyle{chicago}
\bibliography{bib}

\begin{thebibliography}{}

\bibitem[\protect\citeauthoryear{Alexander}{Alexander}{1984}]{alexander1984gaits}
Alexander, R.~M. (1984).
\newblock The gaits of bipedal and quadrupedal animals.
\newblock {\em The International Journal of Robotics Research\/}~{\em 3\/}(2),
  49--59.

\bibitem[\protect\citeauthoryear{Alonso-Mora, DeCastro, Raman, Rus, and
  Kress-Gazit}{Alonso-Mora et~al.}{2018}]{alonso2018reactive}
Alonso-Mora, J., J.~A. DeCastro, V.~Raman, D.~Rus, and H.~Kress-Gazit (2018).
\newblock Reactive mission and motion planning with deadlock resolution
  avoiding dynamic obstacles.
\newblock {\em Autonomous Robots\/}~{\em 42\/}(4), 801--824.

\bibitem[\protect\citeauthoryear{Alur, Henzinger, Lafferriere, and Pappas}{Alur
  et~al.}{2000}]{alur2000discrete}
Alur, R., T.~A. Henzinger, G.~Lafferriere, and G.~J. Pappas (2000).
\newblock Discrete abstractions of hybrid systems.
\newblock {\em Proceedings of the IEEE\/}~{\em 88\/}(7), 971--984.

\bibitem[\protect\citeauthoryear{Ames, Tabuada, Sch{\"u}rmann, Ma, Kolathaya,
  Rungger, and Grizzle}{Ames et~al.}{2015}]{ames2015first}
Ames, A.~D., P.~Tabuada, B.~Sch{\"u}rmann, W.-L. Ma, S.~Kolathaya, M.~Rungger,
  and J.~W. Grizzle (2015).
\newblock First steps toward formal controller synthesis for bipedal robots.
\newblock In {\em International Conference on Hybrid Systems: Computation and
  Control}, pp.\  209--218.

\bibitem[\protect\citeauthoryear{Antoniotti and Mishra}{Antoniotti and
  Mishra}{1995}]{antoniotti1995discrete}
Antoniotti, M. and B.~Mishra (1995).
\newblock Discrete event models + temporal logic = supervisory controller:
  Automatic synthesis of locomotion controllers.
\newblock In {\em IEEE-RAS International Conference on Robotics and
  Automation}, Volume~2, pp.\  1441--1446.

\bibitem[\protect\citeauthoryear{Arslan and Saranli}{Arslan and
  Saranli}{2012}]{arslan2012reactive}
Arslan, O. and U.~Saranli (2012).
\newblock Reactive planning and control of planar spring--mass running on rough
  terrain.
\newblock {\em Robotics, IEEE Transactions on\/}~{\em 28\/}(3), 567--579.

\bibitem[\protect\citeauthoryear{Audren, Vaillant, Kheddar, Escande, Kaneko,
  and Yoshida}{Audren et~al.}{2014}]{audren2014model}
Audren, H., J.~Vaillant, A.~Kheddar, A.~Escande, K.~Kaneko, and E.~Yoshida
  (2014).
\newblock Model preview control in multi-contact motion-application to a
  humanoid robot.
\newblock In {\em IEEE/RSJ International Conference on Intelligent Robots and
  Systems}, pp.\  4030--4035.

\bibitem[\protect\citeauthoryear{Baier and Katoen}{Baier and
  Katoen}{2008}]{baier2008principles}
Baier, C. and J.-P. Katoen (2008).
\newblock {\em Principles of model checking}.
\newblock MIT press Cambridge.

\bibitem[\protect\citeauthoryear{Belta, Yordanov, and Gol}{Belta
  et~al.}{2017}]{belta2017formal}
Belta, C., B.~Yordanov, and E.~A. Gol (2017).
\newblock {\em Formal methods for discrete-time dynamical systems}, Volume~89.
\newblock Springer.

\bibitem[\protect\citeauthoryear{Bertram, Ruina, Cannon, Chang, and
  Coleman}{Bertram et~al.}{1999}]{bertram1999point}
Bertram, J., A.~Ruina, C.~Cannon, Y.~H. Chang, and M.~J. Coleman (1999).
\newblock A point-mass model of gibbon locomotion.
\newblock {\em Journal of Experimental Biology\/}~{\em 202\/}(19), 2609--2617.

\bibitem[\protect\citeauthoryear{Bhatia, Kavraki, and Vardi}{Bhatia
  et~al.}{2010}]{bhatia2010motion}
Bhatia, A., L.~E. Kavraki, and M.~Y. Vardi (2010).
\newblock Motion planning with hybrid dynamics and temporal goals.
\newblock In {\em IEEE Conference on Decision and Control}, pp.\  1108--1115.

\bibitem[\protect\citeauthoryear{Bloem, Jobstmann, Piterman, Pnueli, and
  Saar}{Bloem et~al.}{2012}]{bloem2012synthesis}
Bloem, R., B.~Jobstmann, N.~Piterman, A.~Pnueli, and Y.~Saar (2012).
\newblock Synthesis of reactive (1) designs.
\newblock {\em Journal of Computer and System Sciences\/}~{\em 78\/}(3),
  911--938.

\bibitem[\protect\citeauthoryear{Bouyarmane and Kheddar}{Bouyarmane and
  Kheddar}{2011}]{bouyarmane2011multi}
Bouyarmane, K. and A.~Kheddar (2011).
\newblock Multi-contact stances planning for multiple agents.
\newblock In {\em IEEE International Conference on Robotics and Automation},
  pp.\  5246--5253.

\bibitem[\protect\citeauthoryear{Bretl}{Bretl}{2006}]{bretl2006motion}
Bretl, T. (2006).
\newblock Motion planning of multi-limbed robots subject to equilibrium
  constraints: The free-climbing robot problem.
\newblock {\em The International Journal of Robotics Research\/}~{\em 25\/}(4),
  317--342.

\bibitem[\protect\citeauthoryear{Burridge, Rizzi, and Koditschek}{Burridge
  et~al.}{1999}]{burridge1999sequential}
Burridge, R.~R., A.~A. Rizzi, and D.~E. Koditschek (1999).
\newblock Sequential composition of dynamically dexterous robot behaviors.
\newblock {\em The International Journal of Robotics Research\/}~{\em 18\/}(6),
  534--555.

\bibitem[\protect\citeauthoryear{Campbell, Egerstedt, How, and Murray}{Campbell
  et~al.}{2010}]{campbell2010autonomous}
Campbell, M., M.~Egerstedt, J.~P. How, and R.~M. Murray (2010).
\newblock Autonomous driving in urban environments: approaches, lessons and
  challenges.
\newblock {\em Philosophical Transactions of the Royal Society of London A:
  Mathematical, Physical and Engineering Sciences\/}~{\em 368\/}(1928),
  4649--4672.

\bibitem[\protect\citeauthoryear{Caron and Kheddar}{Caron and
  Kheddar}{2016}]{caron2016multi}
Caron, S. and A.~Kheddar (2016).
\newblock Multi-contact walking pattern generation based on model preview
  control of 3d com accelerations.
\newblock In {\em IEEE-RAS International Conference on Humanoid Robots}, pp.\
  550--557.

\bibitem[\protect\citeauthoryear{Caron, Pham, and Nakamura}{Caron
  et~al.}{2015}]{caron2015zmp}
Caron, S., Q.-C. Pham, and Y.~Nakamura (2015).
\newblock Zmp support areas for multi-contact mobility under frictional
  constraints.
\newblock {\em arXiv preprint arXiv:1510.03232\/}.

\bibitem[\protect\citeauthoryear{Chinchali, Livingston, Topcu, Burdick, and
  Murray}{Chinchali et~al.}{2012}]{chinchali2012towards}
Chinchali, S., S.~C. Livingston, U.~Topcu, J.~W. Burdick, and R.~M. Murray
  (2012).
\newblock Towards formal synthesis of reactive controllers for dexterous
  robotic manipulation.
\newblock In {\em IEEE-RAS International Conference on Robotics and
  Automation}, pp.\  5183--5189.

\bibitem[\protect\citeauthoryear{Chung and Khatib}{Chung and
  Khatib}{2015}]{chung2015contact}
Chung, S.-Y. and O.~Khatib (2015).
\newblock Contact-consistent elastic strips for multi-contact locomotion
  planning of humanoid robots.
\newblock In {\em IEEE-RAS International Conference on Robotics and
  Automation}, pp.\  6289--6294.

\bibitem[\protect\citeauthoryear{da~Silva, Wu, Dai, and Lin}{da~Silva
  et~al.}{2016}]{da2016combined}
da~Silva, R.~R., B.~Wu, J.~Dai, and H.~Lin (2016).
\newblock Combined top-down and bottom-up approaches to performance-guaranteed
  integrated task and motion planning of cooperative multi-agent systems.
\newblock {\em arXiv preprint arXiv:1607.07797\/}.

\bibitem[\protect\citeauthoryear{Dantam, Kingston, Chaudhuri, and
  Kavraki}{Dantam et~al.}{}]{dantam2018incremental}
Dantam, N.~T., Z.~K. Kingston, S.~Chaudhuri, and L.~E. Kavraki.
\newblock An incremental constraint-based framework for task and motion
  planning.
\newblock {\em The International Journal of Robotics Research, doi:
  10.1177/0278364918761570\/}.

\bibitem[\protect\citeauthoryear{Dantam, Kingston, Chaudhuri, and
  Kavraki}{Dantam et~al.}{2016}]{Dantam-RSS-16}
Dantam, N.~T., Z.~K. Kingston, S.~Chaudhuri, and L.~E. Kavraki (2016).
\newblock Incremental task and motion planning: A constraint-based approach.
\newblock In {\em Proceedings of Robotics: Science and Systems}, Ann Arbor,
  Michigan.

\bibitem[\protect\citeauthoryear{DARPA}{DARPA}{2018}]{DARPASubT}
DARPA (2018).
\newblock Darpa subterranean challenge,
  https://www.darpa.mil/program/darpa-subterranean-challenge.

\bibitem[\protect\citeauthoryear{De and Koditschek}{De and
  Koditschek}{2015}]{de2015penn}
De, A. and D.~E. Koditschek (2015).
\newblock The penn jerboa: A platform for exploring parallel composition of
  templates.
\newblock {\em arXiv preprint arXiv:1502.05347\/}.

\bibitem[\protect\citeauthoryear{DeCastro, Alonso-Mora, Raman, Rus, and
  Kress-Gazit}{DeCastro et~al.}{2015}]{Jonathan15}
DeCastro, J.~A., J.~Alonso-Mora, V.~Raman, D.~Rus, and H.~Kress-Gazit (2015).
\newblock Collision-free reactive mission and motion planning for multi-robot
  systems.
\newblock In {\em International Symposium on Robotics Research}.

\bibitem[\protect\citeauthoryear{DeCastro and Kress-Gazit}{DeCastro and
  Kress-Gazit}{2015}]{decastro2015synthesis}
DeCastro, J.~A. and H.~Kress-Gazit (2015).
\newblock Synthesis of nonlinear continuous controllers for verifiably correct
  high-level, reactive behaviors.
\newblock {\em The International Journal of Robotics Research\/}~{\em 34\/}(3),
  378--394.

\bibitem[\protect\citeauthoryear{Deshmukh, Donz{\'e}, Ghosh, Jin, Juniwal, and
  Seshia}{Deshmukh et~al.}{2015}]{deshmukh2015robust}
Deshmukh, J.~V., A.~Donz{\'e}, S.~Ghosh, X.~Jin, G.~Juniwal, and S.~A. Seshia
  (2015).
\newblock Robust online monitoring of signal temporal logic.
\newblock In {\em Runtime Verification}, pp.\  55--70. Springer.

\bibitem[\protect\citeauthoryear{Donz{\'e} and Maler}{Donz{\'e} and
  Maler}{2010}]{donze2010robust}
Donz{\'e}, A. and O.~Maler (2010).
\newblock Robust satisfaction of temporal logic over real-valued signals.
\newblock In {\em International Conference on Formal Modeling and Analysis of
  Timed Systems}, pp.\  92--106. Springer.

\bibitem[\protect\citeauthoryear{Duperret and Koditschek}{Duperret and
  Koditschek}{2020}]{duperret2020towards}
Duperret, J. and D.~E. Koditschek (2020).
\newblock Towards reactive control of transitional legged robot maneuvers.
\newblock In {\em Robotics Research}, pp.\  145--162. Springer.

\bibitem[\protect\citeauthoryear{Egerstedt, Pauli, Notomista, and
  Hutchinson}{Egerstedt et~al.}{2018}]{egerstedt2018robot}
Egerstedt, M., J.~N. Pauli, G.~Notomista, and S.~Hutchinson (2018).
\newblock Robot ecology: Constraint-based control design for long duration
  autonomy.
\newblock {\em Annual Reviews in Control\/}.

\bibitem[\protect\citeauthoryear{Englsberger, Ott, and
  Albu-Schaffer}{Englsberger et~al.}{2015}]{englsberger2015three}
Englsberger, J., C.~Ott, and A.~Albu-Schaffer (2015).
\newblock Three-dimensional bipedal walking control based on divergent
  component of motion.
\newblock {\em IEEE Transactions on Robotics\/}~{\em 31\/}(2), 355--368.

\bibitem[\protect\citeauthoryear{Fainekos and Pappas}{Fainekos and
  Pappas}{2009}]{fainekos2009robustness}
Fainekos, G.~E. and G.~J. Pappas (2009).
\newblock Robustness of temporal logic specifications for continuous-time
  signals.
\newblock {\em Theoretical Computer Science\/}~{\em 410\/}(42), 4262--4291.

\bibitem[\protect\citeauthoryear{Farahani, Raman, and Murray}{Farahani
  et~al.}{2015}]{farahani2015robust}
Farahani, S.~S., V.~Raman, and R.~M. Murray (2015).
\newblock Robust model predictive control for signal temporal logic synthesis.
\newblock {\em IFAC-PapersOnLine\/}~{\em 48\/}(27), 323--328.

\bibitem[\protect\citeauthoryear{Feng, Wiltsche, Humphrey, and Topcu}{Feng
  et~al.}{2015}]{feng2015controller}
Feng, L., C.~Wiltsche, L.~Humphrey, and U.~Topcu (2015).
\newblock Controller synthesis for autonomous systems interacting with human
  operators.
\newblock In {\em Proceedings of the ACM/IEEE Sixth International Conference on
  Cyber-Physical Systems}, pp.\  70--79. ACM.

\bibitem[\protect\citeauthoryear{Fu and Topcu}{Fu and
  Topcu}{2014}]{fu2014probably}
Fu, J. and U.~Topcu (2014).
\newblock Probably approximately correct mdp learning and control with temporal
  logic constraints.
\newblock In {\em Proceedings of Robotics: Science and Systems}, Berkeley, CA.

\bibitem[\protect\citeauthoryear{Fu and Topcu}{Fu and Topcu}{2016}]{Fu2016TAC}
Fu, J. and U.~Topcu (2016).
\newblock Synthesis of joint control and active sensing strategies under
  temporal logic constraints.
\newblock {\em IEEE Transactions on Automatic Control\/}~{\em 61\/}(11),
  3464--3476.

\bibitem[\protect\citeauthoryear{Full and Koditschek}{Full and
  Koditschek}{1999}]{full1999templates}
Full, R.~J. and D.~E. Koditschek (1999).
\newblock Templates and anchors: neuromechanical hypotheses of legged
  locomotion on land.
\newblock {\em Journal of experimental biology\/}~{\em 202\/}(23), 3325--3332.

\bibitem[\protect\citeauthoryear{Gu, Boyd, and Zhao}{Gu
  et~al.}{2021}]{gu2021reactive}
Gu, Z., N.~Boyd, and Y.~Zhao (2021).
\newblock Reactive locomotion decision-making and robust motion planning for
  real-time perturbation recovery.
\newblock {\em arXiv preprint arXiv:2110.03037\/}.

\bibitem[\protect\citeauthoryear{Hauser}{Hauser}{2014}]{hauser2014fast}
Hauser, K. (2014).
\newblock Fast interpolation and time-optimization with contact.
\newblock {\em The International Journal of Robotics Research\/}~{\em 33\/}(9),
  1231--1250.

\bibitem[\protect\citeauthoryear{Hauser, Bretl, and Latombe}{Hauser
  et~al.}{2005}]{hauser2005non}
Hauser, K., T.~Bretl, and J.-C. Latombe (2005).
\newblock Non-gaited humanoid locomotion planning.
\newblock In {\em IEEE-RAS International Conference on Humanoid Robots}, pp.\
  7--12.

\bibitem[\protect\citeauthoryear{He, Lahijanian, Kavraki, and Vardi}{He
  et~al.}{2015}]{he2015towards}
He, K., M.~Lahijanian, L.~E. Kavraki, and M.~Y. Vardi (2015).
\newblock Towards manipulation planning with temporal logic specifications.
\newblock In {\em IEEE-RAS International Conference on Robotics and
  Automation}, pp.\  346--352.

\bibitem[\protect\citeauthoryear{He, Lahijanian, Kavraki, and Vardi}{He
  et~al.}{2017}]{he2017reactive}
He, K., M.~Lahijanian, L.~E. Kavraki, and M.~Y. Vardi (2017).
\newblock Reactive synthesis for finite tasks under resource constraints.
\newblock In {\em IEEE/RSJ International Conference on Intelligent Robots and
  Systems}, pp.\  5326--5332.

\bibitem[\protect\citeauthoryear{Hereid, Cousineau, Hubicki, and Ames}{Hereid
  et~al.}{2016}]{hereid20163d}
Hereid, A., E.~A. Cousineau, C.~M. Hubicki, and A.~D. Ames (2016).
\newblock 3d dynamic walking with underactuated humanoid robots: A direct
  collocation framework for optimizing hybrid zero dynamics.
\newblock In {\em IEEE International Conference on Robotics and Automation},
  pp.\  1447--1454.

\bibitem[\protect\citeauthoryear{Jaulin}{Jaulin}{2001}]{jaulin2001applied}
Jaulin, L. (2001).
\newblock {\em {Applied Interval Analysis: with Examples in Parameter and State
  Estimation, Robust Control and Robotics}}, Volume~1.
\newblock Springer Science \& Business Media.

\bibitem[\protect\citeauthoryear{Kaelbling and Lozano-P{\'e}rez}{Kaelbling and
  Lozano-P{\'e}rez}{2011}]{kaelbling2011hierarchical}
Kaelbling, L.~P. and T.~Lozano-P{\'e}rez (2011).
\newblock Hierarchical task and motion planning in the now.
\newblock In {\em IEEE International Conference on Robotics and Automation},
  pp.\  1470--1477.

\bibitem[\protect\citeauthoryear{Kajita, Kanehiro, Kaneko, Yokoi, and
  Hirukawa}{Kajita et~al.}{2001}]{kajita20013d}
Kajita, S., F.~Kanehiro, K.~Kaneko, K.~Yokoi, and H.~Hirukawa (2001).
\newblock The 3d linear inverted pendulum mode: A simple modeling for a biped
  walking pattern generation.
\newblock In {\em IEEE/RSJ International Conference on Intelligent Robots and
  Systems}, Volume~1, pp.\  239--246. Ieee.

\bibitem[\protect\citeauthoryear{Kim, Zhao, Thomas, Fernandez, and Sentis}{Kim
  et~al.}{2016}]{kim2016stabilizing}
Kim, D., Y.~Zhao, G.~Thomas, B.~R. Fernandez, and L.~Sentis (2016).
\newblock Stabilizing series-elastic point-foot bipeds using whole-body
  operational space control.
\newblock {\em IEEE Transactions on Robotics\/}~{\em 32\/}(6), 1362--1379.

\bibitem[\protect\citeauthoryear{Kloetzer and Belta}{Kloetzer and
  Belta}{2010}]{kloetzer2010automatic}
Kloetzer, M. and C.~Belta (2010).
\newblock Automatic deployment of distributed teams of robots from temporal
  logic motion specifications.
\newblock {\em IEEE Transactions on Robotics\/}~{\em 26\/}(1), 48--61.

\bibitem[\protect\citeauthoryear{Kress-Gazit, Fainekos, and Pappas}{Kress-Gazit
  et~al.}{2009}]{kress2009temporal}
Kress-Gazit, H., G.~E. Fainekos, and G.~J. Pappas (2009).
\newblock Temporal-logic-based reactive mission and motion planning.
\newblock {\em IEEE Transactions on Robotics\/}~{\em 25\/}(6), 1370--1381.

\bibitem[\protect\citeauthoryear{Kress-Gazit, Wongpiromsarn, and
  Topcu}{Kress-Gazit et~al.}{2011}]{kress2011correct}
Kress-Gazit, H., T.~Wongpiromsarn, and U.~Topcu (2011).
\newblock Correct, reactive, high-level robot control.
\newblock {\em IEEE Robotics \& Automation Magazine\/}~{\em 18\/}(3), 65--74.

\bibitem[\protect\citeauthoryear{Kudruss, Naveau, Stasse, Mansard, Kirches,
  Soueres, and Mombaur}{Kudruss et~al.}{2015}]{kudruss2015optimal}
Kudruss, M., M.~Naveau, O.~Stasse, N.~Mansard, C.~Kirches, P.~Soueres, and
  K.~Mombaur (2015).
\newblock Optimal control for whole-body motion generation using center-of-mass
  dynamics for predefined multi-contact configurations.

\bibitem[\protect\citeauthoryear{Kuindersma, Deits, Fallon, Valenzuela, Dai,
  Permenter, Koolen, Marion, and Tedrake}{Kuindersma
  et~al.}{2016}]{kuindersma2016optimization}
Kuindersma, S., R.~Deits, M.~Fallon, A.~Valenzuela, H.~Dai, F.~Permenter,
  T.~Koolen, P.~Marion, and R.~Tedrake (2016).
\newblock Optimization-based locomotion planning, estimation, and control
  design for the atlas humanoid robot.
\newblock {\em Autonomous Robots\/}~{\em 40\/}(3), 429--455.

\bibitem[\protect\citeauthoryear{Kulgod, Chen, Huang, Zhao, and
  Atanasov}{Kulgod et~al.}{2020}]{kulgod2020temporal}
Kulgod, S., W.~Chen, J.~Huang, Y.~Zhao, and N.~Atanasov (2020).
\newblock Temporal logic guided locomotion planning and control in cluttered
  environments.
\newblock In {\em 2020 American Control Conference (ACC)}, pp.\  5425--5432.
  IEEE.

\bibitem[\protect\citeauthoryear{Li and Liu}{Li and Liu}{2018}]{Li2018rocs}
Li, Y. and J.~Liu (2018).
\newblock {ROCS}: A robustly complete control synthesis tool for nonlinear
  dynamical systems.
\newblock In {\em Proceedings of the International Conference on Hybrid
  Systems: Computation and Control}, pp.\  130--135.

\bibitem[\protect\citeauthoryear{Liberzon}{Liberzon}{2012}]{liberzon2012switching}
Liberzon, D. (2012).
\newblock {\em Switching in systems and control}.
\newblock Springer Science \& Business Media.

\bibitem[\protect\citeauthoryear{Liu}{Liu}{2017}]{liu2017robust}
Liu, J. (2017).
\newblock Robust abstractions for control synthesis: completeness via
  robustness for linear-time properties.
\newblock In {\em Proceedings of the International Conference on Hybrid
  Systems: Computation and Control}, pp.\  101--110. ACM.

\bibitem[\protect\citeauthoryear{Liu and Ozay}{Liu and
  Ozay}{2014}]{liu2014abstraction}
Liu, J. and N.~Ozay (2014).
\newblock Abstraction, discretization, and robustness in temporal logic control
  of dynamical systems.
\newblock In {\em Proceedings of the International Conference on Hybrid
  Systems: Computation and Control}, pp.\  293--302.

\bibitem[\protect\citeauthoryear{Liu and Ozay}{Liu and
  Ozay}{2016}]{liu2016finite}
Liu, J. and N.~Ozay (2016).
\newblock Finite abstractions with robustness margins for temporal logic-based
  control synthesis.
\newblock {\em Nonlinear Analysis: Hybrid Systems\/}~{\em 22}, 1--15.

\bibitem[\protect\citeauthoryear{Liu, Ozay, Topcu, and Murray}{Liu
  et~al.}{2013}]{liu2013synthesis}
Liu, J., N.~Ozay, U.~Topcu, and R.~M. Murray (2013).
\newblock Synthesis of reactive switching protocols from temporal logic
  specifications.
\newblock {\em IEEE Transactions Automatic Control\/}~{\em 58\/}(7),
  1771--1785.

\bibitem[\protect\citeauthoryear{Liu, Topcu, Ozay, and Murray}{Liu
  et~al.}{2012}]{liu2012synthesis}
Liu, J., U.~Topcu, N.~Ozay, and R.~M. Murray (2012).
\newblock Synthesis of reactive control protocols for differentially flat
  systems.
\newblock In {\em IEEE Conference on Decision and Control}.

\bibitem[\protect\citeauthoryear{Luo, Zhao, Kim, Khatib, and Sentis}{Luo
  et~al.}{2017}]{luo2017locomotion}
Luo, J., Y.~Zhao, D.~Kim, O.~Khatib, and L.~Sentis (2017).
\newblock Locomotion control of three dimensional passive-foot biped robot
  based on whole body operational space framework.
\newblock {\em IEEE International Conference on Robotics and
  Biomimetics\/}~{\em 26}, 28.

\bibitem[\protect\citeauthoryear{Majumdar, Render, and Tabuada}{Majumdar
  et~al.}{2011}]{majumdar2011robust}
Majumdar, R., E.~Render, and P.~Tabuada (2011).
\newblock Robust discrete synthesis against unspecified disturbances.
\newblock In {\em Proceedings of the International Conference on Hybrid
  Systems: Computation and Control}, pp.\  211--220. ACM.

\bibitem[\protect\citeauthoryear{Maniatopoulos, Schillinger, Pong, Conner, and
  Kress-Gazit}{Maniatopoulos et~al.}{2016}]{maniatopoulos2016reactive}
Maniatopoulos, S., P.~Schillinger, V.~Pong, D.~C. Conner, and H.~Kress-Gazit
  (2016).
\newblock Reactive high-level behavior synthesis for an atlas humanoid robot.
\newblock In {\em IEEE-RAS International Conference on Robotics and
  Automation}, pp.\  4192--4199.

\bibitem[\protect\citeauthoryear{Nilsson and Ozay}{Nilsson and
  Ozay}{2014}]{nilsson2014incremental}
Nilsson, P. and N.~Ozay (2014).
\newblock Incremental synthesis of switching protocols via abstraction
  refinement.
\newblock In {\em IEEE Conference on Decision and Control}, pp.\  6246--6253.

\bibitem[\protect\citeauthoryear{Peng, Abbeel, Levine, and van~de Panne}{Peng
  et~al.}{2018}]{peng2018deepmimic}
Peng, X.~B., P.~Abbeel, S.~Levine, and M.~van~de Panne (2018).
\newblock Deepmimic: Example-guided deep reinforcement learning of
  physics-based character skills.
\newblock {\em arXiv preprint arXiv:1804.02717\/}.

\bibitem[\protect\citeauthoryear{Piovan and Byl}{Piovan and
  Byl}{2015}]{piovan2015reachability}
Piovan, G. and K.~Byl (2015).
\newblock Reachability-based control for the active slip model.
\newblock {\em The International Journal of Robotics Research\/}~{\em 34\/}(3),
  270--287.

\bibitem[\protect\citeauthoryear{Plaku, Kavraki, and Vardi}{Plaku
  et~al.}{2010}]{plaku2010motion}
Plaku, E., L.~E. Kavraki, and M.~Y. Vardi (2010).
\newblock Motion planning with dynamics by a synergistic combination of layers
  of planning.
\newblock {\em IEEE Transactions on Robotics\/}~{\em 26\/}(3), 469--482.

\bibitem[\protect\citeauthoryear{Platt, Fagg, and Grupen}{Platt
  et~al.}{2004}]{platt2004manipulation}
Platt, R., A.~H. Fagg, and R.~A. Grupen (2004).
\newblock Manipulation gaits: Sequences of grasp control tasks.
\newblock In {\em IEEE International Conference on Robotics and Automation},
  Volume~1, pp.\  801--806.

\bibitem[\protect\citeauthoryear{Posa, Kuindersma, and Tedrake}{Posa
  et~al.}{2016}]{posa21optimization}
Posa, M., S.~Kuindersma, and R.~Tedrake (2016).
\newblock Optimization and stabilization of trajectories for constrained
  dynamical systems.
\newblock In {\em IEEE International Conference on Robotics and Automation},
  pp.\  1366--1373.

\bibitem[\protect\citeauthoryear{Raibert}{Raibert}{1986}]{raibert1986legged}
Raibert, M.~H. (1986).
\newblock Legged robots that balance.
\newblock {\em MIT press\/}.

\bibitem[\protect\citeauthoryear{Raman, Donz{\'e}, Sadigh, Murray, and
  Seshia}{Raman et~al.}{2015}]{raman2015reactive}
Raman, V., A.~Donz{\'e}, D.~Sadigh, R.~M. Murray, and S.~A. Seshia (2015).
\newblock Reactive synthesis from signal temporal logic specifications.
\newblock In {\em Proceedings of the International Conference on Hybrid
  Systems: Computation and Control}, pp.\  239--248. ACM.

\bibitem[\protect\citeauthoryear{Ramezani, Hurst, Hamed, and Grizzle}{Ramezani
  et~al.}{2014}]{ramezani2014performance}
Ramezani, A., J.~W. Hurst, K.~A. Hamed, and J.~Grizzle (2014).
\newblock Performance analysis and feedback control of atrias, a
  three-dimensional bipedal robot.
\newblock {\em Journal of Dynamic Systems, Measurement, and Control\/}~{\em
  136\/}(2), 021012.

\bibitem[\protect\citeauthoryear{Sadigh and Kapoor}{Sadigh and
  Kapoor}{2016}]{Sadigh-RSS-16}
Sadigh, D. and A.~Kapoor (2016).
\newblock Safe control under uncertainty with probabilistic signal temporal
  logic.
\newblock In {\em Proceedings of Robotics: Science and Systems}, Ann Arbor,
  Michigan.

\bibitem[\protect\citeauthoryear{Sadraddini and Belta}{Sadraddini and
  Belta}{2015}]{sadraddini2015robust}
Sadraddini, S. and C.~Belta (2015).
\newblock Robust temporal logic model predictive control.
\newblock In {\em Annual Allerton Conference on Communication, Control, and
  Computing}, pp.\  772--779.

\bibitem[\protect\citeauthoryear{Sentis, Park, and Khatib}{Sentis
  et~al.}{2010a}]{Sentis:10(TRO)}
Sentis, L., J.~Park, and O.~Khatib (2010a).
\newblock Compliant control of multi-contact and center of mass behaviors in
  humanoid robots.
\newblock {\em IEEE Transactions on Robotics\/}~{\em 26\/}(3), 483--501.

\bibitem[\protect\citeauthoryear{Sentis, Park, and Khatib}{Sentis
  et~al.}{2010b}]{sentis2010compliant}
Sentis, L., J.~Park, and O.~Khatib (2010b).
\newblock Compliant control of multicontact and center-of-mass behaviors in
  humanoid robots.
\newblock {\em IEEE Transactions on Robotics\/}~{\em 26\/}(3), 483--501.

\bibitem[\protect\citeauthoryear{Sharan}{Sharan}{2014}]{sharan2014formal}
Sharan, R. (2014).
\newblock {\em Formal methods for control synthesis in partially observed
  environments: application to autonomous robotic manipulation}.
\newblock Ph.\ D. thesis, California Institute of Technology.

\bibitem[\protect\citeauthoryear{Sreenath, Hill~Jr, and Kumar}{Sreenath
  et~al.}{2013}]{sreenath2013partially}
Sreenath, K., C.~R. Hill~Jr, and V.~Kumar (2013).
\newblock A partially observable hybrid system model for bipedal locomotion for
  adapting to terrain variations.
\newblock In {\em Proceedings of the International Conference on Hybrid
  Systems: Computation and Control}, pp.\  137--142.

\bibitem[\protect\citeauthoryear{Srivastava, Fang, Riano, Chitnis, Russell, and
  Abbeel}{Srivastava et~al.}{2014}]{srivastava2014combined}
Srivastava, S., E.~Fang, L.~Riano, R.~Chitnis, S.~Russell, and P.~Abbeel
  (2014).
\newblock Combined task and motion planning through an extensible
  planner-independent interface layer.
\newblock In {\em 2014 IEEE International Conference on Robotics and
  Automation}, pp.\  639--646.

\bibitem[\protect\citeauthoryear{Tabuada}{Tabuada}{2009}]{tabuada2009verification}
Tabuada, P. (2009).
\newblock {\em Verification and control of hybrid systems: a symbolic
  approach}.
\newblock Springer Science \& Business Media.

\bibitem[\protect\citeauthoryear{Topcu, Ozay, Liu, and Murray}{Topcu
  et~al.}{2012}]{topcu2012synthesizing}
Topcu, U., N.~Ozay, J.~Liu, and R.~M. Murray (2012).
\newblock On synthesizing robust discrete controllers under modeling
  uncertainty.
\newblock In {\em Proceedings of the International Conference on Hybrid
  Systems: Computation and Control}, pp.\  85--94.

\bibitem[\protect\citeauthoryear{Toussaint, Allen, Smith, and
  Tenenbaum}{Toussaint et~al.}{2018}]{toussaintdifferentiable2018}
Toussaint, M., K.~Allen, K.~Smith, and J.~Tenenbaum (2018).
\newblock Differentiable physics and stable modes for tool-use and manipulation
  planning.
\newblock {\em Proceedings of Robotics: Science and Systems, Pittsburgh, PA\/}.

\bibitem[\protect\citeauthoryear{Warnke, Shamsah, Li, and Zhao}{Warnke
  et~al.}{2020}]{warnke2020towards}
Warnke, J., A.~Shamsah, Y.~Li, and Y.~Zhao (2020).
\newblock Towards safe locomotion navigation in partially observable
  environments with uneven terrain.
\newblock In {\em 2020 59th IEEE Conference on Decision and Control (CDC)},
  pp.\  958--965. IEEE.

\bibitem[\protect\citeauthoryear{Wongpiromsarn, Topcu, and
  Murray}{Wongpiromsarn et~al.}{2012}]{wongpiromsarn2012receding}
Wongpiromsarn, T., U.~Topcu, and R.~M. Murray (2012).
\newblock Receding horizon temporal logic planning.
\newblock {\em IEEE Transactions Automatic Control\/}~{\em 57\/}(11),
  2817--2830.

\bibitem[\protect\citeauthoryear{Wongpiromsarn, Topcu, Ozay, Xu, and
  Murray}{Wongpiromsarn et~al.}{2011}]{wongpiromsarn2011tulip}
Wongpiromsarn, T., U.~Topcu, N.~Ozay, H.~Xu, and R.~M. Murray (2011).
\newblock Tulip: a software toolbox for receding horizon temporal logic
  planning.
\newblock In {\em Proceedings of the International Conference on Hybrid
  Systems: Computation and Control}, pp.\  313--314.

\bibitem[\protect\citeauthoryear{Xu, Grizzle, Tabuada, and Ames}{Xu
  et~al.}{2018}]{xu2018correctness}
Xu, X., J.~W. Grizzle, P.~Tabuada, and A.~D. Ames (2018).
\newblock Correctness guarantees for the composition of lane keeping and
  adaptive cruise control.
\newblock {\em IEEE Transactions on Automation Science and Engineering\/}~{\em
  15\/}(3), 1216--1229.

\bibitem[\protect\citeauthoryear{Zhao, Fernandez, and Sentis}{Zhao
  et~al.}{2016}]{zhao2016robust}
Zhao, Y., B.~R. Fernandez, and L.~Sentis (2016).
\newblock Robust phase-space planning for agile legged locomotion over various
  terrain topologies.
\newblock In {\em Proceedings of Robotics: Science and Systems}, Ann Arbor,
  Michigan.

\bibitem[\protect\citeauthoryear{Zhao, Fernandez, and Sentis}{Zhao
  et~al.}{2017}]{zhao2017robust}
Zhao, Y., B.~R. Fernandez, and L.~Sentis (2017).
\newblock Robust optimal planning and control of non-periodic bipedal
  locomotion with a centroidal momentum model.
\newblock {\em The International Journal of Robotics Research\/}~{\em
  36\/}(11), 1211--1242.

\bibitem[\protect\citeauthoryear{Zhao and Sentis}{Zhao and
  Sentis}{2012}]{zhao2012three}
Zhao, Y. and L.~Sentis (2012).
\newblock A three dimensional foot placement planner for locomotion in very
  rough terrains.
\newblock In {\em IEEE-RAS International Conference on Humanoid Robots}, pp.\
  726--733.

\bibitem[\protect\citeauthoryear{Zhao, Topcu, and Sentis}{Zhao
  et~al.}{2016}]{zhao2016high}
Zhao, Y., U.~Topcu, and L.~Sentis (2016).
\newblock High-level planner synthesis for whole-body locomotion in
  unstructured environments.
\newblock In {\em IEEE Conference on Decision and Control}, pp.\  6557--6564.

\bibitem[\protect\citeauthoryear{Zhao, Zhou, Park, and Zhao}{Zhao
  et~al.}{2021}]{zhao2021sydebo}
Zhao, Z., Z.~Zhou, M.~Park, and Y.~Zhao (2021).
\newblock Sydebo: Symbolic-decision-embedded bilevel optimization for
  long-horizon manipulation in dynamic environments.
\newblock {\em IEEE Access\/}~{\em 9}, 128817--128826.

\end{thebibliography}

\end{document}